\newcommand{\weak}{\mathrm{wk}}
\newcommand{\ws}{\mathrm{w2s}}
\newcommand{\es}{\mathrm{es}}
\newcommand{\strong}{\mathrm{st}}
\newcommand{\easy}{\mathrm{e}}
\newcommand{\hard}{\mathrm{h}}
\newcommand{\both}{\mathrm{b}}
\newcommand{\poly}{\mathrm{poly}}
\newcommand{\iid}{\emph{i.i.d.}}
\newcommand{\SNR}{\mathrm{SNR}}
\newcommand{\oM}{\overline{M}}
\newcommand{\uM}{\underline{M}}
\newcommand{\oN}{\overline{N}}
\newcommand{\uN}{\underline{N}}
\newcommand{\orho}{\overline{\rho}}
\newcommand{\urho}{\underline{\rho}}
\newcommand{\bigO}{\mathcal{O}}
\newcommand\bigOtilde{\widetilde{\mathcal{O}}}
\newcommand\Omegatilde{\widetilde{\Omega}}
\theoremstyle{plain}
\newtheorem{theorem}{Theorem}[section]
\newtheorem{proposition}[theorem]{Proposition}
\newtheorem{lemma}[theorem]{Lemma}
\theoremstyle{definition}
\newtheorem{definition}{Definition}
\newtheorem{condition}[theorem]{Condition}
\theoremstyle{remark}
\def\norm#1{\lVert#1\rVert}
\def\inner#1{\left\langle#1\right\rangle}
\newcommand{\mycomment}[1]{}
\newcommand{\GG}[1]{}
\def\eqref#1{(\ref{#1})}
\def\abs#1{\left|#1\right|}
\def\norm#1{\left\|#1\right\|}
\def\inner#1{\left\langle#1\right\rangle}
\def\rvz{{\mathbf{z}}}
\def\vzero{{\bm{0}}}
\def\vmu{{\bm{\mu}}}
\def\vnu{{\bm{\nu}}}
\def\vxi{{\bm{\xi}}}
\def\vzeta{{\bm{\zeta}}}
\def\vb{{\bm{b}}}
\def\vv{{\bm{v}}}
\def\vw{{\bm{w}}}
\def\vx{{\bm{x}}}
\def\vz{{\bm{z}}}
\def\mI{{\bm{I}}}
\def\mW{{\bm{W}}}
\def\mX{{\bm{X}}}
\def\mLambda{{\bm{\Lambda}}}
\DeclareMathAlphabet{\mathsfit}{\encodingdefault}{\sfdefault}{m}{sl}
\SetMathAlphabet{\mathsfit}{bold}{\encodingdefault}{\sfdefault}{bx}{n}
\def\gA{{\mathcal{A}}}
\def\gB{{\mathcal{B}}}
\def\gC{{\mathcal{C}}}
\def\gD{{\mathcal{D}}}
\def\gF{{\mathcal{F}}}
\def\gM{{\mathcal{M}}}
\def\gN{{\mathcal{N}}}
\def\gS{{\mathcal{S}}}
\def\gX{{\mathcal{X}}}
\newcommand{\R}{\mathbb{R}}
\definecolor{darkblue}{rgb}{0.0,0.0,0.65}
\definecolor{darkred}{rgb}{0.65,0.0,0.0}
\definecolor{darkgreen}{rgb}{0.0,0.65,0.0}
\definecolor{Gray}{gray}{0.9}
\title{From Linear to Nonlinear: Provable Weak-to-Strong Generalization through Feature Learning}
\author{  
    Junsoo Oh \qquad Jerry Song \qquad Chulhee Yun\\
    KAIST\\ 
    \texttt{\{junsoo.oh, sjerry0316, chulhee.yun\}@kaist.ac.kr} 
}
\begin{document}

\maketitle

\begin{abstract}
\emph{Weak-to-strong generalization} refers to the phenomenon where a stronger model trained under supervision from a weaker one can outperform its teacher. 
While prior studies aim to explain this effect, most theoretical insights are limited to abstract frameworks or linear/random feature models. 
In this paper, we provide a formal analysis of weak-to-strong generalization from a \emph{linear CNN (weak)} to a \emph{two-layer ReLU CNN (strong)}.
We consider structured data composed of label-dependent signals of varying difficulty and label-independent noise, and analyze gradient descent dynamics when the strong model is trained on data labeled by the pretrained weak model.
Our analysis identifies two regimes---data-scarce and data-abundant---based on the signal-to-noise characteristics of the dataset, and reveals distinct mechanisms of weak-to-strong generalization.
In the \emph{data-scarce} regime, generalization occurs via benign overfitting or fails via harmful overfitting, depending on the amount of data, and we characterize the transition boundary.
In the \emph{data-abundant} regime, generalization emerges in the early phase through label correction, but we observe that overtraining can subsequently degrade performance.
\end{abstract}

\section{Introduction}
As the capabilities of today’s AI models grow, recent models such as state-of-the-art large language models (LLMs) increasingly demonstrate superhuman performance in various domains. The complex and often unpredictable behaviors of superhuman models make it crucial to align them with human intent, a challenge known as superalignment. In order to tackle this challenge, human-level supervision techniques, such as reinforcement learning from human feedback (RLHF), are commonly applied. This situation, where a less capable supervisor guides a more advanced model, reverses the traditional teaching paradigm and raises an important question: What happens when a model with stronger capabilities is trained under the supervision of a weaker one?

To address this question, \citet{burns2024weaktostrong} performed extensive experiments training strong student models, like GPT-4~\citep{achiam2023gpt}, with supervision from a weaker teacher model, such as a fine-tuned GPT-2~\citep{radford2019language}. They observe that the strong model consistently surpasses their supervisor's performance, and refer to this phenomenon as \emph{weak-to-strong generalization}. 
This surprising phenomenon has attracted considerable attention, and several recent studies have investigated it from theoretical perspectives. 

\citet{lang2024theoretical} introduce a theoretical framework that establishes weak-to-strong generalization when the strong model is unable to fit the weak model’s mistakes. Building on this framework, \citet{shin2025weaktostrong} propose a mechanism for weak-to-strong generalization in data exhibiting both easy and hard patterns.
Concurrently, another line of work has focused on quantifying the weak-to-strong gain. \citet{charikar2024quantifying} investigate the relationship between weak-to-strong gains and the misfit between weak and strong models in regression with squared loss. Specifically, they show that the gain in weak-to-strong generalization correlates with the degree of misfit between the weak and strong models. \citet{mulgund2025relating} and \citet{yao2025understanding} extend this analysis to a broader class of loss functions, including the reversed Kullback–Leibler divergence.
However, both lines of work often rely on abstract theoretical frameworks and typically do not guarantee that weak-to-strong generalization can be achieved through practical training procedures such as gradient-based optimization. 

\citet{wu2025provable} explore weak-to-strong generalization in an overparameterized spiked covariance model and prove transitions between generalization and random guessing by considering both weak and strong models as minimum $\ell_2$ norm interpolating solutions on feature spaces of differing expressivity. \citet{ildiz2024high} investigate a more general form of knowledge distillation~\citep{hinton2015distilling} in a high-dimensional regression setting and show that distillation from a weak model can outperform distillation from a strong model, while it fails to improve the overall scaling law. \citet{dong2025discrepancies} also study a linear regression setting from a variance reduction perspective via the intrinsic dimension of feature spaces. However, these works are limited to linear models and rely on specific assumptions on structural differences between the feature spaces of weak and strong models. A more recent work by \citet{medvedev2025weak} alleviates some of these limitations by using random feature networks of differing widths for the strong and weak models. However, in their approach, the trainable component is still linear. 
These limitations motivate the following question:
\begin{center}
    \emph{When and how does weak-to-strong generalization emerge through nonlinear feature learning?}
\end{center}

\subsection{Summary of Contributions}
In this paper, we investigate a classification problem on structured data composed of patches, which consist of signals and noise. We employ linear CNNs as the weak model and two-layer ReLU CNNs as the strong model.
We focus on the following training scenario: training the weak model under true supervision and then training the strong model under supervision from the pretrained weak model. We investigate how models trained through this scenario perform, particularly focusing on when and how weak-to-strong generalization emerges.
We summarize our contributions as follows:
\begin{itemize}[leftmargin = *]
    \item We compare the capability of weak models and strong models in our data distribution, showing that any weak model makes non-negligible errors (Proposition~\ref{prop:weak_limitations}) while there exists a strong model that exhibits zero errors (Proposition~\ref{prop:strong}).
    \item We prove that training a weak model using a finite number of training samples and gradient descent can result in a test error that is close to the best possible error achievable by the weak model architecture (Theorem~\ref{theorem:weak}).
    
    \item We also demonstrate that when a strong model is trained on a finite set of samples using supervision from a weak model that makes non-negligible errors, it either achieves near-optimal generalization via benign overfitting or suffers from degraded performance due to harmful overfitting. We further characterize the conditions under which this transition occurs (Theorem~\ref{theorem:weak-to-strong}).
    
    \item We further explore weak-to-strong training in the regime where more data is available than the previously considered scenario, and perhaps surprisingly, we find that it exhibits a notably different behavior. The strong model can achieve near-zero test error even while the training error on pseudo-labels remains non-negligible (Theorem~\ref{theorem:weak-to-strong_signal_dominant}). However, we also empirically observe that ``overtraining'' until convergence to zero training loss eliminates this benefit, resulting in test error levels close to those of the weak model.
\end{itemize}

\section{Problem Setting}\label{section:problem_setting}
In this section, we introduce the data distribution and weak/strong model architectures that we focus on, and formally describe the training scenario considered in our work.

In our analysis, we adopt a patch-wise structured data distribution and patch-wise convolutional neural network architectures. This approach follows a recent line of work on feature learning theory starting from~\citet{allen2020towards}. This type of setting provides a simple but useful framework for studying training dynamics in deep learning. Similar problem settings have been widely used to understand several aspects of deep learning, such as benign overfitting~\citep{cao2022benign,kou2023benign,meng2024benign}, optimizer~\citep{jelassi2022towards,zou2023understanding,chen2023SAM}, data augmentation~\citep{shen2022data,chidambaram2023provably,zou2023benefits,oh2024provable,li2025towards}, and architecture~\citep{huang2023graph,jiang2024unveil}. The broad utility of such settings confirms their value in understanding fundamental aspects of deep learning.

\subsection{Data Distribution}
We investigate a binary classification problem on structured data consisting of multiple patches. These patches contain label-dependent vectors (called \emph{signal}) and label-independent vectors (called \emph{noise}).
\begin{definition}
    We define a data distribution $\gD$ on $\R^{d \times 3} \times \{\pm 1\}$ such that a sample $(\mX,y) \sim \gD$ with $\mX = \left( \vx^{(1)}, \vx^{(2)}, \vx^{(3)}\right)$ and $y \in \{\pm 1\}$ is constructed as follows.
    \begin{enumerate}[leftmargin = *]
        \item Choose the \emph{label} $y \in \{\pm 1\}$ uniformly at random.
        \item Let $\{\vmu_1, \vmu_{-1}, \vnu_1, \vnu_{-1}\}$ be a set of mutually orthogonal \emph{signal vectors}. We choose two signal vectors $\vv^{(1)}, \vv^{(2)} \in \R^d$ for data point $\mX$ associated with the label $y$ as follows:
        \begin{align*}
        \left(\vv^{(1)}, \vv^{(2)}\right) \sim 
        \begin{cases}
        (\vmu_y, \vmu_y) & \text{with probability } p_\easy \\
        \mathrm{Unif}\{(\vnu_y,\vnu_y),(\vnu_y,-\vnu_y),(-\vnu_y,\vnu_y),(-\vnu_y,-\vnu_y)\} & \text{with probability } p_\hard \\
        \mathrm{Unif}\{(\vmu_y,\vnu_y),(\vmu_y,-\vnu_y),(\vnu_y,\vmu_y),(-\vnu_y,\vmu_y)\} & \text{with probability } p_\both
        \end{cases}
        \end{align*}
        For simplicity, we assume $\lVert \vmu_1 \rVert = \lVert \vmu_{-1} \rVert$ and $\lVert \vnu_1 \rVert = \lVert \vnu_{-1} \rVert$, and refer to their norms as $\lVert \vmu \rVert$ and $\lVert \vnu \rVert$, respectively, omitting the subscripts.
        \item A \emph{noise vector} $\vxi$ is drawn from a Gaussian distribution $\gN\left (\vzero, \sigma_p^2 \mLambda\right)$, where the covariance matrix is given by
        $\mLambda = \mI_d - \frac{\vmu_1\vmu_1^\top}{\norm{\vmu}^2} - \frac{\vmu_{-1} \vmu_{-1}^\top}{\norm{\vmu}^2} - \frac{\vnu_1 \vnu_1^\top}{\norm{\vnu}^2} - \frac{\vnu_{-1} \vnu_{-1}^\top}{\norm{\vnu}^2}$.
        \item The components $\vx^{(1)}, \vx^{(2)}, \vx^{(3)}$ of the data point $\mX$ are formed by assigning the generated vectors $\vv^{(1)}, \vv^{(2)}, \vxi$ in a randomly shuffled order. 

    \end{enumerate}
\end{definition}

Our data distribution is based on characteristics of image data, where inputs consist of multiple patches. Some patches contain information relevant to the label (such as a face or a tail for ``dog''), while others contain irrelevant information, like grass in the background. Intuitively, a model can fit data by learning signals and/or memorizing noise. However, relying primarily on noise memorization instead of learning signals leads to poor generalization since noise is label-irrelevant. Therefore, effectively learning signals is crucial for achieving better generalization. 

Real-world data often contains multiple types of label-relevant information, and these corresponding signals can exhibit varying levels of learning difficulty. For example, both a face and a tail are useful for recognizing ``dog'', but learning the tail could be harder since it occupies only a small region of the image or appears only in a small number of images. 
To reflect this difference, we consider two types of signals. We refer to $\vmu_1, \vmu_{-1}$ as \emph{easy signals} and $\vnu_1, \vnu_{-1}$ as \emph{hard signals}. These signal types are designed to have different levels of learning difficulty within the architectures we focus on, as detailed in the following subsection. 
We categorize a data point having only easy signals as \emph{easy-only data}, only hard signals as \emph{hard-only data}, and both types of signals as \emph{both-signal data}. 
We denote by $\mathcal{S}_\easy$, $\mathcal{S}_\hard$, and $\mathcal{S}_\both$ the supports of these data categories, respectively. 
    
\subsection{Neural Network Architecture}
We now define the weak and strong model architectures in our analysis. First, weak models are linear convolutional neural networks where patch-wise convolution is applied. 
\begin{definition}[Weak Model]
    We consider our weak model as linear CNN $f_\weak(\vw, \cdot): \R^{d \times 3} \rightarrow \R$ parameterized by $\vw \in \R^d $ defined as follows. For each input $\mX = \left( \vx^{(1)}, \vx^{(2)}, \vx^{(3)}\right) \in \R^{d \times 3}$, we define
    \begin{equation*}
        f_\weak(\vw, \mX) = \left \langle \vw, \vx^{(1)}\right \rangle + \left \langle \vw, \vx^{(2)} \right \rangle + \left \langle \vw, \vx^{(3)} \right \rangle
    \end{equation*}
\end{definition}

Our choice of weak model has limited capability for learning our data distribution $\gD$. In particular, any weak model shows random-guess level performance on hard-only data, as formalized below.

\begin{proposition}\label{prop:weak_limitations}
    Let $(\mX, y) \sim \gD$ be a test example. For any weak model $f_\weak(\vw, \cdot)$, it satisfies
    $ \mathbb{P}[ y f_\weak(\vw, \mX) <0 \mid (\mX,y) \in \gS_\hard ]  = \frac 1 2. $
\end{proposition}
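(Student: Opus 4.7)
The plan is a direct symmetry argument: on the hard-only conditional, the weak model's output has a distribution symmetric about $0$, so a uniformly random label gives a misclassification probability of $1/2$. First, I would exploit the fact that the same filter $\vw$ is applied to every patch, so
\[
f_\weak(\vw, \mX) = \langle \vw,\, \vx^{(1)}+\vx^{(2)}+\vx^{(3)}\rangle = \langle \vw,\, \vv^{(1)}+\vv^{(2)}+\vxi\rangle,
\]
and the random patch-shuffle (Step~4 of the data generation) is irrelevant. The output depends only on the signal pair $(\vv^{(1)},\vv^{(2)})$ and the noise $\vxi$.

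Next I would show that, conditional on $y$ and $(\mX,y)\in\gS_\hard$, the vector $\vv^{(1)}+\vv^{(2)}+\vxi$ is distributionally invariant under negation. Indeed, the uniform support $\{(\vnu_y,\vnu_y),(\vnu_y,-\vnu_y),(-\vnu_y,\vnu_y),(-\vnu_y,-\vnu_y)\}$ of $(\vv^{(1)},\vv^{(2)})$ is closed under the involution $(\vv^{(1)},\vv^{(2)})\mapsto(-\vv^{(1)},-\vv^{(2)})$, so $\vv^{(1)}+\vv^{(2)}$ takes the values $\{-2\vnu_y,\, \vzero,\, 2\vnu_y\}$ with symmetric probabilities $1/4,\,1/2,\,1/4$. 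Since $\vxi\sim\gN(\vzero,\sigma_p^2\mLambda)$ is independent of the signals and satisfies $-\vxi\stackrel{d}{=}\vxi$, the sum inherits the symmetry about $\vzero$. Therefore $f_\weak(\vw,\mX)=\langle\vw,\vv^{(1)}+\vv^{(2)}+\vxi\rangle$, conditional on $y$ and $\gS_\hard$, is a scalar random variable symmetric about $0$.

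From this symmetry, $\mathbb{P}[f_\weak(\vw,\mX)<0\mid y,\gS_\hard]=\mathbb{P}[f_\weak(\vw,\mX)>0\mid y,\gS_\hard]=\tfrac{1}{2}$, where the final equality uses that the Gaussian term $\langle\vw,\vxi\rangle$ is continuous so $\mathbb{P}[f_\weak=0\mid y,\gS_\hard]=0$ in the non-degenerate case $\vw^\top\mLambda\vw>0$. Averaging over the uniform label $y\in\{\pm 1\}$ then yields $\mathbb{P}[yf_\weak(\vw,\mX)<0\mid\gS_\hard]=1/2$.

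The argument has essentially no hard step; the main care is the degenerate case where $\vw$ lies entirely in the span of the four signal vectors, so that $\langle\vw,\vxi\rangle$ vanishes and $f_\weak$ can equal $0$ with positive probability. I expect this to be handled either by a measure-zero exclusion (weak models arising from gradient-descent training retain a noise component) or by an implicit tie-breaking convention, after which the symmetry calculation carries through verbatim.
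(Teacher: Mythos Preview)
Your argument is correct and essentially the same as the paper's: both exploit that, conditioned on $\gS_\hard$ and $y$, the distribution of $\vv^{(1)}+\vv^{(2)}+\vxi$ is symmetric about $\vzero$. The paper presents this by splitting into the opposite-sign case (where $f_\weak=\langle\vw,\vxi\rangle$) and the same-sign case (pairing $(\vnu_y,\vnu_y,\vxi)$ with $(-\vnu_y,-\vnu_y,-\vxi)$), whereas you give the symmetry in one stroke; you are also more explicit about the degenerate case $\vw^\top\mLambda\vw=0$, which the paper leaves implicit.
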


\begin{proof}
    Consider a hard-only data $(\mX,y) \in \gS_{\hard}$ with the noise vector $\vxi$. 
    If the two underlying signals in a hard-only data point have opposite signs, the weak model's output $f_\weak(\vw, \mX)$ simplifies to $\langle \vw, \vxi \rangle$. This results in a $1/2$ conditional error rate due to symmetry of noise.
    For a hard-only data having two signal vectors of identical signs, we may assume two signal vectors of $(\mX,y) \in \gS_{\hard}$ are both $\vnu_y$, without loss of generality. Define $(\tilde \mX, y) \in \gS_{\hard}$ to be a data point where both signal vectors are $-\vnu_y$ and the noise vector is $- \vxi$. Then, $y f_\weak(\vw, \mX) = - y f_\weak(\vw, \tilde \mX)$. From the symmetry of $\vxi$, it implies the model has $1/2$ error rate conditioned on the case where two signal vectors are identical. By combining two cases, we have the desired conclusion.
\end{proof}
The strong model is a 2-layer convolutional neural network with ReLU activation, also applying patch-wise convolution, where the second layer weights are fixed and only the first layer is trainable.

\begin{definition}[Strong Model]
    We consider our strong model as 2-layer CNN $f_\strong(\mW, \cdot): \R^{d \times 3} \rightarrow \R$ parameterized by $\mW = \{\mW_1, \mW_{-1} \}$ where $\mW_{s} = \{\vw_{s,r}\}_{r \in [m]}$ for $s \in \{\pm 1\}$ represents the set of positive/negative filters, each containing $m$ filters $\vw_{s,r}  \in \R^d$. For each input $\mX = \left( \vx^{(1)}, \vx^{(2)}, \vx^{(3)} \right) \in \R^{d \times 3}$, we define
    \begin{equation*}
        f_\strong(\mW, \mX) = F_1(\mW_1, \mX) - F_{-1}(\mW_{-1}, \mX),
    \end{equation*}
    where for each $s \in \{\pm 1\}$,
    \begin{equation*}
        F_s(\mW_s,\mX) = \frac{1}{m} \sum_{r \in [m]}\left[\sigma \left(  \left \langle \vw_{s,r},\vx^{(1)}\right \rangle \right) + \sigma \left(  \left \langle \vw_{s,r},\vx^{(2)}\right \rangle \right) +  \sigma \left(  \left \langle \vw_{s,r},\vx^{(3)}\right \rangle \right)\right]
    \end{equation*}
    and $\sigma(\cdot)$ denotes the ReLU activation function.
\end{definition}

In contrast to the limitations of the weak model, the following proposition demonstrates the strong model architecture's capability for perfect generalization.
\begin{proposition}\label{prop:strong}
    Let $(\mX, y) \sim \gD$ be a test example. If $m\geq 2$, then there exists a strong model with parameter $\mW^*$ that achieves zero test error: $\mathbb{P}\left[y f_\strong(\mW^*, \mX) < 0 \right] = 0.$
\end{proposition}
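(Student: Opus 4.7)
The plan is to exhibit an explicit construction whose filters lie entirely in the signal subspace $\mathrm{span}\{\vmu_1, \vmu_{-1}, \vnu_1, \vnu_{-1}\}$. Concretely, for each $s \in \{\pm 1\}$, I would set
\begin{equation*}
    \vw^*_{s,1} = \vmu_s + \vnu_s, \qquad \vw^*_{s,2} = \vmu_s - \vnu_s,
\end{equation*}
and $\vw^*_{s,r} = \vzero$ for $r \geq 3$ (relevant only if $m > 2$). The intuition is that the pair $\vw^*_{s,1}, \vw^*_{s,2}$ can jointly detect both sign patterns $\pm\vnu_s$ of the hard signal while still responding positively to $\vmu_s$; this is exactly where the hypothesis $m \geq 2$ enters.

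To verify that $\mW^*$ achieves zero test error, I would proceed through two reductions. First, because $\vxi \sim \gN(\vzero, \sigma_p^2 \mLambda)$ and $\mLambda$ is the projection onto the orthogonal complement of $\mathrm{span}\{\vmu_{\pm 1}, \vnu_{\pm 1}\}$, the inner product $\langle \vw^*_{s,r}, \vxi\rangle$ vanishes almost surely for every filter, so the noise patch contributes $\sigma(0)=0$ to each summand of $F_s(\mW^*_s, \mX)$. Second, by mutual orthogonality of the four signal vectors, $\langle \vw^*_{-y,r}, \vv\rangle = 0$ for every signal vector $\vv \in \{\pm\vmu_y, \pm\vnu_y\}$ appearing in a class-$y$ data point, so $F_{-y}(\mW^*_{-y}, \mX) = 0$. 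Together these reduce the problem to showing $y f_\strong(\mW^*, \mX) = F_y(\mW^*_y, \mX) > 0$ for every class-$y$ data point.

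The remaining step is a short case check over the three data types, using the identity $\sigma(a) + \sigma(-a) = |a|$. For easy-only data, both signal patches equal $\vmu_y$ and each of $\vw^*_{y,1}, \vw^*_{y,2}$ produces a ReLU of $\norm{\vmu}^2$ on each patch, so $F_y = 4\norm{\vmu}^2/m$. For hard-only data with signals $(\epsilon_1 \vnu_y, \epsilon_2 \vnu_y)$ and $\epsilon_i \in \{\pm 1\}$, each hard-signal patch contributes exactly $\norm{\vnu}^2$ summed across filters $r=1,2$ regardless of its sign (by the ReLU identity), yielding $F_y = 2\norm{\vnu}^2/m$. For both-signal data, the easy-signal patch alone contributes $2\norm{\vmu}^2/m$, so $F_y > 0$. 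The $y = -1$ case follows from the manifest symmetry of the construction in $s$.

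There is no substantive obstacle here; the proposition amounts to a design-verification exercise. The only subtle point, worth stating explicitly, is that a single filter per class cannot positively activate on both $+\vnu_y$ and $-\vnu_y$ because ReLU is sign-sensitive, which is precisely why the hypothesis $m \geq 2$ appears in the statement.
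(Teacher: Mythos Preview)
Your construction is identical to the paper's ($\vw^*_{s,1}=\vmu_s+\vnu_s$, $\vw^*_{s,2}=\vmu_s-\vnu_s$, zeros elsewhere), and your verification simply spells out the ``direct calculation'' that the paper asserts without detail. The proposal is correct and matches the paper's approach exactly.
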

\begin{proof}
    We construct $\mW^*$ by defining, for each $s \in \{\pm 1\}$, the filters $\vw_{s,1}^* = \vmu_s + \vnu_s$, $\vw_{s,2}^* = \vmu_s - \vnu_s$, and setting $\vw_{s,r}^* = \vzero$ for $r>2$. Direct calculation shows that $yf_\strong(\mW^*, \mX)>0$ for all $(\mX,y) \sim \gD$, leading to zero test error.
\end{proof}

\subsection{Training Scenario}
Our goal is to train the weak and strong models, using a finite training set sampled from the distribution $\gD$, to correctly classify unseen test examples from $\gD$. We first outline the training procedure of the weak model and then describe the training of the strong model supervised by the weak model.

\subsubsection{Weak Model Training}

In weak model training, we use $n_\weak$ labeled data points $\{(\mX_i, y_i)\}_{i=1}^{n_\weak} \overset{\iid}{\sim} \gD$ and training loss is defined as
\begin{equation*}
    L_\weak \left( \vw \right) = \frac{1}{n_\weak} \sum_{i \in [n_\weak]} \ell \left( y_i f_\weak \left( \vw, \mX_i \right)\right),
\end{equation*}
where $\ell(z) = \log (1+ \exp(-z))$ is the logistic loss. We consider using gradient descent with learning rate $\eta$ to minimize training loss $L_\weak(\vw)$ and model parameters are initialized as $\vw^{(0)} = \vzero$. The parameters are updated at each iteration $t$ as
\begin{align}\label{eq:weak_update}
    \vw^{(t+1)} &= \vw^{(t)} - \eta \nabla_\vw L_\weak \left(\vw^{(t)}\right)\nonumber \\
    &= \vw^{(t)} - \frac{\eta}{n_\weak}\sum_{i \in [n_\weak]} y_i \ell'\left(y_i f_\weak \left( \vw^{(t)}, \mX_i\right) \right) \left( \vx_i^{(1)} + \vx_i^{(2)} + \vx_i^{(3)}\right) \nonumber \\
    &= \vw^{(t)} + \frac{\eta}{n_\weak}\sum_{i \in [n_\weak]} y_i g_i^{(t)} \left( \vx_i^{(1)} + \vx_i^{(2)}+ \vx_i^{(3)}\right)
\end{align}
where $\mX_i = \left(\vx_i^{(1)}, \vx_i^{(2)}, \vx_i^{(3)}\right)$ and we use $g_i^{(t)} = -\ell'\left( y_i f_\weak \left( \vw^{(t)}, \mX_i\right)\right)$.

\subsubsection{Weak-to-Strong Training}

Let $\{(\tilde\mX_i, \tilde{y}_i)\}_{i =1}^{n_\strong} \overset{\iid}{\sim} \gD$ denote a dataset drawn from the data distribution $\gD$. Then the strong model is trained on the dataset $\{(\tilde\mX_i, \hat{y}_i)\}_{i =1}^{n_\strong}$, where the supervision $\hat{y}_i$ is provided by a pretrained weak model $f_\weak(\vw^*, \cdot)$, i.e., $\hat{y}_i = \mathrm{sign}(f_\weak(\vw^*, \tilde{\mX}_i))$ instead of using true label $\tilde y_i$.
The training objective is defined as
\begin{equation*}
    L_\strong(\mW) = \frac{1}{n_\strong} \sum_{i \in [n_\strong]} \ell \left( \hat y_i  f_\strong(\mW, \tilde\mX_i) \right)
\end{equation*}
and we use gradient descent with learning rate $\eta$ to minimize $L_\strong(\mW)$, where the model parameters are initialized as $\vw_{s,r}^{(0)} \sim \gN(\vzero, \sigma_0^2 \mI_d)$ for all $s \in \{\pm 1\}$ and $r \in [m]$. The parameters are updated at each iteration $t$ as
\begin{align}\label{eq:strong_update}
    \vw_{s,r}^{(t+1)} &= \vw_{s,r}^{(t)} - \eta \nabla_{\vw_{s,r}} L_\strong(\mW^{(t)}) \nonumber \\
    &= \vw_{s,r}^{(t)} - \frac{s \eta}{m n_\strong} \sum_{i \in [n_\strong]} \hat{y}_i \, \ell'\left( \hat{y}_i f_\strong(\mW^{(t)}, \tilde\mX_i) \right) \sum_{p \in [3]} \sigma'\left( \left\langle \vw_{s,r}^{(t)}, \tilde \vx_i^{(p)} \right\rangle \right) \tilde \vx_i^{(p)} \nonumber \\
    &= \vw_{s,r}^{(t)} + \frac{s \eta}{m n_\strong} \sum_{i \in [n_\strong]} \hat{y}_i \, \tilde{g}_i^{(t)} \sum_{p \in [3]} \sigma'\left( \left\langle \vw_{s,r}^{(t)}, \tilde \vx_i^{(p)} \right\rangle \right) \tilde \vx_i^{(p)},
\end{align}
where $\tilde \mX_i = \left(\tilde \vx_i^{(1)}, \tilde \vx_i^{(2)}, \tilde \vx_i^{(3)}\right)$ and we use $\tilde{g}_i^{(t)} = -\ell'\left( \hat{y}_i f_\strong(\mW^{(t)}, \tilde\mX_i) \right)$ for each $i \in \left[ n_\strong\right]$.

\section{Provable Weak-to-Strong Generalization} \label{section:main_results}
In this section, we provide theoretical results on when and how weak-to-strong generalization occurs in our setting. 
For our analysis, we denote by $T^*$ the maximum admissible training iterates and we assume $T^* = \eta^{-1}\poly(\varepsilon^{-1}, d, n_\strong, n_\weak, m)$, where $\varepsilon$ is a target training loss and $\poly(\cdot)$ is a sufficiently large polynomial.
Furthermore, we focus on an asymptotic regime where parameters $\varepsilon^{-1}, d, n_\strong, n_\weak, m$ are considered sufficiently large.
Consequently, our theoretical guarantees will often be expressed using asymptotic notation such as $\bigO(\cdot), \Omega(\cdot),o(\cdot),\omega(\cdot)$, as well as $\bigOtilde(\cdot), \Omegatilde(\cdot)$, which hide logarithmic factors.
Our main results depend on the conditions detailed below.
\begin{condition}\label{condition}
There exists a sufficiently large constant $C>0$ such that the following hold:
    \begin{enumerate}[label=(C\arabic*),ref=(C\arabic*), leftmargin = *]
        \item $d \geq C   \max\left \{n_\weak^2 \log \left( \frac{C n_\weak^2}{\delta}\right) , n_\strong \log \left(\frac{C n_\strong}{\delta}\right) \right \} (\log T^*)^2$ \label{condition:high_dim} 
        \item $n_\weak, n_\strong \geq C \max \left\{ p_\easy^{-2}, p_\both^{-2}, p_\hard^{-2} \right\} \log \left( \frac{C}{\delta}\right), \quad m\geq C \log \left( \frac{Cn_\strong}{\delta}\right)$ \label{condition:n_m}
        \item $ \sigma_0 \leq C^{-1} \min \left \{ \frac{1}{\norm{\vmu}}, \frac{1}{\norm{\vnu}}, \frac{1}{\sigma_p \sqrt d} \right\}  \min \left\{ \frac{n_\strong p_\both \norm{\vnu}^2}{ \sigma_p^{2} d}, \frac{\sigma_p^2 d }{(2p_\easy + p_\both) n_\strong^{1} \norm{\vmu}^{2}} \right\} \left( \log \left( \frac{C m n_\strong}{\delta}\right)\right)^{-\frac 1 2}$ \label{condition:init}
        \item $\eta \leq C^{-1} \sigma_p^{-2} d^{-\frac 3 2} $\label{condition:lr}
        \item $(2p_\easy + p_\both) \lVert \vmu \rVert^2  \geq C p_\both \lVert \vnu \rVert^2 $, 
        $\quad n_\weak, n_\strong  = \omega \left  (\frac{\sigma_p^4 d}{(2p_\easy + p_\both)^2 \norm{\vmu}^4}\right )$\label{condition:easy_hard}
        \item $p_\both  \geq C \max\{p_\hard, \sigma_p \norm{\vmu} \norm{\vnu}^{-2} (\log T^*)^{\frac 1 2} \}$ \label{condition:both_large}
    \end{enumerate}
\end{condition}
\ref{condition:high_dim} and \ref{condition:n_m} allow us to apply concentration inequalities and ensure that our training data samples and initial model parameters satisfy certain desirable properties with high probability.
\ref{condition:init} and \ref{condition:lr} ensure that initialization is negligible compared to the update and that learning dynamics are stable. They facilitate our analysis of the learning dynamics.
\ref{condition:easy_hard} guarantees that easy signals are easier to learn than hard signals for both weak and strong models, as the difficulty of learning signals is determined by their frequency and strength. This also ensures that both models are guaranteed to learn these easy signals. 
Furthermore, a large enough portion of both-signal data stated in \ref{condition:both_large} is essential to weak-to-strong generalization, in line with the insights discussed in \citet{shin2025weaktostrong}.

As we mentioned before, in our problem setting, there are two mechanisms for minimizing training loss: learning signals and memorizing noise. Since signals repeatedly appear in data while noise is independent across data points, the amount of data affects which mechanism predominantly influences the learning dynamics. In our analysis, we consider two regimes based on this observation: the \emph{data-scarce regime} and the \emph{data-abundant regime}.

In the data-scarce regime, we demonstrate two key findings. 
First, we prove that weak model training can achieve performance close to the optimal limit of the weak model class even in this regime.
Second, we demonstrate that even within the data-scarce regime, weak-to-strong training can achieve low test error through benign overfitting, provided that the given dataset size is not too small. We also characterize tight conditions under which the weak-to-strong training exhibits benign or harmful overfitting.
In the data-abundant regime, we analyze weak-to-strong training and, perhaps surprisingly, observe that weak-to-strong generalization behaves differently compared to the data-scarce regime: Early stopping plays a crucial role.

\subsection{Data-Scarce Regime}
In this regime, the amount of available data is small. Consequently, noise memorization is more prevalent than signal learning, leading to model outputs on training data points mainly determined by activations from noise vectors. We formalize this regime as follows.

\begin{condition}[Data-Scarce Regime]\label{condition:noise-dominant}
    All conditions in Condition~\ref{condition} hold,
    using the same constant $C>0$ as introduced therein, and the following additional condition holds: $n_\weak, n_\strong \leq  C^{-1}\sigma_p^2 d/ ((2p_\easy + p_\both)\norm{\vmu}^2\log T^*)$.
\end{condition}
The following theorem provides convergence and test error guarantees for weak model training.
\begin{theorem}[Weak Model Training] \label{theorem:weak}
    Let $\vw^{(t)}$ be the iterates of weak model training. For any $\varepsilon > 0$ and $\delta \in (0,1)$ satisfying Condition~\ref{condition:noise-dominant}, with probability at least $1-\delta$, there exists $T_\weak = \bigOtilde(\eta^{-1} \varepsilon^{-1} n_\weak d^{-1} \sigma_p^{-2})$ such that for all $t \in [T_\weak, T^*]$, the following statements hold:
    \vspace{-5pt}
    \begin{enumerate}[leftmargin = *]
        \item The training loss converges below $\varepsilon$: $L_\weak \left( \vw^{(t)} \right) < \varepsilon$.
        \item Let $(\mX, y) \sim \gD$ be an unseen test example, independent of the training set $\{(\mX_i, y_i)\}_{i=1}^{n_\weak}$. Then, we have
        \begin{equation*}
            \mathbb{P} \left[ y f_\weak \left( \vw^{(t)}, \mX \right) < 0 \,\middle|\, (\mX,y) \in \gS_\easy \cup \gS_\both \right] 
            \leq \exp \left( -\frac{n_\weak(2p_\easy + p_\both)^2 \lVert \vmu \rVert^4}{C_1 \sigma_p^4 d} \right) = o(1).
        \end{equation*}
        Here, $C_1>0$ is a constant.
    \end{enumerate}
\end{theorem}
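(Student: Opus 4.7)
The plan is to exploit the linearity of $f_\weak$ via a \emph{signal--noise decomposition} of the iterate and then reduce the test-error computation to a Gaussian tail bound whose exponent matches the signal-to-noise ratio accumulated over training. Since $\vw^{(0)}=\vzero$ and every update in \eqref{eq:weak_update} adds a linear combination of the three patches $\vx_i^{(1)}+\vx_i^{(2)}+\vx_i^{(3)}$, the iterate $\vw^{(t)}$ lies in the span of $\{\vmu_{\pm 1},\vnu_{\pm 1},\vxi_1,\ldots,\vxi_{n_\weak}\}$. I would therefore expand
$\vw^{(t)}=\sum_{s}\gamma_{\vmu,s}^{(t)}\vmu_s+\sum_{s}\gamma_{\vnu,s}^{(t)}\vnu_s+\sum_{i}\rho_i^{(t)}\vxi_i$
and derive one-step recursions for each coefficient using the exact orthogonality of $\vxi_i$ to every signal vector (built into the covariance $\mLambda$) together with the near-orthogonality $|\langle\vxi_i,\vxi_j\rangle|=\bigOtilde(\sigma_p^2\sqrt d)$ for $i\neq j$ that follows from \ref{condition:high_dim}.

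Three structural facts drive the dynamics. First, the easy signal $\vmu_s$ appears twice in every easy-only and once in every both-signal datum, always with sign $y_i$, so the coefficient grows with drift $\Delta\gamma_{\vmu,s}^{(t)}\approx\eta(2p_\easy+p_\both)\,\bar g^{(t)}$ where $\bar g^{(t)}$ averages $g_i^{(t)}$ over the relevant samples. Second, the hard signal $\vnu_s$ appears with sign $\pm y_i$ uniformly in $\gS_\hard$ and $\gS_\both$, so symmetric cancellation of the drift confines $|\gamma_{\vnu,s}^{(t)}|$ to at most a $1/\sqrt{n_\weak}$ fraction of $|\gamma_{\vmu,s}^{(t)}|$ by standard concentration; combined with \ref{condition:easy_hard} this guarantees that $\gamma_{\vnu,s}^{(t)}\lVert\vnu\rVert^2$ is dominated by $\gamma_{\vmu,s}^{(t)}\lVert\vmu\rVert^2$ even on both-signal test points. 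Third, each noise coefficient moves at rate $\Delta\rho_i^{(t)}\approx\eta y_i g_i^{(t)}/n_\weak$, so $\lVert\vw^{(t)}\rVert^2$ is dominated by $\sum_i(\rho_i^{(t)})^2\lVert\vxi_i\rVert^2\approx\sigma_p^2 d\sum_i(\rho_i^{(t)})^2$, which is exactly the noise-memorization picture expected in Condition~\ref{condition:noise-dominant}. I would run a two-stage induction: a short initial phase on which $g_i^{(t)}=\Theta(1)$ uniformly in $i$, followed by a stable phase in which the training logits $y_i f_\weak(\vw^{(t)},\mX_i)$ are already dominated by the diagonal term $\rho_i^{(t)}\lVert\vxi_i\rVert^2$, which keeps the $g_i^{(t)}$'s mutually comparable and allows the above drift approximations to propagate to all $t\le T^*$.

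The training-loss bound then follows from the standard smoothness-plus-descent argument for logistic regression on near-orthogonal features, giving $L_\weak(\vw^{(t)})=\bigOtilde(n_\weak/(\eta t\,\sigma_p^2 d))$ and hence $L_\weak<\varepsilon$ at $T_\weak=\bigOtilde(\eta^{-1}\varepsilon^{-1}n_\weak d^{-1}\sigma_p^{-2})$. For the test error, fix a fresh $(\mX,y)\in\gS_\easy\cup\gS_\both$ with noise vector $\vxi\sim\gN(\vzero,\sigma_p^2\mLambda)$ independent of $\vw^{(t)}$. Decomposing $y f_\weak(\vw^{(t)},\mX)$ into a deterministic signal part (lower bounded by a constant multiple of $\gamma_{\vmu,y}^{(t)}\lVert\vmu\rVert^2$ by the first two structural facts) and a noise part $\langle\vw^{(t)},\vxi\rangle\sim\gN(0,\sigma_p^2\lVert\vw^{(t)}\rVert^2)$, plugging in the drift estimates gives a signal-to-standard-deviation ratio of order $\sqrt{n_\weak}(2p_\easy+p_\both)\lVert\vmu\rVert^2/(\sigma_p^2\sqrt d)$, and a Gaussian tail bound delivers exactly the announced $\exp(-\Omega(n_\weak(2p_\easy+p_\both)^2\lVert\vmu\rVert^4/(\sigma_p^4 d)))$.

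The main obstacle will be maintaining the coupling between these coefficient trajectories throughout $[0,T^*]$: the argument needs $\{g_i^{(t)}\}$ to remain roughly uniform across $i$ so that the signal-direction drift stays aligned across samples, while simultaneously permitting $g_i^{(t)}$ to decay enough to push the training loss below $\varepsilon$. A related subtlety, crucial for test points in $\gS_\both$, is to convert the population-level cancellation of $\pm\vnu_y$ into a quantitative empirical bound on $|\gamma_{\vnu,s}^{(t)}|$ strong enough (via \ref{condition:easy_hard}) that the hard-signal term cannot contaminate the easy-signal contribution on an independent test sample.
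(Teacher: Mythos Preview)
Your plan matches the paper's proof: signal--noise decomposition of $\vw^{(t)}$, an interlocking induction that keeps the $\rho_i^{(t)}$'s and hence the $g_i^{(t)}$'s mutually comparable throughout $[0,T^*]$ (this is the paper's Lemma~\ref{lemma:weak_preserved}), a convexity/reference-point argument against $\hat\vw\propto\sum_i y_i\vxi_i\lVert\vxi_i\rVert^{-2}$ for loss convergence, and a Gaussian tail bound on $\langle\vw^{(t)},\vxi\rangle$ for the test error with the SNR you stated.

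One step to adjust when you execute it: the hard-signal coefficient is \emph{not} suppressed by a factor $1/\sqrt{n_\weak}$ relative to the easy one. That would require the $g_i^{(t)}$'s to agree to within $O(1/\sqrt{n_\weak})$, whereas the induction you describe (and the paper's) only keeps $g_j^{(t)}/g_i^{(t)}\in[1-\kappa_\weak,1+\kappa_\weak]$ with $\kappa_\weak=\tfrac12$. Consequently the per-step increment of the $\vnu_s$ coefficient is bounded only by a \emph{constant} multiple of $(2p_\hard+p_\both)\lVert\vnu\rVert^2 \bar g^{(t)}$, not a $1/\sqrt{n_\weak}$ multiple; count concentration contributes only a lower-order $\gamma_\weak$ correction. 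The domination $|N_s^{(t)}|\le\tfrac12 M_{s'}^{(t)}$ you need for both-signal test points therefore rests entirely on \ref{condition:easy_hard} and \ref{condition:both_large}, which compare $(2p_\hard+p_\both)\lVert\vnu\rVert^2$ against $(2p_\easy+p_\both)\lVert\vmu\rVert^2$, rather than on sample-size concentration. You already invoke \ref{condition:easy_hard}, so the conclusion survives, but the mechanism is the norm/frequency comparison of the two signals, not a $1/\sqrt{n_\weak}$ cancellation.
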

The proof is provided in Appendix~\ref{proof:weak}.
Combined with Proposition~\ref{prop:weak_limitations}, Theorem~\ref{theorem:weak} guarantees the convergence of training loss and shows that the trained weak model achieves low test error on easy-only data and both-signal data, while performing random guessing on unseen hard-only data. This corresponds to the near optimal error attainable by the weak model, but not perfect because the overall test error will be of order $\tfrac{p_\hard}{2}+o(1)$.

The following theorem provides convergence and test error guarantees for weak-to-strong training.

\begin{theorem}[Weak-to-Strong Training, Data-Scarce Regime]\label{theorem:weak-to-strong}
    Let $\mW^{(t)}$ be the iterates of weak-to-strong training, 
    with the weak model $f_\weak(\vw^*, \cdot)$ satisfying the conclusion of Theorem~\ref{theorem:weak}.  
    For any $\varepsilon > 0$ and $\delta \in (0,1)$ satisfying Condition~\ref{condition:noise-dominant}, 
    with probability at least $1 - \delta$, there exists $T_\ws = \bigO(\eta^{-1} \varepsilon^{-1} m n_\strong d^{-1} \sigma_p^{-2})$ such that for any $t \in [T_\ws, T^*]$ the following statements hold:
    \vspace{-5pt}
    \begin{enumerate}[leftmargin = *]
    \item The training loss converges below $\varepsilon$: $L_\strong\left(\mW^{(t)}\right)< \varepsilon$.
     \item Let $(\mX, y) \sim \gD$ be an unseen test example, independent of the training set $\{(\tilde \mX_i, \hat y_i)\}_{i=1}^{n_\strong}$.
     \begin{itemize}[leftmargin = *]
         \item (Benign Overfitting) When $n_\strong p_\both^2\norm{\vnu}^4 /(\sigma_p^4 d) \geq C_2$,\footnote{We emphasize that this condition does not contradict Condition~\ref{condition:noise-dominant} due to \ref{condition:both_large}.} we have
     \begin{equation*}
         \mathbb{P} \left[ y f_\strong \! \left( \mW^{(t)}, \mX \! \right) \!< \! 0 \right] 
         \leq (p_\easy + p_\both) \exp \left(\! -\frac{n_\strong (2p_\easy + p_\both)^2 \lVert \vmu \rVert^4}{C_3 \sigma_p^4 d} \right) + p_\hard \exp \left(\! -\frac{n_\strong p_\both^2 \lVert \vnu \rVert^4}{C_3 \sigma_p^4 d} \right).
     \end{equation*}
     \item (Harmful Overfitting) When $n_\strong p_\both^2\norm{\vnu}^4 /(\sigma_p^4 d)\leq C_4  $,
     \begin{equation*}
         \mathbb{P} \left[ y f_\strong \left( \mW^{(t)}, \mX \right) < 0 \right] 
         \geq 0.12 p_\hard.
     \end{equation*}
     \end{itemize}
     Here, $C_2, C_3, C_4>0$ are constants.
\end{enumerate}
\end{theorem}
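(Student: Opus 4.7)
The plan is to adopt the signal--noise decomposition framework standard in feature-learning analyses. Since every update in (\ref{eq:strong_update}) is a linear combination of the patches $\tilde\vx_i^{(p)}$, and the signal vectors $\vmu_{\pm1},\vnu_{\pm1}$ together with the noise subspace $\mathrm{range}(\mLambda)$ are mutually orthogonal, each filter admits the exact representation
\begin{equation*}
\vw_{s,r}^{(t)} = \vw_{s,r}^{(0)} + \sum_{z\in\{\pm1\}} \gamma_{s,r,z}^{(t)} \frac{\vmu_z}{\|\vmu\|^2} + \sum_{z\in\{\pm1\}} \beta_{s,r,z}^{(t)} \frac{\vnu_z}{\|\vnu\|^2} + \sum_{i=1}^{n_\strong} \rho_{s,r,i}^{(t)} \frac{\tilde\vxi_i}{\|\tilde\vxi_i\|^2}.
\end{equation*}
Using \ref{condition:high_dim} to make pairwise noise overlaps $|\langle\tilde\vxi_i,\tilde\vxi_j\rangle|$ and initialization--noise cross terms negligible, projecting (\ref{eq:strong_update}) onto each direction yields one-step recursions for these coefficients that decouple up to the shared ReLU indicators and loss derivatives $\tilde g_i^{(t)}$.

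I would then establish, by induction on $t\in[0,T^*]$, invariants of the form: (i) filters of class $s$ develop positive correlation only with $\vmu_s$ (via $\gS_\easy\cup\gS_\both$ examples whose weak label equals $s$), with $\vnu_s$ (via $\gS_\both$ examples only, since on $\gS_\hard$ the weak label $\hat y_i$ is independent of $\tilde y_i$ by the symmetry used to prove Proposition~\ref{prop:weak_limitations}), and with training noises $\tilde\vxi_i$ having $\hat y_i=s$; (ii) the corresponding ReLU activations remain on; (iii) the wrong-sign coefficients $\gamma_{s,r,-s},\beta_{s,r,-s}$ stay of lower order. Condition~\ref{condition:noise-dominant} forces the per-step increment of each $\rho_{s,r,i}$ to dominate that of each signal coefficient on a per-example basis, so noise memorization drives training. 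A balanced-growth argument then gives $\min_i \rho_{s,r,i}^{(t)} = \Omega(\log(1/\varepsilon))$ by $t = T_\ws = \bigO(\eta^{-1}\varepsilon^{-1} m n_\strong d^{-1}\sigma_p^{-2})$, at which point $L_\strong(\mW^{(t)})<\varepsilon$.

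For generalization, the noise $\vxi$ of a fresh test point is independent of all $\tilde\vxi_i$, so \ref{condition:high_dim} together with Gaussian tail bounds makes the leakage $|\langle\vw_{s,r}^{(t)},\vxi\rangle|$ much smaller than the typical signal contribution at the test point. On $\gS_\easy\cup\gS_\both$, the easy signal has accumulated contributions from roughly $(2p_\easy+p_\both)n_\strong$ correctly-labeled examples, and a Bernstein-type bound on the residual noise leakage yields the first exponential in the statement. On $\gS_\hard$, only the hard-signal coefficient matters, and its accumulated growth from $\gS_\both$ corresponds to an effective squared SNR of $n_\strong p_\both^2\|\vnu\|^4/(\sigma_p^4 d)$; when this exceeds $C_2$, the signal dominates and the second exponential is the conditional error, while when it is bounded by $C_4$, the signal is comparable to the near-Gaussian leakage and anti-concentration forces $\mathbb{P}[y f_\strong(\mW^{(t)},\mX)<0\mid\gS_\hard]\ge 0.12$, yielding the harmful-overfitting lower bound.

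The main obstacle I anticipate is the tight control of $\beta_{s,r,z}^{(t)}$ throughout the trajectory. Each $\gS_\hard$ training example contributes a $\pm\vnu$-aligned gradient whose sign is tied to the \emph{random} weak label $\hat y_i$ rather than to $\tilde y_i$; although these contributions cancel in expectation, the cancellation is entangled with the evolving activation indicators $\sigma'(\langle\vw_{s,r}^{(t)},\pm\vnu_{\tilde y_i}\rangle)$ and with $\tilde g_i^{(t)}$, both of which depend on the full model state. Showing that this ``weak-label noise'' contribution to $\beta_{s,r,z}^{(t)}$ remains $o(p_\both n_\strong\|\vnu\|^2/m)$ throughout, so that $\gS_\both$ is effectively the sole driver of hard-signal growth, is where a careful concentration-plus-induction argument is required. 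Combining that control with a sharp anti-concentration estimate at test time is what produces the clean threshold $n_\strong p_\both^2\|\vnu\|^4/(\sigma_p^4 d)$ separating benign and harmful overfitting.
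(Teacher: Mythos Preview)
Your overall framework---signal--noise decomposition plus an induction on coefficient invariants---matches the paper, and the convergence argument and benign-overfitting upper bound are correct in outline. There are two concrete gaps and one misplaced concern.

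\textbf{The $\pm\vnu_s$ filter split.} Your invariant (i) says filters of class $s$ develop positive correlation with $\vnu_s$. This cannot be the right invariant: both $\gS_\both$ and $\gS_\hard$ contain $+\vnu_{\tilde y}$ and $-\vnu_{\tilde y}$ with equal probability, so a model whose class-$s$ filters are all positively aligned with $\vnu_s$ fails on half the hard test points. The paper's mechanism is that filters with $\langle\vw_{s,r}^{(0)},\vnu_s\rangle>0$ drive that inner product increasingly positive while filters with $\langle\vw_{s,r}^{(0)},\vnu_s\rangle<0$ drive it increasingly negative; both halves must be tracked separately, and the benign bound on $\gS_\hard$ uses the \emph{minimum} of the two accumulated magnitudes.

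\textbf{The lower bound.} ``Anti-concentration'' does not suffice. The test-time noise contribution is $g(\vxi)=\tfrac{1}{m}\sum_r\sigma(\langle\vw_{1,r}^{(t)},\vxi\rangle)-\tfrac{1}{m}\sum_r\sigma(\langle\vw_{-1,r}^{(t)},\vxi\rangle)$, a difference of rectified-Gaussian sums for which no off-the-shelf anti-concentration applies. The paper constructs a deterministic shift $\vzeta\propto\sum_{i:\hat y_i=1}\tilde\vxi_i$ with $\|\vzeta\|\le 0.02\,\sigma_p$ under the harmful condition, shows that for every $\vxi$ at least one of $\pm\vxi,\,\pm\vxi+\vzeta$ lands in the bad set $\{|g|\ge\text{hard-signal level}\}$, and converts this pigeonhole fact into the probability lower bound via the total-variation distance between $\gN(\vzero,\sigma_p^2\mLambda)$ and $\gN(\pm\vzeta,\sigma_p^2\mLambda)$.

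\textbf{The obstacle you flag is not the obstacle.} The $\gS_\hard$ ``weak-label noise'' is handled by counting, not cancellation: \ref{condition:both_large} forces $p_\hard\le p_\both/C$, so all $\gS_\hard$ examples together are outnumbered by correctly weak-labeled $\gS_\both$ examples by a large constant factor. Combined with the invariant that all $\tilde g_i^{(t)}$ stay within a bounded ratio of one another (a direct consequence of noise memorization dominating the margins in this regime), the flipped-label contribution to the $\vnu$-coefficients is controlled by its raw count and is never competitive. No concentration over the weak-label randomness is needed.
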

The proof is provided in Appendix~\ref{proof:strong_noise_dominant}. 
Theorem~\ref{theorem:weak-to-strong} guarantees training loss convergence and further characterizes the overall test error in the weak-to-strong scenario. 
Specifically, it shows that the error is much smaller than the lower bound for the weak model's error (Proposition~\ref{prop:weak_limitations}) when the number of data $n_\strong$ exceeds a certain threshold. Conversely, when $n_\strong$ falls below a similar threshold, the error remains lower-bounded by a constant multiple of $p_\hard$, as in the case of the supervising weak model's error. The fact that these two thresholds differ only by constant factors provides a tight characterization of these distinct regimes.

\subsection{Data-Abundant Regime}
In this regime, a sufficient amount of data is available, allowing signal learning to dominate the effects of noise memorization. We formalize this regime as follows.
\begin{condition}[Data-Abundant Regime]\label{condition:signal-dominant}
    All conditions in Condition~\ref{condition} hold, using the same constant $C>0$ as introduced therein, and the following additional condition holds: $n_\strong \geq  C \sigma_p^2 d \log T^*/(p_\both \norm{\vnu}^2)$.
\end{condition}
Due to the limited availability of costly true-labeled data, the defining conditions for this data-abundant regime primarily focus on $n_\strong$. Thus, the characteristics of the supervising weak model, as established in Theorem~\ref{theorem:weak}, remain applicable in this regime.
The following theorem demonstrates the emergence of weak-to-strong generalization in the early phase, where training loss remains large.
\begin{theorem}[Weak-to-Strong Training, Data-Abundant Regime]\label{theorem:weak-to-strong_signal_dominant}
    Let $\mW^{(t)}$ be the iterates of the weak-to-strong training, with the weak model $f_\weak(\vw^*, \cdot)$ satisfying the conclusion of Theorem~\ref{theorem:weak}.  
    For any $\delta \in (0,1)$ satisfying Condition~\ref{condition:signal-dominant},  
    with probability at least $1 - \delta$, there exists early stopping time $T_\es = \bigO(\eta^{-1} m (2p_\easy + p_\both)^{-1} \norm{\vmu}^{-2})$ such that the following statements hold:
    \begin{enumerate}[leftmargin=*]
        \item The early stopped strong model $f_\strong \left (\mW^{(T_\es)}, \cdot \right)$ perfectly fits all training data points having correct labels (i.e. $\hat y_i = \tilde y_i$) but fails on all training data points with flipped labels (i.e. $\hat y_i \neq \tilde y_i$). In other words, the model predicts the true label $\tilde y_i$ for any training data point $\tilde \mX_i$. 
        \item Let $(\mX, y) \sim \gD$ be an unseen test example, independent of the training set $\{(\tilde \mX_i, \hat y_i)\}_{i=1}^{n_\strong}$. We have 
        \begin{equation*}
        \mathbb{P}\left[ y f_\strong \! \left( \mW^{(T_\es)} , \mX \!\right)\! < \! 0 \right] \leq (p_\easy + p_\both)\exp \left( \! -\frac{n_\strong (2p_\easy + p_\both)^2 \lVert \vmu \rVert^4}{C_5 \sigma_p^4 d} \right) + p_\hard \exp \left( \! -\frac{n_\strong p_\both^2 \lVert \vnu \rVert^4}{C_5 \sigma_p^4 d} \right).
        \end{equation*}
        Here, $C_5>0$ is a constant.
    \end{enumerate}
\end{theorem}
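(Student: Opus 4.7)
The plan is to analyze the gradient dynamics of \eqref{eq:strong_update} via a signal--noise decomposition, in the style of \citet{cao2022benign,kou2023benign}, and then read off the claims from the coefficient sizes at time $T_\es$. Specifically, I would write, for each filter,
\[
\vw_{s,r}^{(t)} = \vw_{s,r}^{(0)} + \sum_{s'} \alpha_{s,r,s'}^{(t)} \frac{\vmu_{s'}}{\norm{\vmu}^2} + \sum_{s'} \beta_{s,r,s'}^{(t)} \frac{\vnu_{s'}}{\norm{\vnu}^2} + \sum_{i \in [n_\strong]} \rho_{s,r,i}^{(t)} \frac{\tilde \vxi_i}{\norm{\tilde \vxi_i}^2},
\]
and track each coefficient separately. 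Because $T_\es$ is early, I would run an inductive argument showing that on this interval the loss derivatives $\tilde g_i^{(t)}$ on unfit examples remain $\Theta(1)$, which reduces the per-step dynamics to a clean linear accumulation.

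First I would control the easy-signal coefficient $\alpha_{s,r,s}^{(t)}$. By Theorem~\ref{theorem:weak}, the pretrained weak teacher labels easy-only and both-signal data correctly with probability $1-o(1)$, so an $\Omega(p_\easy+p_\both/2)$ fraction of the training pool contributes gradients in the $+\vmu_s$ direction for the filters of class $s$. This gives growth rate $\Theta(\eta \norm{\vmu}^2 (2p_\easy + p_\both)/m)$, so $\alpha_{s,r,s}^{(t)}$ reaches $\Theta(1)$ exactly at $T_\es = \bigO(\eta^{-1} m (2p_\easy+p_\both)^{-1}\norm{\vmu}^{-2})$. In parallel, hard-signal coefficients $\beta_{s,r,s}^{(t)}$ are driven essentially only by both-signal data, since the weak labels on hard-only data are almost balanced and their contributions cancel to leading order; by \ref{condition:both_large} the resulting growth is $\Theta(\eta \norm{\vnu}^2 p_\both /m)$, yielding $\beta_{s,r,s}^{(T_\es)} = \Theta(p_\both\norm{\vnu}^2/((2p_\easy+p_\both)\norm{\vmu}^2)) = \omega(\sigma_p \norm{\vnu}^{-1}\sqrt{\log T^*})$, i.e.\ large enough to dominate the noise-induced inner products on unseen test noise. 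Meanwhile, each noise coefficient $\rho_{s,r,i}^{(t)}$ is driven by a single sample, so its growth per step is of order $\eta \sigma_p^2 d/(m n_\strong)$; by Condition~\ref{condition:signal-dominant}, even at $t=T_\es$ one has $|\rho_{s,r,i}^{(t)}| = o(1)$, and the aggregate $\sum_i (\rho_{s,r,i}^{(t)})^2$ is correspondingly small. This decomposition must be maintained by induction, ensuring ReLU activation patterns behave as expected: filter $\vw_{s,r}$ should only be ``activated'' by patches aligned with its class through a signal, never through noise.

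With the coefficient sizes in hand, both conclusions follow quickly. For statement 1, on a correctly-labeled training sample the signal-aligned filters for class $\hat y_i = \tilde y_i$ dominate while opposite-class filters stay $o(1)$, giving $\hat y_i f_\strong(\mW^{(T_\es)},\tilde\mX_i)>0$; on a mislabeled sample (necessarily hard-only by Theorem~\ref{theorem:weak}), the signal present is $\pm \vnu_{\tilde y_i}$, which activates the class-$\tilde y_i$ filters and produces an output of sign $\tilde y_i \neq \hat y_i$. For statement 2, conditioning on a test point's category and on the signs of its signal patches, I would decompose $y f_\strong(\mW^{(T_\es)},\mX)$ into a deterministic signal part of size $\Theta((2p_\easy+p_\both)\norm{\vmu}^2)$ (for categories $\gS_\easy\cup\gS_\both$) or $\Theta(p_\both\norm{\vnu}^2)$ (for $\gS_\hard$), and a Gaussian random part $\sum_{s,r}\inner{\vw_{s,r}^{(T_\es)},\vxi}$ whose variance is controlled by $\sigma_p^2 \sum_i(\rho_{s,r,i}^{(T_\es)})^2\asymp \sigma_p^2 \cdot T_\es^2 \cdot (\text{per-step noise scale})^2$. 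A Gaussian tail bound then produces the two exponentials $\exp(-n_\strong(2p_\easy+p_\both)^2\norm{\vmu}^4/(C_5\sigma_p^4 d))$ and $\exp(-n_\strong p_\both^2\norm{\vnu}^4/(C_5\sigma_p^4 d))$, weighted respectively by $p_\easy+p_\both$ and $p_\hard$.

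The main obstacle will be sustaining the inductive control of the ``wrong-class'' activations over the entire interval $[0,T_\es]$. For hard signals the driving signal is a minority subpopulation (the both-signal data, at rate $p_\both$), while the mislabeled hard-only data injects symmetric noise updates into filters of both classes; one has to show via Condition~\ref{condition:signal-dominant} and \ref{condition:both_large} that the hard-signal coefficient $\beta_{s,r,s}^{(t)}$ outruns both the off-diagonal signal leak $\beta_{-s,r,s}^{(t)}$ and the growing noise coefficients $\rho_{-s,r,i}^{(t)}$ for every $t\leq T_\es$, so that filters never fire on the wrong class before early stopping. A secondary technical point is keeping the linear-regime approximation $\tilde g_i^{(t)} = \Theta(1)$ valid: this requires proving that the margin on correctly-labeled samples is still small enough at $T_\es$ to avoid fast exponential decay of the loss derivatives, which follows because signals have only just reached $\Theta(1)$ and noise contributions are $o(1)$.
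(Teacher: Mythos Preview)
Your plan matches the paper's proof closely: the same signal--noise decomposition, the same inductive argument that $\tilde g_i^{(t)}\in[1/9,1]$ on $[0,T_{\es}]$ so that all signal coefficients grow linearly while noise coefficients stay $o(1)$, and the same way of reading off both statements from the resulting coefficient sizes at $T_{\es}$. When you execute it, two details need care: the hard-signal filters bifurcate by $\mathrm{sign}\,\langle\vw_{s,r}^{(0)},\vnu_s\rangle$ into a set learning $+\vnu_s$ and a set learning $-\vnu_s$ (the paper tracks these explicitly as $\gA_s,\gB_s$, and you need both to handle hard-only test points with either sign), and the noise contribution to the test output passes through ReLU and so is not Gaussian---the paper bounds $h(\vxi)=\tfrac1m\sum_r\sigma(\langle\vw_{-y,r}^{(T_\es)},\vxi\rangle)$ via Lipschitz concentration of a Gaussian rather than a direct Gaussian tail.
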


The proof is provided in Appendix~\ref{proof:weak-to-strong_signal_dominant}. Theorem~\ref{theorem:weak-to-strong_signal_dominant} shows that weak-to-strong generalization can arise via early stopping in this regime. It provides guarantees for an early-stopped model and thus does not provide guarantees on the model's performance at convergence. One might therefore be curious how training until convergence influences performance. We conducted experiments in our setting and observed that after this early phase, performance often degrades and then plateaus, exhibiting accuracy similar to or even lower than that of the supervising weak model. While we leave a rigorous proof for this late-phase behavior open, we provide an intuitive explanation in Section~\ref{section:overview}.

The role of early stopping for weak-to-strong generalization is also discussed in the literature. \citet{burns2024weaktostrong} observe that in ChatGPT Reward Modeling tasks and a subset of NLP tasks, early stopping can improve weak-to-strong generalization, while overtraining can lead to degradation. 
\citet{medvedev2025weak} also discuss early stopping in their theoretical setting, where it becomes essential due to their consideration of training on the population risk over the distribution of pseudo-labeled data.
In contrast, in our finite-sample setting, early stopping is not strictly required to achieve weak-to-strong generalization. In fact, a strong model that perfectly fits the pseudo-labeled training data may lead to either low or high test error, as observed in the data-scarce regime.
Thus, the fact that training dynamics can converge to a solution with poor generalization, even under abundant data and the existence of good solutions, is somewhat surprising.  
\section{Key Theoretical Insights}\label{section:overview}
In this section, we provide key insights behind our theoretical analysis.  
We formally prove this intuition using several theoretical tools, such as the signal-noise decomposition \citep{cao2022benign}.

For weak model training, its update rule \eqref{eq:weak_update} implies that the model weight vector $\vw$ is updated in directions determined by the signal and noise vectors within the training samples. The evolution of $\vw$ along each such vector's direction is influenced by that vector's strength and its frequency of appearance in the dataset. Due to the limited capability of the weak model, it cannot learn hard signals with opposite signs (e.g., $\vnu_1, -\vnu_1$). Furthermore, the cancellation of updates along hard signal directions and our condition \ref{condition:easy_hard} ensure that the learning of easy signals predominates over that of hard signals. This dominance means that while easy signals are effectively learned, the learning of hard signals is largely suppressed. Consequently, in both-signal data, the contribution from the poorly learned hard signal component is not large enough to disrupt the classification guided by the well-learned easy signals. Therefore, the weak model can correctly predict not only easy-only data but also both-signal data.

We now explain how the supervision from the pretrained weak model affects the learning dynamics of weak-to-strong training. Let us first introduce some notation. For each $i \in [n_\strong]$, we denote by $\tilde \vv_i^{(1)}$, $\tilde \vv_i^{(2)}$, and $\tilde \vxi_i$ the signal vectors and noise vector of the $i$-th input $\tilde \mX_i$, respectively. 
For each $\vv \in \{\vmu_1, \vmu_{-1}, \pm \vnu_1, \pm \vnu_{-1}\}$ and $l \in [2]$, we define $\gC_\vv^{(l)}$ and $\gF_\vv^{(l)}$ as the sets of indices $i \in [n_\strong]$ such that $\tilde \vv_i^{(l)} = \vv$ and the supervision corresponds to the clean label (i.e., $\hat y_i = \tilde y_i$) or the flipped label (i.e., $\hat y_i = -\tilde y_i$), respectively. Lastly, recall that $\tilde g_i^{(t)} = - \ell' (\hat y_i f_\strong(\mW^{(t)}, \tilde \mX_i))$ denotes the negative of the loss derivative for $i$-th sample. 

Update rule for weak-to-strong training \eqref{eq:strong_update} implies that for any $s \in \{\pm1 \}$ and $r \in [m]$, 
\begin{equation*}
    \inner{\vw_{s,r}^{(t+1)}, \vmu_s} = \inner{\vw_{s,r}^{(t)}, \vmu_s} + \frac{\eta}{m n_\strong} \sum_{l \in [2]}  \Big( \sum_{i \in \gC_{\vmu_s}^{(l)}} \tilde g_i^{(t)} - \sum_{i \in \gF_{\vmu_s}^{(l)}} \tilde g_i^{(t)} \Big) \norm{\vmu}^2 \mathbbm{1} \left[ \inner{\vw_{s,r}^{(t)}, \vmu_s}>0\right]. 
\end{equation*}
Since the supervising weak model achieves low test error on easy-only and both-signal data, the pseudo-labels for training samples involving $\vmu_s$ have a low flipping probability, and this implies $ |\gF_{\vmu_s}^{(l)}|/n_\strong \approx 0$. This ensures that, in both data-scarce and data-abundant regimes, $\langle \vw_{s,r}^{(t)}, \vmu_s \rangle$ increases if it is positive. 

Similarly, an update for learning hard signals can be written as follows:
\begin{align*}
    \inner{\vw_{s,r}^{(t+1)}, \vnu_s} &= \inner{\vw_{s,r}^{(t)}, \vnu_s} + \frac{\eta}{mn_\strong} \sum_{l \in [2]} \Big( \sum_{i \in \gC_{\vnu_s}^{(l)}}\tilde g_i^{(t)} - \sum_{i \in \gF_{\vnu_s}^{(l)}} \tilde g_i^{(t)} \Big) \norm{\vnu}^2 \mathbbm{1}\left[ \inner{\vw_{s,r}^{(t)}, \vnu_s} >0 \right]\\
     &\phantom{= \inner{\vw_{s,r}^{(t)}, \vnu_s}} - \frac{\eta}{mn_\strong} \sum_{l \in [2]} \Big( \sum_{i \in \gC_{-\vnu_s}^{(l)}}\tilde g_i^{(t)} - \sum_{i \in \gF_{-\vnu_s}^{(l)}} \tilde g_i^{(t)} \Big) \norm{\vnu}^2 \mathbbm{1}\left[ \inner{\vw_{s,r}^{(t)}, \vnu_s} <0 \right].
\end{align*}
However, weak-to-strong generalization exhibits different behaviors across the two regimes, influenced by the presence of a non-negligible fraction of data containing hard signals with flipped pseudo-labels. In the data-scarce regime, noise memorization is a dominant component of the learning process. This can lead to the learning effort being more balanced across different data points. 
A sufficient fraction of both-signal data guarantees $|\gC_{\vnu_s}^{(l)}|, |\gC_{-\vnu_s}^{(l)}| \gg |\gF_{\vnu_s}^{(l)}|, |\gF_{- \vnu_s}^{(l)}|$ and this indicates that $\langle \vw_{s,r}^{(t+1)}, \vnu_s \rangle > \langle \vw_{s,r}^{(t)}, \vnu_s \rangle$ if $\langle \vw_{s,r}^{(t)}, \vnu_s \rangle>0$ and $\langle \vw_{s,r}^{(t+1)}, \vnu_s \rangle < \langle \vw_{s,r}^{(t)}, \vnu_s \rangle$ if $\langle \vw_{s,r}^{(t)}, \vnu_s \rangle<0$.
Therefore, the strong model can learn hard signals with opposite signs $\vnu_s$ and $-\vnu_s$, simultaneously, by utilizing different sets of filters $\{r \in [m] : \langle \vw_{s,r}^{(0)}, \vnu_s \rangle >0\}$ and $\{r \in [m] : \langle \vw_{s,r}^{(0)}, \vnu_s \rangle < 0\}$.

In contrast to the data-scarce regime, in the early phase of the data-abundant regime, the strong model can learn both easy and hard signals quickly (even faster than noise is memorized) due to the significant abundance of signal vectors from the clean-labeled training data.
This leads to almost perfect generalization on unseen data. Let us describe our intuition for why overtraining can lead to performance degradation. Rapid learning of signals also creates a growing discrepancy in the negative loss derivatives $\tilde g_i^{(t)}$'s between clean-label data and flipped-label data. The non-negligible portion of flipped-label hard-only data combined with the imbalance in loss derivatives can lead to the contributions from these flipped-label data points (e.g., $\sum_{i \in \gF_{\vnu_s}^{(l)}} \tilde{g}_i^{(t)}$) predominating over those from clean-labeled data points (e.g., $\sum_{i \in \gC_{\vnu_s}^{(l)}} \tilde{g}_i^{(t)}$). Consequently, the strong model may start ``forgetting'' learned signals as it continues to minimize the training loss defined by these pseudo-labels.

\paragraph{Practical Insights.} 
Our analysis reveals the following mechanism for weak-to-strong generalization: the weak model first successfully labels data containing easy-to-learn information. This data includes a subset that contains both easy information and harder-to-learn information (which the weak model fails to capture). The strong model then utilizes this correctly labeled subset to successfully learn the harder-to-learn information. We believe that this insight can be applied to practical scenarios, potentially leading to the development of data selection techniques that preferentially select such beneficial data for better weak-to-strong generalization.
\vspace{-5pt}
\section{Experiments}\label{section:exp}
\vspace{-5pt}
We conduct experiments to support our findings, using NVIDIA RTX A6000 GPUs.
\vspace{-5pt}
\subsection{Experiments on Our Theoretical Setting}
\vspace{-5pt}
We perform experiments in our setting described in Section~\ref{section:problem_setting}. We set the dimension $d = 2000$ and the signal vectors $\vmu_1, \vmu_{-1}, \vnu_1, \vnu_{-1}$ are constructed from randomly generated orthonormal vectors, which are subsequently scaled so that their respective norms are $\norm{\vmu}=0.4$ and $\norm{\vnu}=0.35$.
The noise strength is $\sigma_p = 0.1$ and the data type probabilities are $p_\easy = 0.4$ and $p_\hard = p_\both = 0.3$. 

We first train the weak model using $n_\weak = 5000$ true-labeled data points. The training is conducted for 1000 epochs using stochastic gradient descent with batch size 256 and learning rate $\eta=0.1$, which results in a test accuracy of $0.851$.
For weak-to-strong training, we use the strong model with $m = 50$ filters and an initialization scale $\sigma_0 = 0.01$.
We train the strong model using stochastic gradient descent with batch size 256 and learning rate $\eta = 0.1$ on the dataset labeled by the pretrained weak model.
We use three different values for the number of data points, $n_\strong = 75, 2000, 20000$.

Figure~\ref{fig:exp} provides the training and test accuracy for weak-to-strong training with three different training dataset sizes. We train the strong model for 2000 training epochs when $n_\strong = 75$ or $n_\strong = 2000$, and for 10000 epochs when $n_\strong = 20000$, as this requires more iterations for convergence compared to the other cases. We observe three different types of results revealed in our analysis.

The cases $n_\strong = 75$ and $n_\strong = 2000$ support our analysis in the data-scarce regime. In both cases, the training accuracy initially increases faster than the test accuracy. However, their final test accuracies differ.
In the case of $n_\strong = 75$, the strong model achieves perfect training accuracy, while its test accuracy remains close to that of the supervising weak model. This aligns with our findings on the failure of weak-to-strong generalization due to harmful overfitting.
In contrast, for $n_\strong = 2000$, the increased amount of data allows the test accuracy to sufficiently increase, eventually exceeding the weak model's test accuracy. This aligns with our findings on the emergence of weak-to-strong generalization via benign overfitting.

The case of $n_\strong = 20000$ corresponds to the data-abundant regime in our analysis. Unlike the prior two cases, test accuracy grows faster than training accuracy and achieves near-perfect accuracy, while training accuracy remains comparable to that of the weak model; this aligns with Theorem~\ref{theorem:weak-to-strong_signal_dominant}. We also observe that continued training deteriorates test accuracy, while training accuracy increases.
\begin{figure}[h]
    \vspace{-5pt}
    \centering 
    \begin{subfigure}[t]{0.32\textwidth} 
        \centering 
        \includegraphics[width=\linewidth]{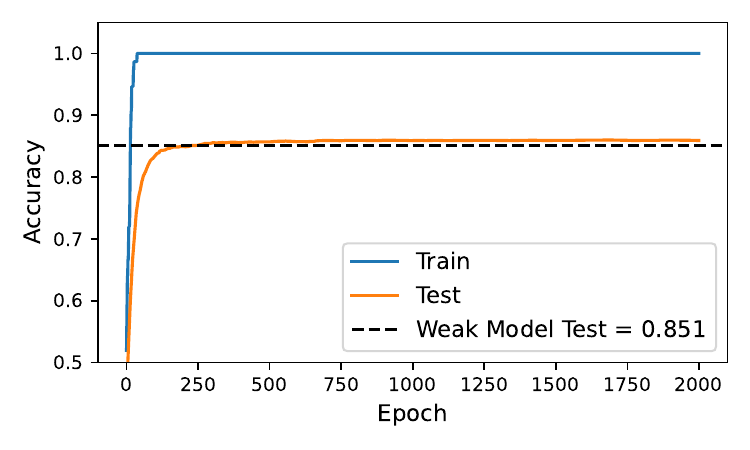} 
        \caption{$n_\strong = 75$}
        \label{fig:subfig_a}
    \end{subfigure}
    \hfill
    \begin{subfigure}[t]{0.32\textwidth} 
        \centering
        \includegraphics[width=\linewidth]{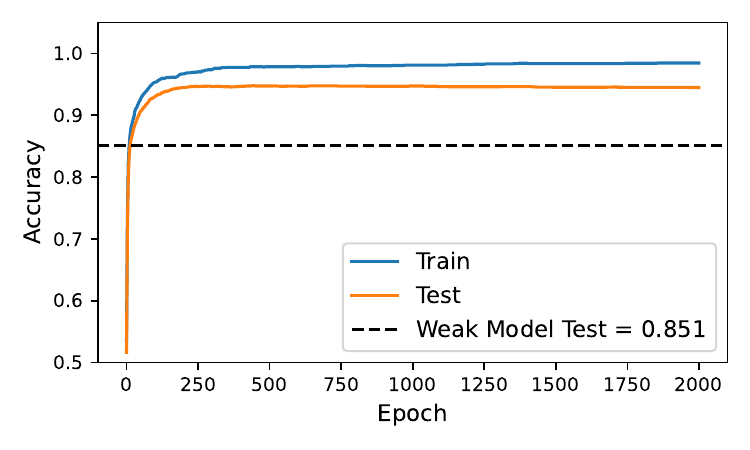}
        \caption{$n_\strong = 2000$}\label{fig:subfig_b}
    \end{subfigure}
    \hfill
    \begin{subfigure}[t]{0.32\textwidth} 
        \centering
        \includegraphics[width=\linewidth]{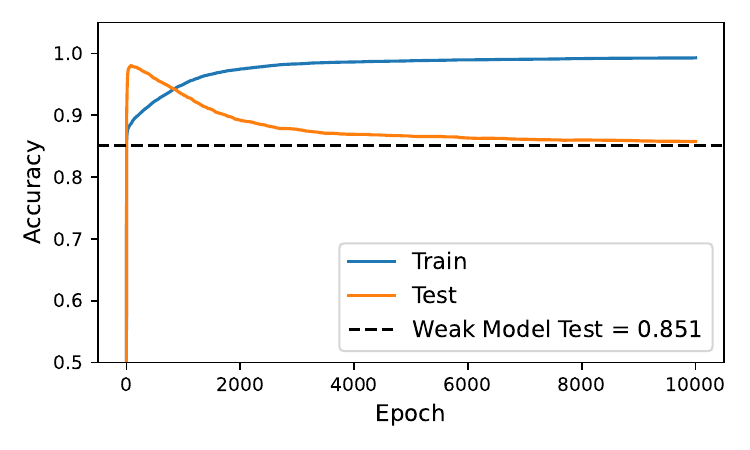}
        \caption{$n_\strong = 20000$}\label{fig:subfig_c}
    \end{subfigure}
    \caption{Weak-to-strong training with varying training dataset sizes ($n_\strong$). These align with our theoretical findings: (a) harmful overfitting for $n_\strong=75$; (b) benign overfitting for $n_\strong=2000$; and (c) for $n_\strong=20000$, an early emergence of generalization and degradation with overtraining. }
    \label{fig:exp}
\end{figure}

\subsection{Experiments on MNIST}
We also provide empirical results using a real-world dataset MNIST. Since it is hard to clearly delineate signal and noise in real data, we modify the MNIST dataset to emphasize their roles.

First, we multiply each pixel in the original images of digits 4, 5, 6, 7, 8, and 9 by 0.02, while keeping the images of other digits unchanged. This corresponds to the presence of hard signals, with digits 4--9 serving as the hard signals. To emphasize the role of noise, we replace the border region of each 28×28 image---a 5-pixel-wide frame along the edges---with standard Gaussian noise. This results in images where the central 18×18 region contains the digit, surrounded by Gaussian noise. Finally, we randomly concatenate two such modified images that share the same parity (i.e., both even or both odd), producing 28×56 images. We assign binary labels based on their parity.

Figure~\ref{fig:example} provides examples of the modified data.
The resulting data includes a variety of signal types: some pairs contain two bright digits, others contain one bright and one dark digit, and some consist of two dark digits. These types serve as easy-only data, both-signal data, hard-only data in our setting.

For the weak model, we use an MLP consisting of a single hidden layer with 128 units followed by a ReLU activation. For the strong model, we use a CNN with three convolutional layers of increasing channels (64, 128, 256), each followed by batch normalization, ReLU, and max pooling. The extracted features are then flattened and passed through a fully connected layer with 512 units.
We first train the weak model using 500 samples. Then, we train the strong model using labels predicted by the trained weak model, with varying numbers of training samples $n_\strong = 500, 1000, 1500, 2000, 2500$. We train each model for 300 epochs using the full-batch Adam optimizer with default parameters.

In Table~\ref{table:mnist}, we observe a trend in which the weak-to-strong gain increases with $n_\strong$ and then decreases. These observations are consistent with our theoretical findings, which describe a transition from harmful overfitting to benign overfitting, and eventually to the data-abundant regime.

\begin{figure}[htbp]
\centering
\begin{minipage}[c]{0.48\textwidth}
    \centering
    \includegraphics[width=0.95\textwidth]{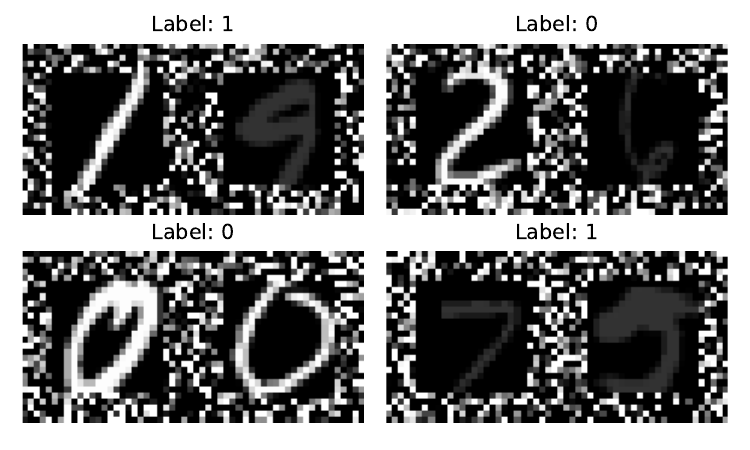}
    \caption{Examples of the modified MNIST.}
    \label{fig:example}
\end{minipage}
\hfill
\begin{minipage}[c]{0.48\textwidth}
    \centering
    \captionof{table}{%
        Test accuracy (\%) for the weak model and the resulting weak-to-strong model.
        Results are calculated as the mean and standard deviation over five independent runs.}
    \resizebox{0.95\textwidth}{!}{
    \begin{tabular}{c c c}
        \toprule
        {$n_{st}$} & {Weak Model} & {Weak-to-Strong} \\
        \midrule
        500  & 86.06 (0.45) & 84.62 (3.95) \\
        1000 & 85.93 (0.13) & 88.34 (1.28) \\
        1500 & 86.42 (1.50) & 88.46 (2.85) \\
        2000 & 86.69 (0.52) & 87.44 (2.73) \\
        2500 & 85.86 (1.39) & 86.55 (1.22) \\
        \bottomrule
    \end{tabular}
    }
    
    \label{table:mnist}
\end{minipage}
\end{figure}

\vspace{-5pt}
\section{Conclusion}\label{section:conclusion}

We theoretically investigated weak-to-strong generalization by analyzing the training dynamics of a two-layer ReLU CNN under supervision from a pre-trained linear CNN on patch-wise data containing both signals and noise. 
Interestingly, our results reveal that weak-to-strong training exhibits distinct behaviors across different data regimes.
In the data-scarce regime, we prove that weak-to-strong training converges and that generalization can emerge via benign overfitting when data availability is not extremely limited. Furthermore, we characterize the conditions leading to a transition from this benign overfitting to harmful overfitting. In the data-abundant regime, we show that weak-to-strong generalization arises in an early phase of training, and we observe that overtraining leads to performance degradation.
We hope our theoretical approaches provide valuable insights.

\paragraph{Limitation and Future Work.}
Our work has some limitations regarding the simplified data distribution and model architectures used for theoretical analysis. Extending our analysis to more complex data or models could be a future direction. Also, it would be interesting to analyze methods for improving weak-to-strong generalization (e.g., auxiliary confidence loss \citep{burns2024weaktostrong}) in our theoretical framework. Lastly, developing techniques for better weak-to-strong generalization based on our theoretical insights is an important future direction.

\section*{Acknowledgement}
This work was supported by a National Research Foundation of Korea (NRF) grant funded by the Korean government (MSIT) (No.\ RS-2024-00421203) and the InnoCORE program of the Ministry of Science and ICT (No.\ N10250156).

\bibliographystyle{plainnat}
\bibliography{reference}

\newpage

\appendix
\allowdisplaybreaks

\setcounter{tocdepth}{2}
\tableofcontents
\clearpage

\section{Proof Preliminaries}

We use the following notation for the proof.

\paragraph{Notation.}

We define $\SNR_\vmu =\norm{\vmu}/(\sigma_p \sqrt{d}), \SNR_\vnu =\norm{\vnu}/(\sigma_p \sqrt{d})$. Let $S$ be the orthogonal complement of the span of the signal vectors $\{\vmu_1, \vmu_{-1}, \vnu_1, \vnu_{-1}\}$. We denote an orthonormal basis for $S$ by $\{\vb_1, \dots, \vb_{d-4}\}$. For any vector $\vv \in \R^d$, $\Pi_S \vv$ represents the projection of $\vv$ onto $S$.

\subsection{Proof Preliminaries for Weak Model Training}

In this subsection, we sequentially introduce signal-noise decomposition~\citep{cao2022benign,kou2023benign} in our setting, high-probability properties of data sampling, and quantitative bounds frequently used throughout the proof for weak model training.

We use the following notation for the analysis of weak model training.
\paragraph{Notation.}
For each $i \in [n_\weak]$, we denote by $\vv_i^{(1)}$, $\vv_i^{(2)}$, and $\vxi_i$ the signal vectors and noise vector of the $i$-th input $\mX_i$, respectively. 
For each $\vv \in \{\vmu_1, \vmu_{-1}, \pm \vnu_1, \pm \vnu_{-1}\}$, we define $\gS_\vv^{(1)}$ and $\gS_\vv^{(2)}$ as the sets of indices $i \in [n_\weak]$ such that $\vv_i^{(1)} = \vv$ and $\vv_i^{(2)} = \vv$, respectively.

\subsubsection{Signal-Noise Decomposition}
\begin{lemma}\label{lemma:weak_decomp}
    For any iteration $t \geq 0$, we can write $\vw^{(t)}$ as 
    \begin{equation*}
        \vw^{(t)} = M_1^{(t)} \frac{\vmu_1}{\lVert \vmu\rVert^2} - M_{-1}^{(t)} \frac{\vmu_{-1}}{\lVert \vmu\rVert^2} + N_1^{(t)} \frac{\vnu_1}{\lVert \vnu\rVert^2} - N_{-1}^{(t)} \frac{\vnu_{-1}}{\lVert \vnu\rVert^2} + \sum_{i \in [n_\weak]} y_i \rho_i^{(t)} \frac{\vxi_i}{\lVert \vxi_i\rVert^2},
    \end{equation*}
    where $M_s^{(t)}, N_s^{(t)}, \rho_i^{(t)}$ are recursively defined as
    \begin{align*}
        M_s^{(t+1)} &= M_s^{(t)} + \frac{\eta}{n_\weak} \left(\sum_{i \in \gS_{\vmu_s}^{(1)}} g_i^{(t)} +  \sum_{i \in \gS_{\vmu_s}^{(2)}} g_i^{(t)}\right)\lVert \vmu \rVert^2,\\
        N_s^{(t+1)} &= N_s^{(t)} + \frac{\eta}{n_\weak} \left(\sum_{i \in \gS_{\vnu_s}^{(1)}} g_i^{(t)} +  \sum_{i \in \gS_{\vnu_s}^{(2)}} g_i^{(t)} - \sum_{i \in \gS_{-\vnu_s}^{(1)}} g_i^{(t)} -  \sum_{i \in \gS_{-\vnu_s}^{(2)}} g_i^{(t)}\right)\lVert \vnu \rVert^2,\\
        \rho_i^{(t+1)} &= \rho_i^{(t)} + \frac{\eta}{n_\weak} g_i^{(t)} \lVert \vxi_i\rVert^2,
    \end{align*}
    starting from $M_s^{(0)} = N_s^{(0)} = \rho_i^{(0)} = 0$. It follows that $M_s^{(t)}$ and $\rho_i^{(t)}$ are increasing in iteration $t$.
\end{lemma}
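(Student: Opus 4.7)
The plan is an induction on the iteration $t$, tracking how each gradient step redistributes $\vw^{(t)}$ along the four signal directions $\vmu_{\pm 1},\vnu_{\pm 1}$ and the $n_\weak$ noise directions $\vxi_i$. The base case $t=0$ is immediate since $\vw^{(0)}=\vzero$ and $M_s^{(0)}=N_s^{(0)}=\rho_i^{(0)}=0$, so both sides of the claimed identity vanish.

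For the inductive step, I would start from the gradient update \eqref{eq:weak_update}. Because step 4 of the data construction makes $(\vx_i^{(1)},\vx_i^{(2)},\vx_i^{(3)})$ a random permutation of $(\vv_i^{(1)},\vv_i^{(2)},\vxi_i)$, the patch sum equals $\vv_i^{(1)}+\vv_i^{(2)}+\vxi_i$ regardless of the shuffle, so patch order is irrelevant. Next I would regroup the $y_i g_i^{(t)}(\vv_i^{(1)}+\vv_i^{(2)})$ contributions according to which of the six vectors $\vmu_{\pm 1},\pm\vnu_{\pm 1}$ appears at position $l\in\{1,2\}$. The key bookkeeping observation, read off directly from the construction of $\gD$, is that $\vmu_s$ appears in a sample only if $y_i=s$, and that $\pm\vnu_s$ appears only if $y_i=s$. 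Thus each $\vmu_s$-contribution is $s\cdot g_i^{(t)}\vmu_s$, which after rescaling by $\norm{\vmu}^{-2}$ matches the increment prescribed by the $M_s^{(t)}$ recurrence; the $\vnu_s$-direction contributions from $\gS_{\vnu_s}^{(l)}$ and $\gS_{-\vnu_s}^{(l)}$ come with opposite signs, reproducing the $N_s^{(t)}$ recurrence (with the additional sign flip on the $s=-1$ branch handled by the convention of writing $-N_{-1}^{(t)}\vnu_{-1}/\norm{\vnu}^2$). The noise contribution $\tfrac{\eta}{n_\weak} y_i g_i^{(t)}\vxi_i$ coincides with $(y_i\rho_i^{(t+1)}-y_i\rho_i^{(t)})\vxi_i/\norm{\vxi_i}^2$ directly from the definition of the $\rho_i^{(t)}$ recurrence. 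Summing the three groups reproduces the claimed decomposition at step $t+1$.

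For monotonicity of $M_s^{(t)}$ and $\rho_i^{(t)}$, I would use $g_i^{(t)}=-\ell'(y_i f_\weak(\vw^{(t)},\mX_i))=\bigl(1+\exp(y_i f_\weak(\vw^{(t)},\mX_i))\bigr)^{-1}>0$ for the logistic loss, so $M_s^{(t+1)}-M_s^{(t)}\geq 0$ and $\rho_i^{(t+1)}-\rho_i^{(t)}\geq 0$ in each step. Note $N_s^{(t)}$ is deliberately not claimed to be monotone, since updates along $\vnu_s$ and $-\vnu_s$ enter with opposite signs and can cancel.

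There is no genuine obstacle: the statement is a pure bookkeeping identity rather than a quantitative estimate, and the proof reduces to careful grouping of sums. The only care needed is (i) tracking the sign conventions that put minus signs in front of $M_{-1}$ and $N_{-1}$, and (ii) noting that the lemma asserts a particular equality between two expressions, not uniqueness of any decomposition, so the $\vxi_i$'s need not be linearly independent of each other or of the signal subspace for the identity to hold.
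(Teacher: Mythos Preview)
Your proposal is correct and follows essentially the same inductive bookkeeping argument as the paper: establish the base case from $\vw^{(0)}=\vzero$, then expand the update rule \eqref{eq:weak_update}, regroup the $\vx_i^{(p)}$'s according to which of $\vmu_{\pm 1},\pm\vnu_{\pm 1},\vxi_i$ they equal, and read off the stated recurrences. Your additional remarks on the sign conventions, the explicit reason $g_i^{(t)}>0$ forces monotonicity of $M_s^{(t)}$ and $\rho_i^{(t)}$, and the observation that linear independence of the $\vxi_i$ is not needed are all sound and slightly more detailed than the paper's own terse treatment.
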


\begin{proof}[Proof of Lemma~\ref{lemma:weak_decomp}]
    It is trivial for the case $t=0$. Suppose that it holds at iteration $\tau$. From the update rule, we have
    \begin{align*}
        \vw^{(\tau+1)} &= \vw^{(\tau)} +  \frac{\eta}{n_\weak} \sum_{i \in [n_\weak]} y_i g_i^{(\tau)} \sum_{p \in [3]} \vx_i^{(p)}\\
        &= M_1^{(\tau)} \frac{\vmu_1}{\lVert \vmu\rVert^2} - M_{-1}^{(\tau)} \frac{\vmu_{-1}}{\lVert \vmu\rVert^2} + N_1^{(\tau)} \frac{\vnu_1}{\lVert \vnu\rVert^2} - N_{-1}^{(\tau)} \frac{\vnu_{-1}}{\lVert \vnu\rVert^2} + \sum_{i \in [n_\weak]} y_i \rho_i^{(\tau)} \frac{\vxi_i}{\lVert \vxi_i\rVert^2}\\
        &\quad +  \frac{\eta}{n_\weak} \sum_{i \in [n_\weak]} y_i g_i^{(\tau)} \sum_{p \in [3]} \vx_i^{(p)}.
    \end{align*}
    Here $\vx_i^{(p)}$'s are one of $\vmu_1, \vmu_{-1}, \vnu_1, \vnu_{-1}$, and $\vxi_i$. By grouping the terms accordingly, we obtain
    \begin{equation*}
        \vw^{(\tau+1)} = M_1^{(\tau+1)} \frac{\vmu_1}{\lVert \vmu\rVert^2} - M_{-1}^{(\tau+1)} \frac{\vmu_{-1}}{\lVert \vmu\rVert^2} + N_1^{(\tau+1)} \frac{\vnu_1}{\lVert \vnu\rVert^2} - N_{-1}^{(\tau+1)} \frac{\vnu_{-1}}{\lVert \vnu\rVert^2} + \sum_{i \in [n_\weak]} y_i \rho_i^{(\tau+1)} \frac{\vxi_i}{\lVert \vxi_i\rVert^2},
    \end{equation*}
    with
    \begin{align*}
        M_s^{(\tau+1)} &= M_s^{(\tau)} + \frac{\eta}{n_\weak} \left(\sum_{i \in \gS_{\vmu_s}^{(1)}} g_i^{(\tau)} +  \sum_{i \in \gS_{\vmu_s}^{(2)}} g_i^{(\tau)}\right)\lVert \vmu \rVert^2,\\
        N_s^{(\tau+1)} &= N_s^{(\tau)} + \frac{\eta}{n_\weak} \left(\sum_{i \in \gS_{\vnu_s}^{(1)}} g_i^{(\tau)} +  \sum_{i \in \gS_{\vnu_s}^{(2)}} g_i^{(\tau)} - \sum_{i \in \gS_{-\vnu_s}^{(1)}} g_i^{(\tau)} -  \sum_{i \in \gS_{-\vnu_s}^{(2)}} g_i^{(\tau)}\right)\lVert \vnu \rVert^2,\\
        \rho_i^{(\tau+1)} &= \rho_i^{(\tau)} + \frac{\eta}{n_\weak} g_i^{(\tau)} \lVert \vxi_i\rVert^2.
    \end{align*}
    Hence, we have desired conclusion.
\end{proof}

\subsubsection{Properties of Data Sampling}
We establish concentration results for the data sampling.
\begin{lemma}\label{lemma:weak_initial}
    Let $E_\weak$ denote the event in which all the following hold for some large enough universal constant $C_\weak>0$:
    \begin{enumerate}
        \item For each $s \in \{\pm1\}$ and $l \in [2]$, we have
        \begin{equation*}
            \bigg| \left|\gS_{\vmu_s}^{(l)}\right| - \left(\frac{p_\easy }{2} + \frac{p_\both}{4}\right) n_\weak\bigg|, \bigg|\left|\gS_{\pm \vnu_s}^{(l)}\right|- \left(\frac{p_\hard }{4} + \frac{p_\both}{8}\right) n_\weak\bigg| \leq \sqrt{\frac{n_\weak}{2} \log \left(\frac{C_\weak}{\delta} \right)}.
        \end{equation*}

        \item For any $i \in [n_\weak]$,
        \begin{equation*}
            \left | \left \lVert \vxi_{i}\right \rVert^2 - \sigma_p^2 (d-4) \right| \leq  C_\weak \sigma_p^2 d^{\frac 1 2} \sqrt{\log \left( \frac{C_\weak n_\weak}{\delta}\right)}.
        \end{equation*}
        
        \item For any $i, j \in [n_\weak]$ with $i \neq j$,
        \begin{equation*}
           \left \lvert \left \langle \vxi_{i}, \vxi_{j}\right \rangle\right \rvert \leq C_\weak \sigma_p^2 d^{\frac{1}{2}} \sqrt{\log \left( \frac{C_\weak n_\weak^2}{\delta}\right)}.
        \end{equation*}
    \end{enumerate}
    Then, the event $E_\weak$ occurs with probability at least $1- \delta$.
\end{lemma}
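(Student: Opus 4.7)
The plan is to treat each of the three assertions separately with a tailored concentration inequality, then combine them via a final union bound. Throughout I will absorb all numerical constants into $C_\weak$, chosen large enough that each assertion fails with probability at most $\delta/3$.

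For Part 1, each $|\gS_{\vmu_s}^{(l)}|$ is a sum of $n_\weak$ i.i.d.\ Bernoulli indicators. A short case analysis of the three-branch mixture defining $\gD$, weighted by $\mathbb{P}(y_i = s) = 1/2$, gives $\mathbb{E}[|\gS_{\vmu_s}^{(l)}|] = (p_\easy/2 + p_\both/4)\, n_\weak$ and $\mathbb{E}[|\gS_{\pm\vnu_s}^{(l)}|] = (p_\hard/4 + p_\both/8)\, n_\weak$, using in particular that in the both-signal branch each of $\vmu_s$ and $\pm\vnu_s$ appears in position $l$ with conditional probability $1/2$ and $1/4$ respectively. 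Hoeffding's inequality then yields deviation of order $\sqrt{(n_\weak/2)\log(C_\weak/\delta)}$ for each count; a union bound over the finitely many $(s, l, \text{type})$ combinations establishes Part 1.

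For Part 2, since $\vxi_i \sim \gN(\vzero, \sigma_p^2 \mLambda)$ with $\mLambda$ an orthogonal projection of rank $d-4$, the scaled squared norm $\|\vxi_i\|^2/\sigma_p^2$ is a chi-square random variable with $d-4$ degrees of freedom. The Laurent--Massart tail bound gives $\mathbb{P}(|\|\vxi_i\|^2 - \sigma_p^2(d-4)| \geq u) \leq 2\exp(-c u^2/(\sigma_p^4 d))$ in the moderate-deviation regime, and choosing $u = C_\weak \sigma_p^2 \sqrt{d \log(C_\weak n_\weak/\delta)}$ makes the per-$i$ failure probability at most $\delta/(3 n_\weak)$; a union bound over $i \in [n_\weak]$ yields the uniform statement.

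For Part 3, I condition first on $\{\vxi_i\}_{i \in [n_\weak]}$. For fixed $i \neq j$, given $\vxi_i$, the inner product $\langle \vxi_i, \vxi_j \rangle$ is a centered Gaussian with variance $\sigma_p^2 \vxi_i^\top \mLambda \vxi_i = \sigma_p^2 \|\vxi_i\|^2$, since $\vxi_i$ lies in the range of $\mLambda$ almost surely. On the Part 2 event, $\|\vxi_i\|^2 \leq 2 \sigma_p^2 d$ for every $i$, so the conditional standard deviation is at most $\sqrt{2}\, \sigma_p^2 \sqrt{d}$; a standard Gaussian tail bound then provides a bound of order $\sigma_p^2 \sqrt{d \log(C_\weak n_\weak^2/\delta)}$ for each ordered pair with failure probability at most $\delta/(3 n_\weak^2)$, and a union bound over the $\binom{n_\weak}{2}$ unordered pairs completes Part 3. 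The only subtlety I anticipate is bookkeeping: I first fix the high-probability event from Part 2 and then chain the conditional Gaussian bound on top of it, keeping the universal constant $C_\weak$ consistent across all three parts so that the final union bound absorbs everything into a single $\delta$.
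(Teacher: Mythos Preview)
Your proposal is correct. Parts~1 and~2 match the paper's argument essentially verbatim (the paper phrases Part~2 via Bernstein's inequality applied to $\sum_h(z_{i,h}^2-1)$, but Laurent--Massart is an equivalent specialized bound for the chi-square tail).

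The one genuine methodological difference is Part~3. The paper writes $\langle\vxi_i,\vxi_j\rangle=\sigma_p^2\sum_h z_{i,h}z_{j,h}$, observes that each summand is a mean-zero sub-exponential random variable (product of independent sub-Gaussians), and applies Bernstein's inequality directly to this sum. You instead condition on $\vxi_i$, use that $\langle\vxi_i,\vxi_j\rangle$ is then centered Gaussian with variance $\sigma_p^2\|\vxi_i\|^2$, and chain a Gaussian tail bound on top of the norm event from Part~2. Both work; your route is more elementary (only Gaussian and chi-square tails, no sub-exponential machinery) at the cost of the two-step chaining you already flagged, while the paper's route is more uniform (the same Bernstein tool handles Parts~2 and~3) and avoids any intersection-of-events bookkeeping. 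One small wording issue: your opening line ``condition first on $\{\vxi_i\}_{i\in[n_\weak]}$'' is misleading, since conditioning on the full collection makes the inner product deterministic; the clarification that follows (condition on $\vxi_i$ alone for each fixed pair) is what you actually want.
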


\begin{proof}[Proof of Lemma~\ref{lemma:weak_initial}]
    For each $s \in \{\pm 1\}, l \in [2],$ and $i \in [n_\weak]$, 
    \begin{equation*}
        \mathbb{P}\left[\vv_i^{(l)} = \vmu_s\right] = \frac{p_\easy}{2} + \frac{p_\both}{4}, \quad \mathbb{P}\left[\vv_i^{(l)} = \vnu_s\right] = \mathbb{P}\left[\vv_i^{(l)} = -\vnu_s\right] = \frac{p_\hard}{4} + \frac{p_\both}{8}.
    \end{equation*}
    Hence, by H\"{o}effding's inequality, we have
    \begin{equation*}
        \mathbb{P}\left[  \bigg| \left|\gS_{\vmu_s}^{(l)}\right| - \left(\frac{p_\easy }{2} + \frac{p_\both}{4}\right) n_\weak\bigg| \geq  \sqrt{\frac{n_\weak}{2} \log \left(\frac{C_\weak}{\delta}\right)}  \right]
        \leq \frac{2\delta}{C_\weak}
    \end{equation*}
    and
    \begin{equation*}
        \mathbb{P}\left[ \bigg|\left|\gS_{\pm \vnu_s}^{(l)}\right|- \left(\frac{p_\hard }{4} + \frac{p_\both}{8}\right) n_\weak\bigg| \geq  \sqrt{\frac{n_\weak}{2} \log \left(\frac{C_\weak}{\delta}\right)}  \right]  \leq \frac{2 \delta}{C_\weak}.
    \end{equation*}
    
    Note that for each $i \in [n_\weak]$, we can write $\vxi_{i}$ as
    \begin{equation*}
        \vxi_{i} = \sigma_{p} \sum_{h \in [d-4]} \rvz_{i,h} \vb_{h},
    \end{equation*} 
    where $\rvz_{i,h} \overset{\iid}{\sim} \gN(0,1)$. The sub-gaussian norm of standard normal distribution $\gN(0,1)$ is $\sqrt{\frac{8}{3}}$ and then $\left(\rvz_{i,h}\right) ^2-1$'s are mean zero sub-exponential random variables with sub-exponential norm $\frac{8}{3}$ (Lemma 2.7.6 in \citet{vershynin2018high}). 
    In addition, $\rvz_{i,h} \rvz_{j,h}$'s with $i \neq j$ are mean zero sub-exponential random variables with sub-exponential norm less than or equal to $\frac{8}{3}$ (Lemma 2.7.7 in \citet{vershynin2018high}). 
    We use Bernstein's inequality (Theorem 2.8.1 in \citet{vershynin2018high}), with $c$ being the absolute constant stated therein. We then have the following:
    \begin{align*}
        &\quad \mathbb{P} \left[ \left| \left \lVert\vxi_{i}\right \rVert^2 - \sigma_p^2  (d-4) \right| \geq C_\weak \sigma_p^2 d^{\frac 1 2} \sqrt{\log \left( \frac{C_\weak n_\weak}{\delta}\right)}  \right]\\
        &= \mathbb{P} \left[ \left| \sum_{h \in [d-4]}\left( \left( \rvz_{i,h}\right)^2 -1 \right) \right| \geq C_\weak d^{\frac 1 2} \sqrt{\log \left( \frac{C_\weak n_\weak}{\delta}\right)} \right]\\
        &\leq 2 \exp \left( -\frac{9 c C_\weak^2 d}{64(d-4)}  \log \left( \frac {C_\weak n_\weak}{\delta}\right)\right)\\
        &\leq 2\exp \left(-\log \left( \frac {C_\weak n_\weak}{\delta}\right) \right) \leq \frac{2 \delta}{C_\weak n_\weak}.
    \end{align*}
    In addition, for $i,j \in [n_\weak]$ with $i \neq j$, we have 
    \begin{align*}
        &\quad \mathbb{P} \left[ \left| \left \langle \vxi_{i},  \vxi_{j} \right \rangle \right| \geq C_\weak \sigma_p^2 d^{\frac{1}{2}} \sqrt{\log \left( \frac{C_\weak n_\weak^2}{\delta}\right) }\right]\\
        &= \mathbb{P} \left[ \left| \sum_{h \in [d-4]} \rvz_{i,h} \rvz_{j,h} \right| \geq C_\weak d^{\frac{1}{2}} \sqrt{\log \left( \frac{C_\weak n_\weak^2}{\delta}\right)}\right]\\
        &\leq 2 \exp \left( -\frac{9 c C_\weak^2 d}{64(d-4)} \log \left( \frac{C_\weak n_\weak^2}{\delta} \right)\right)\leq \frac{2\delta}{C_\weak n_\weak^2}.
    \end{align*}
    From union bound and a large choice of universal constant $C_\weak>0$, we conclude that the event $E_\weak$ occurs with probability at least $1-\delta$.
\end{proof}

\subsubsection{Properties Used Throughout the Proof}
We introduce some notation and properties that are frequently used throughout the proof. 

Let us define 
\begin{equation*}
    \beta_\weak := 4C_\weak n_\weak  \sqrt{\frac{1}{d}\log \left( \frac{C_\weak n_\weak}{\delta}\right)}, \quad \gamma_\weak = \sqrt{\frac{1}{2n_\weak} \log \left( \frac{C_\weak}{\delta}\right)}, \quad \kappa_\weak = \frac{1}{2}.
\end{equation*} 

Under Condition~\ref{condition:noise-dominant} and the event $E_\weak$, the following hold:
\begin{itemize}[leftmargin = * ]
\item By combining \ref{condition:high_dim} and \ref{condition:easy_hard}, applying \ref{condition:n_m}, and from Condition~\ref{condition:noise-dominant},    
$\beta_\weak$ and $\gamma_\weak$ satisfy the following:
\begin{equation}\label{eq:weak_kappa_gamma}
\beta_\weak \leq \frac{\kappa_\weak}{256 \log T^*}, 
\quad \gamma_\weak \leq \frac{\min \{p_\easy, p_\hard, p_\both\}}{8}.
\end{equation}
\item
From \ref{condition:high_dim}, the following holds for any $i,j \in [n_\weak]$ with $i \neq j$:
\begin{equation}\label{eq:weak_noise}
    \frac{\sigma_p^2 d}{2} \leq \| \vxi_i\|^2 \leq \frac{3\sigma_p^2 d}{2}, \quad \frac{\abs{\inner{\vxi_i, \vxi_j}}}{\norm{\vxi_i}^2} \leq \frac{\beta_\weak}{n_\weak}, \quad \abs{1- \frac{\norm{\vxi_j}^2}{\norm{\vxi_i}^2}} \leq \frac{\beta_\weak}{n_\weak}.
\end{equation}
\item
For any $s,l \in \{\pm1\}$, we have
\begin{equation}\label{eq:weak_data}
     \bigg| \left|\gS_{\vmu_s}^{(l)}\right| - \left(\frac{p_\easy }{2} + \frac{p_\both}{4}\right) n_\weak\bigg|, \bigg|\left|\gS_{\pm \vnu_s}^{(l)}\right|- \left(\frac{p_\hard }{4} + \frac{p_\both}{8}\right) n_\weak\bigg| \leq n_\weak \gamma_\weak.
\end{equation}
\end{itemize}

\subsection{Proof Preliminaries for Weak-to-Strong Training}

In this subsection, we sequentially introduce signal-noise decomposition~\citep{cao2022benign,kou2023benign} in our setting, high-probability properties of data sampling, quantitative bounds frequently used throughout the proof, and a technical lemma~\citep{meng2024benign} for the analysis of weak-to-strong training.

We use the following notation for the analysis of weak-to-strong training.
\paragraph{Notation.}
For each $i \in [n_\strong]$, we denote by $\tilde \vv_i^{(1)}$, $\tilde \vv_i^{(2)}$, and $\tilde \vxi_i$ the signal vectors and noise vector of the $i$-th input $\tilde \mX_i$, respectively. 
For each $\vv \in \{\vmu_1, \vmu_{-1}, \pm \vnu_1, \pm \vnu_{-1}\}$ and $l \in [2]$, we define $\gC_\vv^{(l)}$ and $\gF_\vv^{(l)}$ as the sets of indices $i \in [n_\strong]$ such that $\tilde \vv_i^{(l)} = \vv$ and the supervision corresponds to the clean label (i.e., $\hat y_i = \tilde y_i$) or the flipped label (i.e., $\hat y_i = -\tilde y_i$), respectively. 

\subsubsection{Signal-Noise Decomposition}

\begin{lemma}\label{lemma:strong_decomp}
    For any iteration $t\geq 0$, we can write each weights $\vw_{s,r}^{(t)}$ with $s \in \{\pm 1\}, r \in[m]$ as
    
    \begin{equation*}
        \vw_{s,r}^{(t)} = \vw_{s,r}^{(0)} +  \oM_{s,r}^{(t)} \frac{\vmu_s}{\norm{\vmu}^2} + \uM_{s,r}^{(t)} \frac{\vmu_{-s}}{\norm{\vmu}^2}  +  \oN_{s,r}^{(t)} \frac{\vnu_s}{\norm{\vnu}^2} + \uN_{s,r}^{(t)} \frac{\vnu_{-s}}{\norm{\vnu}^2} + \sum_{i \in [n_\strong]}\rho_{s,r,i}^{(t)} \frac{\tilde \vxi_i}{\lVert \tilde \vxi_i \rVert^2},
    \end{equation*}
    where $\oM_{s,r}^{(t)}, \uM_{s,r}^{(t)}, \oN_{s,r}^{(t)}, \uN_{s,r}^{(t)}, \rho_{s,r,i}^{(t)}$ are recursively defined as
    \begin{align*}
        \oM_{s,r}^{(t+1)} &= \oM_{s,r}^{(t)} + \frac{\eta}{m n_\strong} \sum_{l \in [2]} \left( \sum_{i \in \gC_{\vmu_s}^{(l)}} \tilde g_i^{(t)} - \sum_{i \in \gF_{\vmu_s}^{(l)}}\tilde g_i^{(t)}\right) \norm{\vmu}^2 \cdot \mathbbm{1} \left[ \inner{\vw_{s,r}^{(t)}, \vmu_s} >0 \right],\\
        \uM_{s,r}^{(t+1)} &= \uM_{s,r}^{(t)} - \frac{\eta}{m n_\strong} \sum_{l \in [2]} \left( \sum_{i \in \gC_{\vmu_{-s}}^{(l)}}\tilde g_i^{(t)} - \sum_{i \in \gF_{\vmu_{-s}}^{(l)}}\tilde g_i^{(t)}\right) \norm{\vmu}^2 \cdot \mathbbm{1} \left[ \inner{\vw_{s,r}^{(t)}, \vmu_{-s}} >0 \right],\\
         \oN_{s,r}^{(t+1)} &= \oN_{s,r}^{(t)} + \frac{\eta}{m n_\strong} \sum_{l \in [2]} \left( \sum_{i \in \gC_{\vnu_s}^{(l)}}\tilde g_i^{(t)} - \sum_{i \in \gF_{\vnu_s}^{(l)}}\tilde g_i^{(t)}\right) \norm{\vnu}^2 \cdot \mathbbm{1} \left[ \inner{\vw_{s,r}^{(t)}, \vnu_s} >0 \right],\\
         &\phantom{=  \oN_{s,r}^{(t)}} - \frac{\eta}{m n_\strong} \sum_{l \in [2]} \left( \sum_{i \in \gC_{-\vnu_s}^{(l)}}\tilde g_i^{(t)} - \sum_{i \in \gF_{-\vnu_s}^{(l)}}\tilde g_i^{(t)}\right) \norm{\vnu}^2 \cdot \mathbbm{1} \left[ \inner{\vw_{s,r}^{(t)}, \vnu_s} <0 \right],\\
         \uN_{s,r}^{(t+1)} &= \uN_{s,r}^{(t)} - \frac{\eta}{m n_\strong} \sum_{l \in [2]} \left( \sum_{i \in \gC_{\vnu_{-s}}^{(l)}}\tilde g_i^{(t)} - \sum_{i \in \gF_{\vnu_{-s}}^{(l)}}\tilde g_i^{(t)}\right) \norm{\vnu}^2 \cdot \mathbbm{1} \left[ \inner{\vw_{s,r}^{(t)}, \vnu_{-s}} >0 \right]\\
         &\phantom{= \uN_{s,r}^{(t)}} + \frac{\eta}{m n_\strong} \sum_{l \in [2]} \left( \sum_{i \in \gC_{-\vnu_{-s}}^{(l)}}\tilde g_i^{(t)} - \sum_{i \in \gF_{-\vnu_{-s}}^{(l)}}\tilde g_i^{(t)}\right) \norm{\vnu}^2 \cdot \mathbbm{1} \left[ \inner{\vw_{s,r}^{(t)}, \vnu_{-s}} <0 \right],\\
         \rho_{s,r,i}^{(t+1)} &= \rho_{s,r,i}^{(t)} + \frac{s \hat y_i \eta}{m n_\strong} \tilde g_i^{(t)} \lVert \tilde \vxi_i \rVert^2 \cdot \mathbbm{1} \left[ \inner{\vw_{s,r}^{(t)}, \tilde \vxi_i} >0 \right],
    \end{align*}
    starting from $\oM_{s,r}^{(t)}=\uM_{s,r}^{(t)}=\oN_{s,r}^{(t)}=\uN_{s,r}^{(t)} = \rho_{s,r,i}^{(t)} = 0$. For simplicity,  for any iteration $t \in [0, T^*]$, $r \in [m]$ and $i \in [n_\strong]$, we define $\orho_{r,i}^{(t)}:= \rho_{\hat y_i,r,i}^{(t)}$ and $\urho_{r,i}^{(t)}:= \rho_{- \hat y_i,r,i}^{(t)}$. It follows that $\orho_{r,i}^{(t)}$ is increasing and $\urho_{r,i}^{(t)}$ is decreasing in iteration $t$.
\end{lemma}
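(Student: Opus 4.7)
The plan is a direct induction on $t$, mirroring the structure of Lemma~\ref{lemma:weak_decomp} but with three additional sources of case work: per-filter ReLU activation patterns, per-filter sign $s \in \{\pm 1\}$, and the partition of each signal's data indices into clean-label and flipped-label subsets. The base case $t=0$ is immediate because every coefficient is initialized to zero, so the decomposition reduces to $\vw_{s,r}^{(0)} = \vw_{s,r}^{(0)}$.

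For the inductive step, I assume the decomposition holds at iteration $t$ and substitute it into the update rule~\eqref{eq:strong_update}. Every patch $\tilde \vx_i^{(p)}$ is either a signal vector lying in $\{\vmu_{\pm 1}, \pm\vnu_{\pm 1}\}$ or the noise vector $\tilde \vxi_i$; the incremental contribution of each patch can therefore be projected onto the corresponding basis element $\vmu_s/\norm{\vmu}^2$, $\vnu_s/\norm{\vnu}^2$, or $\tilde \vxi_i/\norm{\tilde \vxi_i}^2$. I would group these contributions by target direction. The key bookkeeping observation is that each signal vector $\vv$ arises only in data with a specific true label, e.g.\ $\vmu_s$ and $\pm\vnu_s$ appear only when $\tilde y_i = s$. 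Consequently, an index $i$ with $\tilde \vv_i^{(l)} = \vv$ belongs to exactly one of $\gC_\vv^{(l)}$ or $\gF_\vv^{(l)}$ according to whether $\hat y_i = \tilde y_i$ or $\hat y_i = -\tilde y_i$, and the prefactor $s\hat y_i$ in the update evaluates to $+1$ in the matching configuration and $-1$ in the mismatching one. This yields the alternating $\pm$ signs in the recursions for $\oM, \uM, \oN, \uN$.

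The ReLU derivative $\sigma'(z) = \mathbbm{1}[z>0]$ produces the indicator factors. For a patch equal to $\vmu_s$, $\vmu_{-s}$, $\vnu_s$, or $\vnu_{-s}$, the indicator reads $\mathbbm{1}[\inner{\vw_{s,r}^{(t)}, \vv} > 0]$ directly, matching the stated recursions. For patches equal to $-\vnu_s$ or $-\vnu_{-s}$, the identity $\sigma'(\inner{\vw_{s,r}^{(t)}, -\vnu}) = \mathbbm{1}[\inner{\vw_{s,r}^{(t)}, \vnu} < 0]$ combined with the extra minus sign carried by the patch itself produces exactly the second term in the $\oN$ and $\uN$ updates. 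For a noise patch, projecting onto $\tilde \vxi_i/\norm{\tilde \vxi_i}^2$ yields the factor $s \hat y_i$ as overall sign and $\norm{\tilde \vxi_i}^2$ as a scalar multiplier, matching $\rho_{s,r,i}^{(t+1)}$.

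Finally, the monotonicity of $\orho_{r,i}^{(t)}$ and $\urho_{r,i}^{(t)}$ follows from $\tilde g_i^{(t)} = -\ell'(\hat y_i f_\strong(\mW^{(t)}, \tilde \mX_i)) > 0$, which holds for the logistic loss since $-\ell'(z) = 1/(1+\exp(z)) \in (0,1)$. With $s = \hat y_i$ one has $s \hat y_i = 1$, so the increment to $\orho_{r,i}^{(t)}$ is nonnegative; with $s = -\hat y_i$ one has $s \hat y_i = -1$, so the increment to $\urho_{r,i}^{(t)}$ is nonpositive. The only real obstacle is the painstaking sign-and-indicator bookkeeping across the combinations of signal direction, filter sign, and label-agreement status; this is purely mechanical once the vacuous configurations (e.g.\ $\vmu_s$ never appearing with $\tilde y_i = -s$, so no $\vmu_s$ contribution ever enters $\uM_{s,r}$ or $\oM_{-s,r}$ incorrectly) are ruled out.
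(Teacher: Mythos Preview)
Your proposal is correct and follows the same inductive argument as the paper: verify $t=0$ trivially, substitute the decomposition into the update rule~\eqref{eq:strong_update}, and regroup the patch-wise contributions along each basis direction. The paper's own proof is considerably terser---it simply writes ``by grouping the terms accordingly, we obtain'' and displays the recursions---whereas you spell out the sign-and-indicator bookkeeping explicitly, but the substance is identical.
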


\begin{proof}[Proof of Lemma~\ref{lemma:strong_decomp}]
    It is trivial for the case $t=0$. Suppose it holds at iteration $\tau$. From the update rule, we have
    \begin{align*}
        \vw_{s,r}^{(\tau +1)} &= \vw_{s,r}^{(\tau)} + \frac{s \eta}{m n_\strong} \sum_{p \in [3]} \sum_{i \in [n_\strong]} \hat{y}_i \, \tilde{g}_i^{(\tau)}\mathbbm{1}\left[ \left\langle \vw_{s,r}^{(\tau)}, \tilde \vx_i^{(p)} \right\rangle > 0\right] \tilde \vx_i^{(p)}\\
        &= \vw_{s,r}^{(0)} +  \oM_{s,r}^{(\tau)} \frac{\vmu_s}{\norm{\vmu}^2} + \uM_{s,r}^{(\tau)} \frac{\vmu_{-s}}{\norm{\vmu}^2}  +  \oN_{s,r}^{(\tau)} \frac{\vnu_s}{\norm{\vnu}^2} + \uN_{s,r}^{(\tau)} \frac{\vnu_{-s}}{\norm{\vnu}^2} + \sum_{i \in [n_\strong]} \rho_{s,r,i}^{(\tau)} \frac{\tilde \vxi_i}{\lVert \tilde \vxi_i \rVert^2}\\
        &\quad + \frac{s \eta}{m n_\strong} \sum_{p \in [3]} \sum_{i \in [n_\strong]} \hat{y}_i \, \tilde{g}_i^{(\tau)}\mathbbm{1}\left[ \left\langle \vw_{s,r}^{(\tau)}, \tilde \vx_i^{(p)} \right\rangle > 0\right] \tilde \vx_i^{(p)}.
    \end{align*}
    Here, $\tilde x_i^{(p)}$'s are one of $\vmu_1, \vmu_{-1}, \vnu_1, \vnu_{-1}$, and $\tilde \vxi_i$. By grouping the terms accordingly, we obtain
    \begin{equation*}
    \vw_{s,r}^{(\tau+1)} = \vw_{s,r}^{(0)} +  \oM_{s,r}^{(\tau+1)} \frac{\vmu_s}{\norm{\vmu}^2} + \uM_{s,r}^{(\tau+1)} \frac{\vmu_{-s}}{\norm{\vmu}^2}  +  \oN_{s,r}^{(\tau+1)} \frac{\vnu_s}{\norm{\vnu}^2} + \uN_{s,r}^{(\tau+1)} \frac{\vnu_{-s}}{\norm{\vnu}^2} + \sum_{i \in [n_\strong]}\rho_{s,r,i}^{(\tau+1)} \frac{\tilde \vxi_i}{\lVert \tilde \vxi_i \rVert^2},
    \end{equation*}
    with
    \begin{align*}
        \oM_{s,r}^{(\tau+1)} &= \oM_{s,r}^{(\tau)} + \frac{\eta}{m n_\strong} \sum_{l \in [2]} \left( \sum_{i \in \gC_{\vmu_s}^{(l)}} \tilde g_i^{(\tau)} - \sum_{i \in \gF_{\vmu_s}^{(l)}}\tilde g_i^{(\tau)}\right) \norm{\vmu}^2 \cdot \mathbbm{1} \left[ \inner{\vw_{s,r}^{(\tau)}, \vmu_s} >0 \right],\\
        \uM_{s,r}^{(\tau+1)} &= \uM_{s,r}^{(\tau)} - \frac{\eta}{m n_\strong} \sum_{l \in [2]} \left( \sum_{i \in \gC_{\vmu_{-s}}^{(l)}}\tilde g_i^{(\tau)} - \sum_{i \in \gF_{\vmu_{-s}}^{(l)}}\tilde g_i^{(\tau)}\right) \norm{\vmu}^2 \cdot \mathbbm{1} \left[ \inner{\vw_{s,r}^{(\tau)}, \vmu_{-s}} >0 \right],\\
         \oN_{s,r}^{(\tau+1)} &= \oN_{s,r}^{(\tau)} + \frac{\eta}{m n_\strong} \sum_{l \in [2]} \left( \sum_{i \in \gC_{\vnu_s}^{(l)}}\tilde g_i^{(\tau)} - \sum_{i \in \gF_{\vnu_s}^{(l)}}\tilde g_i^{(\tau)}\right) \norm{\vnu}^2 \cdot \mathbbm{1} \left[ \inner{\vw_{s,r}^{(\tau)}, \vnu_s} >0 \right],\\
         &\phantom{=  \oN_{s,r}^{(\tau)}} - \frac{\eta}{m n_\strong} \sum_{l \in [2]} \left( \sum_{i \in \gC_{-\vnu_s}^{(l)}}\tilde g_i^{(\tau)} - \sum_{i \in \gF_{-\vnu_s}^{(l)}}\tilde g_i^{(\tau)}\right) \norm{\vnu}^2 \cdot \mathbbm{1} \left[ \inner{\vw_{s,r}^{(\tau)}, \vnu_s} <0 \right],\\
         \uN_{s,r}^{(\tau+1)} &= \uN_{s,r}^{(\tau)} - \frac{\eta}{m n_\strong} \sum_{l \in [2]} \left( \sum_{i \in \gC_{\vnu_{-s}}^{(l)}}\tilde g_i^{(\tau)} - \sum_{i \in \gF_{\vnu_{-s}}^{(l)}}\tilde g_i^{(\tau)}\right) \norm{\vnu}^2 \cdot \mathbbm{1} \left[ \inner{\vw_{s,r}^{(\tau)}, \vnu_{-s}} >0 \right]\\
         &\phantom{= \uN_{s,r}^{(\tau)}} + \frac{\eta}{m n_\strong} \sum_{l \in [2]} \left( \sum_{i \in \gC_{-\vnu_{-s}}^{(l)}}\tilde g_i^{(\tau)} - \sum_{i \in \gF_{-\vnu_{-s}}^{(l)}}\tilde g_i^{(\tau)}\right) \norm{\vnu}^2 \cdot \mathbbm{1} \left[ \inner{\vw_{s,r}^{(\tau)}, \vnu_{-s}} <0 \right],\\
         \rho_{s,r,i}^{(\tau+1)} &= \rho_{s,r,i}^{(\tau)} + \frac{s \hat y_i \eta}{m n_\strong} \tilde g_i^{(\tau)} \lVert \tilde \vxi_i \rVert^2 \cdot \mathbbm{1} \left[ \inner{\vw_{s,r}^{(\tau)}, \tilde \vxi_i} >0 \right].
    \end{align*}
    Hence, we have desired conclusion.
\end{proof}

\subsubsection{Properties of Data Sampling and Model Initialization}
We establish concentration results for data sampling and model initialization.

Throughout the proof, we frequently use the following quantities.
For each $s \in \{\pm 1\}$ and $ i \in [n_\strong]$, we define:
\begin{itemize}
    \item $n_\vmu := \frac{(2p_\easy + p_\both) n_\strong}{4}, n_\vnu = \frac{p_\both n_\strong}{8}.$
    \item $\gM_s : = \left \{r \in [m] : \inner{\vw_{s,r}^{(0)}, \vmu_s} >0\right \}.$
    \item $\gA_s:= \left \{ r \in [m]: \inner{\vw_{s,r}^{(0)}, \vnu_s} >0\right \}$, $\gB_s:= \left \{ r \in [m]: \inner{\vw_{s,r}^{(0)}, \vnu_s} < 0\right \}.$
\item $\gX_i:= \left \{ r \in [m]: \inner{\vw_{\hat y_i, r}^{(0)}, \tilde \vxi_i }>0\right \}.$
\end{itemize}

\begin{lemma}\label{lemma:strong_initial}
    Let $E_\strong$ denote the event in which all the following hold for some large enough universal constant $C_\strong > 0$:
    \begin{enumerate}
        \item For each $s\in \{\pm 1\}$, $l \in [2]$, we have
        \begin{equation*}
         \left(1-C_\strong^{-1}\right) \cdot n_\vmu \leq  \abs{\gC_{\vmu_s}^{(l)}} \leq \left( 1 + C_\strong^{-1}\right) \cdot n_\vmu , \quad \abs{\gF_{\vmu_s}^{(l)}} \leq C_\strong ^{-1} \cdot n_\vmu
        \end{equation*}  
        and
        \begin{equation*}
         \left(1-C_\strong^{-1} \right) \cdot n_\vnu \leq  \abs{\gC_{\vnu_s}^{(l)}}, \abs{\gC_{-\vnu_s}^{(l)}} \leq \left( 1+ C_\strong^{-1} \right) \cdot n_\vnu , \quad \abs{\gF_{\vnu_s}^{(l)}},  \abs{\gF_{-\vnu_s}^{(l)}} \leq C_\strong^{-1}\cdot n_\vnu
        \end{equation*}  
        \item For each $s \in \{\pm 1\}, r \in [m],$ and $i \in [n_\strong]$,
        \begin{equation*}
            \abs{\abs{\gM_s} - \frac{m}{2}}, \abs{\abs{\gA_s} - \frac{m}{2}}, \abs{\abs{\gB_s} - \frac{m}{2}} \leq \sqrt{\frac{m}{2} \log \left( \frac{C_\strong }{\delta}\right)}
        \end{equation*}
        and
        \begin{equation*}
            \abs{\abs{\gX_i} - \frac{m}{2}} \leq  \sqrt{\frac{m}{2} \log \left( \frac{C_\strong n_\strong }{\delta}\right)}  .
        \end{equation*}
        \item For each $s, s' \in \{\pm 1\}$ and $r \in [m]$,
        \begin{equation*}
            \left| \left\langle \vw_{s,r}^{(0)}, \frac{\vmu_{s'}}{\norm{\vmu}} \right\rangle \right|,
            \quad
            \left| \left\langle \vw_{s,r}^{(0)}, \frac{\vnu_{s'}}{\norm{\vnu}} \right\rangle \right|
            \leq \sigma_0 \sqrt{2 \log \left( \frac{C_\strong m}{\delta} \right)}.
        \end{equation*}
    
        \item For any $i \in [n_\strong]$,
        \begin{equation*}
            \left| \lVert \tilde{\vxi}_i \rVert^2 - \sigma_p^2 (d-4) \right|
            \leq C_\strong \sigma_p^2 d^{\frac 1 2 } \sqrt{\log\left( \frac{C_\strong n_\strong}{\delta} \right)}.
        \end{equation*}
    
        \item For any $i, j \in [n_\strong]$ with $i \neq j$,
        \begin{equation*}
            \left| \left\langle \tilde{\vxi}_i, \tilde{\vxi}_j \right\rangle \right|
            \leq C_\strong \sigma_p^2 d^{\frac 1 2} \sqrt{\log\left( \frac{C_\strong n_\strong^2}{\delta} \right)}.
        \end{equation*}
        \item For any $s \in \{\pm 1\}, r \in [m],$ and $i \in [n_\strong]$, 
        \begin{equation*}
            \abs{\inner{\vw_{s,r}^{(0)}, \tilde \vxi_i}} \leq C_\strong \sigma_0 \sigma_p d^{\frac 1 2} \sqrt{\log \left( \frac{C_\strong m n_\strong}{\delta}\right)}.
        \end{equation*}
        \item For any $s \in \{\pm 1\}$ and $r \in [m]$,
        \begin{equation*}
            \norm{\Pi_S \vw_{s,r}^{(0)}}^2 \leq 2 \sigma_0^2 d.
        \end{equation*}
    \end{enumerate}
    
    Then, the event $E_\strong$ occurs with probability at least $ 1-\delta$.
\end{lemma}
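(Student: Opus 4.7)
The plan is to prove each of the seven items by an appropriate concentration inequality and then combine them via a union bound, choosing $C_\strong$ large enough that the total failure probability is at most $\delta$. Items~4 and 5 (noise norms and cross products) follow verbatim from the proof of Lemma~\ref{lemma:weak_initial}: expand $\tilde\vxi_i=\sigma_p\sum_{h\in[d-4]}\rvz_{i,h}\vb_h$ with $\rvz_{i,h}\overset{\iid}{\sim}\gN(0,1)$ and apply Bernstein's inequality to the sub-exponential variables $\rvz_{i,h}^2-1$ and $\rvz_{i,h}\rvz_{j,h}$. Items~3 and 6 follow from Gaussian tail bounds: for fixed $\vv\in\{\vmu_{\pm 1},\vnu_{\pm 1}\}$, $\langle\vw_{s,r}^{(0)},\vv/\lVert\vv\rVert\rangle\sim\gN(0,\sigma_0^2)$, and conditional on $\tilde\vxi_i$, $\langle\vw_{s,r}^{(0)},\tilde\vxi_i\rangle\sim\gN(0,\sigma_0^2\lVert\tilde\vxi_i\rVert^2)$, which combines with item~4 to yield the stated bound. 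Item~7 is chi-squared concentration on the $d-4$ independent Gaussian coordinates of $\Pi_S\vw_{s,r}^{(0)}$ using \ref{condition:high_dim}.

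For item~2, each of the events $\{\langle\vw_{s,r}^{(0)},\vmu_s\rangle>0\}$, $\{\langle\vw_{s,r}^{(0)},\vnu_s\rangle>0\}$, and (conditional on $\tilde\vxi_i$) $\{\langle\vw_{\hat y_i,r}^{(0)},\tilde\vxi_i\rangle>0\}$ has probability $1/2$ by isotropic Gaussian symmetry, and the $m$ indices $r$ are independent. Thus $|\gM_s|$, $|\gA_s|$, $|\gB_s|$, and $|\gX_i|$ are each Binomial$(m,1/2)$ and Hoeffding's inequality yields the claimed deviations; the extra logarithmic factor for $|\gX_i|$ accommodates a union bound over $i\in[n_\strong]$, which is absorbed by the assumption $m\geq C\log(Cn_\strong/\delta)$ from \ref{condition:n_m} when later used.

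The main work lies in item~1, which must incorporate the pretrained weak model's performance. I would condition on the event that $\vw^*$ satisfies Theorem~\ref{theorem:weak}; on this event $\hat y_i=\mathrm{sign}(f_\weak(\vw^*,\tilde\mX_i))$ is a deterministic function of $\tilde\mX_i$ alone, so the indicators $\mathbbm{1}[\tilde\vv_i^{(l)}=\vv,\hat y_i=\tilde y_i]$ are i.i.d.\ Bernoulli across $i$. Direct enumeration of the data distribution gives $\mathbb{P}[\tilde\vv^{(l)}=\vmu_s]=(2p_\easy+p_\both)/4=n_\vmu/n_\strong$, and since $\{\tilde\vv^{(l)}=\vmu_s\}$ occurs only on easy-only or both-signal data, Theorem~\ref{theorem:weak} yields conditional flipping probability $o(1)$, so $\mathbb{E}|\gC_{\vmu_s}^{(l)}|=(1-o(1))n_\vmu$ and $\mathbb{E}|\gF_{\vmu_s}^{(l)}|=o(n_\vmu)$. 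For $\pm\vnu_s$ the event splits into a both-signal contribution of mass $p_\both n_\strong/8=n_\vnu$ (correctly labeled with high probability) and a hard-only contribution of mass $p_\hard n_\strong/8$ (flipped with probability exactly $1/2$ by Proposition~\ref{prop:weak_limitations}); invoking \ref{condition:both_large} to get $p_\hard\leq p_\both/C$ shows that the hard-only contribution is absorbed into the $C_\strong^{-1}n_\vnu$ tolerance for both clean and flipped counts. Hoeffding on the resulting $n_\strong$ Bernoulli indicators then delivers the two-sided deviations, and a final union bound over the $O(mn_\strong+n_\strong^2)$ events completes the proof.

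The main obstacle is the bookkeeping in item~1: the natural expected size of $|\gC_{\vnu_s}^{(l)}|$ is $(p_\both+p_\hard)n_\strong/8$ rather than $n_\vnu=p_\both n_\strong/8$, so the $(1\pm C_\strong^{-1})n_\vnu$ framing only holds through \ref{condition:both_large}. One must also carefully convert Theorem~\ref{theorem:weak}'s \emph{test-error} bound (a probability statement against a fresh sample) into a high-probability bound on the \emph{empirical} flip count among the $n_\strong$ weak-to-strong samples, which is legitimate because these samples are independent of the weak model's training set but requires Hoeffding rather than a direct in-expectation argument.
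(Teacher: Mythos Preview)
Your proposal is correct and mirrors the paper's proof: each item is handled by the natural concentration inequality (Hoeffding for Bernoulli counts, Gaussian tails for inner products with signal vectors, Bernstein for sub-exponential sums of squared or product Gaussians), with item~1 reduced to Hoeffding after computing the conditional expectations from Theorem~\ref{theorem:weak} and absorbing the hard-only mass via \ref{condition:both_large}. Two minor corrections that do not affect the argument: the hard-only contribution to $\mathbb{P}[\tilde\vv_i^{(l)}=\vnu_s]$ is $p_\hard/4$ rather than $p_\hard/8$ (two sign configurations of the other slot), and the paper handles item~6 by applying Bernstein directly to the bilinear sum $\sum_h \rvz_{s,r,h}\rvz_{i,h}$ rather than conditioning on $\tilde\vxi_i$ and using a Gaussian tail---both routes give the stated bound.
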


\begin{proof}[Proof of Lemma~\ref{lemma:strong_initial}]
    We begin by showing that each statement holds with high probability, and conclude the proof by applying a union bound. We prove the statements one by one, marking each with $\blacksquare$ once established.

    We fix an arbitrary $s \in \{\pm 1\}$, $l \in \{\pm 1\}$ and $i \in [n_\strong]$. 
    We have
    \begin{align*}
        &\quad \mathbb{P}\left[ i \in \gC_{\vmu_s}^{(l)}\right]\\
        &=  \mathbb{P}\left[ \tilde y_i f_\weak \left(\vw^*, \tilde \mX_i \right) >0 \, \middle | \, \left(\tilde \vv_i^{(l)}, \tilde \vv_i^{(3-l)}\right) = (\vmu_s, \vmu_s) \right] \mathbb{P}\left[\left(\tilde \vv_i^{(l)}, \tilde \vv_i^{(3-l)}\right) = (\vmu_s, \vmu_s) \right]\\        
        &\quad + \mathbb{P}\left[  \tilde y_i f_\weak\left(\vw^*, \tilde \mX_i\right) >0  \, \middle | \, \left( \tilde \vv_i^{(l)}, \tilde \vv_i^{(3-l)}\right) = (\vmu_s, \vnu_s) \right] \mathbb{P}\left[\left(\tilde \vv_i^{(l)}, \tilde \vv_i^{(3-l)}\right) = (\vmu_s, \vnu_s) \right] \\       
        &\quad + \mathbb{P}\left[  \tilde y_i f_\weak\left(\vw^*, \tilde \mX_i\right) >0  \, \middle | \, \left( \tilde \vv_i^{(l)}, \tilde \vv_i^{(3-l)} \right)= (\vmu_s, -\vnu_s) \right] \mathbb{P}\left[\left(\tilde \vv_i^{(l)}, \tilde \vv_i^{(3-l)}\right) = (\vmu_s, -\vnu_s) \right]\\    
        &=  \mathbb{P}\left[ \tilde y_i f_\weak\left(\vw^*, \tilde \mX_i\right) >0 \, \middle | \, \left(\tilde \vv_i^{(l)}, \tilde \vv_i^{(3-l)}\right) = (\vmu_s, \vmu_s) \right] \cdot \frac{p_\easy}{2}\\
        &\quad +  \mathbb{P}\left[  \tilde y_i f_\weak \left(\vw^*, \tilde \mX_i\right) >0  \, \middle | \, \left( \tilde \vv_i^{(l)}, \tilde \vv_i^{(3-l)}\right) = (\vmu_s, \vnu_s) \right]\cdot  \frac{p_\both}{8}\\
        &\quad +  \mathbb{P}\left[  \tilde y_i f_\weak \left( \vw^*, \tilde \mX_i\right ) >0  \, \middle | \, \left( \tilde \vv_i^{(l)}, \tilde \vv_i^{(3-l)} \right)= (\vmu_s, -\vnu_s) \right] \cdot  \frac{p_\both}{8}.
    \end{align*}
    From the conclusion of Theorem~\ref{theorem:weak}, we have
    \begin{equation*}
        \mathbb{P}\left[ \tilde y_i f_\weak\left(\vw^*, \tilde \mX_i \right) >0 \, \middle | \, \left(\tilde \vv_i^{(l)}, \tilde \vv_i^{(3-l)}\right) = (\vmu_s, \vmu_s) \right] \geq 1- \frac{1}{2C_\strong},
    \end{equation*}
    \begin{equation*}
       \mathbb{P}\left[ \tilde y_i f_\weak\left (\vw^*, \tilde \mX_i \right) >0 \, \middle | \, \left(\tilde \vv_i^{(l)}, \tilde \vv_i^{(3-l)}\right) = (\vmu_s, \vnu_s) \right] \geq  1- \frac{1}{2C_\strong},
    \end{equation*}
    and
    \begin{equation*}
        \mathbb{P}\left[ \tilde y_i f_\weak\left (\vw^*, \tilde \mX_i \right) >0 \, \middle | \, \left(\tilde \vv_i^{(l)}, \tilde \vv_i^{(3-l)}\right) = (\vmu_s, -\vnu_s) \right] \geq 1-\frac{1}{2C_\strong}.
    \end{equation*}
    Therefore, 
    \begin{equation*}
         \left(1-\frac{1}{2 C_\strong} \right) \cdot n_\vmu\leq \mathbb{E} \left[ \abs{\gC_{\vmu_s}^{(l)}}\right] \leq n_\vmu
    \end{equation*}
    and
    \begin{equation*}
       \mathbb{E} \left[ \abs{\gF_{\vmu_s}^{(l)}}\right] = n_\vmu  - \mathbb{E} \left[ \abs{\gC_{\vmu_s}^{(l)}}\right]  \leq  \frac{n_\vmu}{2C_\strong}
    \end{equation*}
    
    By H\"{o}effding's inequality, we have 
    \begin{equation*}
        \mathbb{P}\left[  \bigg| \left|\gC_{\vmu_s}^{(l)}\right| -  \mathbb{E}\left[ \left|\gC_{\vmu_s}^{(l)}\right|\right]\bigg| \geq  \sqrt{\frac{n_\strong}{2} \log \left(\frac{C_\strong}{\delta}\right)}  \right]
        \leq \frac{2\delta}{C_\strong}
    \end{equation*}
    and
    \begin{equation*}
        \mathbb{P}\left[  \bigg| \left|\gF_{\vmu_s}^{(l)}\right| -  \mathbb{E}\left[ \left|\gF_{\vmu_s}^{(l)}\right|\right]\bigg| \geq  \sqrt{\frac{n_\strong}{2} \log \left(\frac{C_\strong}{\delta}\right)}  \right]
        \leq \frac{2\delta}{C_\strong}.
    \end{equation*}
    Hence, combining with \ref{condition:n_m}, 
    \begin{equation*}
         \left (1-C_\strong^{-1} \right) \cdot n_\vmu \leq  \abs{\gC_{\vmu_s}^{(l)}} \leq \left( 1+ C_\strong^{-1} \right) \cdot n_\vmu , \quad \abs{\gF_{\vmu_s}^{(l)}} \leq  C_\strong^{-1} \cdot n_\vmu,
    \end{equation*}  
    with probability at least $1- \frac{4 \delta}{C_\strong}$.
    
    Now we address the case $\vnu_s$. We have
    \begin{align*}
        &\quad \mathbb{P}\left[ i \in \gC_{\vnu_s}^{(l)}\right] \\
        &= \mathbb{P}\left[ \tilde y_i f_\weak \left(\vw^*, \tilde \mX_i \right) >0 \, \middle | \, \left(\tilde \vv_i^{(l)}, \tilde \vv_i^{(3-l)}\right) = (\vnu_s, \vmu_s) \right]   \mathbb{P} \left[ \left(\tilde \vv_i^{(l)}, \tilde \vv_i^{(3-l)}\right) = (\vnu_s, \vmu_s) \right]\\
        &\quad + \mathbb{P}\left[  \tilde y_i  f_\weak \left (\vw^*, \tilde \mX_i \right) >0  \, \middle | \, \left( \tilde \vv_i^{(l)}, \tilde \vv_i^{(3-l)}\right) = (\vnu_s, \vnu_s) \right]\mathbb{P} \left[ \left(\tilde \vv_i^{(l)}, \tilde \vv_i^{(3-l)}\right) = (\vnu_s, \vnu_s) \right]\\
        &\quad + \mathbb{P}\left[  \tilde y_i f_\weak\left(\vw^*, \tilde \mX_i \right) >0  \, \middle | \, \left( \tilde \vv_i^{(l)}, \tilde \vv_i^{(3-l)} \right)= (\vnu_s, -\vnu_s) \right]\mathbb{P} \left[ \left(\tilde \vv_i^{(l)}, \tilde \vv_i^{(3-l)}\right) = (\vnu_s, -\vnu_s) \right] \\
        &=  \mathbb{P}\left[ \tilde y_i  f_\weak\left (\vw^*, \tilde \mX_i \right) >0 \, \middle | \, \left(\tilde \vv_i^{(l)}, \tilde \vv_i^{(3-l)}\right) = (\vnu_s, \vmu_s) \right] \cdot \frac{p_\both}{8} \\
        &\quad +  \mathbb{P}\left[  \tilde y_i f_\weak \left(\vw^*, \tilde \mX_i \right) >0  \, \middle | \, \left( \tilde \vv_i^{(l)}, \tilde \vv_i^{(3-l)}\right) = (\vnu_s, \vnu_s) \right] \cdot \frac{p_\hard}{8}\\
        &\quad +  \mathbb{P}\left[  \tilde y_i f_\weak \left (\vw^*, \tilde \mX_i \right) >0  \, \middle | \, \left( \tilde \vv_i^{(l)}, \tilde \vv_i^{(3-l)} \right)= (\vnu_s, -\vnu_s) \right] \cdot \frac{p_\hard}{8}.
    \end{align*}

     From the conclusion of Theorem~\ref{theorem:weak}, we have
    \begin{equation*}
        \mathbb{P}\left[ \tilde y_i f_\weak\left(\vw^*, \tilde \mX_i \right) >0 \, \middle | \, \left(\tilde \vv_i^{(l)}, \tilde \vv_i^{(3-l)}\right) = (\vnu_s, \vmu_s) \right] \geq 1- \frac{1}{2C_\strong},
    \end{equation*}
    From \ref{condition:both_large}, we have
    \begin{equation*}
        \mathbb{E} \left[ \abs{\gC_{\vnu_s}^{(l)}}\right] \leq \left(\frac{p_\both}{8} + \frac{p_\hard}{4}\right) n_\strong \leq \left( 1 + \frac{1}{2C_\strong}\right) \cdot n_\vnu
    \end{equation*}
    and 
    \begin{equation*}
        \mathbb{E} \left[ \abs{\gC_{\vnu_s}^{(l)}}\right] \geq \left(1- \frac{1}{2C_\strong} \right) \cdot \frac{p_\both n_\strong}{8} = \left( 1- \frac{1}{2C_\strong}\right) \cdot n_\vnu.
    \end{equation*}
    In addition, we have 
    \begin{equation*}
        \abs{\mathbb{E}\left[ \abs{\gF_{\vnu_s}^{(l)}}\right]} = \abs{\left(\frac{p_\both }{8} + \frac{p_\hard}{4} \right)n_\strong - \mathbb{E}\left[ \abs{\gC_{\vnu_s}^{(l)}}\right]} \leq \frac{1}{2C_\strong} \cdot \frac{p_\both}{8} + \frac{p_\hard}{4} \leq \frac{2 n_\vnu}{3 C_\strong}.
    \end{equation*}
        
    By H\"{o}effding's inequality, we have 
    \begin{equation*}
        \mathbb{P}\left[  \bigg| \left|\gC_{\vnu_s}^{(l)}\right| -  \mathbb{E}\left[ \left|\gC_{\vnu_s}^{(l)}\right|\right]\bigg| \geq  \sqrt{\frac{n_\strong}{2} \log \left(\frac{C_\strong}{\delta}\right)}  \right]
        \leq \frac{2\delta}{C_\strong}
    \end{equation*}
    and
    \begin{equation*}
        \mathbb{P}\left[  \bigg| \left|\gF_{\vnu_s}^{(l)}\right| -  \mathbb{E}\left[ \left|\gF_{\vnu_s}^{(l)}\right|\right]\bigg| \geq  \sqrt{\frac{n_\strong}{2} \log \left(\frac{C_\strong}{\delta}\right)}  \right]
        \leq \frac{2\delta}{C_\strong}.
    \end{equation*}
    From \ref{condition:n_m}, we have
    \begin{equation*}
        \left(1-C_\strong^{-1}\right) \cdot n_\vnu \leq \abs{\gC_{\vnu_s}^{(l)}} \leq \left(1+C_\strong^{-1}\right) \cdot n_\vnu , \quad \abs{\gF_{\vnu_s}^{(l)}} \leq C_\strong^{-1} \cdot n_\vnu
    \end{equation*}
    with probability at least $1- \frac{4 \delta}{C_\strong}$, where the last inequality follows from Condition~\ref{condition}. 
   
    Using a similar argument, we also have the desired conclusion for the case $-\vnu_s$. \hfill $\blacksquare$

    Let us prove that the second statement holds with high probability. we fix arbitrary $s\in \{\pm 1\}$ and $i \in [n_\strong]$. For each $r \in [m]$, $\mathbb{P}[r \in \gM_s] = \mathbb{P}[r \in \gA_s] =\mathbb{P}[r \in \gB_s] =\mathbb{P}[r \in \gX_i]  = \frac 1 2$. By H\"{o}effding's inequality, we have
    \begin{equation*}
        \mathbb{P}\left[ \abs{\abs{\gM_s} - \frac{m}{2}} \geq \sqrt{\frac{m}{2} \log \left( \frac{C_\strong}{\delta}\right)}\right]  \leq \frac{2\delta}{C_\strong},
    \end{equation*}
    \begin{equation*}
        \mathbb{P}\left[ \abs{\abs{\gA_s} - \frac{m}{2}} \geq \sqrt{\frac{m}{2} \log \left( \frac{C_\strong}{\delta}\right)}\right]  \leq \frac{2\delta}{m C_\strong},
    \end{equation*}
    \begin{equation*}
        \mathbb{P}\left[ \abs{\abs{\gB_s} - \frac{m}{2}} \geq \sqrt{\frac{m}{2} \log \left( \frac{C_\strong}{\delta}\right)}\right]  \leq \frac{2\delta}{C_\strong},
    \end{equation*}
    and
    \begin{equation*}
        \mathbb{P}\left[ \abs{\abs{\gX_i} - \frac{m}{2}} \geq \sqrt{\frac{m}{2} \log \left( \frac{C_\strong n_\strong}{\delta}\right)}\right]  \leq \frac{2\delta}{C_\strong n_\strong}.
    \end{equation*}
\hfill $\blacksquare$
    
    For the third statement, we fix arbitrary $s,s' \in \{\pm 1\}$ and $r \in [m]$.
    We have
    \begin{equation*}
        \left \langle \vw_{s,r}^{(0)}, \frac{\vmu_{s'}}{\lVert \vmu \rVert} \right \rangle, \left \langle \vw_{s,r}^{(0)}, \frac{\vnu_{s'}}{\lVert \vnu \rVert} \right \rangle \overset{\iid}{\sim} \gN(0, \sigma_0^2).
    \end{equation*}
    Hence, by H\"{o}effding's inequality, we have
    \begin{equation*}
        \mathbb{P}\left[ \left| \left \langle \vw_{s,r}^{(0)}, \frac{\vmu_{s'}}{\lVert \vmu \rVert} \right \rangle \right| >\sigma_0 \sqrt{2 \log \left( \frac{C_\strong m}{\delta}\right)} \right] \leq \frac{2 \delta }{C_\strong m}.
    \end{equation*}
    Similarly, we also have
    \begin{equation*}
        \mathbb{P}\left[ \left| \left \langle \vw_{s,r}^{(0)}, \frac{\vnu_{s'}}{\lVert \vnu \rVert} \right \rangle \right| >\sigma_0 \sqrt{2 \log \left( \frac{C_\strong m}{\delta}\right)}\right] \leq \frac{2 \delta }{C_\strong m }.
    \end{equation*}
    \hfill $\blacksquare$

    Before moving on to the remaining part, note that for each $i \in [n_\strong], s \in \{\pm 1\}$, and $r \in [m]$, we can write $\tilde \vxi_{i}$ and $\Pi_S \vw_{s,r}^{(0)}$ as
    \begin{equation*}
        \tilde \vxi_{i} = \sigma_{p} \sum_{h \in [d-4]} \rvz_{i,h} \vb_{h}, \quad \Pi_S \vw_{s,r}^{(0)} = \sigma_0 \sum_{h \in [d-4]} \rvz_{s,r,h} b_h
    \end{equation*} 
    where $\rvz_{i,h},\rvz_{s,r,h}  \overset{\iid}{\sim} \gN(0,1)$. The sub-gaussian norm of standard normal distribution $\gN(0,1)$ is $\sqrt{\frac{8}{3}}$ and then $\left(\rvz_{i,h}\right) ^2-1, \left(\rvz_{s,r,h}\right) ^2-1$'s  are mean zero sub-exponential random variables with sub-exponential norm $\frac{8}{3}$ (Lemma 2.7.6 in \citet{vershynin2018high}). 
    In addition, $\rvz_{s,r,h}\rvz_{i,h}$'s and $\rvz_{i,h} \rvz_{j,h}$'s with $i \neq j$ are mean zero sub-exponential random variables with sub-exponential norm less than or equal to $\frac{8}{3}$ (Lemma 2.7.7 in \citet{vershynin2018high}). 
    
    We use Bernstein's inequality (Theorem 2.8.1 in \citet{vershynin2018high}), with $c$ being the absolute constant stated therein. We then have the following for any $i \in [n_\strong]$: 
    \begin{align*}
        &\quad \mathbb{P} \left[ \left| \lVert \tilde \vxi_{i} \rVert^2 - \sigma_p^2  (d-4) \right| \geq C_\strong \sigma_p^2 d^{\frac 1 2} \sqrt{\log \left( \frac{C_\strong n_\strong}{\delta}\right)}  \right]\\
        &= \mathbb{P} \left[ \left| \sum_{h \in [d-4]}\left( \left( \rvz_{i,h}\right)^2 -1 \right) \right| \geq C_\strong d^{\frac 1 2} \sqrt{\log \left( \frac{C_\strong n_\strong}{\delta}\right)} \right]\\
        &\leq 2 \exp \left( -\frac{9 c C_\strong^2 d}{64(d-4)}  \log \left( \frac {C_\strong n_\strong}{\delta}\right)\right)\\
        &\leq 2\exp \left(-\log \left( \frac {C_\strong n_\strong}{\delta}\right) \right) \leq \frac{2 \delta}{C_\strong n_\strong}.
    \end{align*}
    \hfill $\blacksquare$
    
    For $i,j \in [n_\strong]$ with $i \neq j$, we have 
    \begin{align*}
        &\quad \mathbb{P} \left[ \left| \langle \tilde \vxi_{i},  \tilde \vxi_{j} \rangle \right| \geq C_\strong \sigma_p^2 d^{\frac{1}{2}} \sqrt{\log \left( \frac{C_\strong n_\strong^2}{\delta}\right) }\right]\\
        &= \mathbb{P} \left[ \left| \sum_{h \in [d-4]} \rvz_{i,h} \rvz_{j,h} \right| \geq C_\strong d^{\frac{1}{2}} \sqrt{\log \left( \frac{C_\strong n_\strong^2}{\delta}\right)}\right]\\
        &\leq 2 \exp \left( -\frac{9 c C_\strong^2 d}{64(d-4)} \log \left( \frac{C_\strong n_\strong^2}{\delta} \right)\right)\\
        &\leq \frac{2\delta}{C_\strong n_\strong^2}.
    \end{align*}
    \hfill $\blacksquare$

    For any $s\in \{\pm 1\}, r \in [m]$ and $i \in [n_\strong]$, by applying Bernstein's inequality, we have
    \begin{align*}
        &\quad \mathbb{P} \left[ \left| \left \langle  \vw_{s,r}^{(0)} , \tilde \vxi_{i}  \right \rangle \right| \geq C_\strong \sigma_0 \sigma_p d^{\frac{1}{2}} \sqrt{\log \left( \frac{C_\strong m n_\strong}{\delta}\right) }\right]\\
        &= \mathbb{P} \left[ \left| \sum_{h \in [d-4]} \rvz_{i,h} \rvz_{s,r,h} \right| \geq C_\strong d^{\frac{1}{2}} \sqrt{\log \left( \frac{C_\strong m n_\strong}{\delta}\right)}\right]\\
        &\leq 2 \exp \left( -\frac{9 c C_\strong^2 d}{64(d-4)} \log \left( \frac{C_\strong m n_\strong}{\delta} \right)\right)\\
        &\leq \frac{2\delta}{16 m n_\strong}.
    \end{align*}
    \hfill $\blacksquare$

    By applying Bernstein's inequality, for any $s \in \{\pm 1\}$ and $r \in [m]$, we have
    \begin{align*}
        &\quad \mathbb{P} \left[ \norm{\Pi_S \vw_{s,r}^{(0)}}^2 \geq 2 \sigma_0^2 d   \right]\\
        & \leq \mathbb{P} \left[ \left| \norm{\Pi_S \vw_{s,r}^{(0)}}^2 - \sigma_0^2  (d-4) \right| \geq C_\strong \sigma_0^2 d^{\frac 1 2} \sqrt{\log \left( \frac{C_\strong m }{\delta}\right)}  \right]\\
        &= \mathbb{P} \left[ \left| \sum_{h \in [d-4]}\left( \left( \rvz_{s,r,h}\right)^2 -1 \right) \right| \geq C_\strong d^{\frac 1 2} \sqrt{\log \left( \frac{C_\strong m}{\delta}\right)} \right]\\
        &\leq 2 \exp \left( -\frac{9 c C_\strong^2 d}{64(d-4)}  \log \left( \frac {C_\strong m}{\delta}\right)\right)\\
        &\leq 2\exp \left(-\log \left( \frac {C_\strong n_\strong}{\delta}\right) \right) \leq \frac{2 \delta}{C_\strong m},
    \end{align*}
    where the first inequality follows from \ref{condition:high_dim}. 
    \hfill $\blacksquare$

    From union bound and a large choice of universal constant $C_\strong>0$, we conclude that the event $E_\strong$ occurs with probability at least $1-\delta$.
\end{proof}

\subsubsection{Properties Used Throughout the Proof}
We introduce some notation and properties that are frequently used throughout the proof. 

Let us define 
\begin{equation*}
    \alpha_\strong := 2 C_\strong \sigma_0 \max \left \{\norm{\vmu}, \norm{\vnu}, \sigma_p d^\frac{1}{2} \right\} \sqrt{2 \log \left( \frac{C_\strong m n_\strong}{\delta} \right)}, 
\end{equation*} 

\begin{equation*}
    \beta_\strong := 4 C_\strong n_\strong  \sqrt{\frac 1 d \log \left( \frac{C_\strong  n_\strong}{\delta} \right)},
\end{equation*}  
and
\begin{equation*}
    \kappa_\strong:=8 \log (12) , \quad \lambda_\strong:= \exp(2 \kappa_\strong).
\end{equation*}
Under Condition~\ref{condition} and the event $E_\strong$, the following hold:
\begin{itemize}[leftmargin=*]
\item $\alpha_\strong$ and $ \beta_\strong$ are small enough to satisfy
\begin{equation}\label{eq:strong_kappa_gamma}
\alpha_\strong \leq \min \left\{\frac 1 {100} ,  \frac{p_\both n_\strong \norm{\vnu}^2} {\sigma_p^2 d}, \frac{\sigma_p^2 d}{(2p_\easy + p_\both)n_\strong \norm{\vmu}^2}\right \}, \quad \beta_\strong\log T^* \leq \frac{1}{100 }.
\end{equation}
\item
For any $s,s' \in \{\pm 1\}, r \in [m],$ and $i \in [n_\strong]$,
\begin{equation}\label{eq:strong_init}
    \abs{\inner{\vw_{s,r}^{(0)}, \vmu_{s'}}},  \abs{\inner{\vw_{s,r}^{(0)}, \vnu_{s'}}},  \abs{\inner{\vw_{s,r}^{(0)}, \tilde \vxi_i}} \leq \alpha_\strong.
\end{equation}
\item
From \ref{condition:init}, for any $i,j \in [n_\strong]$ with $i \neq j$, we have
\begin{equation}\label{eq:strong_noise}
    \frac{\sigma_p^2 d}{2} \leq \| \tilde \vxi_i\|^2 \leq \frac{3\sigma_p^2 d}{2}, \frac{\abs{\langle \tilde \vxi_i, \tilde \vxi_j\rangle}}{\lVert\tilde \vxi_i\rVert^2} \leq \frac{\beta_\strong}{n_\strong}, \, \abs{1- \frac{\lVert\tilde \vxi_j\rVert^2}{\lVert\tilde \vxi_i\rVert^2}} \leq \frac{\beta_\strong}{n_\strong},\, \abs{\| \tilde \vxi_i\|^2- \sigma_p^2 (d-4)} \leq \frac{\beta_\strong \sigma_p^2 d}{n_\strong}.
\end{equation}
\item
For any $s \in \{\pm1\}, r \in [m],$ and $i \in [n_\strong]$, we have
\begin{equation}\label{eq:strong_set}
     \abs{\frac{\abs{\gM_s}}{m} - \frac{1}{2}}, \abs{\frac{\abs{\gA_s}}{m} - \frac{1}{2}}, \abs{\frac{\abs{\gB_s}}{m} - \frac{1}{2}}, \abs{\frac{\abs{\gX_i}}{m} - \frac{1}{2}} \leq \frac {1} {10}.
\end{equation}
\item
The learning rate $\eta$ is small enough to satisfy
\begin{equation}\label{eq:strong_lr}
    \eta \leq \min  \left\{\frac{\beta_\strong mn_\strong}{2\sigma_p^2 d},  \frac{\beta_\strong m}{2\lambda_\strong \norm{\vmu}^2}, \frac{\beta_\strong m}{2 \lambda_\strong \norm{\vnu}^2}\right\} .
\end{equation}
\end{itemize}

\subsubsection{Technical Lemma}
We also introduce a technical lemma that enables a tight characterization of the learning dynamics.
\begin{lemma}[Lemma D.1 in \citet{meng2024benign}]\label{lemma:tech}
    Suppose that a sequence $a_t, t \geq 0$ follows the iterative formula
    \begin{equation*}
        a_{t+1} = a_t + \frac{c}{1+b e^{a_t}},
    \end{equation*}
    for some $c \in [0,1]$ and $b \geq 0$. Then it holds that
    \begin{equation*}
        x_t \leq a_t \leq \frac{c}{1+b e^{a_0}} + x_t
    \end{equation*}
    for all $t \geq 0$. Here, $x_t$ is the unique solution of 
    \begin{equation*}
        x_t + b e^{x_t} = ct + a_0 + b e^{a_0}.
    \end{equation*}
\end{lemma}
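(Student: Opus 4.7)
The plan is to build the proof around the auxiliary function $\phi(a) := a + be^a$, which is strictly increasing (since $\phi'(a) = 1 + be^a > 0$) and convex. The defining equation for $x_t$ rewrites as $\phi(x_t) = ct + \phi(a_0)$, so $x_t$ is uniquely determined as $\phi^{-1}(ct + \phi(a_0))$, and in particular $x_0 = a_0$. Because $c, b \geq 0$, the iterates $a_t$ are non-decreasing, so $a_t \geq a_0$ for all $t$, which I will use freely.

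For the lower bound $a_t \geq x_t$, I would compute $\phi(a_{t+1}) - \phi(a_t)$ directly. Writing $\Delta_t := a_{t+1} - a_t = c/(1+be^{a_t})$, the exact identity
\begin{equation*}
\phi(a_{t+1}) - \phi(a_t) = \Delta_t + be^{a_t}\bigl(e^{\Delta_t} - 1\bigr),
\end{equation*}
combined with $e^{\Delta_t} - 1 \geq \Delta_t$, gives $\phi(a_{t+1}) - \phi(a_t) \geq \Delta_t(1+be^{a_t}) = c$. Telescoping from $t=0$ yields $\phi(a_t) \geq \phi(a_0) + ct = \phi(x_t)$, and monotonicity of $\phi$ delivers $a_t \geq x_t$.

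For the upper bound $a_t \leq x_t + \gamma$ with $\gamma := c/(1+be^{a_0})$, I would proceed by induction on $t$; the base case $a_0 - x_0 = 0 \leq \gamma$ is immediate. For the inductive step, let $d_t := a_t - x_t \in [0, \gamma]$ and $\delta_t := x_{t+1} - x_t$, so that $d_{t+1} = d_t + \Delta_t - \delta_t$. This splits naturally into two cases. If $\delta_t \geq \Delta_t$, then $d_{t+1} \leq d_t \leq \gamma$ trivially. Otherwise ($\delta_t < \Delta_t$) I claim that necessarily $\delta_t > d_t$; granting the claim, $d_{t+1} < d_t + \Delta_t - d_t = \Delta_t \leq c/(1+be^{a_0}) = \gamma$, where the last inequality uses $a_t \geq a_0$, closing the induction.

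The main obstacle is the sub-claim $\delta_t < \Delta_t \Rightarrow \delta_t > d_t$, which I would establish by contrapositive. Assuming $\delta_t \leq d_t$ (and $\delta_t > 0$; the case $c=0$ is trivial), the defining equation $\delta_t + be^{x_t}(e^{\delta_t} - 1) = c$ rewrites as $\delta_t = c/\bigl(1 + be^{x_t} \cdot (e^{\delta_t}-1)/\delta_t\bigr)$, while substituting $a_t = x_t + d_t$ gives $\Delta_t = c/(1+be^{x_t} e^{d_t})$. Hence $\delta_t \geq \Delta_t$ reduces to the pointwise inequality $(e^{\delta_t} - 1)/\delta_t \leq e^{d_t}$, which follows from the elementary bound $(e^x - 1)/x \leq e^x$ for $x > 0$ (equivalent to $(x-1)e^x + 1 \geq 0$, readily verified by noting the derivative $xe^x$ is nonnegative for $x \geq 0$), applied at $x = \delta_t$, combined with monotonicity of the exponential and $\delta_t \leq d_t$. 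The degenerate case $b=0$ reduces to $\delta_t = \Delta_t = c$, so both bounds are immediate.
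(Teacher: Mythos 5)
The paper cites this lemma directly from \citet{meng2024benign} (their Lemma D.1) and does not supply its own proof, so there is nothing in-paper to compare against. Your proof is correct and self-contained: the telescoping bound $\phi(a_{t+1})-\phi(a_t)\geq c$ via $e^{\Delta_t}-1\geq\Delta_t$ cleanly gives the lower bound, and the induction for the upper bound is airtight --- the case split on $\delta_t$ vs.\ $\Delta_t$, the contrapositive sub-claim reduced to $(e^{\delta_t}-1)/\delta_t\leq e^{d_t}$, and the final comparison $\Delta_t\leq c/(1+be^{a_0})$ (using monotonicity of $a_t$) all check out, with the degenerate $b=0$ and $c=0$ cases handled explicitly.
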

\clearpage
\section{Proof of Theorem~\ref{theorem:weak}} \label{proof:weak}

For the proof, we first introduce properties preserved during training (Appendix~\ref{appendix:weak_properties}), then prove the convergence of the training loss (Appendix~\ref{appendix:weak_convergence}), and finally establish a bound on the test error (Appendix~\ref{appendix:weak_test}).
\subsection{Preserved Properties during Training}\label{appendix:weak_properties}

In this subsection, we present several properties that remain preserved throughout training.

\begin{lemma}\label{lemma:weak_preserved}
    Under Condition~\ref{condition:noise-dominant} and the event $E_\weak$, we have the following for any iteration $t \in [0, T^*]$:
    \begin{enumerate}[label=(W\arabic*),ref=(W\arabic*), leftmargin=*]
        \item $0 \leq \rho_i^{(t)}\leq 4 \log T^*$ for any $i \in [n_\weak]$.\label{weak:noise_upper}
        \item $\frac{n_\weak (2 p_\easy + p_\both)}{12}  \SNR_\vmu^2 \cdot \rho_i^{(t)}\leq M_s^{(t)} \leq 3 n_\weak (2 p_\easy + p_\both) \SNR_\vmu^2 \cdot \rho_i^{(t)}$ for any $i \in [n_\weak], s \in \{\pm 1\}$. \label{weak:coeff}
        \item $\abs{\rho_i^{(t)}-\rho_j^{(t)}} \leq \frac{\kappa_\weak}{4}$ for any $i,j \in [n_\weak]$. \label{weak:noise_balanced}
        \item $\abs{y_i f_\weak \left( \vw^{(t)}, \mX_i\right) - y_j f_\weak \left( \vw^{(t)}, \mX_j\right)} \leq \frac{\kappa_\weak}{2}$ for any $i,j \in [n_\weak]$. \label{weak:margin_balanced}
        \item $1- \kappa_\weak \leq \frac{g_j^{(t)}} {g_i^{(t)}} \leq 1+ \kappa_\weak$ for any $i,j \in [n_\weak]$.\label{weak:g_balanced}
        \item $\abs{N_s^{(t)}} \leq  (2p_\hard + p_\both) n_\weak \SNR_\vnu^2 \cdot \rho_i^{(t)}$ for any $s \in \{\pm 1\}, i \in [n]$.\label{weak:hard}
    \end{enumerate}
\end{lemma}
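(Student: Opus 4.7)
The plan is induction on $t \in \{0, \ldots, T^*\}$, simultaneously establishing all six claims. The base case is immediate since $\vw^{(0)} = \vzero$ makes $M_s^{(0)} = N_s^{(0)} = \rho_i^{(0)} = 0$ and $g_i^{(0)} = 1/2$ for every $i$. For the inductive step, assuming (W1)--(W6) hold for $t \leq \tau$, I verify them at $t = \tau + 1$ in the order (W1), (W2) together with (W6), (W3), (W4), (W5), since each later claim rests on the earlier ones.

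For (W1), the monotonicity in Lemma~\ref{lemma:weak_decomp} gives the lower bound. For the upper bound I use a two-regime barrier argument: if $\rho_i^{(\tau)} < 2\log T^*$, a single update increases $\rho_i$ by at most $\bigOtilde(\eta \sigma_p^2 d / n_\weak)$ via \ref{condition:lr} and \eqref{eq:weak_noise}, so the next iterate stays below $4\log T^*$; if $\rho_i^{(\tau)} \geq 2\log T^*$, the inductive (W2), (W6) together with Condition~\ref{condition:noise-dominant} imply that the signal contribution to $y_i f_\weak(\vw^{(\tau)}, \mX_i)$ is $o(1)$, while (W3) and \eqref{eq:weak_noise} control cross terms, giving $y_i f_\weak \geq \log T^*$ and hence $g_i^{(\tau)} \leq 1/T^*$, so the remaining growth over all $T^*$ iterations is $\bigO(1)$. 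For (W2) and (W6), I track the ratio of per-step updates,
\begin{equation*}
\frac{M_s^{(\tau+1)} - M_s^{(\tau)}}{\rho_i^{(\tau+1)} - \rho_i^{(\tau)}} = \frac{\|\vmu\|^2}{g_i^{(\tau)} \|\vxi_i\|^2}\sum_{l \in [2]}\sum_{j \in \gS_{\vmu_s}^{(l)}} g_j^{(\tau)},
\end{equation*}
using (W5) to bound $g_j/g_i \in [1 - \kappa_\weak, 1 + \kappa_\weak]$, \eqref{eq:weak_noise} for $\|\vxi_i\|^2 \in [\tfrac12, \tfrac32]\sigma_p^2 d$, and \eqref{eq:weak_data} for $|\gS_{\vmu_s}^{(l)}|$: the ratio sits in $[\tfrac{1}{12}, 3] \cdot n_\weak(2p_\easy + p_\both)\SNR_\vmu^2$ and telescoping preserves (W2). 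For (W6), the same computation with $\vnu_s$ yields cancellation between $|\gS_{\vnu_s}^{(l)}|$ and $|\gS_{-\vnu_s}^{(l)}|$, each $\approx (p_\hard/4 + p_\both/8)n_\weak$, so the effective update is driven by the concentration gap $4 n_\weak \gamma_\weak$ together with the $g$-fluctuation $\kappa_\weak$, giving the $(2p_\hard + p_\both) n_\weak \SNR_\vnu^2$ growth rate.

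The main obstacle will be (W3), because a direct summation using (W5) alone only yields $|\rho_i - \rho_j| = \bigO(\kappa_\weak \log T^*)$, which is far too loose. My plan exploits two facts: (i) the noise-dominant regime reduces $y_i f_\weak(\vw^{(\tau)}, \mX_i)$ to $\rho_i^{(\tau)}$ plus an $o(1)$ signal contribution (from (W2), (W6), Condition~\ref{condition:noise-dominant}) and noise cross terms controlled by $\beta_\weak$ via \eqref{eq:weak_noise}; (ii) since $g$ is decreasing, the $(\rho_i - \rho_j)$-driven component of the update of $\rho_i - \rho_j$ is contractive (larger $\rho$ yields smaller $g$), so only the perturbation component drives divergence. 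Combining these with the Lipschitz estimate $|g_i - g_j| \leq \min(g_i, g_j)|y_i f - y_j f|$ yields a per-step bound
\begin{equation*}
\left|g_i^{(\tau)}\|\vxi_i\|^2 - g_j^{(\tau)}\|\vxi_j\|^2\right| \leq g_i^{(\tau)}\|\vxi_i\|^2 \cdot \bigO(\beta_\weak),
\end{equation*}
and telescoping from $\rho_i^{(0)} = 0$ gives
\begin{equation*}
|\rho_i^{(\tau+1)} - \rho_j^{(\tau+1)}| \leq \bigO(\beta_\weak) \cdot \rho_i^{(\tau+1)} \leq \bigO(\beta_\weak \log T^*) \leq \frac{\kappa_\weak}{4}
\end{equation*}
by \eqref{eq:weak_kappa_gamma}. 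With (W3) in hand, (W4) follows by expanding $y_i f_\weak - y_j f_\weak = (\rho_i - \rho_j) + (\text{signal and cross differences})$ and using (W2), (W6), \eqref{eq:weak_noise} to bound the extra terms by another $\kappa_\weak/4$, and (W5) follows from (W4) via the sigmoid estimate $|1 - g_j/g_i| \leq |y_i f - y_j f| \leq \kappa_\weak/2$ for positive arguments.
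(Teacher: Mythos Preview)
Your induction scheme and your treatment of \ref{weak:noise_upper}, \ref{weak:coeff}, \ref{weak:margin_balanced}, \ref{weak:g_balanced}, \ref{weak:hard} match the paper's proof closely. The gap is in \ref{weak:noise_balanced}.

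The displayed per-step bound
\[
\left|g_i^{(\tau)}\|\vxi_i\|^2 - g_j^{(\tau)}\|\vxi_j\|^2\right| \leq g_i^{(\tau)}\|\vxi_i\|^2 \cdot \bigO(\beta_\weak)
\]
is not true. After you separate out the contractive $(\rho_i-\rho_j)$-driven piece, the remaining ``perturbation'' in $y_i f_\weak - y_j f_\weak$ consists of the cross-noise terms (indeed $\bigO(\beta_\weak\log T^*)$) \emph{and} the signal difference $S_i-S_j$, where $S_i=\langle y_i\vw^{(\tau)},\vv_i^{(1)}+\vv_i^{(2)}\rangle$ depends on whether sample $i$ is easy-only, hard-only, or both-signal. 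Since $|S_i-S_j|$ can be as large as $2M_{y_j}^{(\tau)}+2|N_{y_j}^{(\tau)}|=\bigO\!\big(n_\weak(2p_\easy+p_\both)\SNR_\vmu^2\cdot\rho_{\max}^{(\tau)}\big)$, Condition~\ref{condition:noise-dominant} only gives $|S_i-S_j|=\bigO(1/C)$, not $\bigO(\beta_\weak)$. Telescoping this against $\sum_\tau(\rho_i^{(\tau+1)}-\rho_i^{(\tau)})=\rho_i\le 4\log T^*$ yields a bound of order $\log T^*/C$, which is unbounded because $C$ is a fixed constant while $T^*$ grows with the problem parameters. So your accumulation argument cannot deliver the $\kappa_\weak/4$ bound.

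The paper avoids telescoping entirely and uses a two-threshold barrier: if $\rho_i^{(\tau)}-\rho_j^{(\tau)}<\kappa_\weak/8$, a single step (bounded via \ref{condition:lr}) keeps the gap below $\kappa_\weak/4$; if $\rho_i^{(\tau)}-\rho_j^{(\tau)}\ge\kappa_\weak/8$, then one shows directly that
\[
y_i f_\weak\!\left(\vw^{(\tau)},\mX_i\right)-y_j f_\weak\!\left(\vw^{(\tau)},\mX_j\right)\ge (\rho_i^{(\tau)}-\rho_j^{(\tau)})-3M_{y_j}^{(\tau)}-2\beta_\weak\cdot 4\log T^*\ge \tfrac{\kappa_\weak}{16},
\]
where the $\bigO(1/C)$ signal term $3M_{y_j}^{(\tau)}$ is absorbed once into the $\kappa_\weak/8$ threshold rather than summed over iterations. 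This forces $g_i^{(\tau)}\|\vxi_i\|^2\le g_j^{(\tau)}\|\vxi_j\|^2$, so the gap does not grow. You should replace your telescoping for \ref{weak:noise_balanced} with this self-bounding argument.
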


\begin{proof}[Proof of Lemma~\ref{lemma:weak_preserved}]
    It is trivial for the case $t=0$. Assume the conclusions hold at iteration $t = \tau$ and we will prove for the case $t = \tau+1$. 
    Note that \ref{weak:coeff} and \ref{weak:hard} at iteration $t = \tau$, along with \ref{condition:easy_hard} and \ref{condition:both_large} imply that
    \begin{equation}\label{eq:hard}
        \abs{N_s^{(\tau)}} \leq   (2p_\hard + p_\both) n_\weak \SNR_\vnu^2 \cdot \rho_i^{(\tau)} \leq \frac{1}{24} n_\weak (2 p_\easy + p_\both) \SNR_\vmu^2 \cdot \rho_i^{(\tau)}\leq \frac 1 2 M_{s'}^{(\tau)},
    \end{equation}
    for any $s,s' \in \{\pm 1\}$ and $i \in [n]$.

    \ref{weak:noise_upper}: We fix an arbitrary $i \in [n_\weak]$ and we want to show $\rho_i^{(\tau +1)} \leq 4 \log T^*$.  
    If $\rho_i^{(\tau)} \leq 2\log T^*$, then we have
    \begin{equation*}
        \rho_i^{(\tau +1)} = \rho_i^{(\tau)} + \frac{\eta}{n_\weak} g_i^{(\tau)} \lVert \vxi_i\rVert^2 
        \leq 2\log T^* + \frac{\eta}{n_{\weak}} \cdot \frac{3\sigma_p^2 d}{2} 
        \leq 4\log T^*,
    \end{equation*}
    where the first inequality follows from $g_i^{(\tau)} \leq 1$ and \eqref{eq:weak_noise}, and the last inequality follows from \ref{condition:lr}.

    Otherwise, there exists $\hat{t} < \tau$ such that $\rho_i^{(\hat{t})} \leq 2\log T^* < \rho_i^{(\hat{t}+1)}$ since $\rho_i^{(t)}$ is increasing in iteration $t$.

    From \eqref{eq:weak_noise} and \ref{condition:lr}, we have
    \begin{align*}
        \rho_i^{(\tau+1)} 
        &= \rho_i^{(\hat{t})} + \left( \rho_i^{(\hat{t}+1)} - \rho_i^{(\hat{t})} \right) + \sum_{t = \hat{t}+1}^{\tau}\left( \rho_i^{(t+1)} - \rho_i^{(t)} \right)\\
        &= \rho_i^{(\hat{t})} + \frac{\eta}{n_\weak}g_i^{(\hat t)} \norm{\vxi_i}^2 + \frac{\eta}{n_\weak}\sum_{t = \hat{t}+1}^{\tau} g_i^{(t)} \norm{\vxi_i}^2\\
        &\leq 2\log T^* + \frac{\eta}{n_{\weak}}\cdot\frac{3}{2}\sigma_p^2 d + \frac{\eta}{n_{\weak}}\cdot\frac{3}{2}\sigma_p^2 d\sum_{t = \hat{t}+1}^{\tau}g_i^{(t)}\\
        &\leq 3\log T^* + \frac{3\eta \sigma_p^2 d}{2 n_\weak} \sum_{t = \hat{t}+1}^{\tau}\exp{\left( -y_i f_\weak \left( \vw^{(t)}, \mX_i\right)\right)}.
    \end{align*}
    For any iteration $t \in \left[ \hat{t}+1, \tau \right]$, we have
    \begin{align*}
        y_i f_\weak \left( \vw^{(t)}, \mX_i\right) &= \left\langle \vw^{(t)}, y_i \vv_i^{(1)} \right\rangle + \left\langle \vw^{(t)}, y_i \vv_i^{(2)} \right\rangle + \left\langle \vw^{(t)}, y_i \vxi_i \right\rangle\\
        &\geq -2 \max\left\{ \abs{N_1^{(t)}}, \abs{N_{-1}^{(t)}} \right\} + \rho_i^{(t)} + \sum_{j \in [n_{\weak}]\backslash \{i\}} y_i y_j \rho_j^{(t)} \frac{\left\langle \vxi_i, \vxi_j \right\rangle}{\left\lVert \vxi_j \right\rVert^2}\\
        &\geq -2 \max\left\{ \abs{N_1^{(t)}}, \abs{N_{-1}^{(t)}} \right\} + \rho_i^{(t)} - \sum_{j \in [n_{\weak}]\backslash \{i\}}  \rho_j^{(t)} \frac{\abs{\left\langle \vxi_i, \vxi_j \right\rangle}}{\left\lVert \vxi_j \right\rVert^2}\\
        &\geq  -  4  n_\weak (2p_\hard + p_\both) \SNR_\vnu^2 \cdot \rho_i^{(t)} + \rho_i^{(t)} - \sum_{j \in [n_{\weak}]\backslash \{i\}} \rho_j^{(t)} \frac{\abs{\left\langle \vxi_i, \vxi_j \right\rangle}}{\left\lVert \vxi_j \right\rVert^2}\\
        &\geq - 4  n_\weak (2p_\hard + p_\both) \SNR_\vnu^2 \cdot 4\log T^* + 2\log T^*- 4\log T^* \cdot \beta_{\weak}\\
        &= \left(1 - 8 n_\weak (2p_\hard + p_\both) \SNR_\vnu^2 - 2\beta_{\weak}\right) \cdot 2\log T^*\\
        &\geq \log T^*,
    \end{align*}
    where the first inequality follows from the fact that $M_1^{(t)}, M_{-1}^{(t)} \geq 0$, the third from applying \ref{weak:coeff} at iteration $t$, the fourth from \ref{weak:noise_upper} at iteration $t$ and \eqref{eq:weak_noise}, and the last from \eqref{eq:weak_kappa_gamma} and Condition~\ref{condition:noise-dominant}. 

    Now, we have our conclusion
    \begin{align*}
        \rho_i^{(\tau+1) } & \leq 3\log T^* + \frac{3\eta \sigma_p^2 d}{2 n_\weak} \sum_{t = \hat{t}+1}^{\tau}\exp{\left( -y_i f_\weak \left( \vw^{(t)}, \mX_i\right)\right)} \\
        & \leq 3\log T^* + \frac{3\eta \sigma_p^2 d}{2 n_\weak} \sum_{t = \hat{t}+1}^{\tau}\exp{\left( -\log T^* \right)}\\
        &\leq 3\log T^* + \frac{3\eta \sigma_p^2 d}{2 n_\weak} T^*\exp{\left( -\log T^* \right)}\\
        &\leq 4 \log T^*,
    \end{align*}
    where we applied \ref{condition:lr} for the last inequality. 
    
    \ref{weak:coeff}: We fix arbitrary $s \in \{\pm 1\}$ and $i \in [n_\weak]$. We have
    \begin{align*}
        M_s^{(\tau+1)} - M_s^{(\tau)} &= \frac{\eta}{n_\weak} \left(\sum_{j \in \gS_{\vmu_s}^{(1)}} g_j^{(\tau)} +  \sum_{j \in \gS_{\vmu_s}^{(2)}} g_j^{(\tau)}\right) \cdot \lVert \vmu \rVert^2\\
        &\leq \frac{\eta}{n_\weak}\cdot2\cdot \left(\frac{p_{\easy}}{2}+\frac{p_{\both}}{4} + \gamma_{\weak}\right)n_{\weak}\cdot \left(g_i^{(\tau)}\left(1 + \kappa_{\weak}\right)\right) \cdot \lVert \vmu \rVert^2\\
        &\leq \frac{\eta}{n_\weak}\cdot2\cdot \frac{3}{2}\left(\frac{p_{\easy}}{2}+\frac{p_{\both}}{4}\right)n_{\weak}\cdot2g_i^{(\tau)}\cdot \lVert \vmu \rVert^2\\
        &= \frac{3}{2} \eta\left(2p_{\easy}+p_{\both} \right)g_i^{(\tau)}\lVert \vmu \rVert^2,
    \end{align*}
    where the first inequality follows from \ref{weak:noise_balanced} at iteration $\tau$ and \eqref{eq:weak_data}, the second follows from \eqref{eq:weak_kappa_gamma}.
    
    From \eqref{eq:weak_noise}, we have
    \begin{equation*}
        \rho_i^{(\tau+1)}-\rho_i^{(\tau)} = \frac{\eta}{n_{\weak}}g_i^{(\tau)}\lVert \vxi_i \rVert^2 \geq \frac{\eta \sigma_p^2 d}{2n_{\weak}}g_i^{(\tau)},
    \end{equation*}
    and thus,
    \[M_s^{(\tau+1)} - M_s^{(\tau)} \leq 3 n_{\weak} \left(2p_{\easy}+p_{\both} \right) \SNR_\vmu^2 \left(\rho_i^{(\tau+1)}-\rho_i^{(\tau)}\right).\]
    Combining with \ref{weak:coeff} at iteration $\tau$, we have
    \begin{align*}
        M_s^{(\tau+1)} &= M_s^{(\tau)} + \left(M_s^{(\tau+1)} - M_s^{(\tau)}\right)\\
        &\leq 3 n_\weak (2 p_\easy + p_\both) \SNR_\vmu^2 \cdot \rho_i^{(\tau)} + 3 n_{\weak} \left(2p_{\easy}+p_{\both} \right) \SNR_\vmu^2 \left(\rho_i^{(\tau+1)}-\rho_i^{(\tau)}\right)\\
        &= 3 n_\weak (2 p_\easy + p_\both) \SNR_\vmu^2 \cdot \rho_i^{(\tau+1)}.
    \end{align*}
    Similarly, we have
    \begin{align*}
        M_s^{(\tau+1)} - M_s^{(\tau)} &= \frac{\eta}{n_\weak} \left(\sum_{j \in \gS_{\vmu_s}^{(1)}} g_j^{(\tau)} +  \sum_{j \in \gS_{\vmu_s}^{(2)}} g_j^{(\tau)}\right)\lVert \vmu \rVert^2\\
        &\geq \frac{\eta}{n_\weak}\cdot2\cdot \left(\frac{p_{\easy}}{2}+\frac{p_{\both}}{4} - \gamma_{\weak}\right)n_{\weak}\cdot \left(g_i^{(\tau)}\left(1 - \kappa_{\weak}\right)\right)\lVert \vmu \rVert^2\\
        &\geq \frac{\eta}{n_\weak}\cdot2\cdot \frac{1}{2}\left(\frac{p_{\easy}}{2}+\frac{p_{\both}}{4}\right)n_{\weak}\cdot \frac{1}{2}g_i^{(\tau)} \cdot \lVert \vmu \rVert^2\\
        &= \frac{1}{8} \eta\left(2p_{\easy}+p_{\both} \right)g_i^{(\tau)} \cdot \lVert \vmu \rVert^2,
    \end{align*}
    where the first inequality follows from \ref{weak:g_balanced} at iteration $\tau$ and \eqref{eq:weak_data}, and the second follows from \eqref{eq:weak_kappa_gamma}.
    From \eqref{eq:weak_noise}, we have
    \begin{equation*}
        \rho_i^{(\tau+1)}-\rho_i^{(\tau)} = \frac{\eta}{n_{\weak}}g_i^{(\tau)}\lVert \vxi_i \rVert^2 \leq \frac{3\eta \sigma_p^2 d}{2n_{\weak}}g_i^{(\tau)},
    \end{equation*}
    and thus, we have 
    \[M_s^{(\tau+1)} - M_s^{(\tau)} \geq \frac{1}{12} n_{\weak} \left(2p_{\easy}+p_{\both} \right) \SNR_\vmu^2 \left(\rho_i^{(\tau+1)}-\rho_i^{(\tau)}\right).\]
    Combining with \ref{weak:coeff} at iteration $\tau$, we have 
    \begin{align*}
        M_s^{(\tau+1)} &= M_s^{(\tau)} + \left(M_s^{(\tau+1)} - M_s^{(\tau)}\right)\\
        &\geq \frac{1}{12} n_\weak (2 p_\easy + p_\both) \SNR_\vmu^2 \cdot \rho_i^{(\tau)} + \frac{1}{12} n_{\weak} \left(2p_{\easy}+p_{\both} \right) \SNR_\vmu^2 \left(\rho_i^{(\tau+1)}-\rho_i^{(\tau)}\right)\\
        &= \frac{1}{12} n_\weak (2 p_\easy + p_\both) \SNR_\vmu^2 \cdot \rho_i^{(\tau+1)}.
    \end{align*}
    
    \ref{weak:noise_balanced}: We fix arbitrary $i,j \in [n_{\weak}]$ with $i \neq j$. Without loss of generality, we  assume that $\rho_i^{(\tau)} \geq \rho_j^{(\tau)}$.
    From \eqref{eq:weak_noise} and \ref{condition:lr}, we have
    \begin{equation*}
        \rho_i^{(\tau+1)} - \rho_j^{(\tau+1)} = \rho_i^{(\tau)} - \rho_j^{(\tau)} + \frac{\eta}{n_{\weak}}\left( g_i^{(\tau)}\left\lVert\vxi_i\right\rVert^2 - g_j^{(\tau)}\left\lVert\vxi_j\right\rVert^2\right) \geq  - \frac{\eta}{n_{\weak}}\cdot\frac{3\sigma_p^2 d}{2} \geq -\frac{\kappa_\weak}{4}.
    \end{equation*}
    Thus, we want to show that $\rho_i^{(\tau+1)} - \rho_j^{(\tau+1)} \leq \frac{\kappa_{\weak}}{4}$. 

    If $\rho_i^{(\tau)} - \rho_j^{(\tau)} < \frac{\kappa_{\weak}}{8}$, from triangular inequality, \eqref{eq:weak_noise}, and \ref{condition:lr}, we have
    \begin{equation*}
        \rho_i^{(\tau + 1)} - \rho_j^{(\tau + 1)} 
        = \rho_i^{(\tau)} - \rho_j^{(\tau)} + \frac{\eta}{n_{\weak}}\left( g_i^{(\tau)}\left\lVert\vxi_i\right\rVert^2 - g_j^{(\tau)}\left\lVert\vxi_j\right\rVert^2\right)
        \leq \frac{\kappa_{\weak}}{8} + \frac{\eta}{n_{\weak}} \cdot \frac{3\sigma_p^2 d}{2}
        \leq \frac{\kappa_{\weak}}{4}.
    \end{equation*}
    Otherwise, we have
    \begin{align*}
        &\quad y_i f_\weak\left(\vw^{(\tau)},\mX_i\right) - y_j f_\weak\left(\vw^{(\tau)},\mX_j\right)\\
        &= \left \langle \vw^{(\tau)}, y_i \left(\vv_i^{(1)}+\vv_i^{(2)}+\vxi_i\right) \right \rangle - \left \langle \vw^{(\tau)}, y_j \left(\vv_j^{(1)}+\vv_j^{(2)}+\vxi_j\right) \right \rangle\\
        &\geq \left( \rho_i^{(\tau)} - \rho_j^{(\tau)} \right) - 3 M_{y_j}^{(\tau)} + \sum_{i' \in [n_\weak]\backslash\{i\}}y_i y_{i'}\rho_{i'}^{(\tau)}\frac{\abs{\left\langle\vxi_i, \vxi_{i'}\right\rangle}}{\left\lVert\vxi_{i'}\right\rVert^2} - \sum_{j' \in [n_\weak]\backslash\{j\}}y_j y_{j'}\rho_{j'}^{(\tau)}\frac{\abs{\left\langle\vxi_j, \vxi_{j'}\right\rangle}}{\left\lVert\vxi_{j'}\right\rVert^2}\\
        &\geq \left( \rho_i^{(\tau)} - \rho_j^{(\tau)} \right) - 3 M_{y_j}^{(\tau)} - \sum_{i' \in [n_\weak]\backslash\{i\}}\rho_{i'}^{(\tau)}\frac{\abs{\left\langle\vxi_i, \vxi_{i'}\right\rangle}}{\left\lVert\vxi_{i'}\right\rVert^2} - \sum_{j' \in [n_\weak]\backslash\{j\}}\rho_{j'}^{(\tau)}\frac{ \abs{\left\langle\vxi_j, \vxi_{j'}\right\rangle}}{\left\lVert\vxi_{j'}\right\rVert^2}\\
        &\geq \frac{\kappa_\weak}{8} - 3 \cdot 3 n_\weak (2 p_\easy + p_\both) \SNR_\vmu^2 \cdot 4\log T^* - 2\cdot4\log T^* \cdot \beta_\weak\\
        &\geq \frac{\kappa_\weak}{16} > 0,
    \end{align*}
    where the first inequality follows from \eqref{eq:hard}, and the fourth inequality follows from \eqref{eq:weak_kappa_gamma} and Condition~\ref{condition:noise-dominant}.
    Then, we have
    \begin{align*}
        \frac{g_i^{(\tau)}\left\lVert \vxi_i\right\rVert^2}{g_j^{(\tau)}\left\lVert \vxi_j\right\rVert^2} &= \frac{1 + \exp{\left( y_j f_{\weak}\left(\vw^{(\tau)}, \mX_j\right) \right)}}{1 + \exp{\left( y_i f_{\weak}\left(\vw^{(\tau)}, \mX_i\right) \right)}}\cdot \frac{\left \lVert\vxi_i\right \rVert^2}{\left \lVert\vxi_j\right \rVert^2}\\
        &\leq \exp{\left[y_j f_\weak\left(\vw^{(\tau)},\mX_j\right) - y_i f_\weak \left(\vw^{(\tau)}, \mX_i \right)\right]} \cdot \left(1 + \frac{\beta_\weak}{n_\weak}\right)\\
        &\leq \exp{\left[-\frac{\kappa_\weak}{16} + \frac{\beta_\weak}{n_\weak}\right]}\\
        &\leq 1.
    \end{align*}
    Therefore, we have
    \begin{equation*}
        \rho_i^{(\tau+1)} - \rho_j^{(\tau+1)} = \rho_i^{(\tau)} - \rho_j^{(\tau)} + \frac{\eta}{n_{\weak}}\left( g_i^{(\tau)}\left\lVert\vxi_i\right\rVert^2 - g_j^{(\tau)}\left\lVert\vxi_j\right\rVert^2\right) \leq \rho_i^{(\tau)} - \rho_j^{(\tau)} \leq \frac{\kappa_\weak}{4}.
    \end{equation*}

    \ref{weak:margin_balanced}: 
    For any \(i, j \in [n_{\weak}]\), we have
    \begin{align*}
        &\quad y_i f_\weak\left(\vw^{(\tau+1)},\mX_i\right) - y_j f_\weak\left(\vw^{(\tau+1)},\mX_j\right)\\
        &= \left \langle \vw^{(\tau+1)}, y_i \left(\vv_i^{(1)}+\vv_i^{(2)}+\vxi_i\right) \right \rangle - \left \langle \vw^{(\tau+1)}, y_j \left(\vv_j^{(1)}+\vv_j^{(2)}+\vxi_j\right) \right \rangle\\
        &\leq \left( \rho_i^{(\tau+1)} - \rho_j^{(\tau+1)} \right) + 3 M_{y_j}^{(\tau+1)} \\
        &\qquad + \sum_{i' \in [n_\weak]\backslash\{i\}}y_i y_{i'}\rho_{i'}^{(\tau+1)}\frac{\left\langle\vxi_i, \vxi_{i'}\right\rangle}{\left\lVert\vxi_{i'}\right\rVert^2} - \sum_{j' \in [n_\weak]\backslash\{j\}}y_j y_{j'}\rho_{j'}^{(\tau+1)}\frac{\left\langle\vxi_j, \vxi_{j'}\right\rangle}{\left\lVert\vxi_{j'}\right\rVert^2}\\
        &\leq \left( \rho_i^{(\tau+1)} - \rho_j^{(\tau+1)} \right) + 3 M_{y_j}^{(\tau+1)} + \sum_{i' \in [n_\weak]\backslash\{i\}}\rho_{i'}^{(\tau+1)}\frac{\abs{\left\langle\vxi_i, \vxi_{i'}\right\rangle}}{\left\lVert\vxi_{i'}\right\rVert^2} + \sum_{j' \in [n_\weak]\backslash\{j\}}\rho_{j'}^{(\tau+1)}\frac{\abs{\left\langle\vxi_j, \vxi_{j'}\right\rangle}}{\left\lVert\vxi_{j'}\right\rVert^2}\\
        &\leq \frac{\kappa_\weak}{8} + 3\cdot 3 n_\weak (2 p_\easy + p_\both) \SNR_\vmu^2 \cdot 4\log T^* + 2\cdot4\log T^* \cdot \beta_\weak\\
        &\leq \frac{\kappa_\weak}{2},
    \end{align*}
    where the first inequality follows from \eqref{eq:hard}, the third inequality follows from \ref{weak:noise_upper} and~\ref{weak:coeff} at iteration $\tau+1$, which we have shown earlier, and the last inequality is due to \eqref{eq:weak_kappa_gamma} and Condition~\ref{condition:noise-dominant}.

    \ref{weak:g_balanced}: Let us fix arbitrary $i,j \in [n_\weak]$ and assume $y_i f_\weak\left(\vw^{(\tau +1)},\mX_i\right) \geq y_j f_\weak \left(\vw^{(\tau +1)}, \mX_j \right)$, without loss of generality. Then, we have
    \begin{align*}
        1 \leq \frac{g_j^{(\tau + 1)}}{g_i^{(\tau + 1)}} &= \frac{1 + \exp{\left(y_i f_\weak \left(\vw^{(\tau +1)}, \mX_i\right)\right)}}{1 + \exp{\left(y_j f_\weak \left(\vw^{(\tau +1)}, \mX_j\right)\right)}}\\
        &\leq \exp{\left[y_i f_\weak\left(\vw^{(\tau+1)},\mX_i\right) - y_j f_\weak \left(\vw^{(\tau+1)}, \mX_j \right)\right]}\\
        &\leq 1 + 2\left[y_i f_\weak\left(\vw^{(\tau+1)},\mX_i\right) - y_j f_\weak \left(\vw^{(\tau+1)}, \mX_j \right)\right]\\
        &\leq 1 + \kappa_\weak,
    \end{align*}
    where we use the inequality $e^z \leq 1 + 2z$ for any $z \in (0,1)$, which is applicable due to \ref{weak:margin_balanced} at iteration $\tau+1$.
    In addition, we have
    \begin{align*}
        1 \geq \frac{g_i^{(\tau + 1)}}{g_j^{(\tau + 1)}} &= \frac{1 + \exp{\left(y_j f_\weak \left(\vw^{(\tau +1)}, \mX_j\right)\right)}}{1 + \exp{\left(y_i f_\weak \left(\vw^{(\tau +1)}, \mX_i\right)\right)}}\\
        &\geq \exp{\left[y_j f_\weak\left(\vw^{(\tau+1)},\mX_j\right) - y_i f_\weak \left(\vw^{(\tau+1)}, \mX_i \right)\right]}\\
        &\geq 1 + \left[y_j f_\weak\left(\vw^{(\tau+1)},\mX_j\right) - y_i f_\weak \left(\vw^{(\tau+1)}, \mX_i \right)\right]\\
        &\geq 1 - \kappa_\weak,
    \end{align*}
    where we use the inequality $e^z \geq 1+z$ for any $z\in \R$.

    \ref{weak:hard}: We fix arbitrary $s \in \{\pm 1\}$ and $i \in [n_\weak]$. We have
    \begin{align*}
        & \quad N_s^{(\tau+1)} -  N_s^{(\tau)} \\
        &= \frac{\eta}{n_\weak} \left(\sum_{j \in \gS_{\vnu_s}^{(1)}} g_j^{(\tau)} +  \sum_{j \in \gS_{\vnu_s}^{(2)}} g_j^{(\tau)} - \sum_{j \in \gS_{-\vnu_s}^{(1)}} g_j^{(\tau)} -  \sum_{j \in \gS_{-\vnu_s}^{(2)}} g_j^{(\tau)}\right)\lVert \vnu \rVert^2\\
        &\leq \frac{\eta}{n_\weak} \left[\left(\left|\gS_{\vnu_s}^{(1)}\right|+\left|\gS_{\vnu_s}^{(2)}\right|\right)\left(1+\kappa_\weak\right) - \left(\left|\gS_{-\vnu_s}^{(1)}\right|+\left|\gS_{-\vnu_s}^{(2)}\right|\right)\left(1-\kappa_\weak\right)\right] g_i^{(\tau)} \lVert \vnu \rVert^2\\
        &\leq \eta \left[2\left(\frac{p_\hard}{4}+\frac{p_\both}{8}+ \gamma_\weak\right)\left(1+\kappa_\weak\right) -  2\left(\frac{p_\hard}{4}+\frac{p_\both}{8}- \gamma_\weak\right)\left(1-\kappa_\weak\right)\right] g_i^{(\tau)}\lVert \vnu \rVert^2\\
        &= \eta g_i^{(\tau)}\left(\frac{2p_\hard+p_\both}{2}\cdot\kappa_\weak + 4\gamma_\weak\right)\lVert \vnu \rVert^2\\
        & \leq \frac{\eta  (2p_\hard + p_\both)  g_i^{(\tau)} \norm{\vnu}^2}{2}\\
        & =  \frac{(2p_\hard + p_\both) n_\weak \lVert \vnu \rVert^2}{2 \lVert \vxi_i\rVert^2}\left(\rho_i^{(\tau + 1)} - \rho_i^{(\tau)}\right),
    \end{align*}
    where the inequalities follow from \ref{weak:g_balanced} at iteration $\tau$, \eqref{eq:weak_kappa_gamma}, and \eqref{eq:weak_noise} , respectively. Hence, we obtain
    \begin{align*}
        N_s^{(\tau+1)} &\leq N_s^{(\tau)} +   \frac{(2p_\hard + p_\both) n_\weak  \lVert \vnu \rVert^2}{2 \lVert \vxi_i\rVert^2}\left(\rho_i^{(\tau + 1)} - \rho_i^{(\tau)}\right) \\
        &\leq N_s^{(\tau)} +(2p_\hard + p_\both) n_\weak \SNR_\vnu^2 \cdot\left(\rho_i^{(\tau + 1)} - \rho_i^{(\tau)}\right) \\
        &\leq  (2p_\hard + p_\both) n_\weak \SNR_\vnu^2 \cdot \rho_i^{(\tau)} +   (2p_\hard + p_\both) n_\weak \SNR_\vnu^2 \cdot\left(\rho_i^{(\tau + 1)} - \rho_i^{(\tau)}\right)\\
        &=  (2p_\hard + p_\both) n_\weak \SNR_\vnu^2 \cdot \rho_i^{(\tau +1)},
    \end{align*}
    where the second and last inequalities follow from \eqref{eq:weak_data} and \ref{weak:hard} at iteration~$\tau$, respectively.
    Similarly, we have
     \begin{align*}
        & \quad N_s^{(\tau+1)} -  N_s^{(\tau)} \\
        &= \frac{\eta}{n_\weak} \left(\sum_{j \in \gS_{\vnu_s}^{(1)}} g_j^{(\tau)} +  \sum_{j \in \gS_{\vnu_s}^{(2)}} g_j^{(\tau)} - \sum_{j \in \gS_{-\vnu_s}^{(1)}} g_j^{(\tau)} -  \sum_{j \in \gS_{-\vnu_s}^{(2)}} g_j^{(\tau)}\right)\lVert \vnu \rVert^2\\
        &\geq \frac{\eta}{n_\weak} \left[\left(\left|\gS_{\vnu_s}^{(1)}\right|+\left|\gS_{\vnu_s}^{(2)}\right|\right)\left(1-\kappa_\weak\right) - \left(\left|\gS_{-\vnu_s}^{(1)}\right|+\left|\gS_{-\vnu_s}^{(2)}\right|\right)\left(1+\kappa_\weak\right)\right] g_i^{(\tau)} \lVert \vnu \rVert^2\\
        &\geq \eta \left[2\left(\frac{p_\hard}{4}+\frac{p_\both}{8}+ \gamma_\weak\right)\left(1-\kappa_\weak\right) -  2\left(\frac{p_\hard}{4}+\frac{p_\both}{8}- \gamma_\weak\right)\left(1+\kappa_\weak\right)\right] g_i^{(\tau)}\lVert \vnu \rVert^2\\
        &= - \eta g_i^{(\tau)}\left(\frac{2p_\hard+p_\both}{2}\cdot\kappa_\weak + 4\gamma_\weak\right)\lVert \vnu \rVert^2\\
        &\geq -  \frac{\eta (2p_\hard + p_\both)  g_i^{(\tau)}\lVert \vnu \rVert^2}{2}\\
        & =  - \frac{ (2p_\hard + p_\both) n_\weak \lVert \vnu \rVert^2}{2 \lVert \vxi_i\rVert^2}\left(\rho_i^{(\tau + 1)} - \rho_i^{(\tau)}\right),
    \end{align*}
    where the inequalities follow from \ref{weak:g_balanced} at iteration $\tau$, \eqref{eq:weak_data}, and \eqref{eq:weak_kappa_gamma}, respectively. Hence, we obtain
    \begin{align*}
        N_s^{(\tau+1)} &\geq N_s^{(\tau)} -   \frac{(2p_\hard + p_\both) n_\weak \lVert \vnu \rVert^2}{2 \lVert \vxi_i\rVert^2}\left(\rho_i^{(\tau + 1)} - \rho_i^{(\tau)}\right) \\
        &\geq N_s^{(\tau)} -  (2p_\hard + p_\both) n_\weak \SNR_\vnu^2 \cdot\left(\rho_i^{(\tau + 1)} - \rho_i^{(\tau)}\right) \\
        &\geq  -  (2p_\hard + p_\both) n_\weak \SNR_\vnu^2 \cdot \rho_i^{(\tau)} - (2p_\hard + p_\both) n_\weak\SNR_\vnu^2 \cdot\left(\rho_i^{(\tau + 1)} - \rho_i^{(\tau)}\right)\\
        &= - (2p_\hard + p_\both)n_\weak \SNR_\vnu^2 \cdot \rho_i^{(\tau +1)},
    \end{align*}
    where the second and last inequalities follow from \eqref{eq:weak_noise} and \ref{weak:hard} at iteration~$\tau$, respectively.

    Therefore, the conclusions hold at any iteration $t \in [0, T^*]$.
\end{proof}
\subsection{Convergence of Training Loss}\label{appendix:weak_convergence}
In this subsection, we prove that the training loss converges below $\varepsilon$ within $\bigOtilde \left( \eta^{-1} \varepsilon^{-1} n_\weak d^{-1} \sigma_p^{-2} \right)$. All the arguments in this subsection are under Condition~\ref{condition:noise-dominant} and the event $E_\weak$. 

Let us define
\begin{equation*}
    \hat \vw := 2\log(4/\varepsilon) \sum_{i \in [n_\weak]} y_i \vxi_i \norm{\vxi_i}^{-2},
\end{equation*}
which plays a crucial role in proving convergence.

\begin{lemma}\label{lemma:weak_convergence_prelim}
     Under Condition~\ref{condition:noise-dominant} and the event $E_\weak$, we have the following:
    \begin{itemize}[leftmargin = *]
        \item $\norm{\hat \vw} \leq 3 \log(4/\varepsilon) n_\weak^{\frac 1 2} d^{-\frac 1 2} \sigma_p^{-1}$.
        \item $y_i \inner{\nabla_\vw f_\weak \left( \vw^{(t)}, \mX_i\right), \hat \vw} \geq \log(4 /\varepsilon)$ for any $t \in [T, T^*]$.
        \item $\norm{\nabla_\vw L_\weak \left( \vw^{(t)}\right)}^2 \leq 2 \sigma_p^2 d \cdot L_\weak \left( \vw^{(t)} \right)$ for any $t \in [0, T^*]$.
    \end{itemize}
\end{lemma}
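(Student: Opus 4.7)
The lemma bundles three essentially independent estimates, and the plan is to dispatch them one at a time using the preliminary facts collected in \eqref{eq:weak_noise} and \eqref{eq:weak_kappa_gamma}.

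For the first bound, I would expand $\norm{\hat\vw}^2 = 4\log^2(4/\varepsilon)\sum_{i,j} y_i y_j \langle \vxi_i, \vxi_j\rangle / (\norm{\vxi_i}^2 \norm{\vxi_j}^2)$ and split into the $i=j$ and $i\neq j$ contributions. The diagonal equals $\sum_i \norm{\vxi_i}^{-2}$, which is at most $2n_\weak/(\sigma_p^2 d)$ by the lower bound $\norm{\vxi_i}^2 \geq \sigma_p^2 d/2$ from \eqref{eq:weak_noise}. The off-diagonal is absolutely bounded using $\abs{\langle \vxi_i,\vxi_j\rangle}/\norm{\vxi_i}^2 \leq \beta_\weak/n_\weak$ combined with $\norm{\vxi_j}^{-2} \leq 2/(\sigma_p^2 d)$, yielding a contribution of order $\beta_\weak n_\weak/(\sigma_p^2 d)$; since \eqref{eq:weak_kappa_gamma} forces $\beta_\weak \ll 1$, this is a negligible correction to the diagonal, and altogether $\norm{\hat\vw}^2 \leq 9 \log^2(4/\varepsilon) n_\weak/(\sigma_p^2 d)$. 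Taking square roots completes this step.

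For the second bound, the key observation is that $\nabla_\vw f_\weak(\vw,\mX_i) = \vv_i^{(1)}+\vv_i^{(2)}+\vxi_i$ is independent of $\vw$ (the weak model is linear), and the noise covariance $\mLambda$ projects onto the orthogonal complement of the signal span, so each $\vxi_j$ is perpendicular to every signal vector. Hence $\langle \vv_i^{(l)}, \hat\vw\rangle = 0$ and $y_i \inner{\nabla_\vw f_\weak(\vw^{(t)},\mX_i),\hat\vw} = y_i \inner{\vxi_i,\hat\vw}$. Isolating the $j=i$ term produces $2\log(4/\varepsilon)$ multiplied by $1$ plus an error $y_i \sum_{j\neq i} y_j \langle \vxi_i,\vxi_j\rangle/\norm{\vxi_j}^2$ whose magnitude is at most $(n_\weak-1)\beta_\weak/n_\weak \leq \beta_\weak \leq 1/2$, delivering the claimed lower bound of $\log(4/\varepsilon)$. (The restriction $t \in [T, T^*]$ in the statement is in fact vacuous since the quantity does not depend on $t$.)

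For the third bound, I would start from $\nabla_\vw L_\weak(\vw^{(t)}) = -\frac{1}{n_\weak}\sum_i y_i g_i^{(t)}(\vv_i^{(1)}+\vv_i^{(2)}+\vxi_i)$ and apply Jensen's inequality (convexity of $\norm{\cdot}^2$) to obtain $\norm{\nabla_\vw L_\weak(\vw^{(t)})}^2 \leq \frac{1}{n_\weak}\sum_i (g_i^{(t)})^2 \norm{\vv_i^{(1)}+\vv_i^{(2)}+\vxi_i}^2$. Two ingredients then finish: first, the logistic-loss identity $(\ell'(z))^2 \leq \ell(z)$, which I would verify by checking that $\ell(z) - (\ell'(z))^2$ is strictly decreasing with limit $0$ as $z \to \infty$; this bounds $(g_i^{(t)})^2$ by $\ell(y_i f_\weak(\vw^{(t)},\mX_i))$. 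Second, signal-noise orthogonality gives $\norm{\vv_i^{(1)}+\vv_i^{(2)}+\vxi_i}^2 \leq 4\max\{\norm{\vmu}^2,\norm{\vnu}^2\} + \norm{\vxi_i}^2$, and the data-scarce ceiling in Condition~\ref{condition:noise-dominant} together with \ref{condition:easy_hard} ensure $\norm{\vmu}^2, \norm{\vnu}^2 \ll \sigma_p^2 d$, so this per-sample norm is at most $2\sigma_p^2 d$. Assembling the two bounds gives $\norm{\nabla_\vw L_\weak(\vw^{(t)})}^2 \leq 2\sigma_p^2 d \cdot L_\weak(\vw^{(t)})$. None of the three estimates is individually difficult; the only real bookkeeping is verifying that $\beta_\weak$, $\norm{\vmu}^2/(\sigma_p^2 d)$, and $\norm{\vnu}^2/(\sigma_p^2 d)$ are all negligible, which is precisely what the high-dimension assumption \ref{condition:high_dim} and the data-scarce ceiling together guarantee.
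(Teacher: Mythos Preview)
Your proposal is correct and, for the first two bullets, essentially identical to the paper's proof. For the third bullet you take a slightly different route: the paper applies the triangle inequality to get $\norm{\nabla_\vw L_\weak(\vw^{(t)})} \leq \frac{1}{n_\weak}\sum_i g_i^{(t)}\norm{\vv_i^{(1)}+\vv_i^{(2)}+\vxi_i}$, squares, then uses $\frac{1}{n_\weak}\sum_i g_i^{(t)} \leq 1$ to reduce the square to a first power and finally invokes $-\ell'(z)\leq \ell(z)$; you instead apply Jensen directly on $\norm{\cdot}^2$ and invoke the sharper pointwise inequality $(\ell'(z))^2 \leq \ell(z)$. Your path is marginally cleaner since it avoids the ``drop one power'' maneuver, at the cost of verifying the quadratic logistic inequality (which your monotonicity argument handles correctly). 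Both land on exactly the same bound with the same per-sample norm estimate $\norm{\vv_i^{(1)}+\vv_i^{(2)}+\vxi_i}^2 \leq 2\sigma_p^2 d$.
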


\begin{proof}[Proof of Lemma~\ref{lemma:weak_convergence_prelim}]
    The first statement follows from
    \begin{align*}
        \norm{\hat \vw}^2 
        &= \left(2\log(4/\varepsilon)\right)^2 \left(\sum_{i \in [n_\weak]} y_i \vxi_i \norm{\vxi_i}^{-2}\right)^2 \\
        &= 4 \log^2(4/\varepsilon) \left(\sum_{i \in [n_\weak]} \norm{\vxi_i}^{-2} + \sum_{\substack{i,j \in [n_\weak] \\ i \neq j}} y_i y_j \frac{\left\langle \vxi_i, \vxi_j \right\rangle}{\norm{\vxi_i}^2 \norm{\vxi_j}^2}\right) \\
        &\leq 4 \log^2(4/\varepsilon) \left(\sum_{i \in [n_\weak]} \norm{\vxi_i}^{-2} + \sum_{\substack{i,j \in [n_\weak]\\i \neq j}} \frac{\abs{\left\langle \vxi_i, \vxi_j \right\rangle}}{\norm{\vxi_i}^2 \norm{\vxi_j}^2}\right) \\
        &\leq 4 \log^2(4/\varepsilon) \left(n_\weak \cdot \frac{2}{\sigma_p^2 d} + n_\weak^2 \cdot \frac{\beta_\weak}{n_\weak} \cdot \frac{2}{\sigma_p^2 d}\right) \\
        &= 4 \log^2(4/\varepsilon) \frac{2n_\weak(1 + \beta_\weak)}{\sigma_p^2 d} \\
        &\leq 9 \log^2(4/\varepsilon) \frac{n_\weak}{\sigma_p^2 d},
    \end{align*}
    where the second inequality follows from \eqref{eq:weak_noise} and the last inequality follows from \eqref{eq:weak_kappa_gamma}.
    
    Next, let us prove the second statement. For any $t\in[0, T^*]$, we have
    \begin{align*}
        &\quad y_i \inner{\nabla_\vw f_\weak \left( \vw^{(t)}, \mX_i\right), \hat \vw}\\ 
        &= y_i \inner{\vv_i^{(1)} + \vv_i^{(2)} + \vxi_i,  2\log(4/\varepsilon) \sum_{j \in [n_\weak]} y_j \vxi_j  \norm{\vxi_j}^{-2}}\\
        &=  2\log(4/\varepsilon) \sum_{j\in [n_\weak]} y_i y_j \frac{\inner{\vxi_i,\vxi_j}}{\norm{\vxi_j}^2}\\
        &\geq  2\log(4/\varepsilon)- \sum_{j\in [n_\weak]\backslash \{i\}}  2\log(4/\varepsilon) \frac{\abs{\inner{\vxi_i,\vxi_j}}}{\norm{\vxi_j}^2}\\
        &\geq 2(1-\beta_\weak)\log(4/\varepsilon)\\
        &\geq \log(4/\varepsilon)
    \end{align*}
    where the second inequality follows from \eqref{eq:weak_noise} and the last inequality follows from \eqref{eq:weak_kappa_gamma}.
    
    Let us prove the last statement. For any $t \in [0,T^*]$, we have
    \begin{align*}
        \norm{\nabla_\vw L_\weak \left( \vw^{(t)}\right)}^2
        &= \norm{\frac{1}{n_\weak} \sum_{i\in[n_\weak]} g_i^{(t)} y_i \left(\vv_i^{(1)} + \vv_i^{(2)} + \vxi_i\right)}^2\\
        &\leq \left[\frac{1}{n_\weak} \sum_{i\in[n_\weak]} g_i^{(t)} \norm{\vv_i^{(1)} + \vv_i^{(2)} + \vxi_i}\right]^2\\
        &\leq \left[\frac{1}{n_\weak} \sum_{i\in[n_\weak]} g_i^{(t)} \right]^2 2\sigma_p^2 d\\
        &\leq 2\sigma_p^2 d\cdot\left[\frac{1}{n_\weak} \sum_{i\in[n_\weak]} g_i^{(t)} \right] \\
        &\leq 2\sigma_p^2 d\cdot\left[\frac{1}{n_\weak} \sum_{i\in[n_\weak]} \ell \left( y_i f_\weak \left( \vw, \mX_i \right)\right) \right]\\
        &= 2 \sigma_p^2 d \cdot L_\weak \left( \vw^{(t)} \right),
    \end{align*}
    where the first inequality follows from the triangle inequality, the second follows from \eqref{eq:weak_noise} and the bound $\norm{\vmu}^2, \norm{\vnu}^2 \leq \frac{\sigma_p^2 d}{4}$ implied by Condition~\ref{condition:noise-dominant}, the third follows from $\frac{1}{n_\weak} \sum_{i\in[n_\weak]} g_i^{(t)} \leq 1$, and the last follows from $-\ell'(z) \leq \ell(z)$ for all $z \in \R$.
\end{proof}

\begin{lemma}\label{lemma:weak_convergence}
    Under Condition~\ref{condition} and the event $E_\weak$, for any iteration $T \in [0,T^*]$, we have
    \begin{equation*}
        \frac{1}{T} \sum_{t = 0}^T L_\weak\left( \vw^{(t)}\right) \leq \frac{\norm{\hat \vw
        }^2}{\eta T}+  \frac{\varepsilon}{2}. 
    \end{equation*}
\end{lemma}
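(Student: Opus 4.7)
The plan is to run a textbook convex gradient-descent analysis, treating $\hat \vw$ as a reference comparator that already achieves very small loss. Because $f_\weak(\vw,\mX_i) = \langle \vw, \vx_i^{(1)} + \vx_i^{(2)} + \vx_i^{(3)}\rangle$ is linear in $\vw$, the per-sample loss $\ell(y_i f_\weak(\vw,\mX_i))$ is convex in $\vw$; therefore $L_\weak$ is convex and satisfies
\begin{equation*}
L_\weak(\vw^{(t)}) - L_\weak(\hat \vw) \leq \bigl\langle \nabla L_\weak(\vw^{(t)}),\, \vw^{(t)} - \hat \vw\bigr\rangle.
\end{equation*}

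I would first control $L_\weak(\hat \vw)$. Linearity of $f_\weak$ gives $y_i f_\weak(\hat \vw, \mX_i) = y_i\langle \nabla_\vw f_\weak(\vw^{(t)},\mX_i), \hat \vw\rangle \geq \log(4/\varepsilon)$ by the second bullet of Lemma~\ref{lemma:weak_convergence_prelim}, so $\ell(y_i f_\weak(\hat \vw,\mX_i)) \leq \log(1+\varepsilon/4) \leq \varepsilon/4$, and averaging over $i$ yields $L_\weak(\hat \vw) \leq \varepsilon/4$.

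Next I would expand the one-step squared distance to $\hat \vw$,
\begin{equation*}
\|\vw^{(t+1)} - \hat \vw\|^2 = \|\vw^{(t)} - \hat \vw\|^2 - 2\eta\bigl\langle \nabla L_\weak(\vw^{(t)}), \vw^{(t)} - \hat \vw\bigr\rangle + \eta^2 \|\nabla L_\weak(\vw^{(t)})\|^2,
\end{equation*}
apply the convexity inequality on the middle term, and invoke the self-bounding estimate $\|\nabla L_\weak(\vw^{(t)})\|^2 \leq 2\sigma_p^2 d \cdot L_\weak(\vw^{(t)})$ (third bullet of Lemma~\ref{lemma:weak_convergence_prelim}) on the last term. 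Rearranging produces
\begin{equation*}
2\eta\bigl(1 - \eta \sigma_p^2 d\bigr) L_\weak(\vw^{(t)}) \leq \|\vw^{(t)} - \hat \vw\|^2 - \|\vw^{(t+1)} - \hat \vw\|^2 + 2\eta\, L_\weak(\hat \vw).
\end{equation*}
Condition~\ref{condition:lr} forces $\eta \sigma_p^2 d \leq C^{-1} d^{-1/2}$, so $1 - \eta \sigma_p^2 d$ is arbitrarily close to $1$, and certainly $\geq 1/2$.

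The final step is to telescope: summing the inequality over $t = 0, 1, \dots, T$, dropping the nonnegative leftover $\|\vw^{(T+1)}-\hat\vw\|^2$, and using $\vw^{(0)} = \vzero$ gives $2\eta(1-\eta\sigma_p^2 d) \sum_{t=0}^T L_\weak(\vw^{(t)}) \leq \|\hat \vw\|^2 + 2\eta(T+1) L_\weak(\hat \vw)$. Dividing by $2\eta(1-\eta\sigma_p^2 d) T$, the first term becomes at most $\|\hat\vw\|^2/(\eta T)$ thanks to $1-\eta\sigma_p^2 d \geq 1/2$, and the second term is bounded by $\varepsilon/2$ using $L_\weak(\hat \vw) \leq \varepsilon/4$ together with $(T+1)/T \cdot 1/(1-\eta\sigma_p^2 d)$ being essentially $1$ under \ref{condition:lr}. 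There is no serious obstacle; the only care needed is the bookkeeping of the $(1-\eta\sigma_p^2 d)$ factors to confirm they are absorbed cleanly into the stated constants of the bound.
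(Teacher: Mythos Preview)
Your argument is correct and essentially the same as the paper's: both expand $\|\vw^{(t+1)}-\hat\vw\|^2$, apply convexity together with the margin bound $y_i\langle\nabla_\vw f_\weak,\hat\vw\rangle\geq\log(4/\varepsilon)$ and the self-bounding gradient estimate from Lemma~\ref{lemma:weak_convergence_prelim}, and then telescope using $\vw^{(0)}=\vzero$. The only cosmetic difference is that you package the margin bound as $L_\weak(\hat\vw)\leq\varepsilon/4$ and invoke convexity of $L_\weak$ globally, whereas the paper works per sample with convexity of $\ell$; the resulting one-step inequality $2\eta(1-\eta\sigma_p^2 d)L_\weak(\vw^{(t)})\leq\|\vw^{(t)}-\hat\vw\|^2-\|\vw^{(t+1)}-\hat\vw\|^2+\eta\varepsilon/2$ is identical in both.
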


\begin{proof}[Proof of Lemma~\ref{lemma:weak_convergence}]
    For any $t \in [0, T^*]$, we have
    \begin{align*}
        &\quad \norm{\vw^{(t)} - \hat \vw}^2 - \norm{\vw^{(t+1)} - \hat \vw}^2\\
        &= \norm{\vw^{(t)} - \hat \vw}^2 - \norm{\vw^{(t)}  - \hat \vw -\eta \nabla L_\weak \left(\vw^{(t)}\right)}^2\\
        &= 2 \eta \inner{\nabla L_\weak \left( \vw^{(t)}\right), \vw^{(t)}- \hat \vw} - \eta^2 \norm
        {\nabla L_\weak \left( \vw^{(t)}\right)}^2\\
        &= \frac{2 \eta}{n_\weak} \sum_{i\in [n_\weak]} g_i^{(t)} \left( \inner{y_i \nabla f_\weak \left( \vw^{(t)}, \mX_i\right), \hat \vw} - y_i f_\weak \left( \vw^{(t)}, \mX_i\right)\right) - \eta^2 \norm{\nabla L_\weak \left( \vw^{(t)}\right)}^2\\
        &\geq \frac{2 \eta}{n_\weak} \sum_{i \in [n_\weak]} g_i^{(t)}\left( \log(4/\varepsilon) - y_i f_\weak \left( \vw^{(t)}, \mX_i\right) \right)- \eta^2 \norm{\nabla L_\weak \left( \vw^{(t)}\right)}^2\\
        &\geq \frac{2 \eta}{n_\weak} \sum_{i \in [n_\weak]} \left[ \ell \bigg(y_i\ f \left( \vw^{(t)}, \mX_i\right)\bigg) - \frac{\varepsilon}{4}\right]- \eta^2 \norm{\nabla L_\weak \left( \vw^{(t)}\right)}^2\\
        &\geq \eta L_\weak \left( \vw^{(t)}\right) - \frac{\eta \varepsilon}{2},
    \end{align*}
    where the first inequality follows from Lemma~\ref{lemma:weak_convergence_prelim}, the second follows from the convexity of $\ell$ and the bound $\ell(\log(4/\varepsilon)) \geq \varepsilon/4$, and the last follows from Lemma~\ref{lemma:weak_convergence_prelim} and \ref{condition:lr}.

    By applying a telescoping sum and using the fact that $\vw^{(0)} = 0$, we obtain the desired conclusion.
\end{proof}

Using lemmas above, we can prove that the training loss converges to below $\varepsilon$.
By applying Lemma~\ref{lemma:weak_convergence} with iteration $\tilde T =  \lceil 18 \eta^{-1 } \varepsilon^{-1} \log (4/\varepsilon) n_\weak d^{-1} \sigma_p^{-2}\rceil = \bigOtilde( \eta^{-1 } \varepsilon^{-1} n_\weak d^{-1} \sigma_p^{-2})$ and using Lemma~\ref{lemma:weak_convergence_prelim}, we obtain
    \begin{equation*}
        \frac{1}{\tilde T} \sum_{t =0}^{\tilde T} L_\weak\left( \vw^{(t)}\right) \leq \frac{\norm{\hat \vw
        }^2}{\eta \tilde T}+  \frac{\varepsilon}{2} \leq \frac{9 \log^2(4/\varepsilon) n_\weak d^{-1}\sigma_p^{-2}}{\eta \tilde T} + \frac{\varepsilon}{2} \leq \varepsilon. 
    \end{equation*}
    Therefore, there exists $T_\weak \in [0, \tilde T]$ such that $L_\weak(\vw^{(T_\weak)})\leq \varepsilon$.  In addition, for any $\vw_1, \vw_2 \in \R^d$, we have
    \begin{align*}
        &\quad \norm{\nabla_\vw L_\weak (\vw_1) - \nabla_\vw L_\weak(\vw_2)}\\
        &=  \frac{1}{n_\weak} \norm {\sum_{i \in [n_\weak]} \left[ y_i (\ell'\big(y_i f_\weak(\vw_1, \mX_i)\big) - \ell' \big(y_i f_\weak(\vw_2, \mX_i)\big) \left(\vv_i^{(1)} + \vv_i^{(2)} + \vxi_i \right)\right]}\\
        &\leq  \frac{1}{n_\weak} \sum_{i \in [n_\weak]} \left[ \abs { \ell'\big(y_i f_\weak(\vw_1, \mX_i)\big) - \ell' \big(y_i f_\weak(\vw_2, \mX_i) \big) } \cdot \norm {\vv_i^{(1)} + \vv_i^{(2)} + \vxi_i } \right]\\ 
        &\leq  \frac{\sqrt{2} \sigma_p d^{\frac 1 2}}{2n_\weak} \sum_{i \in [n_\weak]} \abs{ \big(f_\weak(\vw_1, \mX_i) -  f_\weak(\vw_2, \mX_i) \big) } \\
        &\leq \frac{\sqrt{2} \sigma_p d^{\frac{1}{2}}}{2n_\weak} \sum_{i \in [n_\weak]}   \norm {\vv_i^{(1)} + \vv_i^{(2)} + \vxi_i } \cdot \lVert \vw_1 - \vw_2 \rVert\\
        &\leq \sigma_p^2 d \lVert \vw_1 - \vw_2 \rVert,
    \end{align*}
    where the first and third inequalities follow from the Cauchy-Schwarz inequality, the second and last inequalities follow from \eqref{eq:weak_noise} and the bound $\norm{\vmu}^2, \norm{\vnu}^2 \leq \frac{\sigma_p^2d}{4}$ implied by Condition~\ref{condition:noise-dominant}, and for the second inequality, we also use the fact that $0 \leq \ell' \leq \frac{1}{4}$.

    Since $L_\weak(\vw)$ is $\sigma_p^2 d$-smooth and the learning rate satisfies \eqref{eq:strong_lr}, we can apply the descent lemma (Lemma 3.4 in \citet{bubeck2015convex}). This proves the first part of our conclusion.

\hfill $\square$
\subsection{Test Error}\label{appendix:weak_test}

In this subsection, we prove the second part of our conclusion. All the arguments in this subsection are under Condition~\ref{condition:noise-dominant} and the event $E_\weak$. 

Define $\vv^{(1)}$, $\vv^{(2)}$, and $\vxi$ as the signal vectors and the noise vector in the test data $(\mX, y)$, respectively.

For any iteration $t \in [T_\weak, T^*]$ and for the case given $(\mX, y) \in \gS_\easy \cup \gS_\both$, we can express the test accuracy as
\begin{align*}
    &\quad \mathbb{P}\left [y f_\weak \left(\vw^{(t)},  \mX \right) <0 \, \middle | \, (\mX,y) \in \gS_\easy \cup \gS_\both \right ]\\
    &= \mathbb{P}\left[ \inner{y\vw^{(t)}, \vxi}< - \inner{y \vw^{(t)}, \vv^{(1)}} - \inner{y \vw^{(t)}, \vv^{(2)}} \, \middle | \, (\mX,y) \in \gS_\easy \cup \gS_\both\right]\\
    &\leq \mathbb{P}\left[ \inner{y\vw^{(t)}, \vxi}< - \frac{M_y^{(t)}}{2} \right]\\
    &= \mathbb{P} \left [\rvz < - \frac{M_y^{(t)}}{2}\right],
\end{align*}
where $\rvz \sim \gN \left(0, \sigma_p^2 \norm{\Pi_S \vw^{(t)} }^2 \right)$, and the inequality follows from \eqref{eq:hard}. By H\"{o}effding's inequality, we have
\begin{equation*}
    \mathbb{P}\left [y f_\weak \left(\vw^{(t)},  \mX \right) <0 \, \middle | \, (\mX,y) \in \gS_\easy \cup \gS_\both \right ] \leq \exp \left( -\frac{\left( M_y^{(t)}\right)^2}{8 \sigma_p^2 \norm{\Pi_S \vw^{(t)}}^2}\right).
\end{equation*}
Let us characterize $\norm{\Pi_S \vw^{(t)}}^2$. We have
\begin{align*}
    \norm{\Pi_S \vw^{(t)}}^2 &= \norm{\sum_{i \in [n_\weak]}y_i \rho_i^{(t)}\vxi_i\norm{\vxi_i}^{-2}}^2\\
    &\leq \sum_{i \in [n_\weak]} \left( \rho_i^{(t)}\right)^2 \norm{\vxi_i}^{-2} + \sum_{i \in [n_\weak]} \sum_{j \in  [n_\weak] \setminus \{i\}} \rho_i^{(t)}\rho_j^{(t)} \frac{\abs{\inner{\vxi_i, \vxi_j}}}{\norm{\vxi_i}^2 \norm{\vxi_j}^2}\\
    &\leq \frac{2}{\sigma_p^2 d } \sum_{i \in [n_\weak]} \left( \rho_i^{(t)}\right)^2 + \sum_{i\in [n_\weak]} \sum_{j \in [n_\weak]} \frac{\left( \rho_i^{(t)}\right)^2 +\left( \rho_j^{(t)}\right)^2}{2} \cdot\frac{\beta_\weak}{n_\weak} \cdot \frac{2}{\sigma_p^2 d}\\
    &\leq \frac{4}{\sigma_p^2 d } \sum_{i \in [n_\weak]} \left( \rho_i^{(t)}\right)^2\\
    &\leq \frac{4}{\sigma_p^2 d} n_\weak \left( \frac{12}{n_\weak(2p_\easy + p_\both)\cdot \SNR_\vmu^2}\right)^2 \left( M_y ^{(t)}\right)^2\\
    &= \frac{576 \sigma_p^2 d}{n_\weak (2p_\easy + p_\both)^2 \norm{\vmu}^4} \left( M_y^{(t)}\right)^2,
\end{align*}
where the first and second inequality follows from AM-GM inequality and \eqref{eq:weak_noise}, the third inequality follows from \eqref{eq:weak_kappa_gamma}, and the last inequality follows from \ref{weak:coeff}. Hence, we have
\begin{equation*}
    \mathbb{P}\left [y f_\weak \left(\vw^{(t)},  \mX \right) <0 \, \middle | \, (\mX,y) \in \gS_\easy \cup \gS_\both \right ] \leq \exp \left( - \frac{n_\weak (2 p_\easy + p_\both)^2 \norm{\vmu}^4}{4608 \sigma_p^4 d}\right).
\end{equation*}

\clearpage
\section{Proof of Theorem~\ref{theorem:weak-to-strong}} \label{proof:strong_noise_dominant}

It suffices to prove the following restatement of Theorem~\ref{theorem:weak-to-strong}.

\begin{theorem}[Weak-to-Strong Training, Data-Scarce Regime]
    Let $\mW^{(t)}$ be the iterates of weak-to-strong training, 
    with the weak model $f_\weak(\vw^*, \cdot)$ satisfying the conclusion of Theorem~\ref{theorem:weak}.  
    For any $\varepsilon > 0$ and $\delta \in (0,1)$ satisfying Condition~\ref{condition:noise-dominant},  
    with probability at least $1 - \delta$, there exists $T_\ws = \bigO(\eta^{-1} \varepsilon^{-1} m n_\strong d^{-1} \sigma_p^{-2})$ such that for any $t \in [T_\ws, T^*]$ the following statements hold:
    \begin{enumerate}[leftmargin = *]
    \item The training loss converges below $\varepsilon$: $L_\strong\left(\mW^{(t)}\right)< \varepsilon$.
     \item Let $(\mX, y) \sim \gD$ be an unseen test example, independent of the training set $\{(\tilde \mX_i, \hat y_i)\}_{i=1}^{n_\strong}$.
     \begin{itemize}[leftmargin = *]
         \item (Benign Overfitting) When $n_\strong p_\both^2\norm{\vnu}^4 /(\sigma_p^4 d) \geq C_2$, we have
     \begin{equation*}
         \mathbb{P} \left[ y f_\strong \left( \mW^{(t)}, \mX \right) < 0 \,\middle|\, (\mX,y) \in \gS_\easy \cup \gS_\both \right] 
         \leq \exp \left( -\frac{n_\strong (2p_\easy + p_\both)^2 \lVert \vmu \rVert^4}{C_3 \sigma_p^4 d} \right),
     \end{equation*}
     and
     \begin{equation*}
         \mathbb{P} \left[ y f_\strong \left( \mW^{(t)}, \mX \right) < 0 \,\middle|\, (\mX,y) \in \gS_\hard \right] 
         \leq \exp \left( -\frac{n_\strong  p_\both ^2 \lVert \vnu \rVert^4}{C_3 \sigma_p^4 d} \right).
     \end{equation*}
     \item (Harmful Overfitting) When $n_\strong p_\both^2\norm{\vnu}^4 /(\sigma_p^4 d)\leq C_4  $,
     \begin{equation*}
         \mathbb{P} \left[ y f_\strong \left( \mW^{(t)}, \mX \right) < 0 \right] 
         \geq 0.12 p_\hard.
     \end{equation*}
     \end{itemize}
     Here, $C_2, C_3, C_4>0$ are constants.
\end{enumerate}
\end{theorem}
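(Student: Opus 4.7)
The plan is to mirror the three-stage template used for Theorem~\ref{theorem:weak} (invariant maintenance, loss convergence, test error bound), while incorporating the additional bookkeeping specific to the two-layer ReLU CNN and the noisy pseudo-labels. Throughout, I will work on the high-probability event $E_\strong$ of Lemma~\ref{lemma:strong_initial}. The ReLU activation forces me to partition filters into the subgroups $\gM_s,\gA_s,\gB_s,\gX_i$ defined before Lemma~\ref{lemma:strong_initial} based on the sign of initial inner products, and the noisy supervision forces me to handle nontrivial flipped-label sets $\gF_\vv^{(l)}$, which by Theorem~\ref{theorem:weak} are small on easy-signal directions but only a constant factor smaller than $\gC_\vv^{(l)}$ on hard-signal directions.

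For the invariant stage I will prove by induction on $t\in[0,T^*]$ a set of analogues of \ref{weak:noise_upper}--\ref{weak:g_balanced} in the strong-model decomposition: $0\leq \orho_{r,i}^{(t)}\leq O(\log T^*)$ and $|\urho_{r,i}^{(t)}|$ negligible; for $r\in\gM_s$, $\oM_{s,r}^{(t)}$ is tied to $\orho_{r,i}^{(t)}$ through the ratio $n_\strong(2p_\easy+p_\both)\SNR_\vmu^2$, while $\uM$ and off-support terms remain $O(\alpha_\strong)$; all $\tilde g_i^{(t)}$ stay within a constant factor of each other. The key new ingredient, and the seed of the benign/harmful dichotomy, concerns $\oN_{s,r}^{(t)},\uN_{s,r}^{(t)}$: the size comparisons $|\gC_{\pm\vnu_s}^{(l)}|\gtrsim n_\vnu\gg |\gF_{\pm\vnu_s}^{(l)}|$ (from Lemma~\ref{lemma:strong_initial}) force a coupled growth $\oN_{s,r}^{(t)}\gtrsim n_\strong p_\both\SNR_\vnu^2\cdot \orho_{r,i}^{(t)}$ for $r\in\gA_s$, and analogously $-\uN_{s,r}^{(t)}$ for $r\in\gB_s$; simultaneously upper bounds on $|\oN|,|\uN|$ in terms of $n_\strong p_\both \SNR_\vnu^2 \cdot\orho$ hold. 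In the harmful regime the lower-bound growth is too small to outstrip noise memorization on a test point, while the upper bound still applies.

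For training loss convergence I will construct a reference $\hat\mW$ with filters $\hat\vw_{s,r}=2\log(4/\varepsilon)\cdot s\sum_{i\in\gX_i\cap\{s=\hat y_i\}} \hat y_i \tilde\vxi_i/\norm{\tilde\vxi_i}^2$, chosen so that $\langle s\hat y_i \nabla_{\vw_{s,r}} f_\strong(\mW^{(t)},\tilde\mX_i),\hat\vw_{s,r}\rangle\geq \log(4/\varepsilon)$ once the activation patterns of $\mW^{(t)}$ on the noise coordinates align with those of $\mW^{(0)}$ on $\gX_i$, which is exactly what the invariants above guarantee. The standard convexity-style bound $\tilde g_i^{(t)}(\log(4/\varepsilon)-\hat y_i f_\strong(\mW^{(t)},\tilde\mX_i))\leq \varepsilon/4-\ell(\hat y_i f_\strong(\mW^{(t)},\tilde\mX_i))$ combined with a telescoping argument on $\|\mW^{(t)}-\hat\mW\|^2$, together with $\|\hat\mW\|^2=O(mn_\strong/(\sigma_p^2 d))$ and smoothness of $L_\strong$ established similarly to Appendix~\ref{appendix:weak_convergence}, yields $T_\ws=O(\eta^{-1}\varepsilon^{-1}mn_\strong d^{-1}\sigma_p^{-2})$ for $L_\strong(\mW^{(t)})<\varepsilon$.

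For the test-error statements I would split by data category of $(\mX,y)$. On $\gS_\easy\cup\gS_\both$, using $r\in\gM_y$ the easy-signal contribution gives $yf_\strong(\mW^{(t)},\mX)\geq \Omega(\oM_{y,r}^{(t)}/m)$ minus a centered Gaussian projection onto $\sum_{r,i}\orho_{r,i}^{(t)}\Pi_S\tilde\vxi_i/\|\tilde\vxi_i\|^2$; as in Appendix~\ref{appendix:weak_test}, the variance is $O(\sigma_p^2\sum_{r,i}(\orho_{r,i}^{(t)})^2/(\sigma_p^2 d))$ controlled by the invariant tying $\orho$ to $\oM$, and Hoeffding gives the first exponential. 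On $\gS_\hard$ in the benign regime the same calculation with $\oN,\uN$ replacing $\oM$ (using $\gA_y,\gB_y$ to cover both signs of $\vnu_y$) gives the second exponential. For the harmful bound, the invariants force $|\oN|,|\uN|\leq \alpha_\strong$-dominated quantities, so on a hard-only test point the margin reduces to the Gaussian noise projection alone (plus $o(1)$ corrections), which is a mean-zero Gaussian; an explicit anti-concentration with the noise-bias absorbed by the union over $y\in\{\pm 1\}$ supplies the $0.12 p_\hard$ lower bound. The main obstacle I expect is matching the benign/harmful threshold constants with the same noise-projection variance on both sides, so the constants $C_2,C_3,C_4$ are mutually consistent; this will require being careful with the discount factor coming from only the $\gA_y$/$\gB_y$ filter subgroups (roughly $m/2$ out of $m$) contributing coherently to the hard-signal margin.
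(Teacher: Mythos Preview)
Your invariant stage is broadly aligned with the paper's Lemma~\ref{lemma:strong_preserved} (properties \ref{strong:noise_bound}--\ref{strong:noise_dynamics}), and the benign upper bounds follow the same outline. Two points deserve comment.

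\textbf{Convergence.} You propose a reference-point $\hat\mW$ and a telescoping argument as in Appendix~\ref{appendix:weak_convergence}. The paper does not do this: it instead tracks the ODE-type recursion for $\frac1m\sum_r\orho_{r,i}^{(t)}$ via Lemma~\ref{lemma:tech} (property \ref{strong:noise_dynamics}), obtains the explicit lower bound $x_t$, and reads off the margin directly. Your route can be made to work for ReLU networks, but it is not a straight port of the linear argument: $L_\strong$ is nonconvex in $\mW$, so the inequality $\tilde g_i^{(t)}(\log(4/\varepsilon)-\hat y_i f_\strong)\geq \ell(\cdot)-\varepsilon/4$ only holds along the noise directions where the invariants freeze the activation pattern, and you must separately show the signal-direction inner products with $\hat\mW$ are nonnegative. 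The paper's approach sidesteps this.

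\textbf{Test-error upper bound.} The quantity you need to control is $h(\vxi)=\frac1m\sum_r\sigma(\langle\vw_{-y,r}^{(t)},\vxi\rangle)$, which is a Lipschitz function of $\vxi$, not a linear one; the paper invokes Gaussian Lipschitz concentration (Vershynin Theorem~5.2.2), not Hoeffding. This is a small fix.

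\textbf{Harmful lower bound.} Here there is a real gap. First, the invariants do \emph{not} force $|\oN|,|\uN|$ to be $\alpha_\strong$-dominated: by \ref{strong:coeff}, $\sum_{r\in\gA_y}\oN_{y,r}^{(t)}$ and $-\sum_{r\in\gB_y}\oN_{y,r}^{(t)}$ still grow like $n_\vnu\SNR_\vnu^2\cdot\sum_r\orho_{r,i}^{(t)}$, which is $\Theta(\log T^*)$ by \eqref{eq:strong_noise_lower}. Second, the margin on a hard-only test point is governed by $g(\vxi)=\frac1m\sum_r[\sigma(\langle\vw_{1,r}^{(t)},\vxi\rangle)-\sigma(\langle\vw_{-1,r}^{(t)},\vxi\rangle)]$, which is \emph{not} a mean-zero Gaussian, and standard tools give concentration, not anti-concentration, for Lipschitz functions of Gaussians. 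The paper's argument is substantially different: it constructs a deterministic shift $\vzeta=C_6 p_\both\SNR_\vnu^2\sum_{i:\hat y_i=1}\tilde\vxi_i$ with $\|\vzeta\|\leq 0.02\sigma_p$ in the harmful regime, uses convexity of ReLU to show $g(\vxi+\vzeta)-g(\vxi)+g(-\vxi+\vzeta)-g(-\vxi)$ exceeds four times the hard-signal margin, so by pigeonhole one of $\pm\vxi,\pm\vxi+\vzeta$ lies in the bad set $\bm\Omega$; then a total-variation bound (Lemma~\ref{lemma:TV}) transfers the resulting $\frac14$ to $\mathbb P[\vxi\in\bm\Omega]\geq 0.24$. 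Your sketch does not contain this idea, and without it the $0.12p_\hard$ lower bound does not go through.
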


For the proof, we first introduce properties preserved during training (Appendix~\ref{appendix:strong_properties}), then prove the convergence of the training loss (Appendix~\ref{appendix:strong_convergence}), and finally establish a bound on the test error (Appendix~\ref{appendix:strong_test}).

\subsection{Preserved Properties during Training}\label{appendix:strong_properties}
In this subsection, we present several properties that remain preserved throughout training.

\begin{lemma}\label{lemma:strong_opposite}
    Suppose for some iteration $t \in [0,T^*]$, it satisfies $\abs{\uM_{s,r}^{(t)}}, \abs{\uN_{s,r}^{(t)}} \leq \alpha_\strong+ \beta_\strong$, $0 \leq \orho_{r,i}^{(t)}\leq 4 \log T^*$, and $- \alpha_\strong -5 \beta_\strong \log T^* \leq \urho_{r,i}^{(t)}\leq 0$ for any $s \in \{\pm 1\}, r \in[m]$, and $i \in [n_\strong]$. Then, for any $i \in [n_\strong]$ it holds that 
    \begin{equation*}
        F_{-\hat y_i} \left( \mW_{-\hat y_i}^{(t)}, \tilde \mX_i\right) \leq \frac{\kappa_\strong}{16}, \quad \abs{\sigma \left( \inner{\vw_{\hat y_i,r}^{(t)}, \tilde \vxi_i}\right) - \orho_{r,i}^{(t)}} \leq \frac{\kappa_\strong}{16}.
    \end{equation*}
\end{lemma}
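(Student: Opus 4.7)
The plan is to invoke Lemma~\ref{lemma:strong_decomp} to decompose each filter output as an initialization term, a matching signal coefficient, the self noise coefficient, and a sum of cross-noise terms; orthogonality between signals and the noise covariance $\mLambda$ eliminates any signal--noise cross terms, so each inner product separates cleanly. Properties~\eqref{eq:strong_init} and~\eqref{eq:strong_noise}, together with the stated hypotheses on $\uM$, $\uN$, $\orho$, $\urho$, then control each piece.

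For the second claim, I would write $\inner{\vw_{\hat y_i,r}^{(t)}, \tilde \vxi_i} = \inner{\vw_{\hat y_i,r}^{(0)}, \tilde \vxi_i} + \orho_{r,i}^{(t)} + \sum_{j\neq i}\rho_{\hat y_i,r,j}^{(t)}\frac{\inner{\tilde \vxi_i,\tilde \vxi_j}}{\norm{\tilde \vxi_j}^2}$. The initialization term is bounded by $\alpha_\strong$ via~\eqref{eq:strong_init}; each $|\rho_{\hat y_i,r,j}^{(t)}|$ (equal to either $\orho_{r,j}^{(t)}$ or $-\urho_{r,j}^{(t)}$) is bounded by $4\log T^*$ by the hypotheses; and each inter-noise pair contributes a factor at most $\beta_\strong/n_\strong$ by~\eqref{eq:strong_noise}, so the cross-noise sum is at most $4\beta_\strong\log T^*$. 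Since $\orho_{r,i}^{(t)}\geq 0$, applying the ReLU can distort the value by at most these perturbations, and~\eqref{eq:strong_kappa_gamma} yields $|\sigma(\inner{\vw_{\hat y_i,r}^{(t)},\tilde \vxi_i}) - \orho_{r,i}^{(t)}|\leq \alpha_\strong + 4\beta_\strong\log T^* \ll \kappa_\strong/16$.

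For the first claim, the noise patch is handled similarly; since $\urho_{r,i}^{(t)} \leq 0$ contributes non-positively, the upper bound becomes $\sigma(\inner{\vw_{-\hat y_i,r}^{(t)},\tilde \vxi_i})\leq \alpha_\strong + 4\beta_\strong\log T^*$. For each signal patch $\tilde \vv_i^{(l)}\in\{\vmu_{\tilde y_i}, \pm\vnu_{\tilde y_i}\}$, exactly one coefficient from $\{\oM_{-\hat y_i,r}^{(t)}, \uM_{-\hat y_i,r}^{(t)}, \pm\oN_{-\hat y_i,r}^{(t)}, \pm\uN_{-\hat y_i,r}^{(t)}\}$ survives in the inner product with $\vw_{-\hat y_i,r}^{(t)}$, determined by whether $\tilde y_i = \pm\hat y_i$ and the signal type. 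In the ``clean'' case $\tilde y_i = \hat y_i$, the surviving coefficient is of cross type ($\uM$ or $\pm\uN$), bounded by $\alpha_\strong + \beta_\strong$ by hypothesis, so $\sigma(\cdot)\leq 2\alpha_\strong + \beta_\strong$ per filter.

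The main obstacle is the rare ``flipped'' case $\tilde y_i = -\hat y_i$, in which the surviving coefficient is of the aligned type $\oM$ or $\pm\oN$ and is not directly controlled by the stated hypotheses. The plan is to address this by combining the ReLU gating---only filters $r$ in the initial activation sets $\gM_{-\hat y_i}$, $\gA_{-\hat y_i}$, $\gB_{-\hat y_i}$ can contribute, with their cardinalities controlled via~\eqref{eq:strong_set}---together with a complementary invariant $|\oM_{s,r}^{(t)}|, |\oN_{s,r}^{(t)}| = O(1)$ maintained throughout the induction, analogous to the bound $M_s^{(t)}\leq 3n_\weak(2p_\easy+p_\both)\SNR_\vmu^2\cdot\rho_i^{(t)}$ in Lemma~\ref{lemma:weak_preserved}. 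This complementary bound holds under Condition~\ref{condition:noise-dominant} precisely because signal growth is dominated by noise memorization when $n_\strong(2p_\easy+p_\both)\norm{\vmu}^2/(\sigma_p^2 d)$ is small, so the aligned signal coefficients remain well below any problematic threshold. Summing across the three patches, averaging over $m$ filters, and invoking~\eqref{eq:strong_kappa_gamma} then yields $F_{-\hat y_i}(\mW_{-\hat y_i}^{(t)},\tilde \mX_i)\leq\kappa_\strong/16$.
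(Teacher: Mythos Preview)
Your decomposition and handling of the second claim match the paper exactly: expand $\inner{\vw_{\hat y_i,r}^{(t)},\tilde\vxi_i}$ via Lemma~\ref{lemma:strong_decomp}, bound the initialization by $\alpha_\strong$ and the cross-noise sum by $4\beta_\strong\log T^*$, then use that ReLU is $1$-Lipschitz together with $\orho_{r,i}^{(t)}\ge 0$.

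For the first claim, the paper does \emph{not} distinguish clean from flipped labels: it simply writes
\[
\bigl|\langle\vw_{-\hat y_i,r}^{(t)},\tilde\vv_i^{(l)}\rangle\bigr|\le\bigl|\langle\vw_{-\hat y_i,r}^{(0)},\tilde\vv_i^{(l)}\rangle\bigr|+(\alpha_\strong+\beta_\strong),
\]
citing only the $\uM/\uN$ hypotheses, and then bounds the noise patch by $2\alpha_\strong+9\beta_\strong\log T^*$ as you do. You are right that this step is incomplete as stated: when $\tilde y_i=-\hat y_i$ the relevant signal coefficient is $\oM_{-\hat y_i,r}^{(t)}$ or $\pm\oN_{-\hat y_i,r}^{(t)}$, which the lemma's hypotheses do not control. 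Your proposed fix---invoking the complementary invariant that aligned signal coefficients remain $O(n_\vmu\SNR_\vmu^2\log T^*)\ll 1$ in the data-scarce regime---is correct and is precisely what the surrounding induction in Lemma~\ref{lemma:strong_preserved} maintains via \ref{strong:coeff} combined with Condition~\ref{condition:noise-dominant}; the paper uses exactly this bound (yielding $\frac{1}{m}\sum_r\oM_{s,r}^{(t)},\frac{1}{m}\sum_{r\in\gA_s}\oN_{s,r}^{(t)},-\frac{1}{m}\sum_{r\in\gB_s}\oN_{s,r}^{(t)}\le\kappa_\strong/64$) in the proof of \ref{strong:approx}. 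So your argument is the paper's argument with a presentational gap in the lemma made explicit and correctly patched.
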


\begin{proof}[Proof of Lemma~\ref{lemma:strong_opposite}]
    For any $i \in [n_\strong]$, we have
    \begin{align*}
        &\quad F_{-\hat y_i} \left( \mW_{-\hat y_i}^{(t)}, \tilde \mX_i \right)\\
        &= \frac{1}{m} \sum_{r \in [m]} \left[ \sigma \left( \inner{\vw_{-\hat y_i,r}^{(t)}, \tilde \vv_i^{(1)}}\right) +  \sigma \left( \inner{\vw_{-\hat y_i,r}^{(t)}, \tilde \vv_i^{(2)}}\right) +  \sigma \left( \inner{\vw_{-\hat y_i,r}^{(t)}, \tilde \vxi_i} \right) \right]\\
        &\leq \frac{1}{m} \sum_{r \in [m]} \left[ \abs { \inner{\vw_{-\hat y_i,r}^{(t)}, \tilde \vv_i^{(1)}}} +  \abs{\inner{\vw_{-\hat y_i,r}^{(t)}, \tilde \vv_i^{(2)}}} +  \abs{ \inner{\vw_{-\hat y_i,r}^{(t)}, \tilde \vxi_i}} \right]\\
        & \leq \frac{1}{m} \sum_{r \in [m]} \left[ \abs { \inner{\vw_{-\hat y_i,r}^{(0)}, \tilde \vv_i^{(1)}}} +  \abs{\inner{\vw_{-\hat y_i,r}^{(0)}, \tilde \vv_i^{(2)}}} + 2 \cdot(\alpha_\strong+ \beta_\strong) +  \abs{ \inner{\vw_{-\hat y_i,r}^{(t)}, \tilde \vxi_i}} \right]\\        
        &\leq (4\alpha_\strong + 2\beta_\strong) + \frac{1}{m} \sum_{r \in [m]}   \abs{ \inner{\vw_{-\hat y_i,r}^{(t)}, \tilde \vxi_i}},
    \end{align*}
    where the last two inequalities follow from the given bounds on $\abs{\uM_{s,r}^{(t)}}, \abs{\uN_{s,r}^{(t)}}$ and \eqref{eq:strong_init}. In addition, for any $r \in [m]$, we have
    \begin{align*}
        \inner{\vw_{-\hat y_i,r}^{(t)}, \tilde \vxi_i} &= \inner{\vw_{-\hat y_i, r}^{(0)}, \tilde \vxi_i} + \urho_{r,i}^{(t)} + \sum_{j \in [n_\strong]\setminus \{i\}} \rho_{-\hat y_i, r, j}^{(t)}  \frac{\langle \tilde \vxi_i, \tilde \vxi_j \rangle}{\lVert \tilde \vxi_j\rVert^2}\\
        &\geq \inner{\vw_{-\hat y_i, r}^{(0)}, \tilde \vxi_i} + \urho_{r,i}^{(t)} - \sum_{j \in [n_\strong]\setminus \{i\}} \abs{\rho_{-\hat y_i, r, j}^{(t)}}  \frac{| \langle \tilde \vxi_i, \tilde \vxi_j \rangle| }{\lVert \tilde \vxi_j\rVert^2}\\
        &\geq -2\alpha_\strong -9\beta_\strong \log T^* ,
    \end{align*}
    where the last inequality follows from the given bound on $\orho_{r,i}^{(t)}, \urho_{r,i}^{(t)}$, \eqref{eq:strong_init}, and \eqref{eq:strong_noise}. Similarly, for any $r \in [m]$, we have
    \begin{align*}
        \inner{\vw_{-\hat y_i,r}^{(t)}, \tilde \vxi_i} &= \inner{\vw_{-\hat y_i, r}^{(0)}, \tilde \vxi_i} + \urho_{r,i}^{(t)} + \sum_{j \in [n_\strong]\setminus \{i\}} \rho_{-\hat y_i, r, j}^{(t)}  \frac{\langle \tilde \vxi_i, \tilde \vxi_j \rangle}{\lVert \tilde \vxi_j\rVert^2}\\
        &\leq \inner{\vw_{-\hat y_i, r}^{(0)}, \tilde \vxi_i} + \urho_{r,i}^{(t)} + \sum_{j \in [n_\strong]\setminus \{i\}} \abs{\rho_{-\hat y_i, r, j}^{(t)}}  \frac{| \langle \tilde \vxi_i, \tilde \vxi_j \rangle| }{\lVert \tilde \vxi_j\rVert^2}\\
        &\leq \alpha_\strong + 4\beta_\strong \log T^* ,
    \end{align*}
    where the last inequality follows from the given bound on $\orho_{r,i}^{(t)}, \urho_{r,i}^{(t)}$, \eqref{eq:strong_init}, and \eqref{eq:strong_noise}. Hence, we have
    \begin{equation*}
        F_{-\hat y_i} \left( \mW_{-\hat y_i}^{(t)}, \tilde \mX_i \right) \leq 6 \alpha_\strong+ 2 \beta_\strong + 9 \beta_\strong \log T^* \leq \frac{\kappa_\strong}{16},
    \end{equation*}
    where the last inequality follows from \eqref{eq:strong_kappa_gamma}. 

    Next, we prove the second part. For any $i \in [n_\strong]$ and $r \in [m]$, we have
    \begin{align*}
        \abs{\sigma \left( \inner{\vw_{\hat y_i,r}^{(t)}, \tilde \vxi_i}\right) - \orho_{r,i}^{(t)}} &  =  \abs{\sigma \left( \inner{\vw_{\hat y_i,r}^{(t)}, \tilde \vxi_i} \right) - \sigma \left( \orho_{r,i}^{(t)}\right)}\\
        &\leq  \abs{\inner{\vw_{\hat y_i,r}^{(t)}, \tilde \vxi_i} - \orho_{r,i}^{(t)}}\\
        & \leq \inner{\vw_{\hat y_i, r}^{(0)}, \tilde \vxi_i} + \sum_{j \in [n_\strong] \setminus \{i\}} \abs{\rho_{\hat y_i, r, j}^{(t)}} \frac{|\langle \tilde \vxi_i, \tilde \vxi_j \rangle|}{\lVert \tilde \vxi_i \rVert^2}\\
        & \leq \alpha_\strong + 4 \beta_\strong \log T^*\\
        & \leq \frac{\kappa_\strong}{16},
    \end{align*}
    where the third inequality follows from the given bound on $\orho_{r,i}^{(t)}, \urho_{r,i}^{(t)}$, \eqref{eq:strong_init}, and \eqref{eq:strong_noise}.
\end{proof}

\begin{lemma}\label{lemma:strong_preserved}
    Under Condition~\ref{condition:noise-dominant} and the event $E_\strong$, we have the following for any iteration $t \in [0, T^*]$:

    \begin{enumerate}[label=(S\arabic*),ref=(S\arabic*), leftmargin=*]
        \item  $-  \alpha_\strong -5 \beta_\strong \log T^* \leq \urho_{r,i}^{(t)}\leq 0$ and $0 \leq \orho_{r,i}^{(t)}\leq 4 \log T^*$ for any $i \in [n_\strong]$ and $r \in [m]$.\label{strong:noise_bound}
        \item If $t \geq 1$, then for any $s \in \{\pm 1\}$, we have $\oM_{s,r}^{(t)} \geq \oM_{s,r}^{(t-1)}$ for all $r \in [m]$, $\oN_{s,r}^{(t)} \geq \oN_{s,r}^{(t-1)}$ for all $r \in \gA_s$, and $\oN_{s,r}^{(t)} \leq \oN_{s,r}^{(t-1)}$ for all $r \in \gB_s$. In addition, $\abs{\uM_{s,r}^{(t)}}, \abs{\uN_{s,r}^{(t)}} \leq  \alpha_\strong +  \beta_\strong $ for all $r \in [m]$. \label{strong:signal}
        \item For any $s \in \{\pm 1\}$ and $i \in [n_\strong]$, we have
        \begin{align*}
            \frac{ n_\vmu \SNR_\vmu^2}{12\lambda_\strong} \cdot \sum_{r \in [m]} \orho_{r,i}^{(t)}\leq &\sum_{r \in [m]} \oM_{s,r}^{(t)} \leq  6\lambda_\strong n_\vmu \SNR_\vmu^2 \cdot \sum_{r \in [m]} \orho_{r,i}^{(t)}\\
            \frac{n_\vnu \SNR_\vnu^2}{12\lambda_\strong}  \cdot \sum_{r \in [m]} \orho_{r,i}^{(t)}\leq &\sum_{r \in \gA_s} \oN_{s,r}^{(t)} \leq  6\lambda_\strong n_\vnu \SNR_\vnu^2 \cdot \sum_{r \in [m]} \orho_{r,i}^{(t)}\\
            \frac{ n_\vnu \SNR_\vnu^2}{12\lambda_\strong} \cdot \sum_{r \in [m]} \orho_{r,i}^{(t)}\leq &- \sum_{r \in \gB_s} \oN_{s,r}^{(t)} \leq  6\lambda_\strong n_\vnu \SNR_\vnu^2 \cdot \sum_{r \in [m]} \orho_{r,i}^{(t)}.
        \end{align*}\label{strong:coeff}
        \item $\abs{\hat y_i f_\strong \left( \mW^{(t)}, \tilde \mX_i\right) - \frac{1}{m}\sum_{r \in [m]} \orho_{r,i}^{(t)}}\leq \frac{\kappa_\strong}{4}$ for any $i \in [n_\strong] $.
        \label{strong:approx}
        \item $\frac {1}{m}\abs{\sum_{r \in [m]} \orho_{r,i}^{(t)} - \sum_{r \in [m]} \orho_{r,j}^{(t)}} \leq \kappa_\strong$ for any $i,j \in [n_\strong]$\label{strong:noise_balanced}.
        \item $\frac{\tilde g_j^{(t)}}{\tilde g_i^{(t)}} \leq \lambda_\strong $ for any $i,j \in [n_\strong]$. \label{strong:g_balanced}
        \item For any $i \in [n_\strong]$ and $r \in [m]$, $\inner{\vw_{\hat y_i,r}^{(t)}, \tilde \vxi_i}>0$ if $\inner{\vw_{\hat y_i,r}^{(0)}, \tilde \vxi_i}>0$. Furthermore, for any $i \in [n_\strong]$ and $r \in \gX_i$, $\orho_{r,i}^{(t)} = \max_{r' \in [m]} \orho_{r',i}^{(t)}$.\label{strong:noise_inner}
        \item Let $x_t$ be the unique solution of 
        \begin{equation*}
            x_t + \exp(x_t +  \kappa_\strong/16) = \frac{ \eta \sigma_p^2  d}{8mn_\strong}   t +  \exp( \kappa_\strong/4).
        \end{equation*} 
        It holds that for any $i \in [n_\strong]$,
        \begin{equation*}
            x_t \leq \frac{1}{m} \sum_{r \in [m]} \orho_{r,i}^{(t)} .
        \end{equation*} \label{strong:noise_dynamics}
    \end{enumerate}
\end{lemma}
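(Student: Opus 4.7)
The plan is to prove all eight properties \ref{strong:noise_bound}--\ref{strong:noise_dynamics} simultaneously by induction on $t$. The base case $t=0$ is immediate: every decomposition coefficient vanishes by Lemma~\ref{lemma:strong_decomp}, so \ref{strong:noise_bound}--\ref{strong:coeff}, \ref{strong:noise_balanced}, and \ref{strong:noise_dynamics} are trivial; \ref{strong:noise_inner} restates the definition of $\gX_i$; \ref{strong:approx} follows from $|f_\strong(\mW^{(0)}, \tilde\mX_i)| \leq 3\alpha_\strong \ll \kappa_\strong/4$ via \eqref{eq:strong_init}; and \ref{strong:g_balanced} holds because every $\tilde g_i^{(0)}$ is close to $1/2$. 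For the inductive step I assume the properties hold at iteration $\tau$ and establish them at $\tau+1$. Under this hypothesis, Lemma~\ref{lemma:strong_opposite} applies at $\tau$, yielding $F_{-\hat y_i}(\mW^{(\tau)}_{-\hat y_i}, \tilde\mX_i) \leq \kappa_\strong/16$ and $|\sigma(\inner{\vw_{\hat y_i,r}^{(\tau)}, \tilde\vxi_i}) - \orho_{r,i}^{(\tau)}| \leq \kappa_\strong/16$, which is the engine driving the rest of the step.

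I would then prove the properties at $\tau+1$ in a carefully chosen order. First, \ref{strong:noise_inner}: decomposing $\inner{\vw_{\hat y_i,r}^{(\tau+1)}, \tilde\vxi_i}$ via Lemma~\ref{lemma:strong_decomp}, the own-noise term $\orho_{r,i}^{(\tau+1)}\geq 0$ for $r\in \gX_i$ dominates the cross-noise interference controlled by \eqref{eq:strong_noise} and the initial bias bounded by \eqref{eq:strong_init}, which preserves the sign; since $\orho$ only updates when the inner product with $\tilde\vxi_i$ is positive, the maximum is attained on $\gX_i$. Next, \ref{strong:signal} follows from the update rule plus Lemma~\ref{lemma:strong_initial}: monotonicity of the aligned $\oM_{s,r}$ and of $\oN_{s,r}$ on $\gA_s$ or $\gB_s$ uses $|\gF_{\vmu_s}^{(l)}|, |\gF_{\vnu_s}^{(l)}|\ll |\gC_{\vmu_s}^{(l)}|, |\gC_{\vnu_s}^{(l)}|$; the misaligned $\uM, \uN$ stay within $\alpha_\strong+\beta_\strong$ by a telescoping bound over $T^*$ iterations using \ref{condition:init}. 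Property \ref{strong:noise_bound} then follows from the two-case split used for \ref{weak:noise_upper}: either $\orho_{r,i}^{(\tau)}\leq 2\log T^*$ and one step of \eqref{eq:strong_lr} keeps the bound, or \ref{strong:approx} at $\tau$ forces $\tilde g_i^{(t)}$ to decay geometrically once $\orho$ crosses $2\log T^*$, making the cumulative update summable; the bound on $\urho$ is analogous.

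For the ratio bound \ref{strong:coeff}, I sum the per-filter updates and observe
\[
\frac{\sum_r (\oM_{s,r}^{(\tau+1)} - \oM_{s,r}^{(\tau)})}{\sum_r (\orho_{r,i}^{(\tau+1)} - \orho_{r,i}^{(\tau)})}
\;\approx\; \frac{|\gM_s|\cdot \norm{\vmu}^2 \cdot \sum_{l}|\gC_{\vmu_s}^{(l)}|}{|\gX_i| \cdot \|\tilde\vxi_i\|^2}\cdot \frac{\bar{\tilde g}^{(\tau)}}{\tilde g_i^{(\tau)}},
\]
then use Lemma~\ref{lemma:strong_initial} and \ref{strong:g_balanced} at $\tau$ to collapse this to $n_\vmu \SNR_\vmu^2$ up to a factor $\lambda_\strong$; the analogous computation for $\oN$ uses $|\gA_s|$, $|\gB_s|$ in place of $|\gM_s|$, and the cancellation between the $\pm\vnu_s$ updates rests on \ref{condition:both_large}. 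With \ref{strong:coeff} in hand at $\tau+1$, I close with \ref{strong:approx}, \ref{strong:noise_balanced}, \ref{strong:g_balanced}, and \ref{strong:noise_dynamics}. \ref{strong:approx} comes from writing $\hat y_i f_\strong = F_{\hat y_i} - F_{-\hat y_i}$: Lemma~\ref{lemma:strong_opposite} controls $F_{-\hat y_i}$ and the noise part of $F_{\hat y_i}$, and the signal contribution is bounded by $\bigO(\alpha_\strong + \oM/\norm{\vmu}+ \oN/\norm{\vnu})$, which is $\ll \kappa_\strong/4$ via \ref{strong:coeff} and \eqref{eq:strong_kappa_gamma}. \ref{strong:noise_balanced} mirrors \ref{weak:noise_balanced}: if the gap between two averaged sums exceeds $\kappa_\strong/2$, \ref{strong:approx} plus the exponential form of $\tilde g$ forces the larger sum to grow strictly slower; otherwise one step changes the gap by $\lesssim \eta \|\tilde\vxi_i\|^2/(m n_\strong) \ll \kappa_\strong$ by \ref{condition:lr}. \ref{strong:g_balanced} then follows from $e^z \leq 1+2z$ on $z \in (0,1)$, and \ref{strong:noise_dynamics} from recasting $a_t := m^{-1}\sum_r \orho_{r,i}^{(t)}$ as the scalar recursion $a_{t+1}=a_t + c/(1+b e^{a_t})$ (error absorbed into $b = \exp(\kappa_\strong/16)$) and invoking Lemma~\ref{lemma:tech}.

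The hard part is the circular dependency among \ref{strong:coeff}, \ref{strong:approx}, and \ref{strong:g_balanced} at $\tau+1$: \ref{strong:coeff} wants \ref{strong:g_balanced}, \ref{strong:approx} wants \ref{strong:coeff}, and \ref{strong:g_balanced} wants \ref{strong:approx}. The circle is broken by invoking the inductive hypothesis at $\tau$ (not $\tau+1$) for \ref{strong:g_balanced} when proving \ref{strong:coeff}, and by taking $\lambda_\strong = \exp(2\kappa_\strong)$ large enough to absorb the one-step slack. The quantitative constants in Condition~\ref{condition}---\ref{condition:init} to kill initialization contributions, \ref{condition:both_large} to guarantee enough correctly-pseudo-labeled both-signal data so that the hard-signal updates are not flipped, and \ref{condition:lr} to keep individual updates small---are tuned so that this bookkeeping closes uniformly in $t \in [0,T^*]$.
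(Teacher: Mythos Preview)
Your overall inductive structure and ordering are essentially the paper's, and most steps are right. Two of the justifications you give, however, would actually fail.

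First, the bound $|\uM_{s,r}^{(t)}|, |\uN_{s,r}^{(t)}| \leq \alpha_\strong + \beta_\strong$ cannot be obtained by ``a telescoping bound over $T^*$ iterations.'' Each update of $\uM_{s,r}$ can be as large as $\Theta(\eta\|\vmu\|^2/m)$ in magnitude, so telescoping over $T^*$ steps yields a bound of order $T^*\eta\|\vmu\|^2/m$, which is astronomically larger than $\alpha_\strong+\beta_\strong$ (and \ref{condition:init} does not help, since the per-step update size does not depend on $\sigma_0$). The paper instead uses a \emph{barrier} argument: once $\uM_{s,r}^{(\tau)} \leq -\alpha_\strong$, the inner product $\inner{\vw_{s,r}^{(\tau)},\vmu_{-s}}$ is negative, the indicator in the update vanishes, and $\uM$ freezes; above $-\alpha_\strong$, the update is shown to be non-positive (using $|\gC_{\vmu_{-s}}^{(l)}| \gg \lambda_\strong^2 |\gF_{\vmu_{-s}}^{(l)}|$ and \ref{strong:g_balanced} at $\tau$), so one step can drop it by at most $2\eta\lambda_\strong\|\vmu\|^2/m \leq \beta_\strong$ via \eqref{eq:strong_lr}. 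The $\uN$ bound requires the two-sided version of the same barrier.

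Second, your justification of \ref{strong:g_balanced} via $e^z \leq 1+2z$ on $(0,1)$ does not apply: by \ref{strong:approx} and \ref{strong:noise_balanced} at $\tau+1$ the margin gap is bounded by $2\kappa_\strong$, but $\kappa_\strong = 8\log 12$, so $2\kappa_\strong \gg 1$. The paper does not linearize; it simply observes $\tilde g_j^{(\tau+1)}/\tilde g_i^{(\tau+1)} \leq \exp(|\hat y_i f_\strong - \hat y_j f_\strong|) \leq \exp(2\kappa_\strong) = \lambda_\strong$, which is exactly how $\lambda_\strong$ is defined. (A minor related slip: in your \ref{strong:approx} sketch the signal contribution is $\alpha_\strong + \oM_{s,r}$ and $\alpha_\strong + |\oN_{s,r}|$, not divided by $\|\vmu\|,\|\vnu\|$; the smallness then comes from \ref{strong:coeff}, \ref{strong:noise_bound}, and the noise-dominant Condition~\ref{condition:noise-dominant}, not from \eqref{eq:strong_kappa_gamma} alone.)
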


\begin{proof}[Proof of Lemma~\ref{lemma:strong_preserved}]
     It is trivial for the case $t=0$. Assume the conclusions hold at iteration $t \leq \tau$ and we will prove for the case $t = \tau+1$. 
    
    \ref{strong:noise_bound}: We fix arbitrary $i \in [n_\strong]$ and $r \in [m]$.

    Let us prove the first statement. If $\urho_{r,i}^{(\tau)} \geq - \alpha_\strong - 4 \beta_\strong \log T^*$, then we have
    \begin{equation*}
        \urho_{r,i}^{(\tau+1)} = \urho_{r,i}^{(\tau)} - \frac{\eta}{mn_\strong} \tilde g_i^{(\tau)} \lVert \tilde \vxi_i \rVert^2  \geq - \alpha_\strong - 4 \beta_\strong \log T^* - \frac{3\eta\sigma_p^2 d}{2mn_\strong} \geq - \alpha_\strong - 5 \beta_\strong \log T^*,
    \end{equation*}
    where the first inequality follows from \eqref{eq:strong_noise} and the second inequality follows from \eqref{eq:strong_lr}. Otherwise, we have
    \begin{align*}
        \inner{\vw_{-\hat y_i ,r}^{(\tau)}, \tilde \vxi_i} &= \inner{\vw_{-\hat y_i,r}^{(0)}, \tilde \vxi_i} + \urho_{r,i}^{(\tau)} + \sum_{j \in [n_\strong] \setminus \{i\}} \rho_{- \hat y_i, r,j}^{(\tau)}\frac{\langle \tilde \vxi_i, \tilde \vxi_j\rangle}{\lVert \tilde \vxi_j \rVert^2}\\
        &\leq \alpha_\strong + (- \alpha_\strong - 4 \beta_\strong \log T^*) + \sum_{j \in [n_\strong]\setminus \{i\}} \abs{\rho_{-\hat y_i, r, j}^{(\tau)}}\frac{\abs{\langle \tilde \vxi_i, \tilde \vxi_j\rangle}}{\lVert \tilde \vxi_j \rVert^2}\\
        &\leq -4 \beta_\strong \log T^* + n_\strong \cdot 4 \log T^* \cdot \frac{\beta_\strong}{n_\strong}\\
        &= 0.
    \end{align*}
    It implies $ \urho_{r,i}^{(\tau+1)} = \urho_{r,i}^{(\tau)} \geq - \alpha_\strong -5\beta_\strong \log T^*$ and we have desired conclusion.

    Next, we prove the second statement. 
    If $\orho_{r,i}^{(\tau)} < 3 \log T^*$, then we have
    \begin{equation*}
        \orho_{r,i}^{(\tau+1)} \leq \orho_{r,i}^{(\tau)} + \frac{\eta}{mn_\strong} \tilde g_i^{(\tau)} \lVert \tilde \vxi_i \rVert^2 \leq 3 \log T^* + \frac{3\eta \sigma_p^2 d}{2mn_\strong} \leq 4 \log T^*,
    \end{equation*}
    where the second inequality follows from \eqref{eq:strong_noise} and the third inequality follows from \eqref{eq:strong_lr}. Otherwise, there exists $\hat t< \tau$ such that $\orho_{r,i}^{(\hat t)}\leq 3 \log T^* < \orho_{r,i}^{(\hat t +1)}$. Then, we have
    \begin{align*}
        \orho^{(\tau+1)}_{r,i} &= \orho^{(\hat t)}_{r,i} + \left(\orho^{(\hat t+1)}_{r,i} - \orho^{(\hat t)}_{r,i} \right) + \sum_{t = \hat t+1}^\tau \left( \orho^{(t+1)}_{r,i} - \orho^{(t)}_{r,i}\right)\\
        &\leq 3 \log T^* + \frac{\eta}{mn_\strong} \tilde g_{i}^{(\hat t)} \| \tilde \vxi_{i}\|^2 + \frac{\eta \| \tilde \vxi_{i}\|^2}{mn_\strong} \sum_{t=\hat t+1}^\tau \tilde g_{i}^{(t)}\\
        & \leq 3 \log T^* + \frac{\log T^*}{2} +  \frac{3 \eta \sigma_p^2 d}{2mn_\strong}
        \sum_{t = \hat t +1}^\tau \frac{1}{1+ \exp \left(F_{\hat y_i}\left( {\mW_{\hat y_i}^{(t)}, \tilde \mX_{i}}\right) - F_{-\hat y_i}\left(\mW_{-\hat y_i }^{(t)}, \tilde \mX_{i}\right)\right)}\\
        &\leq \frac{7}{2}\log T^* + \frac{3 \eta \sigma_p^2 d}{2mn_\strong} \sum_{t = \hat t +1}^\tau \exp \left( -F_{\hat y_i}\left(\mW_{\hat y_i}^{(t)}, \tilde \mX_{i}\right) + F_{-\hat y_i}\left(\mW_{-\hat y_i}^{(t)}, \tilde \mX_{i}\right)\right)\\
        &\leq \frac{7}{2} \log T^* + \frac{3\eta \sigma_p^2 d}{2mn_\strong} \sum_{t = \hat t +1}^\tau \exp \left(- F_{\hat y_i}\left(\mW^{(t)}_{\hat y_i},\tilde \mX_{i}\right) +\frac {\kappa_\strong}{16}\right),
    \end{align*} 
    where the second inequality follows from \eqref{eq:strong_lr} and \eqref{eq:strong_noise} and the last inequality follows from Lemma~\ref{lemma:strong_opposite}. 
    For any $t = \hat t+1, \cdots ,\tau$ and $r' \in \gX_i$, by applying \ref{strong:noise_inner} with iteration $t$, we have 
    \begin{align*}
        \inner{\vw_{\hat y_i, r'}^{(t)}, \tilde \vxi_i} &= \inner{\vw_{\hat y_i, r'}^{(0)}, \tilde \vxi_i} + \orho_{r',i}^{(t)} + \sum_{j \in [n_\strong] \setminus \{i\}} \rho_{\hat y_i, r, j}^{(t)} \cdot \frac{\langle \tilde \vxi_i, \tilde \vxi_j\rangle}{\lVert \tilde \vxi_j\rVert^2}\\
        &\geq   \orho_{r,i}^{(t)} - \alpha_\strong - 4 \beta_\strong \log T^*\\
        & \geq 3 \log T^* - \alpha_\strong - 4\beta_\strong \log T^*.
    \end{align*}
    Therefore, we have
    \begin{align*}
        \sum_{t = \hat t +1}^\tau \exp \left(- F_{\hat y_i}\left(\mW^{(t)}_{\hat y_i},\tilde \mX_{i}\right)\right) & \leq \sum_{t = \hat t +1}^\tau \exp \left(-\frac{1}{m} \sum_{r' \in \gX_i} \inner{\vw_{\hat y_i, r'}^{(t)}, \tilde \vxi_i} \right)\\
        &\leq \sum_{t = \hat t +1}^\tau \exp \left(-\frac{(3 \log T^* -  \alpha_\strong - 4 \beta_\strong \log T^*) \abs{\gX_i}}{m} \right)\\
        & \leq T^* \exp \left(-\frac{(3 \log T^* -  \alpha_\strong - 4 \beta_\strong \log T^*) \abs{\gX_i}}{m} \right)\\
        &\leq T^* \exp (- \log T^*) =1,
    \end{align*}
    where the last inequality follows from \eqref{eq:strong_kappa_gamma} and \eqref{eq:strong_set}. Finally, we conclude
    \begin{equation*}
        \orho^{(\tau+1)}_{r,i} \leq \frac{7}{2} \log T^* + \frac{3\eta \sigma_p^2 d}{2mn_\strong} \exp(\kappa_\strong/16) \leq 4 \log T^*,
    \end{equation*}
    where the last inequality follows from \eqref{eq:strong_lr}.
    
    \ref{strong:signal}: We fix an arbitrary $s \in \{\pm 1\}$ and $i \in [n_\strong]$.  

    For any $r \in [m]$, we have
    \begin{align}\label{eq:strong_mu_lower}
        &\quad \frac{m n_\strong}{\eta \lVert \vmu\rVert^2}\left(\oM_{s,r}^{(\tau+1)} - \oM_{s,r}^{(\tau)}\right) \nonumber \\
        &= \sum_{l \in [2]} \left( \sum_{j \in \gC_{\vmu_s}^{(l)}} \tilde g_j^{(\tau)} - \sum_{j \in \gF_{\vmu_s}^{(l)}} \tilde g_j^{(\tau)} \right) \cdot \mathbbm{1}\left[ \inner{\vw_{s,r}^{(\tau)}, \vmu_s}>0\right]\nonumber \\
        &\geq \sum_{l \in [2]} \left( \abs{\gC_{\vmu_s}^{(l)}} /\lambda_\strong - \abs{\gF_{\vmu_s}^{(l)}} \lambda_\strong  \right) \tilde g_i^{(\tau)} \cdot \mathbbm{1}\left[ \inner{\vw_{s,r}^{(\tau)}, \vmu_s}>0\right]\nonumber \\
        &\geq 2\Big (\left(1-C_\strong^{-1}\right)  n_\vmu /\lambda_\strong -  C_\strong^{-1}  n_\vmu \lambda_\strong \Big) \tilde g_i^{(\tau)} \cdot \mathbbm{1}\left[ \inner{\vw_{s,r}^{(\tau)}, \vmu_s}>0\right] \nonumber \\
        &\geq \frac {n_\vmu \tilde g_i^{(\tau)}} {\lambda_\strong}   \cdot \mathbbm{1}\left[ \inner{\vw_{s,r}^{(\tau)}, \vmu_s}>0\right] \\
        &\geq 0 \nonumber,
    \end{align}
    where the first inequality follows from \ref{strong:g_balanced} with iteration $\tau$ and the third inequality follows from large choice of $C_\strong$.

    For any $r \in \gA_s$, from \ref{strong:signal} at iteration $0, \dots, \tau$, we have $\inner{\vw_{s,r}^{(\tau)}, \vnu_s}>0$. Hence, we have
    \begin{align}\label{eq:strong_nu_lower}
       \frac{m n_\strong}{\eta \lVert \vnu\rVert^2}\left(\oN_{s,r}^{(\tau+1)} - \oN_{s,r}^{(\tau)}\right)
        &= \sum_{l \in [2]} \left( \sum_{j \in \gC_{\vnu_s}^{(l)}} \tilde g_j^{(\tau)} - \sum_{j \in \gF_{\vnu_s}^{(l)}} \tilde g_j^{(\tau)} \right)\nonumber \\
        &\geq \sum_{l \in [2]} \left( \abs{\gC_{\vnu_s}^{(l)}} / \lambda_\strong - \abs{\gF_{\vnu_s}^{(l)}} \lambda_\strong \right) \tilde g_i^{(\tau)} \nonumber \\
        &\geq 2\Big(\left(1-C_\strong^{-1}\right) n_\vnu/\lambda_\strong - C_\strong^{-1} n_\vnu  \lambda_\strong  \Big) \tilde g_i^{(\tau)} \nonumber \\
        &\geq \frac{n_\vnu \tilde g_i^{(\tau)}}{\lambda_\strong} \\
        &\geq 0 \nonumber,
    \end{align}
    where the first inequality follows from \ref{strong:g_balanced} with iteration $\tau$ and the third inequality follows from the large choice of $C_\strong$.

    Similarly, for any $r \in \gB_s$, from \ref{strong:signal} with iteration $0, \dots, \tau$, we have $\inner{\vw_{s,r}^{(\tau)}, \vnu_s}<0$. Hence, we have
    \begin{align*}
       \frac{m n_\strong}{\eta \lVert \vnu\rVert^2}\left(\oN_{s,r}^{(\tau)} - \oN_{s,r}^{(\tau+1)}\right)
        &= \sum_{l \in [2]} \left( \sum_{j \in \gC_{-\vnu_s}^{(l)}} \tilde g_j^{(\tau)} - \sum_{j \in \gF_{-\vnu_s}^{(l)}} \tilde g_j^{(\tau)} \right)\\
        &\geq \sum_{l \in [2]} \left( \abs{\gC_{-\vnu_s}^{(l)}}/ \lambda_\strong - \abs{\gF_{-\vnu_s}^{(l)}} \lambda_\strong \right) \tilde g_i^{(\tau)} \\
        &\geq 2\Big(\left(1-C_\strong^{-1}\right)n_{\vnu_s}/\lambda_\strong -  C_\strong^{-1} n_{\vnu_s} \lambda_\strong\Big) \tilde g_i^{(\tau)}\\
        &\geq \frac { n_{\vnu_s} \tilde g_i^{(\tau)}} {\lambda_\strong}\\
        &\geq 0 ,
    \end{align*}
    where the first inequality follows from \ref{strong:g_balanced} with iteration $\tau$ and the third inequality follows from large choice of $C_\strong >0$. 
    
    Let us prove the last part.
    For any $r \in [m]$, if $\uM_{s,r}^{(\tau)}\leq -\alpha_\strong$, then we have $\inner{\vw_{s,r}^{(\tau)}, \vmu_{-s}}<0$. 
    Hence, $\abs{\uM_{s,r}^{(\tau+1)}} = \abs{\uM_{s,r}^{(\tau)}} \leq  \alpha_\strong + \beta_\strong$ by Lemma~\ref{lemma:strong_decomp}. Otherwise, $\uM_{s,r}^{(\tau)} > -\alpha_\strong$ implies
    \begin{align*}
       &\quad \frac{m n_\strong}{\eta \lVert \vmu\rVert^2}\left(\uM_{s,r}^{(\tau+1)} - \uM_{s,r}^{(\tau)}\right)\\
        &= - \sum_{l \in [2]} \left( \sum_{j \in \gC_{\vmu_{-s}}^{(l)}} \tilde g_j^{(\tau)} - \sum_{j \in \gF_{\vmu_{-s}}^{(l)}} \tilde g_j^{(\tau)} \right) \cdot \mathbbm{1}\left[ \inner{\vw_{s,r}^{(\tau)}, \vmu_{-s}}>0\right] \\
        &\leq - \sum_{l \in [2]} \left( \abs{\gC_{\vmu_{-s}}^{(l)}} /\lambda_\strong- \abs{\gF_{\vmu_{-s}}^{(l)}}  \lambda_\strong  \right) \tilde g_i^{(\tau)} \cdot \mathbbm{1}\left[ \inner{\vw_{s,r}^{(\tau)}, \vmu_{-s}}>0\right] \\
        &\leq -2\Big(\left(1-C_\strong^{-1}\right) \cdot n_\vmu / \lambda_\strong - C_\strong^{-1}  n_\vmu \lambda_\strong \Big) \tilde g_i^{(\tau)} \mathbbm{1}\left[ \inner{\vw_{s,r}^{(\tau)}, \vmu_{-s}}>0\right]  \\
        &\leq 0 ,
    \end{align*}
    where the first inequality follows from \ref{strong:g_balanced} with iteration $\tau$ and the last inequality follows from the large choice of $C_\strong$.
    Thus, $\uM_{s,r}^{(\tau+1)} \leq \uM_{s,r}^{(\tau)} \leq \alpha_\strong +  \beta_\strong$. In addition, 
    \begin{align*}
       &\quad \frac{m n_\strong}{\eta \lVert \vmu\rVert^2}\left(\uM_{s,r}^{(\tau+1)} - \uM_{s,r}^{(\tau)}\right)\\
        &= - \sum_{l \in [2]} \left( \sum_{j \in \gC_{\vmu_{-s}}^{(l)}} \tilde g_j^{(\tau)} - \sum_{j \in \gF_{\vmu_{-s}}^{(l)}} \tilde g_j^{(\tau)} \right) \cdot \mathbbm{1}\left[ \inner{\vw_{s,r}^{(\tau)}, \vmu_{-s}}>0\right] \\
        &\geq - \sum_{l \in [2]} \left( \abs{\gC_{\vmu_{-s}}^{(l)}} \lambda_\strong - \abs{\gF_{\vmu_{-s}}^{(l)}} / \lambda_\strong )  \right) \tilde g_i^{(\tau)} \cdot \mathbbm{1}\left[ \inner{\vw_{s,r}^{(\tau)}, \vmu_{-s}}>0\right] \\
        &\geq -2\Big( \left(1-C_\strong^{-1}\right) n_{\vmu}\lambda_\strong  - C_\strong^{-1} n_\vmu / \lambda_\strong \Big) \tilde g_i^{(\tau)} \cdot \mathbbm{1}\left[ \inner{\vw_{s,r}^{(\tau)}, \vmu_{-s}}>0\right]  \\
        &\geq - 2\lambda_\strong  n_\vmu \\
        & \geq - 2\lambda_\strong n_\strong,
    \end{align*}    
    where the first inequality follows from \ref{strong:g_balanced} with iteration $\tau$. Therefore, we have
    \begin{equation*}
        \uM_{s,r}^{(\tau+1)} \geq  \uM_{s,r}^{(\tau)} - \frac{2\lambda_\strong \eta \lVert \vmu\rVert^2}{m} \geq - \alpha_\strong - \frac{2\lambda_\strong \eta \lVert \vmu\rVert^2}{m} \geq -\alpha_\strong - \beta_\strong, 
    \end{equation*}
    where the last inequality follows from \eqref{eq:strong_lr}.

    From Lemma~\ref{lemma:strong_decomp}, for any $r \in [m]$, 
    \begin{equation*}
        \abs{\uN_{s,r}^{(\tau+1)} - \uN_{s,r}^{(\tau)}} \leq \frac{2\eta \norm{\vnu}^2 }{m} \leq \alpha_\strong.
    \end{equation*} 
    Therefore, it suffices to show that $\uN_{s,r}^{(\tau+1)} \leq \uN_{s,r}^{(\tau)}$ when $\uN_{s,r}^{(\tau)} > \alpha_\strong$ and $\uN_{s,r}^{(\tau+1)} \geq \uN_{s,r}^{(\tau)}$ when $\uN_{s,r}^{(\tau)} < -\alpha_\strong$.
    If $\uN_{s,r}^{(\tau)} > \alpha_\strong$, then we have
    \begin{equation*}
        \inner{\vw_{s,r}^{(\tau)}, \vnu_{-s} } = \inner{\vw_{s,r}^{(0)}, \vnu_{-s}} + \uN_{s,r}^{(\tau)} > 0.
    \end{equation*}
    Hence, we have
    \begin{align*}
       &\quad \frac{m n_\strong}{\eta \lVert \vnu\rVert^2}\left(\uN_{s,r}^{(\tau+1)} - \uN_{s,r}^{(\tau)}\right)\\
        &= - \sum_{l \in [2]} \left( \sum_{j \in \gC_{\vnu_{-s}}^{(l)}} \tilde g_j^{(\tau)} - \sum_{j \in \gF_{\vnu_{-s}}^{(l)}} \tilde g_j^{(\tau)} \right)  \\
        &\leq - \sum_{l \in [2]} \left( \abs{\gC_{\vnu_{-s}}^{(l)}} /\lambda_\strong - \abs{\gF_{\vnu_{-s}}^{(l)}} \lambda_\strong  \right) \tilde g_i^{(\tau)}  \\
        &\leq -2\Big( \left(1-C_\strong^{-1}\right) n_{\vnu}/\lambda_\strong - C_\strong^{-1} \cdot n_{\vnu} \lambda_\strong \Big) \tilde g_i^{(\tau)}  \\
        &\leq 0,
    \end{align*}
    where the first inequality follows from \ref{strong:g_balanced} with iteration $\tau$ and the last inequality follows from the large choice of $C_\strong$. 
    Using the similar argument, we can also show that $\uN_{s,r}^{(\tau+1)} \geq \uN_{s,r}^{(\tau)}$ when $\uN_{s,r}^{(\tau)} < -\alpha_\strong$ and we have desired conclusion. 
    
    \ref{strong:coeff}: We fix arbitrary $s \in \{\pm 1\}$ and $i \in [n_\strong]$.

    From \eqref{eq:strong_mu_lower} and \ref{strong:signal} at iteration $0, \dots, \tau$, we have
    \begin{align*}
        \sum_{r \in [m]} \oM_{s,r}^{(\tau+1)} -  \sum_{r \in [m]} \oM_{s,r}^{(\tau)} &\geq \frac{\eta \norm{\vmu}^2}{mn_\strong} \cdot \frac{n_\vmu \tilde g_i^{(\tau)} }{\lambda_\strong} \cdot  \abs{\gM_s}\\
        &\geq\frac{ n_\vmu \SNR_\vmu^2}{12\lambda_\strong n_\strong} \eta \tilde g_i^{(\tau)} \lVert \tilde \vxi_i \rVert^2 \\
        &\geq \frac{ n_\vmu \SNR_\vmu^2 }{12\lambda_\strong}\left( \sum_{r \in [m]} \orho_{r,i}^{(\tau+1)} -  \sum_{r \in [m]} \orho_{r,i}^{(\tau)}\right) ,
    \end{align*}
    where the second inequality follows from \eqref{eq:strong_noise} and \eqref{eq:strong_set}. Combining with \ref{strong:coeff} at iteration $\tau$, we have
    \begin{equation*}
        \frac{n_{\vmu} \SNR_\vmu^2}{12\lambda_\strong}  \cdot \sum_{r \in [m]} \orho_{r,i}^{(\tau+1)}\leq \sum_{r \in [m]} \oM_{s,r}^{(\tau+1)}.
    \end{equation*}
    For any $r \in [m]$, we have
    \begin{align*}
        &\quad \frac{m n_\strong}{\eta \lVert \vmu\rVert^2}\left(\oM_{s,r}^{(\tau+1)} - \oM_{s,r}^{(\tau)}\right)\\
        &= \sum_{l \in [2]} \left( \sum_{j \in \gC_{\vmu_s}^{(l)}} \tilde g_j^{(\tau)} - \sum_{j \in \gF_{\vmu_s}^{(l)}} \tilde g_j^{(\tau)} \right) \cdot \mathbbm{1}\left[ \inner{\vw_{s,r}^{(\tau)}, \vmu_s}>0\right] \\
        &\leq \sum_{l \in [2]} \left( \abs{\gC_{\vmu_s}^{(l)}} \lambda_\strong - \abs{\gF_{\vmu_s}^{(l)}}/\lambda_\strong  \right) \tilde g_i^{(\tau)} \cdot \mathbbm{1}\left[ \inner{\vw_{s,r}^{(\tau)}, \vmu_s}>0\right]\\
        &\leq \ \lambda_\strong \sum_{l \in [2]} \abs{\gC_{\vmu_s}^{(l)}} \tilde g_i^{(\tau)} \cdot \mathbbm{1} \left[ \inner{\vw_{s,r}^{(\tau)}, \vmu_s} >0\right]\\
        &\leq 2 \lambda_\strong \left(1+C_\strong^{-1}\right) n_\vmu \tilde g_i^{(\tau)} \cdot \mathbbm{1}\left[ \inner{\vw_{s,r}^{(\tau)}, \vmu_s}>0\right]\\
         &\leq 3 \lambda_\strong n_\vmu \tilde g_i^{(\tau)} \cdot \mathbbm{1}\left[ \inner{\vw_{s,r}^{(\tau)}, \vmu_s}>0\right] 
    \end{align*}
    where the first inequality follows from \ref{strong:g_balanced} with iteration $\tau$.
    Hence, we have 
    \begin{align*}
        \sum_{r \in [m]} \oM_{s,r}^{(\tau+1)} -  \sum_{r \in [m]} \oM_{s,r}^{(\tau)} &\leq \frac{\lambda_\strong \eta \norm{\vmu}^2}{mn_\strong}n_\vmu \tilde g_i^{(\tau)} \abs{\gM_s}\\
        &\leq\frac{6 \lambda_\strong \eta }{n_\strong}n_{\vmu} \SNR_\vmu^2  \tilde g_i^{(\tau)} \lVert \tilde \vxi_i \rVert^2 \\
        &\leq 6 \lambda_\strong n_\vmu \SNR_\vmu^2 \left( \sum_{r \in [m]} \orho_{r,i}^{(\tau+1)} -  \sum_{r \in [m]} \orho_{r,i}^{(\tau)}\right) ,
    \end{align*}
    where the second and third inequalities follow from \eqref{eq:strong_noise} and \eqref{eq:strong_set}, and \ref{strong:noise_inner} with iteration $\tau$.
    Combining with \ref{strong:coeff} at iteration $\tau$, we have
    \begin{equation*}
        \sum_{r \in [m]} \oM_{s,r}^{(\tau+1)} \leq  6 \lambda_\strong n_{\vmu} \SNR_\vmu^2 \cdot \sum_{r \in [m]} \orho_{r,i}^{(\tau+1)}.
    \end{equation*}

    From \eqref{eq:strong_nu_lower} and \ref{strong:signal} at iteration $0, \dots, \tau$, we have
    \begin{align*}
        \sum_{r \in \gA_s} \oN_{s,r}^{(\tau+1)} -  \sum_{r \in \gA_s} \oN_{s,r}^{(\tau)} &\geq \frac{\eta \norm{\vnu}^2}{mn_\strong} \cdot \frac{n_\vnu \tilde g_i^{(\tau)}}{2\lambda_\strong} \cdot \abs{\gA_s}\\
        &\geq\frac{n_\vnu \SNR_\vnu^2 }{12 \lambda_\strong n_\strong} \eta  \tilde g_i^{(\tau)} \lVert \tilde \vxi_i \rVert^2 \\
        &\geq \frac{n_\vnu \SNR_\vnu^2}{12 \lambda_\strong}  \left( \sum_{r \in [m]} \orho_{r,i}^{(\tau+1)} -  \sum_{r \in [m]} \orho_{r,i}^{(\tau)}\right) ,
    \end{align*}
    where the second inequality follows from \eqref{eq:strong_noise} and \eqref{eq:strong_set}. Combining with \ref{strong:coeff} at iteration $\tau$, we have
    \begin{equation*}
        \frac{n_{\vnu} \SNR_\vnu^2 }{12 \lambda_\strong} \cdot \sum_{r \in [m]} \orho_{r,i}^{(\tau+1)}\leq \sum_{r \in \gA_s} \oN_{s,r}^{(\tau+1)}.
    \end{equation*}
    For any $r \in \gA_s$, we have
    \begin{align*}
        \frac{m n_\strong}{\eta \lVert \vnu\rVert^2}\left(\oN_{s,r}^{(\tau+1)} - \oN_{s,r}^{(\tau)}\right)
        &= \sum_{l \in [2]} \left( \sum_{j \in \gC_{\vnu_s}^{(l)}} \tilde g_j^{(\tau)} - \sum_{j \in \gF_{\vnu_s}^{(l)}} \tilde g_j^{(\tau)} \right)  \\
        &\leq \lambda_\strong \sum_{l \in [2]} \abs{\gC_{\vnu_s}^{(l)}} \tilde g_i^{(\tau)} \\
        &\leq 2 \lambda_\strong \left(1+C_\strong^{-1}\right) n_\vnu \tilde g_i^{(\tau)}\\
        &\leq 3 \lambda_\strong n_\vnu \tilde g_i^{(\tau)},
    \end{align*}
    where the first inequality follows from \ref{strong:g_balanced} and the third inequality follows from the large choice of $C_\strong$. Hence, we have 
    \begin{align*}
        \sum_{r \in \gA_s} \oN_{s,r}^{(\tau+1)} -  \sum_{r \in \gA_s} \oN_{s,r}^{(\tau)} &\leq \frac{\eta \norm{\vnu}^2}{mn_\strong} \cdot 3\lambda_\strong n_\vnu \tilde g_i^{(\tau)} \abs{\gA_s}\\
        &\leq \frac{6 \lambda_\strong n_{\vnu} \SNR_\vnu^2  }{ n_\strong} \eta \tilde g_i^{(\tau)} \lVert \tilde \vxi_i \rVert^2 \\
        &\leq 6 \lambda_\strong n_\vnu \SNR_\vnu^2 \left( \sum_{r \in [m]} \orho_{r,i}^{(\tau+1)} -  \sum_{r \in [m]} \orho_{r,i}^{(\tau)}\right) ,
    \end{align*}
    where the second and third inequalities follow from \eqref{eq:strong_noise} and \eqref{eq:strong_set}.
    Combining with \ref{strong:coeff} at iteration $\tau$, we have
    \begin{equation*}
        \sum_{r \in \gA_s} \oN_{s,r}^{(\tau+1)} \leq  6\lambda_\strong n_\vnu \SNR_\vnu^2 \cdot \sum_{r \in [m]} \orho_{r,i}^{(\tau+1)}.
    \end{equation*}
    Using a similar argument, we can also show that 
    \begin{equation*}
        \frac{n_\vnu \SNR_\vnu^2}{12\lambda_\strong}  \cdot \sum_{r \in [m]} \orho_{r,i}^{(\tau+1)} \leq - \sum_{r \in \gB_s} \oN_{s,r}^{(\tau+1)} \leq  6 \lambda_\strong n_{\vnu} \SNR_\vnu^2 \cdot \sum_{r \in [m]} \orho_{r,i}^{(\tau+1)}.
    \end{equation*}

    \ref{strong:approx}: We fix arbitrary $i \in [n_\strong]$. From \ref{strong:coeff} at iteration $\tau+1$ which we have already shown, we have
    \begin{align*}
        \frac{1}{m}\sum_{r \in \gM_s}\oM_{s,r}^{(\tau+1)} &\leq \frac{1}{m} \sum_{r \in [m]} \oM_{s,r}^{(\tau+1)}\\
        &\leq \frac{6\lambda_\strong n_\vmu \SNR_\vmu^2}{m} \cdot \sum_{r \in [m]} \orho_{r,i}^{(\tau+1)} \\
        &\leq 24\lambda_\strong n_\vmu \SNR_\vmu^2 \log T^*\\
        &\leq \frac{\kappa_\strong}{64},
    \end{align*}
    where the first equality follows from \ref{strong:signal} at iteration $0, \dots, \tau$, the second inequality follows from \ref{strong:noise_bound} and the last inequality follows from Condition~\ref{condition:noise-dominant}.
    Similarly, we have
    \begin{equation*}
        \frac{1}{m}\sum_{r \in \gA_s}\oN_{s,r}^{(\tau+1)} \leq \frac{6\lambda_\strong n_\vnu \SNR_\vnu^2}{m} \cdot \sum_{r \in [m]} \orho_{r,i}^{(\tau+1)} \leq 24\lambda_\strong n_\vnu \SNR_\vnu^2 \log T^* \leq \frac{\kappa_\strong}{64}
    \end{equation*}
    and
    \begin{equation*}
        -\frac{1}{m}\sum_{r \in \gB_s}\oN_{s,r}^{(\tau+1)} \leq \frac{6\lambda_\strong n_\vnu \SNR_\vnu^2}{m} \cdot \sum_{r \in [m]} \orho_{r,i}^{(\tau+1)} \leq 24 \lambda_\strong n_\vnu \SNR_\vnu^2 \log T^* \leq \frac{\kappa_\strong}{64}.
    \end{equation*}

    Therefore, for any $s \in \{\pm 1\}$, due to \eqref{eq:strong_kappa_gamma} and the above three inequalities, we have
    \begin{equation}\label{eq:strong_signal}
        \frac{1}{m} \sum_{r \in [m]} \sigma \left( \inner{\vw_{s,r}^{(\tau+1)}, \vmu_s} \right), \frac{1}{m} \sum_{r \in [m]} \sigma \left( \inner{\vw_{s,r}^{(\tau+1)}, \vnu_s} \right), \frac{1}{m} \sum_{r \in [m]} \sigma \left( \inner{\vw_{s,r}^{(\tau+1)}, -\vnu_s} \right) \leq \frac{\kappa_\strong}{32}.
    \end{equation}

    Together with applying Lemma~\ref{lemma:strong_opposite} and , we have
    \begin{align*}
        &\abs{ \hat y_i f_\strong \left( \mW^{(\tau+1)}, \tilde \mX_i \right) - \frac{1}{m} \sum_{r \in [m]} \orho_{r,i}^{(\tau+1)}}\\
        &= \abs{F_{\hat y_i}\left( \mW_{\hat y_i}^{(\tau)}, \tilde \mX_i\right)- \frac{1}{m}\sum_{r \in [m]} \orho_{r,i}^{(\tau)}} + F_{-\hat y_i}\left( \mW_{-\hat y_i}^{(\tau)}, \tilde \mX_i\right)\\
        &\leq \frac{1}{m} \sum_{r \in [m]} \abs{\sigma \left( \inner{\vw_{\hat y_i,r}^{(\tau+1)}, \tilde \vxi_i}- \orho_{r,i}^{(\tau+1)}\right)}  + \frac{1}{m} \sum_{l \in [2]} \sum_{r \in [m]} \sigma \left( \inner{\vw_{\hat y_i, r}, \tilde \vv_i^{(l)}} - \orho_{r,i}^{(\tau+1)}\right)  + \frac{\kappa_\strong}{16}\\
        &\leq \frac{\kappa_\strong}{4}.
    \end{align*}

    \ref{strong:noise_balanced}: We fix $i,j \in [n_\strong]$ and we assume $\frac{1}{m} \sum_{r \in [m]} \left[ \orho_{r,i}^{(\tau)} - \orho_{r,j}^{(\tau)}\right] >0$, without loss of generality.

    From the triangular inequality, \eqref{eq:strong_noise}, and \eqref{eq:strong_lr}, we have
    \begin{align*}
        &\quad \abs{\frac {1}{m} \sum_{r \in [m]} \left[ \orho_{r,i}^{(\tau+1)} - \orho_{r,j}^{(\tau+1)}\right] - \frac{1}{m}\sum_{r \in [m]} \left[ \orho_{r,i}^{(\tau)} - \orho_{r,j}^{(\tau)}\right]} \\
        &\leq \frac{1}{m}\sum_{r \in [m]} \left[\orho_{r,i}^{(\tau+1)} - \orho_{r,i}^{(\tau)} \right] + \frac{1}{m}\sum_{r \in [m]} \left[\orho_{r,j}^{(\tau+1)} - \orho_{r,j}^{(\tau)} \right]\\
        &\leq \frac{\eta}{mn_\strong} \tilde g_i^{(\tau)} \lVert \tilde \vxi_i\rVert^2 +  \frac{\eta}{mn_\strong} \tilde g_j^{(\tau)} \lVert \tilde \vxi_j\rVert^2\\
        &\leq \frac{3 \eta \sigma_p^2 d}{mn_\strong}\\
        &\leq \frac{\kappa_\strong}{2}.
    \end{align*}
    Hence, we have $\frac{1}{m} \sum_{r \in [m]} \left[ \orho_{r,i}^{(\tau+1)} - \orho_{r,j}^{(\tau+1)}\right] >- \frac{\kappa_\strong}{2}$.
    
    Also, if $\frac{1}{m} \sum_{r \in [m]} \left[ \orho_{r,i}^{(\tau)} - \orho_{r,j}^{(\tau)}\right] < \frac{\kappa_\strong}{2}$, then we have
    \begin{equation*}
        \frac{1}{m} \sum_{r \in [m]} \left[ \orho_{r,i}^{(\tau+1)} - \orho_{r,j}^{(\tau+1)}\right] \leq \frac{1}{m} \sum_{r \in [m]} \left[ \orho_{r,i}^{(\tau)} - \orho_{r,j}^{(\tau)}\right] + \frac{\kappa_\strong}{2} \leq \kappa_\strong.
    \end{equation*}
    Otherwise, we have $\frac{\kappa_\strong}{2} \leq \frac{1}{m} \sum_{r \in [m]} \left[ \orho_{r,i}^{(\tau)} - \orho_{r,j}^{(\tau)}\right] \leq \kappa_\strong$. 
    Together with applying Lemma~\ref{lemma:strong_opposite} and \eqref{eq:strong_signal}, we have
    \begin{align*}
        &\quad \hat y_i f_\strong \left( \mW^{(\tau)}, \tilde \mX_i \right) - \hat y_j f_\strong \left( \mW^{(\tau)}, \tilde \mX_j\right)\\
        &= F_{\hat y_i}\left( \mW_{\hat y_i}^{(\tau)}, \tilde \mX_i\right) -F_{-\hat y_i}\left( \mW_{-\hat y_i}^{(\tau)}, \tilde \mX_i\right)-  F_{\hat y_j}\left( \mW_{\hat y_j}^{(\tau)}, \tilde \mX_j\right) +F_{-\hat y_j}\left( \mW_{-\hat y_j}^{(\tau)}, \tilde \mX_j\right)\\
        &\geq  F_{\hat y_i}\left( \mW_{\hat y_i}^{(\tau)}, \tilde \mX_i\right)-  F_{\hat y_j}\left( \mW_{\hat y_j}^{(\tau)}, \tilde \mX_j\right) - \frac{\kappa_\strong}{16}\\
        &\geq \frac{1}{m} \sum_{r \in [m]} \left[ \sigma \left( \inner{\vw_{\hat y_i,r}^{(\tau)}, \tilde \vxi_i}\right) - \sigma \left( \inner{\vw_{\hat y_j,r}^{(\tau)}, \tilde \vxi_j}\right)\right] - \frac{1}{m} \sum_{l \in [2]} \sum_{r \in [m]} \sigma \left( \inner{\vw_{\hat y_j, r}^{(\tau)}, \tilde \vv_j^{(l)}}\right) - \frac{\kappa_\strong}{16}\\
        &\geq \frac{1}{m} \sum_{r \in [m]} \left[ \orho_{r,i}^{(\tau)} - \orho_{r,j}^{(\tau)}\right] - \frac{\kappa_\strong}{4}\\
        &\geq \frac{\kappa_\strong}{4}.
    \end{align*}
    Therefore, we have
    \begin{align*}
        \frac{\tilde g_i^{(\tau)}}{\tilde g_j^{(\tau)}}
        &= \frac{1+ \exp \left( \hat y_j f_\strong \left( \mW^{(\tau)}, \tilde \mX_j \right)\right)}{1+ \exp \left( \hat y_i f_\strong \left( \mW^{(\tau)}, \tilde \mX_i \right)\right)}\\
        &= \frac{\exp \left( -\hat y_j f_\strong \left( \mW^{(\tau)}, \tilde \mX_j \right)\right) + 1}{\exp \left( -\hat y_j f_\strong \left( \mW^{(\tau)}, \tilde \mX_j \right)\right)+ \exp \left( \hat y_i f_\strong \left( \mW^{(\tau)}, \tilde \mX_i \right) - \hat y_j f_\strong \left( \mW^{(\tau)}, \tilde \mX_j \right)\right)}\\
        &\leq  \frac{\exp \left( -\hat y_j f_\strong \left( \mW^{(\tau)}, \tilde \mX_j \right)\right) +1}{\exp \left( -\hat y_j f_\strong \left( \mW^{(\tau)}, \tilde \mX_j \right)\right)+ \exp \left(\kappa_\strong/4\right)}\\
        &\leq \frac{\exp(\kappa_\strong/16)+1}{\exp(\kappa_\strong/16)+ \exp(\kappa_\strong/4))}\\
        &\leq \exp \left( - \kappa_\strong /8\right),
    \end{align*}
    where the second inequality follows from 
    \begin{equation*}
         -\hat y_j f_\strong \left( \mW^{(\tau)}, \tilde \mX_j \right) \leq F_{- \hat y_j} \left( \mW_{- \hat y_j}^{(\tau)}, \tilde \mX_j\right) \leq \frac{\kappa_\strong}{16}
    \end{equation*}
    and the last inequality follows from applying $\frac{z(z^3+1)}{z+1} = z(z^2-z+1) \geq z^2$ with $z = \exp(\kappa_\strong/16)$.

    Therefore, we have
    \begin{align*}
        &\quad \sum_{r \in [m]} \left[ \orho_{r,i}^{(\tau+1)} -  \orho_{r,j}^{(\tau+1)} \right] - \sum_{r \in [m]} \left[ \orho_{r,i}^{(\tau)} -  \orho_{r,j}^{(\tau)} \right] \\
        &\leq \frac{\eta}{mn_\strong} \left( \tilde g_i^{(\tau)} m \lVert \tilde \vxi_i\rVert^2 - \tilde g_j^{(\tau)} \abs{\gX_j} \lVert \tilde \vxi_j \rVert ^2 \right)\\
        &= \frac{\eta}{mn_\strong} \tilde g_j^{(\tau)} \abs{\gX_j} \lVert \tilde \vxi_j \rVert^2 \left( \frac{\tilde g_i^{(\tau)} m \lVert \tilde \vxi_i \rVert^2}{\tilde g_j^{(\tau)} \abs{\gX_j} \lVert \tilde \vxi_j \rVert^2} -1 \right)\\
        &\leq \frac{\eta}{mn_\strong} \tilde g_j^{(\tau)} \abs{\gX_j} \lVert \tilde \vxi_j \rVert^2 \left ( \exp (-\kappa_\strong/8) \cdot 4 \cdot \left(1+ \beta_\strong/n \right)-1\right)\\
        &\leq \frac{\eta}{mn_\strong} \tilde g_j^{(\tau)} \abs{\gX_j} \lVert \tilde \vxi_j \rVert^2 \left ( 12 \exp ( - \kappa_\strong/8)-1\right)\\
        &= 0, 
    \end{align*}
    where the third inequality is due to \eqref{eq:strong_kappa_gamma} and $1+z \leq e^z$ for any $z \in \R$. Hence, we have
    \begin{equation*}
        \frac{1}{m} \sum_{r \in [m]} \left[ \orho_{r,i}^{(\tau+1)} -   \orho_{r,j}^{(\tau+1)} \right] \leq \frac{1}{m} \sum_{r \in [m]} \left[ \orho_{r,i}^{(\tau)} -  \orho_{r,j}^{(\tau)} \right] \leq \kappa_\strong.
    \end{equation*}
    
    \ref{strong:g_balanced}: We fix arbitrary $i,j \in [n_\strong]$ and we assume $\hat y_i f_\strong \left( \mW^{(\tau+1)}, \tilde \mX_i \right) \geq \hat y_j f_\strong \left( \mW^{(\tau+1)}, \tilde \mX_j \right)$, without loss of generality. By combining \ref{strong:approx} and \ref{strong:noise_balanced} at iteration $\tau+1$ which we already have shown, we have
    \begin{align*}
        &\quad \hat y_i f_\strong \left( \mW^{(\tau+1)}, \tilde\mX_i \right) - \hat y_j f_\strong \left( \mW^{(\tau+1)}, \tilde\mX_j \right)\\
        &\leq \abs{\frac{1}{m}\sum_{r \in [m]}\left[ \orho_{r,i}^{(\tau+1)} - \orho_{r,j}^{(\tau+1)} \right]}\\
        &\quad +  \abs{\hat y_i f_\strong \left( \mW^{(\tau+1)}, \tilde\mX_i \right) - \frac{1}{m} \sum_{r \in [m]}\orho_{r,i}^{(\tau+1)}} + \abs{\hat y_j f_\strong \left( \mW^{(\tau+1)}, \tilde\mX_j \right) - \frac{1}{m} \sum_{r \in [m]}\orho_{r,j}^{(\tau+1)}}\\
        &\leq 2 \kappa_\strong.
    \end{align*}
    Then, we have
    \begin{align*}
        \frac{\tilde g_j^{(\tau + 1)}}{\tilde g_i^{(\tau + 1)}} &= \frac{1 + \exp{\left(\hat y_i f_\strong \left(\mW^{(\tau +1)}, \tilde \mX_i\right)\right)}}{1 + \exp{\left(\hat y_j f_\strong \left(\mW^{(\tau +1)}, \tilde \mX_j\right)\right)}}\\
        &\leq \exp{\left[\
        \hat y_i f_\strong\left(\mW^{(\tau+1)},\tilde \mX_i\right) - \hat y_j f_\strong \left(\mW^{(\tau+1)}, \tilde \mX_j \right)\right]}\\
        & \leq \exp (2\kappa_\strong)\\
        &= \lambda_\strong.
    \end{align*}

    \ref{strong:noise_inner}: We fix arbitrary $i \in [n_\strong]$. From \ref{strong:noise_inner} at iteration $\tau$, we have $\inner{\vw_{\hat y_i,r}^{(\tau)}, \tilde \vxi_i}>0$ for any $r \in \gX_i$. Therefore, we have
    \begin{equation*}
        \orho_{r,i}^{(\tau+1)} =  \orho_{r,i}^{(\tau)} + \frac{\eta}{mn_\strong} \tilde g_i ^{(\tau)} \lVert \tilde \vxi_i \rVert^2
    \end{equation*}
    and
    \begin{align*}
        &\quad \inner{\vw_{\hat y_i, r}^{(\tau+1)}, \tilde \vxi_i}- \inner{\vw_{\hat y_i, r}^{(\tau)}, \tilde \vxi_i} \\
        &= \left(\orho_{r,i}^{(\tau+1)} - \orho_{r,i}^{(\tau)} \right) + \sum_{j \in [n_\strong]\setminus \{i\}} \left(\rho_{\hat y_i,r,j}^{(\tau+1)} - \rho_{\hat y_i,r,j}^{(\tau)}\right) \frac{\langle \tilde \vxi_i, \tilde \vxi_j\rangle}{\lVert \tilde \vxi_j \rVert^2}  \\
        &\geq \frac{\eta}{mn_\strong} \tilde g_i ^{(\tau)} \lVert \tilde \vxi_i \rVert^2 -  \frac{\eta}{mn_\strong} \sum_{j \in [n_\strong]\setminus \{i\}} \tilde g_j ^{(\tau)} \abs{\langle \tilde \vxi_i, \tilde \vxi_j\rangle} \\
        &= \frac{\eta}{mn_\strong} \tilde g_i ^{(\tau)} \lVert \tilde \vxi_i \rVert^2 \left(1-   \sum_{j \in [n_\strong]\setminus \{i\}} \frac{\tilde g_j ^{(\tau)}}{\tilde g_i ^{(\tau)}} \cdot  \frac{\abs{\langle \tilde \vxi_i, \tilde \vxi_j\rangle}}{\lVert \tilde \vxi_i\rVert^2}\right)\\
        &\geq \frac{\eta}{mn_\strong} \tilde g_i ^{(\tau)} \lVert \tilde \vxi_i \rVert^2 (1-\lambda_\strong \beta_\strong)\\
        &\geq 0,
    \end{align*}
    where we use \ref{strong:g_balanced} at iteration $\tau$, \eqref{eq:strong_noise} for the second inequality, and \eqref{eq:strong_kappa_gamma} for the last inequality. Hence, we have $\inner{\vw_{\hat y_i, r}^{(\tau+1)}, \tilde \vxi_i}>0$. Now we prove the second part. For any $r \in \gX_i$ and $r' \in [m]$, we have
    \begin{equation*}
        \orho_{r',i}^{(\tau+1)} \leq \orho_{r',i}^{(\tau)} + \frac{\eta}{mn_\strong} \tilde g_i^{(\tau)} \lVert \tilde \vxi_i\rVert^2 \leq \orho_{r,i}^{(\tau)} + \frac{\eta}{mn_\strong}\tilde g_i^{(\tau)} \lVert \tilde \vxi_i\rVert^2 = \orho_{r,i}^{(\tau+1)},
    \end{equation*}
    where the second inequality is due to \ref{strong:noise_inner} with iteration $\tau$.
    
    \ref{strong:noise_dynamics}: From \ref{strong:noise_inner} at iteration $\tau$, we have
    \begin{align*}
        \frac{1}{m} \sum_{r \in [m]} \orho_{r,i}^{(\tau+1)} &\geq \frac{1}{m} \sum_{r \in [m]} \orho_{r,i}^{(\tau)} + \frac{\eta}{mn_\strong} \tilde g_i^{(\tau)} \cdot \frac{\abs{\gX_i}}{m} \cdot \lVert \tilde \vxi_i \rVert^2\\
        &=  \frac{1}{m} \sum_{r \in [m]} \orho_{r,i}^{(\tau)} + \frac{\eta}{mn_\strong} \cdot \frac{1}{1+ \exp \left (\hat y_i f_\strong \left( \mW^{(\tau)}, \tilde \mX_i \right) \right) }  \cdot \frac{\abs{\gX_i}}{m} \cdot \lVert \tilde \vxi_i \rVert^2.
    \end{align*}
    From \ref{strong:approx} at iteration $\tau$, \eqref{eq:strong_noise}, and \eqref{eq:strong_set}, we have 
    \begin{equation*}
        \frac{1}{m} \sum_{r \in [m]} \orho_{r,i}^{(\tau+1)} \geq \frac{1}{m} \sum_{r \in [m]} \orho_{r,i}^{(\tau)} + \frac{\eta \sigma_p^2d}{8mn_\strong} \cdot \frac{1}{1+ \exp (\kappa_\strong/4) \exp\left(\frac{1}{m} \sum_{r \in [m]} \orho_{r,i}^{(\tau)}\right)}.
    \end{equation*}
    By applying Lemma~\ref{lemma:tech}, the fact that $z + \frac{c}{1+be^z}$ is an increasing function for any $c \in [0,1], b>0$, and the comparison theorem, we have our conclusion.
\end{proof}
\subsection{Convergence of Training Loss}\label{appendix:strong_convergence}
In this subsection, we prove that the training loss converges below $\varepsilon$ within $\bigO(\eta^{-1} \varepsilon^{-1} n_\strong m d^{-1} \sigma_p^{-2})$. All the arguments in this subsection are under Condition~\ref{condition} and the event $E_\strong$. 

For any $t \in [0, T^*]$, from the definition of $x_t$, we have
\begin{equation*}
     x_t \leq \log \left(\frac{\eta \sigma_p^2 d}{8mn_\strong \exp (\kappa_\strong/4)} t+1 \right).
\end{equation*}
Combining the inequality above with the definition of $ x_t$, we have
\begin{align}\label{eq:x_t_lower}
    \exp ( x_t ) &\geq \frac{\eta \sigma_p^2  d}{8mn_\strong \exp( \kappa_\strong/4) }   t +  1 - \exp (- \kappa_\strong/4) \log \left(\frac{\eta \sigma_p^2 d}{8mn_\strong \exp (\kappa_\strong/4)} t+1 \right) \nonumber \\
    & \geq \frac{ \eta \sigma_p^2  d}{8mn_\strong \exp( \kappa_\strong/4) }   t +  1 - \log \left(\frac{\eta \sigma_p^2 d}{8mn_\strong \exp (\kappa_\strong/4)} t+1 \right) \nonumber \\
    &\geq \frac{ \eta \sigma_p^2  d}{16mn_\strong \exp( \kappa_\strong/4) }   t +  \frac{1}{2} \nonumber \\
    &\geq \frac{\eta \sigma_p^2  d}{16mn_\strong \exp( \kappa_\strong/4)} t
    ,
\end{align}
where we use the inequality $\log z < \frac{z}{2}$ for any $z>1$.

For any $t \in [0, T^*]$ and $i \in [n_\strong]$, by applying \ref{strong:approx} and \ref{strong:noise_dynamics}, we have
\begin{align*}
    \hat y_i f\left( \mW^{(t)}, \tilde \mX_i \right) &\geq - \frac{\kappa_\strong}{4} + \frac{1}{m} \sum_{r \in [m]} \orho_{r,i}^{(t)}\\
    &\geq - \frac{\kappa_\strong}{4} + {x}_t\\
    &\geq - \frac{\kappa_\strong}{4} + \log \left( \frac{ \eta \sigma_p^2 d}{16mn_\strong \exp (\kappa_\strong/4)} t \right)\\
    &= \log \left( \frac{\eta \sigma_p^2 d}{16mn_\strong \exp (\kappa_\strong/2)} t \right)\\
    &\geq \log \left( \frac{\eta \sigma_p^2 d}{16 \lambda_\strong mn_\strong} t \right),
\end{align*}
where the third inequality follows from \eqref{eq:x_t_lower} and the fourth inequality follows from \eqref{eq:strong_kappa_gamma}. Therefore, we have
\begin{equation*}
    L_\strong \left( \mW^{(t)}\right) \leq \log \left( 1 + \frac{{16 \lambda_\strong mn_\strong }}{ \eta \sigma_p^2 d} \cdot t^{-1} \right)
    \leq \frac{{16 \lambda_\strong mn_\strong}}{\eta \sigma_p^2 d} \cdot t^{-1},
\end{equation*}
where the inequality follows from $\log (1+z) \leq z$ for $z>0$.
If $t \geq 16 \lambda_\strong \eta^{-1} \varepsilon^{-1} m n_\strong d^{-1} \sigma_p^{-2}$, then we have $ L_\strong \left( \mW^{(t)}\right) \leq \varepsilon$. Hence, by defining $T_\strong := \lceil 16 \lambda_\strong \eta^{-1} \varepsilon^{-1} m n_\strong d^{-1} \sigma_p^{-2} \rceil$, we have the first conclusion.
\subsection{Test Error} \label{appendix:strong_test}

In this subsection, we prove the second part of our conclusion. All the arguments in this subsection are under Condition~\ref{condition} and the event $E_\strong$. 

Define $\vv^{(1)}$, $\vv^{(2)}$, and $\vxi$ as the signal vectors and the noise vector in the test data $(\mX, y)$, respectively. We fix an arbitrary iteration $t \in [T_\strong, T^*]$. From the choice of iteration $t$ and \eqref{eq:x_t_lower}, for any $i \in [n_\strong]$, we have
\begin{equation}\label{eq:strong_noise_lower}
    \log \left( \varepsilon^{-1} \right)\leq \log \left( \frac{\eta \sigma_p^2 d}{16\lambda_\strong mn_\strong} t \right) \leq \log \left( \frac{ \eta \sigma_p^2 d}{16mn_\strong \exp (\kappa_\strong/2)} t \right) \leq x_t \leq \frac{1}{m}\sum_{r \in [m]} \orho_{r,i}^{(t)}.
\end{equation}

\subsubsection{Test Error Upper Bound}
We define a function $h: S \rightarrow \R$ as $h(\vz):= \frac{1}{m} \sum_{r \in [m]}\sigma \left( \inner{\vw_{-y,r}^{(t)}, \vz}\right)$ for any $\rvz \in S$. It plays a crucial role when we prove the upper bounds on test error. We have
\begin{equation*}
    \mathbb{E}[h(\vxi)]= \frac 1 m \mathbb{E}_{\rvz_1, \dots, \rvz_m} \left[ \sum_{r \in [m]} \sigma(\rvz_r) \right]= \frac{1}{2m} \mathbb{E}_{\rvz_1, \dots, \rvz_m} \left[\sum_{r \in [m]} \abs{\rvz_r}\right]
    = \frac{\sigma_p}{\sqrt{2\pi}m} \sum_{r \in [m]} \norm{\Pi_S \vw^{(t)}_{-y,r}},
\end{equation*}
where $\rvz_r \sim \gN \left (0, \sigma_p^2 \norm{\Pi_S \vw^{(t)}_{-y,r}}^2 \right)$ for each $r \in [m]$. Also, for any $\vz_1, \vz_2 \in S$, we have
\begin{align*}
    \abs{h(\vz_1) - h(\vz_2)}  &\leq  \frac{1}{m} \sum_{r \in [m]} \abs{\sigma \left( \inner{\vw_{-y,r}^{(t)}, \vz_1}\right) - \sigma \left( \inner{\vw_{-y,r}^{(t)}, \vz_2}\right)}\\
    & \leq  \frac{1}{m} \sum_{r \in [m]} \abs{ \inner{\vw_{-y,r}^{(t)}, \vz_1} -  \inner{\vw_{-y,r}^{(t)}, \vz_2}}\\
    &=  \frac{1}{m} \sum_{r \in [m]} \abs{ \inner{\Pi_S \vw_{-y,r}^{(t)}, \vz_1} -  \inner{\Pi_S \vw_{-y,r}^{(t)}, \vz_2}}\\
    &\leq \frac{1}{m} \sum_{r \in [m]} \norm{ \Pi_S \vw_{-y,r}^{(t)}} \norm{\vz_1 - \vz_2}.
\end{align*}
Hence, $h$ is $\frac{1}{m} \sum_{r \in [m]} \norm{\Pi_S \vw_{-y,r}^{(t)}}$-Lipschitz. 

The following lemma characterizes $\sum_{r \in [m]}\norm{\Pi_S \vw_{-y,r}^{(t)}}$'s which is related to key properties of $h$.
\begin{lemma}\label{lemma:strong_wegith_norm}
    For any $s \in \{\pm 1\}$, it holds that
    \begin{equation*}
         \sum_{r \in [m]} \norm{\Pi_S \vw_{s,r}^{(t)}} \leq  20 \sigma_p^{-1} d^{-\frac 1 2} \left(\sum_{i \in [n_\strong] } \left(\sum_{r \in [m]} \orho_{r,i}^{(t)} \right)^2 \right)^{\frac 1 2}.
    \end{equation*}
\end{lemma}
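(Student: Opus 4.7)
The plan is to exploit the signal--noise decomposition of Lemma~\ref{lemma:strong_decomp} together with the fact that $S$ is the orthogonal complement of the span of the signal vectors, so that $\Pi_S$ annihilates every $\vmu_{\pm 1}, \vnu_{\pm 1}$ while preserving each noise vector $\tilde \vxi_i \in S$. This reduces the expansion to
\begin{equation*}
    \Pi_S \vw_{s,r}^{(t)} = \Pi_S \vw_{s,r}^{(0)} + \sum_{i \in [n_\strong]} \rho_{s,r,i}^{(t)} \frac{\tilde \vxi_i}{\norm{\tilde \vxi_i}^2}.
\end{equation*}
By the triangle inequality, it suffices to control the initialization contribution and the noise-memorization contribution separately. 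The former is bounded by $m\sigma_0 \sqrt{2d}$ via Lemma~\ref{lemma:strong_initial}, which will be absorbed into the target bound using the smallness enforced by \ref{condition:init}.

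For the noise-memorization contribution, I would expand $\norm{\cdot}^2$ into diagonal terms controlled by $\norm{\tilde \vxi_i}^{-2}\leq 2/(\sigma_p^2 d)$ and off-diagonal terms handled via AM--GM together with the near-orthogonality bound $\abs{\inner{\tilde \vxi_i, \tilde \vxi_j}}/\norm{\tilde \vxi_j}^2 \leq \beta_\strong/n_\strong$ from \eqref{eq:strong_noise}, obtaining
\begin{equation*}
    \norm{\sum_{i \in [n_\strong]} \rho_{s,r,i}^{(t)} \frac{\tilde \vxi_i}{\norm{\tilde \vxi_i}^2}}^2 \;\leq\; \frac{2(1+\beta_\strong)}{\sigma_p^2 d}\sum_{i \in [n_\strong]} (\rho_{s,r,i}^{(t)})^2.
\end{equation*}
Taking square roots, summing over $r \in [m]$, and applying Cauchy--Schwarz in the form $\sum_r \sqrt{a_r} \leq \sqrt{m \sum_r a_r}$ reduces the problem to bounding $\sum_i \sum_r (\rho_{s,r,i}^{(t)})^2$ by a constant multiple of $\tfrac{1}{m}\sum_i (\sum_r \orho_{r,i}^{(t)})^2$.

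The main obstacle is this final conversion, which must gain an essential factor of $1/m$ rather than a trivial bound. Split the inner sum according to the sign of $\hat y_i$. When $\hat y_i = s$ the coefficient equals $\orho_{r,i}^{(t)}$, and the crucial tool is property \ref{strong:noise_inner}: every $r \in \gX_i$ attains $\max_{r'}\orho_{r',i}^{(t)}$, and $\abs{\gX_i} \geq 2m/5$ by \eqref{eq:strong_set}. Hence
\begin{equation*}
    \max_{r \in [m]} \orho_{r,i}^{(t)} \;\leq\; \frac{1}{\abs{\gX_i}} \sum_{r' \in \gX_i} \orho_{r',i}^{(t)} \;\leq\; \frac{5}{2m}\sum_{r \in [m]}\orho_{r,i}^{(t)},
\end{equation*}
so $\sum_r (\orho_{r,i}^{(t)})^2 \leq \max_r \orho_{r,i}^{(t)} \cdot \sum_r \orho_{r,i}^{(t)} \leq \tfrac{5}{2m}(\sum_r \orho_{r,i}^{(t)})^2$, producing the desired $1/m$ factor. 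When $\hat y_i = -s$ the coefficient equals $\urho_{r,i}^{(t)}$, whose magnitude is bounded by $\alpha_\strong + 5\beta_\strong \log T^*$ via \ref{strong:noise_bound}; this residual, along with the initialization term $m\sigma_0\sqrt{2d}$, is rendered negligible relative to the main term by combining \eqref{eq:strong_kappa_gamma} with the smallness of $\sigma_0$ enforced in \ref{condition:init}. Collecting all bounds and tracking numerical constants---the dominant prefactor is $(2/(\sigma_p\sqrt{d}))\sqrt{5/2} = \sqrt{10}/(\sigma_p\sqrt{d})$, leaving ample room under $20/(\sigma_p\sqrt{d})$ to absorb the lower-order contributions---yields the stated inequality.
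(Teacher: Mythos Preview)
Your proposal is correct and follows essentially the same route as the paper: project via the signal--noise decomposition, bound the initialization by $\sqrt{2}\sigma_0 d^{1/2}$, expand the noise part with AM--GM and \eqref{eq:strong_noise}, apply Cauchy--Schwarz over $r$, split by the sign of $\hat y_i$, and extract the crucial $1/m$ factor from \ref{strong:noise_inner} together with \eqref{eq:strong_set}. The only cosmetic differences are that the paper uses $\sum_r(\orho_{r,i}^{(t)})^2\le m(\max_r\orho_{r,i}^{(t)})^2$ rather than your slightly sharper $\sum_r(\orho_{r,i}^{(t)})^2\le(\max_r\orho_{r,i}^{(t)})\sum_r\orho_{r,i}^{(t)}$, and it explicitly invokes the lower bound \eqref{eq:strong_noise_lower} on $\tfrac1m\sum_r\orho_{r,i}^{(t)}$ (valid for $t\in[T_\ws,T^*]$) when absorbing the $\urho$ residual and the initialization term into the main bound---a point you gesture at but should cite explicitly.
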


\begin{proof}[Proof of Lemma~\ref{lemma:strong_wegith_norm}]
    From triangular inequality and the event $E_\strong$, for each $r \in [m]$, we have
    \begin{equation*}
        \norm{\Pi_S \vw_{s,r}^{(t)}} \leq  \norm { \Pi_S \vw_{s, r}^{(0)}} + \norm{  \sum_{i \in [n_\strong]} \rho_{s,r,i}^{(t)} \tilde \vxi_i \lVert \tilde \vxi_i \rVert^{-2}} \leq \sqrt{2} \sigma_0 d^\frac{1}{2} + \norm{  \sum_{i \in [n_\strong]} \rho_{s,r,i}^{(t)} \tilde \vxi_i \lVert \tilde \vxi_i \rVert^{-2}}.
    \end{equation*}
    In addition, we have
    \begin{align*}
        &\quad \norm{\sum_{i \in [n_\strong]} \rho_{s,r,i}^{(t)}, \tilde \vxi_i \lVert \tilde \vxi_i\rVert^{-2}}^2\\
        &= \sum_{i \in [n_\strong]} \left( \rho_{s,r,i}^{(t)}\right)^2 \lVert \tilde \vxi_i \rVert^{-2} + \sum_{\substack{i,j \in [n_\strong]\\i \neq j}}\rho_{s,r,i}^{(t)} \rho_{s,r,j}^{(t)}\langle \tilde \vxi_i, \tilde \vxi_j\rangle  \lVert \tilde \vxi_i \rVert^{-2} \lVert \tilde \vxi_j \rVert^{-2}\\
        & \leq 2 \sigma_p^{-2} d^{-1} \sum_{i \in [n_\strong]} \left( \rho_{s,r,i}^{(t)}\right)^2  +  2 \beta_\strong n_\strong^{-1}\sigma_p^{-2} d^{-1} \sum_{\substack{i,j \in [n_\strong]\\ i \neq j}} \abs{\rho_{s,r,i}^{(t)}} \abs{\rho_{s,r,j}^{(t)}}\\
        &\leq 2 \sigma_p^{-2} d^{-1} \sum_{i \in [n_\strong]} \left( \rho_{s,r,i}^{(t)}\right)^2  +  \beta_\strong n_\strong^{-1}\sigma_p^{-2} d^{-1} \sum_{\substack{i,j \in [n_\strong]\\ i \neq j}} \frac{\left( \rho_{s,r,i}^{(t)}\right)^2 +  \left( \rho_{s,r,j}^{(t)}\right)^2}{2} ,\\
        & \leq 4 \sigma_p^{-2} d^{-1 } \sum_{i \in [n_\strong]} \left( \rho_{s,r,i}^{(t)}\right)^2
    \end{align*}
    where the first inequality follows from \eqref{eq:strong_noise} and the second inequality follows from AM-GM inequality, and the last inequality follows from \eqref{eq:strong_kappa_gamma}.
    From the Cauchy-Schwarz inequality, we have
    \begin{align*}
        \sum_{r \in [m]} \norm{\sum_{i \in [n_\strong]} \rho_{s,r,i}^{(t)} \tilde \vxi_i \lVert \tilde \vxi_i \rVert^{-2}} 
        & \leq 2 \sigma_p^{-1} d^{- \frac 1 2} \sum_{r \in [m]} \left( \sum_{i \in [n_\strong ]}  \left( \rho_{s,r,i}^{(t)}\right)^2\right)^{\frac 1 2}\\
        & \leq  2 m ^{\frac 1 2} \sigma_p^{-1} d^{- \frac 1 2}  \left( \sum_{r \in [m]}  \sum_{i \in [n_\strong ]}  \left( \rho_{s,r,i}^{(t)}\right)^2 \right)^{\frac 1 2}.
    \end{align*}
    In addition, from \ref{strong:noise_bound} with iteration $t$, we have
    \begin{align*}
        \sum_{i \in [n_\strong]} \sum_{r \in [m]} \left( \rho_{s,r,i}^{(t)}\right)^2 &= \sum_{\substack{i \in [n_\strong]\\ \hat y_i = s}} \sum_{r \in [m]}\left( \orho_{r,i}^{(t)} \right)^2 + \sum_{\substack{i \in [n_\strong]\\ \hat y_i = -s}} \sum_{r \in [m]}\left( \urho_{r,i}^{(t)} \right)^2\\
        &\leq \sum_{\substack{i \in [n_\strong]\\ \hat y_i = s}} \sum_{r \in [m]}\left( \orho_{r,i}^{(t)} \right)^2 + (\alpha_\strong + 5\beta_\strong \log T^*)^2 m n_\strong.
    \end{align*}
    For any $i \in [n_\strong]$ such that $\hat y_i = s$, we have
    \begin{equation*}
        \sum_{r \in [m]} \left( \orho_{r,i}^{(t)} \right)^2 \leq m \left(\max_{r \in [m]} \orho_{r,i}^{(t)}\right)^2 \leq 16 m^{-1} \left( \sum_{r \in [m]} \orho_{r,i}^{(t)}\right)^2,
    \end{equation*}
    where the last inequality follows from \ref{strong:noise_inner} and \eqref{eq:strong_set}. Therefore, we have
    \begin{align*}
        \sum_{i \in [n_\strong]} \sum_{r \in [m]} \left( \rho_{s,r,i}^{(t)}\right)^2 &\leq 16 m^{-1}\sum_{i \in [n_\strong]} \left( \sum_{r \in [m]} \orho_{r,i}^{(t)}\right)^2 + (\alpha_\strong + 5\beta_\strong \log T^*)^2 mn_\strong\\
        &\leq 25 m^{-1}\sum_{i \in [n_\strong]} \left( \sum_{r \in [m]} \orho_{r,i}^{(t)}\right)^2,
    \end{align*}
    where the last inequality follows from \eqref{eq:strong_noise_lower} and \eqref{eq:strong_kappa_gamma}.
    We conclude 
    \begin{align*}
        &\quad \sum_{r \in [m]} \norm{\Pi_S \vw_{s,r}^{(t)}}\\
        &\leq \sqrt 2 m \sigma_0 d^{\frac 1 2}  +  10 \sigma_p^{-1} d^{-\frac 1 2} \left(\sum_{i \in [n_\strong] } \left(\sum_{r \in [m]} \orho_{r,i}^{(t)} \right)^2 \right)^{\frac 1 2}\\
        &\leq  20 \sigma_p^{-1} d^{-\frac 1 2} \left(\sum_{i \in [n_\strong] } \left(\sum_{r \in [m]} \orho_{r,i}^{(t)} \right)^2 \right)^{\frac 1 2},
    \end{align*}
    where the second inequality follows from \eqref{eq:strong_noise_lower}, \eqref{eq:strong_kappa_gamma}, and \ref{condition:init}.
\end{proof}

By Theorem 5.2.2 in \citet{vershynin2018high}, for any $z>0$, it holds that
\begin{equation*}
    \mathbb{P}[h(\vxi) - \mathbb{E}[h(\vxi)]\geq z] \leq \exp \left(- \frac{c z^2}{ \sigma_p^2 \norm{h}_{\mathrm{Lip}}^2} \right)
\end{equation*}
where $c$ is a universal constant and $\norm{\cdot}_\mathrm{Lip}$ denotes the best Lipschitz constant. Combining with Lemma~\ref{lemma:strong_wegith_norm}, we have
\begin{equation}\label{eq:concentration}
    \mathbb{P}[h(\vxi) - \mathbb{E}[h(\vxi)]\geq z] \leq \exp \left(- \frac{c m^2 d }{ 400  \sum_{i \in [n_\strong] } \left(\sum_{r \in [m]} \orho_{r,i}^{(t)} \right)^2 } z^2 \right).
\end{equation}

Now, we characterize the test error. First, we consider the case $(\mX,y) \in \gS_\easy \cup \gS_\both$. We have
\begin{align*}
    &\quad y f_\strong \left( \mW^{(t)}, \mX\right)\\
    & = F_{y} \left( \mW_y^{(t)}, \mX\right) -  F_{-y} \left( \mW_{-y}^{(t)}, \mX\right)\\
    &= \frac{1}{m}\sum_{l \in [2]} \sum_{r \in [m]}\sigma \left( \inner{\vw_{y,r}^{(t)}, \vv^{(l)}}\right) + \frac{1}{m} \sum_{r \in [m]}\sigma \left( \inner{\vw_{y,r}^{(t)}, \vxi }\right)\\
    & \quad - \frac{1}{m}\sum_{l \in [2]} \sum_{r \in [m]}\sigma \left( \inner{\vw_{-y,r}^{(t)}, \vv^{(l)}}\right) - \frac{1}{m} \sum_{r \in [m]}\sigma \left( \inner{\vw_{-y,r}^{(t)}, \vxi }\right)\\
    &\geq  - \frac{1}{m} \sum_{r \in [m]}\sigma \left( \inner{\vw_{-y,r}^{(t)}, \vxi }\right) + \sum_{r \in [m]} \sigma \left( \inner{\vw_{y,r}^{(t)}, \vmu_y} \right) - \frac{1}{m}\sum_{l \in [2]} \sum_{r \in [m]}\sigma \left( \inner{\vw_{-y,r}^{(t)}, \vv^{(l)}}\right)\\
    &\geq - \frac{1}{m} \sum_{r \in [m]}\sigma \left( \inner{\vw_{-y,r}^{(t)}, \vxi }\right) + \frac{1}{m} \sum_{r \in [m]} \oM_{y,r}^{(t)} - 2(2\alpha_\strong + \beta_\strong)\\
    &= - h(\vxi) + \frac{1}{m} \sum_{r \in [m]} \oM_{y,r}^{(t)} - 2(2\alpha_\strong + \beta_\strong),
\end{align*}
where the second inequality follows from \eqref{eq:strong_init} and \ref{strong:signal}. From \ref{strong:coeff}, \ref{strong:noise_dynamics}, and \eqref{eq:strong_noise_lower}, we have
\begin{align*}
    \frac{1}{m} \sum_{r \in [m]} \oM_{y,r}^{(t)} &\geq \frac{1}{12\lambda_\strong} n_\vmu \SNR_\vmu^2  \cdot x_t \\
    &\geq \frac{1}{12\lambda_\strong} n_\vmu \SNR_\vmu^2 \log \left( \varepsilon^{-1} \right) \\
    &\geq  4(2\alpha_\strong + \beta_\strong),
\end{align*}
where the last inequality follows from \eqref{eq:strong_kappa_gamma}.
Therefore, we have 
\begin{equation*}
    y f_\strong \left( \mW^{(t)}, \mX \right) \geq  - h(\vxi) + \frac{1}{2m} \sum_{r \in [m]} \oM_{y,r}^{(t)}
\end{equation*}
and thus
\begin{equation*}
    \mathbb{P} \left[ y f_\strong \left( \mW^{(t)}, \mX \right)<0 \, \middle | \, (\mX,y)\in \gS_\easy \cup \gS_\both \right] \leq \mathbb{P} \left[ h(\vxi) > \frac{1}{2m} \sum_{r \in [m]} \overline{M}_{y,r}^{(t)} \right].
\end{equation*}
From Lemma~\ref{lemma:strong_wegith_norm}, we have
\begin{align*}
    & \quad \frac{1}{2m}\sum_{r \in [m]} \oM_{y,r}^{(t)} - \mathbb{E}[h(\vxi)] \\
    &= \frac{1}{2m}\sum_{r \in [m]} \oM_{y,r}^{(t)} - \frac{\sigma_p}{\sqrt{2\pi }m} \sum_{r \in [m]} \norm{\Pi_S \vw_{-y,r}^{(t)}}\\
    &\geq \frac{ n_\vmu \SNR_{\vmu}^2}{24 \lambda_\strong m n_\strong^\frac 1 2} \left( \sum_{i \in [n_\strong]} \left(\sum_{r \in [m]} \orho_{r,i}^{(t)}\right)^2 \right)^{\frac 1 2}  - \frac{20}{\sqrt {2 \pi} m  d^{\frac 1 2}} \left( \sum_{i \in [n_\strong]} \left(\sum_{r \in [m]} \orho_{r,i}^{(t)}\right)^2 \right)^{\frac 1 2} \\
    & \geq \frac{n_\vmu \SNR_{\vmu}^2}{48 \lambda_\strong m n_\strong^{\frac 1 2} }  \left( \sum_{i \in [n_\strong]} \left(\sum_{r \in [m]} \orho_{r,i}^{(t)}\right)^2 \right)^{\frac 1 2}
\end{align*}
where the last inequality follows from the condition $n_\strong p_\both ^2 \norm{\vnu}^4 \geq C_2 \sigma_p^4 d$ and \ref{condition:easy_hard}.

From \eqref{eq:concentration}, we have
\begin{align*}
    \mathbb{P} \left[ h(\vxi)> \frac{1}{2m} \sum_{r \in [m]} \oM_{s,r}^{(t)}\right] &= \mathbb{P} \left[ h(\vxi)- \mathbb{E}[h(\vxi)]> \frac{1}{2m} \sum_{r \in [m]} \oM_{y,r}^{(t)} - \mathbb{E}[h(\vxi)]\right]\\
    &\leq  \mathbb{P} \left[ h(\vxi)- \mathbb{E}[h(\vxi)] > \frac{n_\vmu \SNR_{\vmu}^2}{48 \lambda_\strong m n_\strong^{\frac 1 2} }  \left( \sum_{i \in [n_\strong]} \left(\sum_{r \in [m]} \orho_{r,i}^{(t)}\right)^2 \right)^{\frac 1 2} \right]\\
    &\leq \exp \left( - \frac{ c n_\vmu^2 \norm{\vmu}^4}{400 \cdot 48^2 \lambda_\strong^2 \cdot n_\strong \sigma_p^4 d}\right)\\
    &\leq \exp \left( - \frac{n_\strong (2 p_\easy + p_\both)^2 \norm{\vmu}^4 }{C_3 \sigma_p^4 d}\right), 
\end{align*}
with some constant $C_3>0$.

Using a similar argument, we can prove the upper bound on test error for the case $(\mX,y) \in \gS_\hard$. In this case, we have
\begin{align*}
    &\quad y f_\strong \left( \mW^{(t)}, \mX\right)\\
    & = F_{y} \left( \mW_y^{(t)}, \mX\right) -  F_{-y} \left( \mW_{-y}^{(t)}, \mX\right)\\
    &= \frac{1}{m}\sum_{l \in [2]} \sum_{r \in [m]}\sigma \left( \inner{\vw_{y,r}^{(t)}, \vv^{(l)}}\right) + \frac{1}{m} \sum_{r \in [m]}\sigma \left( \inner{\vw_{y,r}^{(t)}, \vxi }\right)\\
    & \quad - \frac{1}{m}\sum_{l \in [2]} \sum_{r \in [m]}\sigma \left( \inner{\vw_{-y,r}^{(t)}, \vv^{(l)}}\right) - \frac{1}{m} \sum_{r \in [m]}\sigma \left( \inner{\vw_{-y,r}^{(t)}, \vxi }\right)\\
    &\geq  - \frac{1}{m} \sum_{r \in [m]}\sigma \left( \inner{\vw_{-y,r}^{(t)}, \vxi }\right)  \\
    &\quad + \frac{1}{m}\sum_{l \in [2]} \sum_{r \in [m]}\sigma \left( \inner{\vw_{y,r}^{(t)}, \vv^{(l)}}\right) - \frac{1}{m}\sum_{l \in [2]} \sum_{r \in [m]}\sigma \left( \inner{\vw_{-y,r}^{(t)}, \vv^{(l)}}\right)\\
    &\geq - \frac{1}{m} \sum_{r \in [m]}\sigma \left( \inner{\vw_{-y,r}^{(t)}, \vxi }\right) + \frac{2}{m} \min \left \{\sum_{r \in \gA_y} \oN_{y,r}^{(t)}, - \sum_{r \in \gB_y} \oN_{y,r}^{(t)} \right \} - 2(2\alpha_\strong + \beta_\strong)\\
    &= - h(\vxi) + \frac{2}{m} \min \left\{\sum_{r \in \gA_y} \oN_{y,r}^{(t)}, -\sum_{r \in \gB_y} \oN_{y,r}^{(t)} \right\} - 2(2\alpha_\strong + \beta_\strong)
\end{align*}
where the first inequality follows from \eqref{eq:strong_init} and \ref{strong:signal}. From \ref{strong:coeff}, \ref{strong:noise_dynamics}, and \eqref{eq:strong_noise_lower}, we have
\begin{align}\label{eq:hard_signal}
    \frac{1}{m} \sum_{r \in \gA_y} \oN_{y,r}^{(t)}, -\frac{1}{m} \sum_{r \in \gB_y} \oN_{y,r}^{(t)} &\geq \frac{1}{12\lambda_\strong } n_\vnu \SNR_\vnu^2  \cdot x_t \nonumber \\
    &\geq \frac{1}{12\lambda_\strong} n_\vnu \SNR_\vnu^2 \cdot \log \left( \frac{ \eta \sigma_p^2 d}{16 mn_\strong \exp (\kappa_\strong/4)} t \right) \nonumber \\
    &\geq \frac{1}{12 \lambda_\strong } n_\vnu \SNR_\vnu^2 \log \left( \varepsilon^{-1} \right) \nonumber \\
    &\geq  4(2\alpha_\strong + \beta_\strong),
\end{align}
where the last inequality follows from \eqref{eq:strong_kappa_gamma}.
Therefore, we have 
\begin{equation*}
    y f_\strong \left( \mW^{(t)}, \mX \right) \geq  - h(\vxi) + \frac{1}{m} \min \left \{\sum_{r \in \gA_y} \oN_{y,r}^{(t)}, - \sum_{r \in \gB_y} \oN_{y,r}^{(t)} \right \}
\end{equation*}
and thus
\begin{equation*}
    \mathbb{P} \left[ y f_\strong \left( \mW^{(t)}, \mX \right)<0 \, \middle | \, (\mX,y)\in \gS_\hard \right] \leq \mathbb{P} \left[ h(\vxi) > \frac 1 m \min \left \{\sum_{r \in \gA_y} \oN_{y,r}^{(t)}, - \sum_{r \in \gB_y} \oN_{y,r}^{(t)} \right \} \right].
\end{equation*}
From Lemma~\ref{lemma:strong_wegith_norm} and Condition~\ref{condition}, we have
\begin{align*}
    & \quad \frac{1}{m} \min \left \{\sum_{r \in \gA_y} \oN_{y,r}^{(t)}, - \sum_{r \in \gB_y} \oN_{y,r}^{(t)} \right \} - \mathbb{E}[h(\vxi)] \\
    &= \frac{1}{m} \min \left \{\sum_{r \in \gA_y} \oN_{y,r}^{(t)}, - \sum_{r \in \gB_y} \oN_{y,r}^{(t)} \right \} - \frac{\sigma_p}{\sqrt{2\pi }m} \sum_{r \in [m]} \norm{\Pi_S \vw_{-y,r}^{(t)}}\\
    &\geq \frac{1}{12 \lambda_\strong m n_\strong } n_\vnu \SNR_{\vnu}^2 \cdot \sum_{i \in [n_\strong]} \sum_{r \in [m]} \orho_{r,i}^{(t)}  - \frac{3}{\sqrt {2 \pi} m n_\strong^{\frac 1 2} d^{\frac 1 2}} \sum_{i \in [n_\strong]} \sum_{r \in [m]} \orho_{r,i}^{(t)}\\
    & \geq \frac{1}{24 \lambda_\strong m n_\strong } n_\vnu \SNR_{\vnu}^2 \sum_{i \in [n_\strong]} \sum_{r \in [m]} \orho_{r,i}^{(t)},
\end{align*}
where the last inequality follows from the condition given in the statement.
From \eqref{eq:concentration}, we have
\begin{align*}
    &\quad \mathbb{P} \left[ h(\vxi)> \frac 1 m \min \left \{\sum_{r \in \gA_y} \oN_{y,r}^{(t)}, - \sum_{r \in \gB_y} \oN_{y,r}^{(t)} \right \} \right]\\
    &= \mathbb{P} \left[ h(\vxi)- \mathbb{E}[h(\vxi)]> \frac 1 m \min \left \{\sum_{r \in \gA_y} \oN_{y,r}^{(t)}, - \sum_{r \in \gB_y} \oN_{y,r}^{(t)} \right \} - \mathbb{E}[h(\vxi)]\right]\\
    &\leq  \mathbb{P} \left[ h(\vxi)- \mathbb{E}[h(\vxi)] > \frac{1}{24 \lambda_\strong m n_\strong } n_\vnu \SNR_{\vnu}^2 \sum_{i \in [n_\strong]} \sum_{r \in [m]} \orho_{r,i}^{(t)} \right]\\
    &\leq \exp \left( - \frac{ c n_\vnu^2 \norm{\vnu}^4}{9\cdot 24^2 \lambda_\strong^2 \cdot n_\strong \sigma_p^4 d}\right)\\
    &\leq \exp \left( - \frac{n_\strong p_\both^2 \norm{\vnu}^4 }{C_3 \sigma_p^4 d}\right) 
    , 
\end{align*}
with some constant $C_3>0$.

\subsubsection{Test Error Lower Bound}
We consider the case $(\mX,y) \in \gS_\hard$. Define $g: S \rightarrow \R$ as $g(\vz):= \frac{1}{m} \sum_{r \in [m]} \sigma \left( \inner{\vw_{1,r}^{(t)}, \vz}\right) - \frac{1}{m} \sum_{r \in [m]} \sigma \left( \inner{\vw_{-1,r}^{(t)}, \vz}\right)$ for any $\vz \in S$. Then, we have
\begin{align*}
    &\quad y f_\strong \left( \mW^{(t)}, \mX\right)\\
    & = F_{y} \left( \mW_y^{(t)}, \mX\right) -  F_{-y} \left( \mW_{-y}^{(t)}, \mX\right)\\
    &= \frac{1}{m}\sum_{l \in [2]} \sum_{r \in [m]}\sigma \left( \inner{\vw_{y,r}^{(t)}, \vv^{(l)}}\right) + \frac{1}{m} \sum_{r \in [m]}\sigma \left( \inner{\vw_{y,r}^{(t)}, \vxi }\right)\\
    & \quad - \frac{1}{m}\sum_{l \in [2]} \sum_{r \in [m]}\sigma \left( \inner{\vw_{-y,r}^{(t)}, \vv^{(l)}}\right) - \frac{1}{m} \sum_{r \in [m]}\sigma \left( \inner{\vw_{-y,r}^{(t)}, \vxi }\right)\\
    &\leq \frac{1}{m} \sum_{r \in [m]}\sigma \left( \inner{\vw_{y,r}^{(t)}, \vxi }\right) - \frac{1}{m} \sum_{r \in [m]}\sigma \left( \inner{\vw_{-y,r}^{(t)}, \vxi }\right) + \frac{1}{m}\sum_{l \in [2]} \sum_{r \in [m]}\sigma \left( \inner{\vw_{y,r}^{(t)}, \vv^{(l)}}\right)\\
    &\leq yg(\vxi) + \frac{2}{m} \max \left \{\sum_{r \in \gA_y} \oN_{y,r}^{(t)}, - \sum_{r \in \gB_y} \oN_{y,r}^{(t)} \right \} +  2 \alpha_\strong\\
    & \leq yg(\vxi) + \frac{3}{m} \max \left \{\sum_{r \in \gA_y} \oN_{y,r}^{(t)}, - \sum_{r \in \gB_y} \oN_{y,r}^{(t)} \right \}\\
    & \leq yg(\vxi) + \frac{3}{m} \max_{s \in \{\pm 1\}} \left \{\sum_{r \in \gA_s} \oN_{s,r}^{(t)}, - \sum_{r \in \gB_s} \oN_{s,r}^{(t)} \right \},
\end{align*}
where the second inequality follows from \eqref{eq:hard_signal}. Therefore, we have
\begin{equation*}
    \mathbb{P}\left[y f_\strong\left(\mW^{(t)}, \mX \right) \, \middle| \, (\mX,y) \in \gS_\hard \right] \geq \frac 1 2 \mathbb{P} \left[ |g(\vxi)| \geq \frac{3}{m} \max_{s \in \{\pm 1\}} \left \{\sum_{r \in \gA_s} \oN_{s,r}^{(t)}, - \sum_{r \in \gB_s} \oN_{s,r}^{(t)} \right \}\right].
\end{equation*}
We define the set 
\begin{equation*}
    \bm{\Omega} := \left \{ \vz \in S: \abs{g(\vz)} \geq \frac{3}{m} \max_{s \in \{\pm 1\}} \left \{\sum_{r \in \gA_s} \oN_{s,r}^{(t)}, - \sum_{r \in \gB_s} \oN_{s,r}^{(t)} \right \} \right\}.
\end{equation*}
We immediately obtain $\mathbb{P}\left[y f_\strong\left(\mW^{(t)}, \mX \right) \, \middle| \, (\mX,y) \in \gS_\hard \right] \geq \frac{1}{2} \mathbb{P}[\vxi \in \bm{\Omega}]$ and thus we will characterize $\mathbb{P}[\vxi \in \bm{\Omega}]$. Denote $\vzeta = C_6 p_\both \SNR_\vnu^2 \cdot  \sum \limits _{ \substack{i \in [n_\strong]\\ \hat y_i = 1}} \tilde \vxi_i$, where $C_6>0$ is some small constant. Then, we have
\begin{align}\label{eq:zeta}
    \norm{\vzeta} &\leq C_6 p_\both \SNR_\vnu^2 \left(\sum_{i \in [n_\strong]} \lVert \tilde \vxi_i \rVert^2 + \sum_{i \in [n_\strong] } \sum_{j \in [n_\strong] \setminus\{i\} } \abs{\langle \tilde \vxi_i, \tilde \vxi_j \rangle } \right)^{\frac 1 2} \nonumber \\
    &\leq C_6 p_\both \SNR_\vnu^2  \sqrt{\frac{3(1+\beta_\strong) n_\strong \sigma_p^2 d }{2}} \nonumber \\
    &=  \sqrt{\frac{2 C_6^2 n_\strong p_\both ^2 \norm{\vnu}^4 }{\sigma_p^2 d}}\nonumber \\
    & \leq 0.02 \sigma_p,
\end{align}
where the first inequality follows from \eqref{eq:strong_noise} and the last follows from the statement condition $n_\strong p_\both^2 \norm{\vnu}^4 \leq C_4 \sigma_p^4 d$ and the small choice of $C_6$. Also, for any $r \in [m]$, we have
\begin{align*}
    &\quad \sigma \left( \inner{\vw_{1,r}^{(t)}, \vxi+ \vzeta}\right) - \sigma \left( \inner{\vw_{1,r}^{(t)}, \vxi}\right) + \sigma \left( \inner{\vw_{1,r}^{(t)}, -\vxi+ \vzeta}\right) - \sigma \left( \inner{\vw_{1,r}^{(t)}, -\vxi}\right) \\
    & \geq \mathbbm{1} \left[ \inner{\vw_{1,r}^{(t)}, \vxi}>0\right]  \inner{\vw_{1,r}^{(t)}, \vzeta} + \mathbbm{1} \left[ \inner{\vw_{1,r}^{(t)}, -\vxi} > 0 \right]  \inner{\vw_{1,r}^{(t)}, \vzeta}\\
    &= \inner{\vw_{1,r}^{(t)}, \vzeta}\\
    &= C_6 p_\both \SNR_\vnu^2 \left[ \sum_{\substack{i \in [n_\strong]\\ \hat y_i = 1}} \orho_{r,i}^{(t)} - \sum_{ \substack{i \in [n_\strong]\\\hat y_i = 1}} \sum_{j \in [n_\strong] \setminus \{i \}}\rho_{1,r,j}^{(t)} \frac{\langle \tilde \vxi_i, \tilde \vxi_j \rangle}{\lVert \tilde \vxi_j \rVert^2} + \sum_{\substack{i \in [n_\strong]\\ \hat y_i = 1}} \inner{\vw_{1,r}^{(0)}, \tilde \vxi_i}\right]\\
    &\geq C_6 p_\both \SNR_\vnu^2 \left[ \sum_{\substack{i \in [n_\strong]\\ \hat y_i = 1}} \orho_{r,i}^{(t)} - 
   4 \beta_\strong \log T^* - n_\strong \alpha_\strong\right]
\end{align*}
where the first inequality follows from the convexity of ReLU, and the second inequality follows from  \ref{strong:noise_bound}, \eqref{eq:strong_init}, and \eqref{eq:strong_noise}. In addition, for any $r \in [m]$, we have 
\begin{align*}
    &\quad \sigma \left( \inner{\vw_{-1,r}^{(t)}, \vxi+ \vzeta}\right) - \sigma \left( \inner{\vw_{-1,r}^{(t)}, \vxi}\right) + \sigma \left( \inner{\vw_{-1,r}^{(t)}, -\vxi+ \vzeta}\right) - \sigma \left( \inner{\vw_{-1,r}^{(t)}, -\vxi}\right) \\
    &\leq  2\abs{\inner{\vw_{-1,r}^{(t)}, \vzeta}}\\
    &\leq 2 \lambda \left[ \sum_{\substack{i \in [n_\strong]\\ \hat y_i = 1}} \abs{\urho_{r,i}^{(t)}} + \sum_{ \substack{i \in [n_\strong]\\\hat y_i = 1}} \sum_{j \in [n_\strong] \setminus \{i \}}\abs{\rho_{-1,r,j}^{(t)}} \frac{\abs{\langle \tilde \vxi_i, \tilde \vxi_j \rangle}}{\lVert \tilde \vxi_j \rVert^2} + \sum_{\substack{i \in [n_\strong]\\ \hat y_i = 1}} \abs{\inner{\vw_{-1,r}^{(0)}, \tilde \vxi_i}}\right]\\
    &\leq 2 C_6 p_\both \SNR_\vnu^2 \big( n_\strong(\alpha_\strong + 5 \beta_\strong \log T^*) + 
   4 \beta_\strong \log T^* + n_\strong \alpha_\strong\big)\\
   &= 2 C_6 p_\both \SNR_\vnu^2 n_\strong (2\alpha_\strong + 9 \beta_\strong \log T^*),
\end{align*}
where the first inequality holds since ReLU is $1$-Lipschitz and the second inequality follows from  \ref{strong:noise_bound}, \eqref{eq:strong_init}, and \eqref{eq:strong_noise}. Therefore, we have
\begin{align*}
   &\quad g(\vxi + \vzeta) - g(\vxi) + g(-\vxi + \vzeta) - g(-\vxi) \\
   & \geq \frac{C_6 p_\both \SNR_\vnu^2}{m} \left[ \sum_{\substack{i \in [n_\strong]\\ \hat y_i = 1} } \orho_{r,i}^{(t)} - n_\strong (7 \alpha_\strong + 12 \beta_\strong \log T^*)\right] \\
   &\geq \frac{C_6 p_\both \SNR_\vnu^2}{2m} \sum_{\substack{i \in [n_\strong]\\ \hat y_i = 1} } \orho_{r,i}^{(t)} \\
   &\geq \frac{C_6 p_\both \SNR_\vnu^2}{2m} \cdot   \frac{\abs{\gC_{\vmu_1}^{(1)}}  + \abs{\gC_{\vnu_1}^{(1)}} + \abs{\gC_{-\vnu_1}^{(1)}}}{3 \lambda_\strong n_\vnu \SNR_\vnu^2 } \cdot  \max_{s \in \{\pm 1\}} \left \{\sum_{r \in \gA_s} \oN_{s,r}^{(t)}, - \sum_{r \in \gB_s} \oN_{s,r}^{(t)} \right \}\\
   &\geq \frac{12}{m} \max_{s \in \{\pm 1\}} \left \{\sum_{r \in \gA_s} \oN_{s,r}^{(t)}, - \sum_{r \in \gB_s} \oN_{s,r}^{(t)} \right \},
\end{align*}
where the second inequality follows from \eqref{eq:strong_noise_lower} and \eqref{eq:strong_kappa_gamma}, the third inequality follows from \ref{strong:coeff} and the last inequality follows from the choice of $C_6>0$ and
\begin{equation*}
    \abs{\gC_{\vmu_1}^{(1)}}  + \abs{\gC_{\vnu_1}^{(1)}} + \abs{\gC_{-\vnu_1}^{(1)}} \geq \left( 1- C_\strong^{-1}\right) \cdot n_\vmu + 2 (1-C_\strong^{-1}) n_\vnu =  \frac{\left( 1- C_\strong^{-1}\right) (p_\easy + p_\both) n_\strong}{2} \geq \frac{n_\strong}{8}.
\end{equation*}

By the pigeonhole principle, it implies that at least one of $\vxi, - \vxi, \vxi+\vzeta, -\vxi + \vzeta$ belongs to $\bm \Omega$. Hence, 
\begin{equation*}
    \mathbb{P}[\vxi \in \bm{\Omega}] + \mathbb{P}[-\vxi \in \bm{\Omega}]+ \mathbb{P}[\vxi + \vzeta \in \bm{\Omega}]+ \mathbb{P}[-\vxi+ \vzeta \in \bm{\Omega}] \geq 1.
\end{equation*}
Also, from symmetry, we have $ \mathbb{P}[\vxi \in \bm{\Omega}] =  \mathbb{P}[-\vxi \in \bm{\Omega}]$ and $ \mathbb{P}[-\vxi+\vzeta \in \bm{\Omega}] =  \mathbb{P}[\vxi - \vzeta \in \bm{\Omega}]$. The following lemma allows us to relate the probability $ \mathbb{P}[\vxi \in \bm{\Omega}]$ to the probabilities $ \mathbb{P}[\vxi \pm \vzeta \in \bm{\Omega}]$.

\begin{lemma}[Direct from Proposition 2.1 in \citet{devroye2018total}]\label{lemma:TV}
    For any $\vv \in S$ the total variation distance $\mathrm{TV}(\cdot, \cdot)$ between $\gN(0, \sigma_p^2 \mLambda)$ and $\gN(\vv, \sigma_p^2 \mLambda)$ is smaller than $\frac{\norm \vv}{2 \sigma_p}$.
\end{lemma}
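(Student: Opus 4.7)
The plan is to apply a standard total variation bound for two Gaussians with identical covariance, after reducing the problem to the subspace $S$ where the covariance is effectively nondegenerate. First I would observe that by the definition of $\mLambda$ in Section~\ref{section:problem_setting}, namely $\mLambda = \mI_d - \tfrac{\vmu_1\vmu_1^\top}{\norm{\vmu}^2} - \tfrac{\vmu_{-1}\vmu_{-1}^\top}{\norm{\vmu}^2} - \tfrac{\vnu_1\vnu_1^\top}{\norm{\vnu}^2} - \tfrac{\vnu_{-1}\vnu_{-1}^\top}{\norm{\vnu}^2}$, the matrix $\mLambda$ is exactly the orthogonal projection onto $S$. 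Consequently, both $\gN(\vzero, \sigma_p^2 \mLambda)$ and $\gN(\vv, \sigma_p^2 \mLambda)$ are supported on $S$, and since $\vv \in S$, the translation preserves this support.

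Next, I would reduce to an isotropic setting on $S$. Using the orthonormal basis $\{\vb_1, \dots, \vb_{d-4}\}$ of $S$ fixed in Appendix~A, both distributions can be identified with $(d-4)$-dimensional Gaussians $\gN(\vzero, \sigma_p^2 \mI_{d-4})$ and $\gN(\vv', \sigma_p^2 \mI_{d-4})$, where $\vv'$ is the coordinate vector of $\vv$ with $\norm{\vv'} = \norm{\vv}$. By rotational invariance of the isotropic Gaussian, I may further rotate so that $\vv'$ is aligned with a single coordinate axis, reducing the total variation computation to a one-dimensional problem between $\gN(0, \sigma_p^2)$ and $\gN(\norm{\vv}, \sigma_p^2)$, with all orthogonal coordinates matching exactly and contributing zero to the TV distance.

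Finally, I would apply Pinsker's inequality to the reduced problem: for two Gaussians on $\R$ with equal variance $\sigma_p^2$ and means differing by $\norm{\vv}$, the KL divergence is $\norm{\vv}^2/(2\sigma_p^2)$, and Pinsker gives
\begin{equation*}
\mathrm{TV} \leq \sqrt{\tfrac{1}{2}\,D_{\mathrm{KL}}} = \sqrt{\tfrac{\norm{\vv}^2}{4\sigma_p^2}} = \tfrac{\norm{\vv}}{2\sigma_p},
\end{equation*}
which is precisely the claimed bound. Alternatively, this identical conclusion follows verbatim from Proposition 2.1 of \citet{devroye2018total} (which gives an even sharper bound via the one-dimensional erf formula), with $\Sigma^{-1/2}(\vmu_1-\vmu_2)$ replaced by $\vv/\sigma_p$ on $S$; this is the citation path implied by the lemma's attribution.

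There is no real obstacle here; the only subtlety is handling the degeneracy of $\sigma_p^2 \mLambda$ (it has a null space spanned by the signal vectors), but since $\vv \in S$ the translation lies in the range of $\mLambda$ and the two measures remain mutually absolutely continuous on $S$, so the KL divergence and the reduction argument are well defined.
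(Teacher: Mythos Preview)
Your proposal is correct. The paper itself supplies no proof for this lemma beyond the citation in its title (``Direct from Proposition 2.1 in \citet{devroye2018total}''), so your self-contained argument via reduction to $S$ and Pinsker's inequality is strictly more detailed than what the paper provides, and your closing remark about the citation path matches exactly how the paper handles it.
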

By Lemma~\ref{lemma:TV} and \eqref{eq:zeta}, we have
\begin{equation*}
    \abs{\mathbb{P}[\vxi \in \Omega] - \mathbb{P}[\vxi \in \Omega \pm \vzeta]} \leq \mathrm{TV} \left( \gN (\vzero, \sigma_p^2 \mLambda), \gN(\pm \vzeta, \sigma_p^2 \mLambda)\right) \leq \frac{\norm{\vzeta}}{2 \sigma_p} \leq 0.01.
\end{equation*}
Therefore, we have
\begin{equation*}
    1 \leq \mathbb{P}[\vxi \in \bm{\Omega}] + \mathbb{P}[-\vxi \in \bm{\Omega}]+ \mathbb{P}[\vxi + \vzeta \in \bm{\Omega}]+ \mathbb{P}[-\vxi+ \vzeta \in \bm{\Omega}] \leq  4 \mathbb{P}[\vxi \in \bm{\Omega}] + 0.02
\end{equation*}
and thus $\mathbb{P}[\vxi \in \bm{\Omega}] \geq 0.24$. We conclude that 
\begin{equation*}
    \mathbb{P}\left[y f_\strong\left(\mW^{(t)}, \mX \right) \, \middle| \, (\mX,y) \in \gS_\hard \right] \geq 0.12.
\end{equation*}
\clearpage
\section{Proof of Theorem~\ref{theorem:weak-to-strong_signal_dominant}}\label{proof:weak-to-strong_signal_dominant}

It suffices to prove the following restatements of Theorem~\ref{theorem:weak-to-strong_signal_dominant}.

\begin{theorem}[Weak-to-Strong Training, Data-Abundant Regime]
    Let $\mW^{(t)}$ be the iterates of the weak-to-strong training, with the weak model $f_\weak(\vw^*, \cdot)$ satisfying the conclusion of Theorem~\ref{theorem:weak}.  
    For any $\delta \in (0,1)$ satisfying Condition~\ref{condition:signal-dominant},  
    with probability at least $1 - \delta$, there exists early stopping time $T_\es = \bigO(\eta^{-1} m (2p_\easy + p_\both)^{-1} \norm{\vmu}^{-2})$ such that the following statements hold:
    \begin{enumerate}[leftmargin=*]
        \item The early stopped strong model $f_\strong \left (\mW^{(T_\es)}, \cdot \right)$ perfectly fits training data having correct label (i.e. $\hat y_i = \tilde y_i$) but fails to training data with flipped label (i.e. $\hat y_i \neq \tilde y_i$). In other words, the model predicts the true label $\tilde y_i$ for any training data point $\tilde \mX_i$. 
        \item Let $(\mX, y) \sim \gD$ be an unseen test example, independent of the training set $\{(\tilde \mX_i, \hat y_i)\}_{i=1}^{n_\strong}$. We have 
     \begin{equation*}
         \mathbb{P} \left[ y f_\strong \left( \mW^{(T_\es)}, \mX \right) < 0 \,\middle|\, (\mX,y) \in \gS_\easy \cup \gS_\both \right] 
         \leq \exp \left( -\frac{n_\strong (2p_\easy + p_\both)^2 \lVert \vmu \rVert^4}{C_5 \sigma_p^4 d} \right),
     \end{equation*}
     and
     \begin{equation*}
         \mathbb{P} \left[ y f_\strong \left( \mW^{(T_\es)}, \mX \right) < 0 \,\middle|\, (\mX,y) \in \gS_\hard \right] 
         \leq \exp \left( -\frac{n_\strong p_\both^2 \lVert \vnu \rVert^4}{C_5 \sigma_p^4 d} \right),
     \end{equation*}
        Here, $C_5>0$ is a constant.
    \end{enumerate}
\end{theorem}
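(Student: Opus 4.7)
The plan is to track the signal--noise decomposition of Lemma B.5 throughout an early phase $[0, T_\es]$ during which the loss derivatives $\tilde g_i^{(t)}$ stay bounded between two absolute constants, so that signal growth outpaces noise memorization by roughly a factor of $n_\strong p \norm{\vmu}^2 / \sigma_p^2 d$ per step. The guiding intuition is that each signal direction is reinforced by $\Theta(n_\strong p)$ clean-labeled examples at every iteration, while each noise direction $\tilde\vxi_i$ is reinforced only by example $i$; in the data-abundant regime \ref{condition:signal-dominant} this multiplicative gap is large enough that signals reach constant size while individual $\orho_{r,i}^{(t)}$ remain $o(1)$.

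First, I would prove an early-phase invariant analogous to Lemma C.3, but with different quantitative scales: for all $t \in [0, T_\es]$, (i) $\tilde g_i^{(t)} \in [c_1,c_2]$ for absolute constants $0<c_1<c_2<1$ and all $i \in [n_\strong]$; (ii) the ``wrong-side'' coefficients $\abs{\uM_{s,r}^{(t)}}, \abs{\uN_{s,r}^{(t)}}$ and the noise coefficients $\orho_{r,i}^{(t)}, \abs{\urho_{r,i}^{(t)}}$ are all $O(\alpha_\strong + \beta_\strong + \eta t \sigma_p^2 d /(mn_\strong))$ and in particular $o(1)$; (iii) for $r \in \gM_s$ the easy-signal coefficient grows linearly, $\oM_{s,r}^{(t)} = \Theta(\eta t (2p_\easy+p_\both)\norm{\vmu}^2/m)$, and for $r \in \gA_s$ (resp.\ $r \in \gB_s$), $\oN_{s,r}^{(t)}$ (resp.\ $-\oN_{s,r}^{(t)}$) grows as $\Theta(\eta t\, p_\both \norm{\vnu}^2/m)$. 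The induction uses the update rules of Lemma B.5 together with Lemma B.4: since the supervising weak model mislabels $\gS_\easy \cup \gS_\both$ with probability $o(1)$ (by Theorem 3.2), we have $\abs{\gC_{\vmu_s}^{(l)}}\gg \abs{\gF_{\vmu_s}^{(l)}}$ and the same for $\pm\vnu_s$ (using \ref{condition:both_large} for the latter), so the clean-labeled contribution dominates in every signal update.

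Next, set $T_\es = \lceil 4m /( \eta c_1 (2p_\easy+p_\both)\norm{\vmu}^2) \rceil$, which is the order stated in the theorem. Then $\oM_{y,r}^{(T_\es)} \ge \Omega(1)$ for $r \in \gM_y$ and $\abs{\oN_{y,r}^{(T_\es)}} \ge \Omega(1)$ for $r \in \gA_y \cup \gB_y$, because by \ref{condition:easy_hard} and \ref{condition:both_large} the hard-signal growth rate is within a constant factor of the easy-signal growth rate. On the other hand, $\eta T_\es \sigma_p^2 d /(m n_\strong) = O(\sigma_p^2 d /(n_\strong (2p_\easy+p_\both)\norm{\vmu}^2))$, which by \ref{condition:signal-dominant} combined with \ref{condition:easy_hard} is $o(1/\log T^*) = o(1)$, so noise memorization is still negligible. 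From this, the first conclusion (perfect prediction of the true label $\tilde y_i$ on every training point) is immediate: on a clean-labeled example $F_{\tilde y_i}(\mW^{(T_\es)}_{\tilde y_i}, \tilde \mX_i) = \Theta(1)$ from $\oM$ or $\oN$ contributions while $F_{-\tilde y_i}(\mW^{(T_\es)}_{-\tilde y_i}, \tilde \mX_i) = o(1)$ since $\uM, \uN$, and all $\urho$ are $o(1)$; on a flipped-labeled example the same decomposition yields $\tilde y_i f_\strong > 0$, i.e., $\hat y_i f_\strong < 0$.

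Finally, the test-error bounds reduce to the concentration argument already carried out in Appendix D.3. Conditioning on $(\mX,y) \in \gS_\easy \cup \gS_\both$, we split $y f_\strong(\mW^{(T_\es)}, \mX) \ge \frac{1}{m}\sum_r \oM_{y,r}^{(T_\es)} - h(\vxi) - 2(2\alpha_\strong + \beta_\strong)$, where $h$ is the Lipschitz function of $\vxi$ from Appendix D.3; on $\gS_\hard$ we use the analogous lower bound with $\frac{2}{m}\min\{\sum_{\gA_y}\oN, -\sum_{\gB_y}\oN\}$. The same weighted bound $\sum_r \norm{\Pi_S \vw_{-y,r}^{(T_\es)}} \lesssim \sigma_p^{-1} d^{-1/2} (\sum_i (\sum_r \orho_{r,i}^{(T_\es)})^2)^{1/2}$ (Lemma D.2) applies, and substituting the step-(iii) lower bounds on $\oM, \oN$ into the Gaussian tail inequality yields exactly the claimed $\exp(-n_\strong(2p_\easy+p_\both)^2\norm{\vmu}^4/(C_5 \sigma_p^4 d))$ and $\exp(-n_\strong p_\both^2 \norm{\vnu}^4/(C_5 \sigma_p^4 d))$ bounds.

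The main obstacle is maintaining the invariant that $\tilde g_i^{(t)}$ stays $\Theta(1)$ throughout $[0,T_\es]$. This is subtle because once signal coefficients reach $\Theta(1)$, the margin $\hat y_i f_\strong(\mW^{(t)}, \tilde \mX_i)$ on clean-labeled points grows to $\Theta(1)$, potentially making $\tilde g_i^{(t)}$ begin to shrink; one must show that this shrinkage has not yet produced the clean/flipped imbalance discussed in Section~4 by time $T_\es$. Concretely, one needs to upper bound the margin on clean-labeled examples by an absolute constant until $t = T_\es$, which follows because the ReLU activations are $O(\oM + \oN) = O(1)$ with small enough constant throughout this window. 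A secondary subtlety is that the hard-signal growth relies crucially on $p_\both$ being significantly larger than $p_\hard$ (condition \ref{condition:both_large}), which is what ensures $\abs{\gC_{\vnu_s}^{(l)}} - \lambda_\strong\abs{\gF_{\vnu_s}^{(l)}} > 0$ and gives the linear-in-$t$ lower bound on $\abs{\oN_{s,r}^{(t)}}$.
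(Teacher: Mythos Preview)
Your plan matches the paper's proof closely: both maintain an early-phase invariant in which $\tilde g_i^{(t)}=\Theta(1)$, signal coefficients grow linearly, and noise stays $o(1)$; the paper defines $T_\es$ as the first time $\frac1m\sum_r\oM_{s,r}^{(t)}$ hits $\tfrac12$ (rather than picking an explicit constant multiple of $m/(\eta(2p_\easy+p_\both)\|\vmu\|^2)$), which makes the ``margin $\le$ constant'' step cleaner, but this is a stylistic difference.

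There is, however, one genuine slip. You write that ``by \ref{condition:easy_hard} and \ref{condition:both_large} the hard-signal growth rate is within a constant factor of the easy-signal growth rate'' and conclude $\abs{\oN_{y,r}^{(T_\es)}}\ge\Omega(1)$. Condition \ref{condition:easy_hard} only says $(2p_\easy+p_\both)\|\vmu\|^2\ge C\,p_\both\|\vnu\|^2$, i.e.\ it bounds the ratio from \emph{one} side; there is no matching lower bound. Your own growth rate in step (iii), $\oN_{s,r}^{(t)}=\Theta(\eta t\,p_\both\|\vnu\|^2/m)$, evaluated at $T_\es$ gives
\[
\frac1m\sum_{r\in\gA_y}\oN_{y,r}^{(T_\es)}\;=\;\Theta\!\left(\frac{p_\both\|\vnu\|^2}{(2p_\easy+p_\both)\|\vmu\|^2}\right),
\]
which can be $o(1)$. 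The paper keeps this factor explicit throughout (see the bound \eqref{eq:w2s_signal_stop} and the chain in Appendix~E.2, E.3).

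This matters in two places. For the training-error claim on hard-only points, the comparison ``signal $\Theta(1)$ vs.\ noise $o(1)$'' does not go through as stated; instead you must compare the hard-signal level $\Theta\bigl(p_\both\|\vnu\|^2/((2p_\easy+p_\both)\|\vmu\|^2)\bigr)$ directly against the noise level $O\bigl(\sigma_p^2 d/((2p_\easy+p_\both)n_\strong\|\vmu\|^2)\bigr)$, and it is precisely Condition~\ref{condition:signal-dominant} ($n_\strong p_\both\|\vnu\|^2\ge C\sigma_p^2 d\log T^*$) that makes the former dominate. For the test-error bound on $\gS_\hard$, the same factor must be carried into the Gaussian tail estimate: plugging $z=\Theta\bigl(p_\both\|\vnu\|^2/((2p_\easy+p_\both)\|\vmu\|^2)\bigr)$ into the concentration inequality with Lipschitz constant $\Theta\bigl(\sigma_p^2 d^{1/2}/((2p_\easy+p_\both)n_\strong^{1/2}\|\vmu\|^2)\bigr)$ is exactly what produces the exponent $n_\strong p_\both^2\|\vnu\|^4/(\sigma_p^4 d)$. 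If you had $\oN=\Omega(1)$ you would get the (stronger but false) exponent $n_\strong(2p_\easy+p_\both)^2\|\vmu\|^4/(\sigma_p^4 d)$ even on hard-only data. Once you track this ratio correctly, the rest of your outline goes through as in the paper.
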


For the proof, we first analyze the early training dynamics and characterize the early stopping iteration (Appendix~\ref{appendix:w2s_early}). We then show that the early-stopped model perfectly fits the training data with true labels (Appendix~\ref{appendix:w2s_train}), and finally, we establish a bound on the test error (Appendix~\ref{appendix:w2s_test}).

\subsection{Analyzing Early Phase}\label{appendix:w2s_early}
First, we establish upper bounds on the noise coefficients.

\begin{lemma}\label{lemma:w2s_noise_small}
    Under Condition~\ref{condition:signal-dominant} and the event $E_\strong$, for any $ t \in \left[0, T^* \right], s \in \{\pm 1\}, r \in [m]$ and $i \in [n_\strong]$, it holds that 
    \begin{equation*}
        \abs{\rho_{s,r,i}^{(t)}} \leq  \frac{3\eta \sigma_p^2 d}{2m n_\strong} t, \quad \abs{ \inner{\vw_{s,r}^{(t)}, \tilde \vxi_i }} \leq \alpha_\strong + \frac{3 \eta \sigma_p^2 d}{m n_\strong} t.
    \end{equation*}
\end{lemma}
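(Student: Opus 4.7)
The plan is to bound the noise coefficient $\rho_{s,r,i}^{(t)}$ by a straightforward telescoping argument on its update rule, and then combine this with the signal-noise decomposition (Lemma~\ref{lemma:strong_decomp}) to bound the inner product $\inner{\vw_{s,r}^{(t)},\tilde\vxi_i}$. No induction on other preserved properties (as in Lemma~\ref{lemma:strong_preserved}) is required; this lemma is deliberately crude so that it holds uniformly on $[0, T^*]$ and can later be plugged into the finer analysis of the data-abundant regime.

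The first step is to control the per-step increment. From Lemma~\ref{lemma:strong_decomp} we have
\begin{equation*}
\rho_{s,r,i}^{(t+1)} - \rho_{s,r,i}^{(t)} = \frac{s\hat y_i \eta}{mn_\strong}\,\tilde g_i^{(t)}\,\|\tilde\vxi_i\|^2\,\mathbbm{1}\!\left[\inner{\vw_{s,r}^{(t)},\tilde\vxi_i}>0\right].
\end{equation*}
Using $\tilde g_i^{(t)}=-\ell'(\hat y_i f_\strong(\mW^{(t)},\tilde\mX_i)) \in (0,1)$ together with the uniform upper bound $\|\tilde\vxi_i\|^2\le \tfrac{3}{2}\sigma_p^2 d$ from \eqref{eq:strong_noise}, the magnitude of each increment is at most $\tfrac{3\eta\sigma_p^2 d}{2mn_\strong}$. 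Since $\rho_{s,r,i}^{(0)}=0$, telescoping and the triangle inequality immediately yield $|\rho_{s,r,i}^{(t)}|\le \tfrac{3\eta\sigma_p^2 d}{2mn_\strong}t$, which is the first claim.

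For the second claim I would expand $\inner{\vw_{s,r}^{(t)},\tilde\vxi_i}$ via the signal-noise decomposition. Since $\vmu_{\pm 1},\vnu_{\pm 1}$ are orthogonal to $\tilde\vxi_i$ (they lie outside $S$), the signal components contribute nothing, leaving
\begin{equation*}
\inner{\vw_{s,r}^{(t)},\tilde\vxi_i} = \inner{\vw_{s,r}^{(0)},\tilde\vxi_i} + \rho_{s,r,i}^{(t)} + \sum_{j\in [n_\strong]\setminus\{i\}} \rho_{s,r,j}^{(t)}\,\frac{\inner{\tilde\vxi_i,\tilde\vxi_j}}{\|\tilde\vxi_j\|^2}.
\end{equation*}
The initial inner product is bounded by $\alpha_\strong$ via \eqref{eq:strong_init}, the diagonal term by $\tfrac{3\eta\sigma_p^2 d}{2mn_\strong}t$ by the first claim, and the cross terms by $(n_\strong-1)\cdot \tfrac{3\eta\sigma_p^2 d}{2mn_\strong}t \cdot \tfrac{\beta_\strong}{n_\strong}$ via \eqref{eq:strong_noise}. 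Since $\beta_\strong\le 1$ by \eqref{eq:strong_kappa_gamma}, summing gives the desired bound $\alpha_\strong + \tfrac{3\eta\sigma_p^2 d}{mn_\strong}t$.

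There is no real obstacle here; the lemma is essentially a Lipschitz-style a~priori estimate. The only small subtlety is to make sure the factor of $3/2$ in the per-step bound, when combined with the cross-term contribution, still fits under the stated coefficient $3$ on $\tfrac{\eta\sigma_p^2 d}{mn_\strong}t$, which is exactly why the condition $\beta_\strong\le 1$ (implied by \eqref{eq:strong_kappa_gamma}) is invoked.
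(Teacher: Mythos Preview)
The proposal is correct and follows essentially the same approach as the paper's proof: telescoping the update rule with $\tilde g_i^{(t)}\le 1$ and $\|\tilde\vxi_i\|^2\le \tfrac{3}{2}\sigma_p^2 d$ for the first bound, then expanding $\inner{\vw_{s,r}^{(t)},\tilde\vxi_i}$ via the signal-noise decomposition and controlling the cross terms with $\beta_\strong/n_\strong$ for the second. The only cosmetic difference is that the paper writes the diagonal-plus-cross-term bound as $\tfrac{3\eta\sigma_p^2 d}{2mn_\strong}t\,(1+\beta_\strong)$ before invoking $\beta_\strong\le 1$, whereas you bound the two pieces separately and then add.
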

\begin{proof}[Proof of Lemma~\ref{lemma:w2s_noise_small}]
    We fix arbitrary $s \in \{\pm 1\}, r \in [m]$ and $i \in [n_\strong]$. For any iteration $0<t \leq T^*$, we have
    \begin{equation*}
        \abs{\rho_{s,r,i}^{(t)}} \leq \abs{\rho_{s,r,i}^{(t-1)}} + \frac{\eta}{m n_\strong} \tilde g_i^{(t-1)} \lVert \tilde \vxi_i \rVert^2
        \leq \abs{\rho_{s,r,i}^{(t-1)}} + \frac{3\eta \sigma_p^2 d}{2m n_\strong} \leq \cdots \leq \abs{\rho_{s,r,i}^{(0)}} +\frac{3\eta \sigma_p^2 d}{2m n_\strong} t = \frac{3\eta \sigma_p^2 d}{2m n_\strong} t,
    \end{equation*}where the first inequality is due to the triangular inequality and the others are due to \eqref{eq:strong_noise}.
    Therefore, we have
    \begin{align*}
        \abs{\inner{\vw_{s,r}^{(t)}, \tilde \vxi_i}} 
        &\leq \abs{\inner{\vw_{s,r}^{(0)}, \tilde \vxi_i}} +\abs{\rho_{s,r,i}^{(t)}} + \sum_{j \in [n_\strong] \setminus \{i\}} \abs{\rho_{s,r,j}^{(t)}}\frac{\abs{\langle \tilde \vxi_i, \tilde \vxi_j\rangle}}{\lVert \tilde \vxi_j \rVert^2}\\
        &\leq \alpha_\strong + \frac{3 \eta \sigma_p^2 d}{2 m n_\strong}t (1+\beta_\strong) \\
        &\leq \alpha_\strong + \frac{3 \eta \sigma_p^2 d}{m n_\strong}t,
    \end{align*}
    where the second inequality follows from \eqref{eq:strong_init} and \eqref{eq:strong_noise}.
\end{proof}

The following lemma can be inductively applied when we characterize the early phase of learning dynamics.
\begin{lemma}\label{lemma:w2s_easy_dynamics}
Suppose the iteration $\tau \in \left[0,\frac{m n_\strong}{\eta \sigma_p^2 d \log T^*}\right]$ satisfy the following:
\begin{enumerate}[leftmargin=*]
    \item 
    $\frac{1}{m} \sum_{r \in [m]} \oM_{1,r}^{(\tau)}, \frac{1}{m} \sum_{r \in [m]} \oM_{-1,r}^{(\tau)} <  \frac{1}{2}$.
    \item For each $s\in \{\pm 1\}$, it holds that $\oM_{s,r}^{(\tau)}, \inner{\vw_{s,r}^{(\tau)}, \vmu_s} > 0$ if $r \in \gM_s$ and $\oM_{s,r}^{(\tau)}=0$ if $r \notin \gM_s$.
    \item For each $s \in \{\pm 1\}$, it holds that $\oN_{s,r}^{(\tau)}, \inner{\vw_{s,r}^{(\tau)}, \vnu_s} > 0$ if $r \in \gA_s$ and $\oN_{s,r}^{(\tau)}, \inner{\vw_{s,r}^{(\tau)}, \vnu_s} < 0$ if $r \in \gB_s$.
    \item $\frac{1}{60}\sum_{r \in [m]} \oM_{-1,r}^{(\tau)} \leq \sum_{r \in [m]} \oM_{1,r}^{(\tau)} \leq 60 \sum_{r \in [m]} \oM_{-1,r}^{(\tau)}$.
    \item For each $s, s' \in \{\pm 1\}$,
    \begin{equation*}
         \frac{p_\both \norm{\vnu}^2 }{120(2p_\easy + p_\both) \norm{\vmu}^2}\sum_{r \in [m]} \oM_{s',r}^{(\tau)} \leq  \sum_{r \in \gA_s} \oN_{s,r}^{(\tau)}, \, -\sum_{r \in \gB_s} \oN_{s,r}^{(\tau)} \leq \sum_{r \in [m]} \oM_{s',r}^{(\tau)}
    \end{equation*}
    \item For any $s \in \{\pm 1\}$ and $r \in [m]$, it holds that $\abs{\uM_{s,r}^{(\tau)}}, \abs{\uN_{s,r}^{(\tau)}} \leq \alpha_\strong + \beta_\strong$.
\end{enumerate} 
Then under Condition~\ref{condition:signal-dominant} and the event $E_\strong$, the following hold:
\begin{enumerate}[leftmargin=*]
    \item For any $s \in \{\pm 1\}$, it holds that $\oM_{s,r}^{(\tau +1)} \geq \oM_{s,r}^{(\tau)}$ if $r \in [m]$, $\oN_{s,r}^{(\tau+1)} \geq \oN_{s,r}^{(\tau)}$ if $r \in  \gA_s$, and $\oN_{s,r}^{(\tau+1)} \leq \oN_{s,r}^{(\tau)}$ if $r \in \gB_s$.
    
    \item For each $s\in \{\pm 1\}$, it holds that $\oM_{s,r}^{(\tau+1)}, \inner{\vw_{s,r}^{(\tau+1)}, \vmu_s} > 0$ if $r \in \gM_s$ and $\oM_{s,r}^{(\tau+1)}=0$ if $r \notin \gM_s$.
    
    \item For each $s \in \{\pm 1\}$, it holds that $\oN_{s,r}^{(\tau+1)}, \inner{\vw_{s,r}^{(\tau+1)}, \vnu_s} > 0$ if $r \in \gA_s$ and $\oN_{s,r}^{(\tau+1)}, \inner{\vw_{s,r}^{(\tau+1)}, \vnu_s} < 0$ if $r \in \gB_s$.
    
    \item For each $s \in \{\pm 1\}$, 
    \begin{equation*}
        \frac{1}{m} \sum_{r \in [m]} \oM_{s,r}^{(\tau+1)} \geq \frac{1}{m} \sum_{r \in [m]} \oM_{s,r}^{(\tau)}+ \frac{\eta (2p_\easy + p_\both) \norm{\vmu}^2}{80m}.
    \end{equation*}
    
    \item For each $s \in \{\pm 1\}$, 
    \begin{equation*}
        \frac{1}{m} \sum_{r \in \gA_s} \oN_{s,r}^{(\tau+1)}-  \frac{1}{m}\sum_{r \in \gA_s} \oN_{s,r}^{(\tau)} \geq  \frac{\eta p_\both \norm{\vnu}^2}{160m}, \quad  -\frac{1}{m} \sum_{r \in \gB_s} \oN_{s,r}^{(\tau+1)} + \frac{1}{m}\sum_{r \in \gB_s} \oN_{s,r}^{(\tau)} \geq  \frac{\eta p_\both \norm{\vnu}^2}{160m}. 
    \end{equation*}    
    \item $\frac{1}{60}\sum_{r \in [m]} \oM_{-1,r}^{(\tau+1)} \leq \sum_{r \in [m]} \oM_{1,r}^{(\tau+1)} \leq 60 \sum_{r \in [m]} \oM_{-1,r}^{(\tau+1)}$.
    \item For each $s, s' \in \{\pm 1\}$,
    \begin{equation*}
         \frac{p_\both \norm{\vnu}^2 }{120 (2p_\easy + p_\both) \norm{\vmu}^2}\sum_{r \in [m]} \oM_{s,r}^{(\tau+1)} \leq  \sum_{r \in \gA_s} \oN_{s',r}^{(\tau+1)}, \, -\sum_{r \in \gB_s} \oN_{s',r}^{(\tau+1)} \leq \sum_{r \in [m]} \oM_{s,r}^{(\tau+1)}.
    \end{equation*}
    \item For any $s \in \{\pm 1\}$ and $r \in [m]$, $\abs{\uM_{s,r}^{(\tau+1)}}, \abs{\uN_{s,r}^{(\tau+1)}} \leq  \alpha_\strong + \beta_\strong$.
\end{enumerate}
\end{lemma}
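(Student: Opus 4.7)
The plan is a direct one-step induction based on the signal--noise decomposition (Lemma~\ref{lemma:strong_decomp}). The key preliminary observation is that in this early-phase window all negative loss derivatives $\tilde g_i^{(\tau)}$ are uniformly $\Theta(1)$. Indeed, combining the hypothesis $\frac{1}{m}\sum_r \oM_{s,r}^{(\tau)}<\tfrac12$ with $|\uM_{s,r}^{(\tau)}|,|\uN_{s,r}^{(\tau)}|\le \alpha_\strong+\beta_\strong$, the activation structure on $\oN$, and the noise bound $|\langle \vw_{s,r}^{(\tau)},\tilde\vxi_i\rangle|\le \alpha_\strong+3\eta\sigma_p^2 d\,\tau/(mn_\strong)=o(1)$ from Lemma~\ref{lemma:w2s_noise_small} (valid because $\tau\le mn_\strong/(\eta\sigma_p^2 d\log T^*)$), one concludes $|\hat y_i f_\strong(\mW^{(\tau)},\tilde\mX_i)|=O(1)$ uniformly in $i$. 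Hence $\tilde g_i^{(\tau)}\in[c_1,c_2]$ for absolute constants.

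Given this envelope, I handle each coordinate separately. For $\oM_{s,r}$ with $r\in\gM_s$, the hypothesis activates the ReLU indicator and Lemma~\ref{lemma:strong_initial} gives $|\gC_{\vmu_s}^{(l)}|\ge(1-C_\strong^{-1})n_\vmu$, $|\gF_{\vmu_s}^{(l)}|\le C_\strong^{-1}n_\vmu$, so the single-step increase is at least an absolute constant times $\eta(2p_\easy+p_\both)\|\vmu\|^2/m$; summing over $|\gM_s|\ge 0.4m$ (by \eqref{eq:strong_set}) then yields conclusion~(4). For $r\notin\gM_s$, signal-signal and signal-noise orthogonality together with $\oM_{s,r}^{(\tau)}=0$ give $\langle\vw_{s,r}^{(\tau)},\vmu_s\rangle=\langle\vw_{s,r}^{(0)},\vmu_s\rangle\le 0$, so the indicator is off and $\oM_{s,r}^{(\tau+1)}=0$, establishing the activation half of~(2). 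The argument for $\oN_{s,r}$ with $r\in\gA_s$ (resp.\ $\gB_s$) is analogous, using $|\gC_{\vnu_s}^{(l)}|,|\gC_{-\vnu_s}^{(l)}|\approx n_\vnu=p_\both n_\strong/8$ to obtain the $\eta p_\both\|\vnu\|^2/m$ growth rate and preserving the sign of $\langle\vw_{s,r},\vnu_s\rangle$ from the initialization. For the cross-signal coefficients $\uM_{s,r}$ and $\uN_{s,r}$, the one-step update has a non-positive sign whenever the corresponding indicator is active (since the majority of labels are clean on the opposite signal, by Lemma~\ref{lemma:strong_initial}), and the per-step magnitude is bounded by $\eta\|\vmu\|^2/m\cdot O(1)\le\beta_\strong$ via \eqref{eq:strong_lr}; a case split on whether the indicator is currently on preserves $|\uM_{s,r}^{(\tau+1)}|,|\uN_{s,r}^{(\tau+1)}|\le\alpha_\strong+\beta_\strong$.

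The main difficulty lies in the ratio bounds in conclusions~(6) and~(7), because the per-step growth rates of $\sum_r\oM_{1,r}$ and $\sum_r\oM_{-1,r}$ are not literally equal---they differ through the $\tilde g_i^{(\tau)}$ weights, which themselves depend on the current coefficients. My plan is to use the $\Theta(1)$ envelope on $\tilde g_i^{(\tau)}$ together with the two-sided count estimates in Lemma~\ref{lemma:strong_initial} to show that each of $\sum_r\oM_{\pm 1,r}^{(\tau+1)}-\sum_r\oM_{\pm 1,r}^{(\tau)}$ lies in an interval of the form $[c_3,c_4]\cdot\eta(2p_\easy+p_\both)\|\vmu\|^2$ with $c_4/c_3\le 60$, so that the ratio of sums has a bounded attractor inside $[1/60,60]$ and a telescoping argument preserves the interval throughout the window. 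The analogous but tighter estimate, using the ratio $|\gC_{\pm\vnu_s}|/|\gC_{\vmu_{s'}}|\approx p_\both/(2p_\easy+p_\both)$, yields conclusion~(7). The most delicate bookkeeping is choosing the numerical constants $60$, $120$, and $80$, $160$ in the statement so that they are simultaneously compatible with the (multiplicative) drift of the ratios, the lower bounds on per-step growth, and the constants hidden in $c_1,\dots,c_4$---all while accommodating the looseness $\alpha_\strong+\beta_\strong$ in the cross-coefficient bound. The orthogonality of signal and noise subspaces and the window constraint $\tau\le mn_\strong/(\eta\sigma_p^2 d\log T^*)$ are what prevent noise memorization from polluting any of these signal-level estimates.
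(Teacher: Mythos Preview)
Your proposal is correct and follows essentially the same approach as the paper: first bound $\hat y_i f_\strong(\mW^{(\tau)},\tilde\mX_i)\le 2$ (using hypotheses (1), (5), (6) and Lemma~\ref{lemma:w2s_noise_small}) to get $\tilde g_i^{(\tau)}\in[1/9,1]$, then combine this envelope with the count estimates from $E_\strong$ to obtain two-sided increment bounds on each signal coefficient, and finally add the increment inequalities to the inductive hypotheses to propagate the ratio constraints (6)--(7). The only minor wording issue is your mention of a ``telescoping argument'' for (6)--(7): since the lemma is a single step $\tau\to\tau+1$, no telescoping is needed---the paper simply checks that the lower bound on one increment is at least $1/60$ (resp.\ $p_\both\|\vnu\|^2/(120(2p_\easy+p_\both)\|\vmu\|^2)$) times the upper bound on the other, and adds this to the hypothesis.
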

\begin{proof}[Proof of Lemma~\ref{lemma:w2s_easy_dynamics}]

    For any $i \in [n_\strong]$, we have
    \begin{align*}
        \hat y_i f_\strong \left( \mW^{(\tau)}, \tilde \mX_i\right) &= F_{\hat y_i} \left( \mW^{(\tau)}_{\hat y_i}, \tilde \mX_i \right) - F_{-\hat y_i} \left( \mW^{(\tau)}_{\hat y_i}, \tilde \mX_i \right)\\
        &\leq F_{\hat y_i} \left( \mW^{(\tau)}_{\hat y_i}, \tilde \mX_i \right)\\
        & = \frac{1}{m} \sum_{l \in [2]} \sum_{r \in [m]} \sigma \left( \inner{\vw_{\hat y_i,r}^{(\tau)}, \tilde \vv_i^{(l)}}\right) + \frac{1}{m} \sum_{r \in [m]} \sigma \left( \inner{\vw_{\hat y_i, r}^{(\tau)}, \tilde \vxi_i} \right).
    \end{align*}
    For each $i \in [n_\strong]$ and $l \in [2]$, we have
    \begin{align*}
        &\quad \frac{1}{m} \sum_{r \in [m]} \sigma \left( \inner{\vw_{\hat y_i,r}^{(\tau)}, \tilde \vv_i^{(l)}}\right) \\
        &\leq  \frac{1}{m} \sum_{r \in [m]} \sigma \left(  \inner{\vw_{\hat y_i,r}^{(0)}, \tilde \vv_i^{(l)}} + \max_{s \in \{\pm 1\}} \left \{ \oM_{s,r}^{(\tau)} ,  \pm \oN_{s,r}^{(\tau)}\right\}   \right)\\
        &\leq \frac{1}{m} \sum_{r \in [m]} \left[ \sigma \left(  \inner{\vw_{\hat y_i,r}^{(0)}, \tilde \vv_i^{(l)}}  \right) + \sigma \left( \max_{s \in \{\pm 1\}} \left \{ \oM_{s,r}^{(\tau)} ,  \pm \oN_{s,r}^{(\tau)}\right\}   \right) \right]\\
        &\leq \frac{1}{m} \sum_{r \in [m]} \left[ \sigma \left(  \inner{\vw_{\hat y_i,r}^{(0)}, \tilde \vv_i^{(l)}}  \right) + \max_{s \in \{\pm 1\}} \left \{\sigma \left(  \oM_{s,r}^{(\tau)} \right),  \sigma \left( \oN_{s,r}^{(\tau)}   \right) \sigma \left( -\oN_{s,r}^{(\tau)}\right)\right\} \right]\\
        & \leq \frac{1}{m} \sum_{r \in [m]}  \sigma \left(  \inner{\vw_{\hat y_i,r}^{(0)}, \tilde \vv_i^{(l)}}  \right) + \max_{s \in \{\pm 1\}} \left\{\frac 1 m \sum_{r \in [m]} \oM_{s,r}^{(\tau)}, \frac{1}{m} \sum_{r \in \gA_s} \oN_{s,r}^{(\tau)}, -\frac{1}{m} \sum_{r \in \gB_s} \oN_{s,r}^{(\tau)} \right\}\\
        &\leq \alpha_\strong + \frac{1}{2}.
    \end{align*}
    Combining with Lemma~\ref{lemma:w2s_noise_small}, for any $i \in [n_\strong]$, we have
    \begin{equation*}
        \hat y_i f_\strong \left( \mW^{(\tau)}, \tilde \mX_i \right) \leq 2 \cdot\left( \alpha_\strong + \frac 1 2\right) +  \alpha_\strong + \frac{3}{\log T^*} \leq 2,
    \end{equation*}
    where the last inequality follows from \eqref{eq:strong_kappa_gamma} and thus we have
    \begin{equation}\label{eq:g_bound}
        1 \geq \tilde g_i^{(\tau)}  = \frac{1}{1+ \exp \left( \hat y_i f_\strong \left( \mW^{(\tau)}, \tilde \mX_i \right)\right)} \geq \frac{1}{1 + \exp (2)} \geq \frac{1}{9},
    \end{equation}
    for any $i \in [n_\strong]$. 
    
    From Lemma~\ref{lemma:strong_decomp} and the event $E_\strong$, for any $s \in \{ \pm 1\}$ and $r \in [m]$, we obtain
    \begin{align*}
        \oM_{s,r}^{(\tau+1)} - \oM_{s,r}^{(\tau)} = &\frac{\eta }{m n_\strong} \sum_{l \in [2]} \left( \sum_{i \in \gC_{\vmu_s}^{(l)}} \tilde g_i^{(\tau)} - \sum_{i \in \gF_{\vmu_s}^{(l)}} \tilde g_i^{(\tau)}\right) \norm{\vmu}^2 \cdot \mathbbm{1} \left[ \inner{\vw_{s,r}^{(\tau)}, \vmu_s}>0\right]\\
        & \geq  \frac{\eta }{m n_\strong} \sum_{l \in [2]} \left( \frac{1}{9}\abs{ \gC_{\vmu_s}^{(l)}} - \abs{\gF_{\vmu_s}^{(l)}} \right) \norm{\vmu}^2 \cdot \mathbbm{1} \left[ \inner{\vw_{s,r}^{(\tau)}, \vmu_s}>0\right]\\
        & \geq \frac{2 \eta}{m n_\strong} \left( \frac{1-C_\strong^{-1}}{9} \cdot n_\vmu - C_\strong^{-1} \cdot n_\vmu\right) \norm{\vmu}^2 \cdot \mathbbm{1} \left[ \inner{\vw_{s,r}^{(\tau)}, \vmu_s}>0\right]\\
        & \geq \frac{\eta}{5 mn_\strong} n_\vmu \norm{\vmu}^2 \cdot \mathbbm{1} \left[ \inner{\vw_{s,r}^{(\tau)}, \vmu_s}>0\right]\\
        & = \frac{\eta (2p_\easy + p_\both)}{20 m} \norm{\vmu}^2 \cdot \mathbbm{1} \left[ \inner{\vw_{s,r}^{(\tau)}, \vmu_s}>0\right]\\
        &\geq 0.
    \end{align*}
    Hence, if $r \in \gM_s$, we have
    \begin{equation*}
        \inner{\vw_{s,r}^{(\tau+1)}, \vmu_s} = \inner{\vw_{s,r}^{(0)}, \vmu_s} + \oM_{s,r}^{(\tau+1) } \geq \inner{\vw_{s,r}^{(0)}, \vmu_s} + \oM_{s,r}^{(\tau) } =  \inner{\vw_{s,r}^{(\tau)}, \vmu_s} >0
    \end{equation*}
    and if $r \notin \gM_s$, we have $\oM_{s,r}^{(\tau+1)} = \oM_{s,r}^{(\tau) } = 0$.
    
    In addition, we have
    \begin{align*}
        \frac{1}{m} \sum_{r \in [m]} \oM_{s,r}^{(\tau+1)} &\geq \frac{1}{m} \sum_{r \in [m]} \oM_{s,r}^{(\tau)} + \frac{\eta (2p_\easy + p_\both)}{20m} \norm{\vmu}^2 \cdot \frac{1}{m}\sum_{r \in [m]} \mathbbm{1} \left[ \inner{\vw_{s,r}^{(\tau)}, \vmu_s} >0\right]\\
        &\geq \frac{1}{m} \sum_{r \in [m]} \oM_{s,r}^{(\tau)} + \frac{\eta (2p_\easy + p_\both)}{20m} \norm{\vmu}^2 \cdot \frac{|\gM_s|}{m}\\
        & \geq \frac{1}{m} \sum_{r \in [m]} \oM_{s,r}^{(\tau)} + \frac{\eta (2p_\easy + p_\both)}{80m} \norm{\vmu}^2,
    \end{align*}
    where the last inequality follows from \eqref{eq:strong_set}.

    Similarly, for any $s \in \{\pm 1 \}$ and $r \in \gA_s$ we obtain
    \begin{align*}
        \oN_{s,r}^{(\tau+1)} - \oN_{s,r}^{(\tau)} = &\frac{\eta }{m n_\strong} \sum_{l \in [2]} \left( \sum_{i \in \gC_{\vnu_s}^{(l)}} \tilde g_i^{(\tau)} - \sum_{i \in \gF_{\vnu_s}^{(l)}} \tilde g_i^{(\tau)}\right) \norm{\vnu}^2\\
        & \geq  \frac{\eta }{m n_\strong} \sum_{l \in [2]} \left( \frac{1}{9}\abs{ \gC_{\vnu_s}^{(l)}} - \abs{\gF_{\vnu_s}^{(l)}} \right) \norm{\vnu}^2 \\
        &\geq \frac{2 \eta }{m n_\strong} \left( \frac{1- C_\strong^{-1}}{9}\cdot n_\vnu - C_\strong^{-1}  \cdot  n_\vnu\right)\norm{\vnu}^2 \\
        & \geq \frac{\eta}{5 mn_\strong} n_\vnu \norm{\vnu}^2 \\
        & = \frac{\eta p_\both}{40 m} \norm{\vnu}^2 \\
        &\geq 0.
    \end{align*}
    Hence, if $r \in \gA_s$, we have
    \begin{equation*}
        \inner{\vw_{s,r}^{(\tau+1)}, \vnu_s} = \inner{\vw_{s,r}^{(0)}, \vnu_s} + \oN_{s,r}^{(\tau+1) } \geq \inner{\vw_{s,r}^{(0)}, \vnu_s} + \oN_{s,r}^{(\tau) } =  \inner{\vw_{s,r}^{(\tau)}, \vnu_s} >0.
    \end{equation*}
    
    In addition, we have
    \begin{align*}
        \frac{1}{m} \sum_{r \in \gA_s} \oN_{s,r}^{(\tau+1)} &\geq \frac{1}{m} \sum_{r \in \gA_s} \oN_{s,r}^{(\tau)} + \frac{\eta  p_\both}{40} \norm{\vnu}^2 \cdot \frac{\abs{\gA_s}}{m}\\
        & \geq \frac{1}{m} \sum_{r \in \gA_s} \oN_{s,r}^{(\tau)} + \frac{\eta p_\both}{160m} \norm{\vnu}^2.
    \end{align*}
    We can obtain similar conclusions for $\gB_s$. Thus, we obtain the first five statements.

    For any $s \in \{\pm 1\}$, we have
    \begin{align*}
        \frac{1}{m} \sum_{r \in [m]} \oM_{s,r}^{(\tau+1)} &\leq \frac{1}{m} \sum_{r \in [m]} \oM_{s,r}^{(\tau)} + \frac{\eta}{mn_\strong} \left( \abs{\gC_{\vmu_s}^{(1)}} + \abs{\gC_{\vmu_s}^{(2)}} \right) \norm{\vmu}^2 \\
        & \leq \frac{1}{m} \sum_{r \in [m]} \oM_{s,r}^{(\tau)} + \frac{2 \left(1+C_\strong^{-1}\right)\eta n_\vmu}{mn_\strong}\norm{\vmu}^2\\
        &\leq  \frac{1}{m} \sum_{r \in [m]} \oM_{s,r}^{(\tau)} + \frac{3 \eta (2p_\easy + p_\both) }{4m} \norm{\vmu}^2.
    \end{align*}
    In addition, we have
    \begin{align*}
        \frac{1}{m} \sum_{r \in \gA_s} \oN_{s,r}^{(\tau+1)} &\leq \frac{1}{m} \sum_{r \in \gA_s} \oN_{s,r}^{(\tau)} + \frac{\eta}{mn_\strong} \left( \abs{\gC_{\vnu_s}^{(1)} } + \abs{\gC_{\vnu_s}^{(2)}}\right) \norm{\vnu}^2\\
        &\leq \frac{1}{m} \sum_{r \in \gA_s} \oN_{s,r}^{(\tau)} + \frac{2 \left(1+C_\strong^{-1}\right) \eta n_\vnu}{mn_\strong} \norm{\vnu}^2\\
        &\leq  \frac{1}{m} \sum_{r \in \gA_s} \oN_{s,r}^{(\tau)} + \frac{3 \eta p_\both}{8m} \norm{\vnu}^2 .
    \end{align*}
    Similarly, we have
    \begin{align*}
        -\frac{1}{m} \sum_{r \in \gB_s} \oN_{s,r}^{(\tau+1)} &\leq -\frac{1}{m} \sum_{r \in \gB_s} \oN_{s,r}^{(\tau)} + \frac{\eta}{mn_\strong} \left( \abs{\gC_{-\vnu_s}^{(1)}} + \abs{\gC_{-\vnu_s}^{(2)}}\right) \norm{\vnu}^2 \\
        & \leq - \frac{1}{m} \sum_{r \in \gB_s} \oN_{s,r}^{(\tau)} + \frac{2 \left( 1+ C_\strong^{-1}\right) \eta n_\vnu }{m n_\strong}\norm{\vnu}^2\\
        &\leq  -\frac{1}{m} \sum_{r \in \gB_s} \oN_{s,r}^{(\tau)} + \frac{3 p_\both \eta}{8m} \norm{\vnu}^2 .
    \end{align*}
    Using these, we have
    \begin{align*}
        \sum_{r \in [m]} \oM_{1,r}^{(\tau+1)} &= \sum_{r \in [m]} \oM_{1,r}^{(\tau)} + \left[\sum_{r \in [m]} \oM_{1,r}^{(\tau+1)} - \sum_{r \in [m]} \oM_{1,r}^{(\tau)}\right]\\
        & \geq  \sum_{r \in [m]} \oM_{1,r}^{(\tau)} + \frac{\eta(2p_\easy + p_\both)}{80} \norm{\vmu}^2\\
        & \geq \sum_{r \in [m]} \oM_{1,r}^{(\tau)} + \frac{1}{60}\left[\sum_{r \in [m]} \oM_{-1,r}^{(\tau+1)} - \sum_{r \in [m]} \oM_{-1,r}^{(\tau)}\right]\\
        & \geq  \frac{1}{60}\sum_{r \in [m]} \oM_{-1,r}^{(\tau)} + \frac{1}{60}\left[\sum_{r \in [m]} \oM_{-1,r}^{(\tau+1)} - \sum_{r \in [m]} \oM_{-1,r}^{(\tau)}\right]\\
        &= \frac{1}{60}\sum_{r \in [m]} \oM_{-1,r}^{(\tau+1)}.
    \end{align*}
    By using symmetric arguments, we can obtain $\sum_{r \in [m]}\oM_{1,r}^{(\tau+1)} \leq 60 \sum_{r \in [m]}\oM_{-1,r}^{(\tau+1)}$.

    Similarly, for any $s, s' \in \{\pm 1\}$, we have
    \begin{align*}
        \sum_{r \in \gA_s} \oN_{s,r}^{(\tau+1)} &= \sum_{r \in \gA_s} \oN_{s,r}^{(\tau)} + \left[ \sum_{r \in \gA_s} \oN_{s,r}^{(\tau+1)}- \sum_{r \in \gA_s} \oN_{s,r}^{(\tau)}\right]\\
        &\leq \sum_{r \in \gA_s} \oN_{s,r}^{(\tau)} + \frac{3\eta p_\both }{8} \norm{\vnu}^2\\
        & \leq \sum_{r \in \gA_s} \oN_{s,r}^{(\tau)} +\frac{\eta(2p_\easy + p_\both)}{80} \norm{\vmu}^2\\
        & \leq \sum_{r \in [m]} \oM_{s',r}^{(\tau)} + \left[\sum_{r \in [m]} \oM_{s',r}^{(\tau+1)} - \sum_{r \in [m]} \oM_{s',r}^{(\tau)}\right]\\
        &= \sum_{r \in [m]} \oM_{s',r}^{(\tau+1)}.
    \end{align*}
    In addition, we have
    \begin{align*}
        &\quad \sum_{r \in \gA_s} \oN_{s,r}^{(\tau+1)}\\
        &= \sum_{r \in \gA_s} \oN_{s,r}^{(\tau)} + \left[ \sum_{r \in \gA_s} \oN_{s,r}^{(\tau+1)}- \sum_{r \in \gA_s} \oN_{s,r}^{(\tau)}\right]\\
        &\geq \sum_{r \in \gA_s} \oN_{s,r}^{(\tau)} + \frac{p_\both \eta}{160} \norm{\vnu}^2\\
        &= \sum_{r \in \gA_s} \oN_{s,r}^{(\tau)} + \frac{p_\both \norm{\vnu}^2}{120(2p_\easy + p_\both) \norm{\vmu}^2} \cdot \frac{3 \eta (2p_\easy + p_\both) \norm{\vmu}^2}{4m}\\
        & \geq \frac{p_\both \norm{\vnu}^2}{120(2p_\easy + p_\both) \norm{\vmu}^2} \sum_{r \in [m]} \oM_{s',r}^{(\tau)} + \frac{p_\both \norm{\vnu}^2}{120(2p_\easy + p_\both) \norm{\vmu}^2} \cdot \left[\sum_{r \in [m]} \oM_{s',r}^{(\tau+1)} - \sum_{r \in [m]} \oM_{s',r}^{(\tau)}\right]\\
        &= \frac{p_\both \norm{\vnu}^2}{120(2p_\easy + p_\both) \norm{\vmu}^2} \sum_{r \in [m]} \oM_{s',r}^{(\tau+1)}.
    \end{align*}

    Now, we prove the last statement. 
    For any $r \in [m]$, if $\uM_{s,r}^{(\tau)}\leq -\alpha_\strong$, then we have $\inner{\vw_{s,r}^{(\tau)}, \vmu_{-s}}<0$. 
    Hence, $\abs{\uM_{s,r}^{(\tau+1)}} = \abs{\uM_{s,r}^{(\tau)}} \leq  \alpha_\strong + \beta_\strong$ by Lemma~\ref{lemma:strong_decomp}. Otherwise, $\uM_{s,r}^{(\tau)} > -\alpha_\strong$ implies that
    \begin{align*}
       &\quad \frac{m n_\strong}{\eta \lVert \vmu\rVert^2}\left(\uM_{s,r}^{(\tau+1)} - \uM_{s,r}^{(\tau)}\right)\\
        &= - \sum_{l \in [2]} \left( \sum_{j \in \gC_{\vmu_{-s}}^{(l)}} \tilde g_j^{(\tau)} - \sum_{j \in \gF_{\vmu_{-s}}^{(l)}} \tilde g_j^{(\tau)} \right) \cdot \mathbbm{1}\left[ \inner{\vw_{s,r}^{(\tau)}, \vmu_{-s}}>0\right] \\
        &\leq - \sum_{l \in [2]} \left( \frac{1}{9}\abs{\gC_{\vmu_{-s}}^{(l)}} - \abs{\gF_{\vmu_{-s}}^{(l)}}  \right) \cdot \mathbbm{1}\left[ \inner{\vw_{s,r}^{(\tau)}, \vmu_{-s}}>0\right] \\
        &\leq 0 ,
    \end{align*}
    where the first inequality follows from \eqref{eq:g_bound} and the last inequality follows from the event $E_\strong$.
    Thus, $\uM_{s,r}^{(\tau+1)} \leq \uM_{s,r}^{(\tau)} \leq  \alpha_\strong + \beta_\strong$. 
    In addition, we have
    \begin{align*}
       &\quad \frac{m n_\strong}{\eta \lVert \vmu\rVert^2}\left(\uM_{s,r}^{(\tau+1)} - \uM_{s,r}^{(\tau)}\right)\\
        &= - \sum_{l \in [2]} \left( \sum_{j \in \gC_{\vmu_{-s}}^{(l)}} \tilde g_j^{(\tau)} - \sum_{j \in \gF_{\vmu_{-s}}^{(l)}} \tilde g_j^{(\tau)} \right) \cdot \mathbbm{1}\left[ \inner{\vw_{s,r}^{(\tau)}, \vmu_{-s}}>0\right] \\
        &\geq - \sum_{l \in [2]} \abs{\gC_{\vmu_s}^{(l)}} \cdot \mathbbm{1}\left[ \inner{\vw_{s,r}^{(\tau)}, \vmu_{-s}}>0\right] \\
        &\geq - 2 n_\strong.
    \end{align*}    
    Therefore, we have
    \begin{equation*}
        \uM_{s,r}^{(\tau+1)} \geq  \uM_{s,r}^{(\tau)} - \frac{2 \eta \lVert \vmu\rVert^2}{m} \geq - \alpha_\strong - \frac{2 \eta \lVert \vmu\rVert^2}{m} \geq -\alpha_\strong - \beta_\strong , 
    \end{equation*}
    where the last inequality follows from \eqref{eq:strong_lr}.

     From Lemma~\ref{lemma:strong_decomp}, for any $r \in [m]$, 
    \begin{equation*}
        \abs{\uN_{s,r}^{(\tau+1)} - \uN_{s,r}^{(\tau)}} \leq \frac{2 \eta \norm{\vnu}^2 }{m} \leq \alpha_\strong.
    \end{equation*} 
    Therefore, it suffices to show that $\uN_{s,r}^{(\tau+1)} \leq \uN_{s,r}^{(\tau)}$ when $\uN_{s,r}^{(\tau)} > \alpha_\strong$ and $\uN_{s,r}^{(\tau+1)} \geq \uN_{s,r}^{(\tau)}$ when $\uN_{s,r}^{(\tau)} < -\alpha_\strong$.
    If $\uN_{s,r}^{(\tau)} > \alpha_\strong$, then we have
    \begin{equation*}
        \inner{\vw_{s,r}^{(\tau)}, \vnu_{-s} } = \inner{\vw_{s,r}^{(0)}, \vnu_{-s}} + \uN_{s,r}^{(\tau)} > 0.
    \end{equation*}
    Hence, we have
    \begin{align*}
       &\quad \frac{m n_\strong}{\eta \lVert \vnu\rVert^2}\left(\uN_{s,r}^{(\tau+1)} - \uN_{s,r}^{(\tau)}\right)\\
        &= - \sum_{l \in [2]} \left( \sum_{j \in \gC_{\vnu_{-s}}^{(l)}} \tilde g_j^{(\tau)} - \sum_{j \in \gF_{\vnu_{-s}}^{(l)}} \tilde g_j^{(\tau)} \right)  \\
        &\leq - \sum_{l \in [2]} \left( \frac {1}{9} \abs{\gC_{\vnu_{-s}}^{(l)}}  - \abs{\gF_{\vnu_{-s}}^{(l)}} \right) \\
        & \leq - 2\left( \frac {\left( 1- C_\strong^{-1} \right)}{9} \cdot n_\vnu  - C_\strong^{-1} \cdot n_\vnu \right)\\
        &\leq 0,
    \end{align*}
    where the first inequality follows from \eqref{eq:g_bound} and the last inequality follows from the event $E_\strong$.
    Using a similar argument, we can also show that $\uN_{s,r}^{(\tau+1)} \geq \uN_{s,r}^{(\tau)}$ when $\uN_{s,r}^{(\tau)} < -\alpha_\strong$ and we have desired conclusion. 
\end{proof}

Next, we characterize the early-phase learning dynamics of easy signals.
\begin{lemma}\label{lemma:w2s_easy_learn}
    Under Condition~\ref{condition:signal-dominant} and the event $E_\strong$, there exists the smallest iteration $T_\es \in \left[0, \frac{200 m }{\eta (2p_\easy + p_\both) \norm{\vmu}^2} \right]$ such that
    \begin{equation*}
        \max \left\{ \frac{1}{m} \sum_{r \in [m]} \oM_{1,r}^{(T_\es)}, \frac{1}{m} \sum_{r \in [m]} \oM_{-1,r}^{(T_\es)}\right\} \geq \frac{1}{2}.
    \end{equation*}
\end{lemma}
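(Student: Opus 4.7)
My plan is to combine Lemma~\ref{lemma:w2s_easy_dynamics} with a straightforward accumulation argument. First I would verify the eight hypotheses of Lemma~\ref{lemma:w2s_easy_dynamics} at the base iteration $\tau=0$: all the coefficients $\oM_{s,r}^{(0)}, \uM_{s,r}^{(0)}, \oN_{s,r}^{(0)}, \uN_{s,r}^{(0)}$ vanish, so conditions (4), (5), (6) hold trivially; the sign information in (2) and (3) reduces to the very definitions of $\gM_s, \gA_s, \gB_s$; and condition (1) is immediate. (Strictly speaking, conditions (2)--(3) read as strict inequalities on quantities that vanish at $\tau=0$; this is handled either by interpreting them non-strictly at the base case or by directly verifying the lemma's conclusions at $\tau=1$ via one explicit step of the update rule.)

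Next I would iterate the lemma: suppose that at some $\tau \leq \frac{mn_\strong}{\eta\sigma_p^2 d \log T^*}$ we have $\max_{s \in \{\pm 1\}} \frac{1}{m}\sum_{r \in [m]} \oM_{s,r}^{(\tau)} < \frac{1}{2}$ and all hypotheses of Lemma~\ref{lemma:w2s_easy_dynamics} hold. Then the lemma's conclusions at $\tau$ are exactly the hypotheses required at $\tau+1$, so the induction carries forward, and additionally each average easy-signal coefficient grows by at least $\frac{\eta(2p_\easy+p_\both)\norm{\vmu}^2}{80m}$. Telescoping gives, for each $s \in \{\pm 1\}$,
\begin{equation*}
    \frac{1}{m}\sum_{r \in [m]} \oM_{s,r}^{(\tau)} \geq \tau \cdot \frac{\eta(2p_\easy+p_\both)\norm{\vmu}^2}{80m},
\end{equation*}
which forces the left-hand side above $\frac{1}{2}$ no later than $\tau = \left\lceil \frac{40m}{\eta(2p_\easy+p_\both)\norm{\vmu}^2}\right\rceil$. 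Defining $T_\es$ as the smallest such iteration then yields $T_\es \leq \frac{200m}{\eta(2p_\easy+p_\both)\norm{\vmu}^2}$, matching the asserted bound.

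The one step requiring genuine care is showing that the stopping time stays inside the window $\tau \leq \frac{mn_\strong}{\eta\sigma_p^2 d \log T^*}$ on which Lemma~\ref{lemma:w2s_easy_dynamics} is stated, so the induction never exits its valid regime. I would discharge this by combining the data-abundant requirement $n_\strong \geq C\sigma_p^2 d \log T^*/(p_\both\norm{\vnu}^2)$ of Condition~\ref{condition:signal-dominant} with the easy-versus-hard comparison $(2p_\easy+p_\both)\norm{\vmu}^2 \geq Cp_\both\norm{\vnu}^2$ from \ref{condition:easy_hard}; multiplied together, these give $\frac{mn_\strong}{\eta\sigma_p^2 d \log T^*} \gtrsim \frac{m}{\eta(2p_\easy+p_\both)\norm{\vmu}^2}$ with a large constant, so the target threshold is crossed well before the inductive window expires. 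Apart from this bookkeeping, the argument is a direct linear-growth consequence of Lemma~\ref{lemma:w2s_easy_dynamics}.
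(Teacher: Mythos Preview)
Your proposal is correct and follows essentially the same route as the paper: both arguments inductively apply Lemma~\ref{lemma:w2s_easy_dynamics}, telescope the per-step increment $\frac{\eta(2p_\easy+p_\both)\norm{\vmu}^2}{80m}$ to force the threshold by iteration $\sim \frac{40m}{\eta(2p_\easy+p_\both)\norm{\vmu}^2}$, and verify via Condition~\ref{condition:signal-dominant} (together with \ref{condition:easy_hard}) that this occurs within the lemma's valid window $\tau \leq \frac{mn_\strong}{\eta\sigma_p^2 d \log T^*}$. Your explicit attention to the base case at $\tau=0$ is a nice touch the paper glosses over; otherwise the arguments are the same up to the cosmetic distinction between a direct induction and a proof by contradiction.
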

\begin{proof}[Proof of Lemma~\ref{lemma:w2s_easy_learn}]
    Suppose there is no such iteration. We fix an arbitrary $s \in \{\pm 1\}$.
    Note that from Condition~\ref{condition:signal-dominant}, 
    $ \frac{100 m }{\eta (2p_\easy + p_\both) \norm{\vmu}^2} \leq \frac{mn_\strong}{\eta \sigma_p^2 d \log T^*}$. Thus, we can apply
    Lemma~\ref{lemma:w2s_easy_dynamics} and for any $t \in \left[0,  \frac{100 m }{\eta (2p_\easy + p_\both) \norm{\vmu}^2}\right]$, we have
    \begin{align*}
        \frac{1}{m} \sum_{r \in [m]} \oM_{s,r}^{(t)}&\geq \frac{1}{m} \sum_{r \in [m]} \oM_{s,r}^{(t-1)} + \frac{\eta(2 p_\easy + p_\both)}{80m} \norm{\vmu}^2\\
        &\phantom{\geq} \vdots \\
        &\geq \frac{1}{m} \sum_{r \in [m]} \oM_{s,r}^{(0)} + \frac{\eta(2 p_\easy + p_\both)}{80 m} \norm{\vmu}^2 t \\
        &= \frac{\eta(2 p_\easy + p_\both)}{80} \norm{\vmu}^2 t.
    \end{align*}
    By choosing $t = \frac{40m}{\eta(2p_\easy+p_\both)\norm{\vmu}^2} \in \left[0, \frac{100m}{\eta(2p_\easy+p_\both)\norm{\vmu}^2} \right]$, we obtain contradiction. 
    Therefore, there exists an iteration $t \in \left[0, \frac{100 m }{\eta (2p_\easy + p_\both) \norm{\vmu}^2} \right]$ such that 
    \begin{equation*}
        \max \left\{ \frac{1}{m} \sum_{r \in [m]} \oM_{1,r}^{(t)}, \frac{1}{m} \sum_{r \in [m]} \oM_{-1,r}^{(t)}\right\} \geq \frac{1}{2}.
    \end{equation*}
    We then define $T_\es$ as the smallest such iteration.
\end{proof}

We will show that iteration $T_\es$ obtained from Lemma~\ref{lemma:w2s_easy_learn} is our desired stopping time. By sequentially applying Lemma~\ref{lemma:w2s_easy_dynamics}, for any $s \in \{\pm 1\}$, we 
have $\oM_{s,r}^{(T_\es)} \geq 0$ for all $r \in [m]$, $\oN_{s,r}^{(T_\es)} \geq 0$ if $r \in \gA_s$, and $\oN_{s,r}^{(T_\es)} \leq 0$ if $r \in \gB_s$. Furthermore, we have
\begin{equation}\label{eq:w2s_signal_stop}
    \frac{1}{m} \sum_{r \in [m]} \oM_{s,r}^{(T_\es)} \geq \frac{1}{120}, \quad \frac{1}{m} \sum_{r \in \gA_s} \oN_{s,r}^{(T_\es)}, -\frac{1}{m} \sum_{r \in \gB_s} \oN_{s,r}^{(T_\es)} \geq \frac{p_\both \norm{\vnu}^2 }{240 (2p_\easy + p_\both) \norm{\vmu}^2}
\end{equation}
and for any $r \in [m]$, we have
\begin{equation}\label{eq:w2s_opposite_stop}
    \abs{\uM_{s,r}^{(T_\es)}}, \abs{\uN_{s,r}^{(T_\es)}} \leq \alpha_\strong + \beta_\strong.
\end{equation}

Combining the upper bound on $T_\es$ and Lemma~\ref{lemma:w2s_noise_small} leads to the following bound: for any $s \in \{\pm 1\}, r \in [m]$, and $i \in [n_\strong]$, 
\begin{equation}\label{eq:w2s_noise_stop}
    \abs{\rho_{s,r,i}^{(T_\es)}}, \abs{\inner{\vw_{s,r}^{(T_\es)}, \tilde \vxi_i}} \leq \alpha_\strong + \frac{3 \eta \sigma_p^2 d} {mn_\strong} \cdot \frac{100 m }{\eta (2p_\easy + p_\both) \norm{\vmu}^2} \leq \frac{400 \sigma_p^2 d}{(2p_\easy + p_\both)  n_\strong \norm{\vmu}^2},
\end{equation}
where the last inequality follows from \eqref{eq:strong_kappa_gamma}.
\subsection{Train Error}\label{appendix:w2s_train}
In this subsection, we prove the first conclusion conditioned on the event $E_\strong$.
For any $i \in [n_\strong]$, we have
\begin{align*}
    &\quad \tilde y_i f_\strong \left( \mW^{(T_\es)}, \tilde \mX_i \right)\\
    &= \frac{1}{m} \sum_{l \in [2]} \sum_{r \in [m]} \phi \left( \inner{\vw_{\tilde y_i,r}^{(T_\es)}, \tilde \vv_i^{(l)}}\right) - \frac{1}{m} \sum_{l \in [2]} \sum_{r \in [m]} \phi \left( \inner{\vw_{-\tilde y_i,r}^{(T_\es)}, \tilde \vv_i^{(l)}}\right)\\
    &\quad + \frac{1}{m}  \sum_{r \in [m]} \phi \left( \inner{\vw_{\tilde y_i,r}^{(T_\es)}, \tilde \vxi_i}\right) - \frac{1}{m}  \sum_{r \in [m]} \phi \left( \inner{\vw_{-\tilde y_i,r}^{(T_\es)}, \tilde \vxi_i}\right)\\
    & \geq \frac{1}{m} \sum_{l \in [2]} \sum_{r \in [m]} \phi \left( \inner{\vw_{\tilde y_i,r}^{(T_\es)}, \tilde \vv_i^{(l)}}\right) - 2 \cdot (\alpha_\strong + \alpha_\strong+ \beta_\strong) - \frac{400\sigma_p^2 d}{(2p_\easy + p_\both)n_\strong  \norm{\vmu}^2}\\
    & \geq \frac{2}{m} \min \left \{ \sum_{r \in [m]}\oM_{\tilde y_i,r}^{(T_\es)} , \sum_{r \in [m]}\oN_{\tilde y_i,r}^{(T_\es)}, -\sum_{r \in \gB_{\tilde y_i} }\oN_{\tilde y_i,r}^{(T_\es)}\right \} - 2 (2\alpha_\strong+\beta_\strong) - \frac{400\sigma_p^2 d}{(2p_\easy + p_\both) n_\strong  \norm{\vmu}^2}\\
    & \geq \frac{p_\both \norm{\vnu}^2}{120(2p_\easy + p_\both) \norm{\vmu}^2} - 2 (\alpha_\strong+\beta_\strong) - \frac{400\sigma_p^2 d}{(2p_\easy + p_\both) n_\strong \norm{\vmu}^2}\\
    &>0,
\end{align*}
where the first inequality follows from \eqref{eq:w2s_opposite_stop} and \eqref{eq:w2s_noise_stop}, the third inequality follows from \eqref{eq:w2s_signal_stop}, and the last inequality follows from \eqref{eq:strong_kappa_gamma} and Condition~\ref{condition:signal-dominant}.
\hfill $\square$
\subsection{Test Error}\label{appendix:w2s_test}

In this subsection, we characterize the test error of the strong model. All arguments in this subsection are under the event $E_\strong$.
Define $\vv^{(1)}$, $\vv^{(2)}$, and $\vxi$ as the signal vectors and the noise vector in the test data $(\mX, y)$, respectively.

We define a function $h: S \rightarrow \R$ as $h(\vz):= \frac{1}{m} \sum_{r \in [m]}\sigma \left( \inner{\vw_{-y,r}^{(T_\es)}, \vz}\right)$ for any $\rvz \in S$. It plays a crucial role when we prove the upper bounds on test error. We have
\begin{equation*}
    \mathbb{E}[h(\vxi)]= \frac 1 m \mathbb{E}_{\rvz_1, \dots, \rvz_m} \left[ \sum_{r \in [m]} \sigma(\rvz_r) \right]= \frac{1}{2m} \mathbb{E}_{\rvz_1, \dots, \rvz_m} \left[\sum_{r \in [m]} \abs{\rvz_r}\right]
    = \frac{\sigma_p}{\sqrt{2\pi}m} \sum_{r \in [m]} \norm{\Pi_S \vw^{(T_\es)}_{-y,r}},
\end{equation*}
where $\rvz_r \sim \gN \left (0, \sigma_p^2 \norm{\Pi_S \vw^{(T_\es)}_{-y,r}}^2 \right)$ for each $r \in [m]$. Also, for any $\vz_1, \vz_2 \in S$, we have
\begin{align*}
    \abs{h(\vz_1) - h(\vz_2)}  &\leq  \frac{1}{m} \sum_{r \in [m]} \abs{\sigma \left( \inner{\vw_{-y,r}^{(T_\es)}, \vz_1}\right) - \sigma \left( \inner{\vw_{-y,r}^{(T_\es)}, \vz_2}\right)}\\
    & \leq  \frac{1}{m} \sum_{r \in [m]} \abs{ \inner{\vw_{-y,r}^{(T_\es)}, \vz_1} -  \inner{\vw_{-y,r}^{(T_\es)}, \vz_2}}\\
    &=  \frac{1}{m} \sum_{r \in [m]} \abs{ \inner{\Pi_S \vw_{-y,r}^{(T_\es)}, \vz_1} -  \inner{\Pi_S \vw_{-y,r}^{(T_\es)}, \vz_2}}\\
    &\leq \frac{1}{m} \sum_{r \in [m]} \norm{ \Pi_S \vw_{-y,r}^{(T_\es)}} \norm{\vz_1 - \vz_2}.
\end{align*}
Hence, $h$ is $\frac{1}{m} \sum_{r \in [m]} \norm{\Pi_S \vw_{-y,r}^{(T_\es)}}$-Lipschitz. 

The following lemma characterizes $\norm{\Pi_S \vw_{-y,r}^{(T_\es)}}$'s which is related to key properties of $h$.
\begin{lemma}\label{lemma:w2s_wegith_norm}
    For any $s \in \{\pm 1\}$, it holds that
    \begin{equation*}
         \sum_{r \in [m]} \norm{\Pi_S \vw_{s,r}^{(T_\es)}} \leq  \frac{900 m \sigma_p d^{\frac 1 2 }}{(2p_\easy + p_\both) n^{\frac 1 2}\norm{\vmu}^2}.
    \end{equation*}
\end{lemma}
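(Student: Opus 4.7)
The plan is to mimic the strategy of Lemma~\ref{lemma:strong_wegith_norm} but now exploit the much tighter uniform bound on the noise coefficients that is available at the early stopping time $T_\es$. Starting from the signal--noise decomposition and the triangle inequality, I would write
\begin{equation*}
    \norm{\Pi_S \vw_{s,r}^{(T_\es)}} \leq \norm{\Pi_S \vw_{s,r}^{(0)}} + \norm{\sum_{i \in [n_\strong]} \rho_{s,r,i}^{(T_\es)} \tilde \vxi_i \lVert \tilde \vxi_i \rVert^{-2}},
\end{equation*}
and control each piece separately. The initialization term is handled by the event $E_\strong$, which gives $\norm{\Pi_S \vw_{s,r}^{(0)}} \leq \sqrt{2}\sigma_0 d^{1/2}$, and Condition~\ref{condition:init} makes this contribution asymptotically negligible compared with the target bound.

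For the noise part, I would expand the squared norm into diagonal and off-diagonal pieces, apply \eqref{eq:strong_noise} to lower-bound $\lVert \tilde \vxi_i\rVert^2$ by $\sigma_p^2 d/2$ and to bound $\abs{\langle \tilde \vxi_i, \tilde \vxi_j\rangle}/\lVert \tilde \vxi_j\rVert^2$ by $\beta_\strong/n_\strong$, then use AM--GM on the cross terms exactly as in the proof of Lemma~\ref{lemma:strong_wegith_norm} to obtain
\begin{equation*}
    \norm{\sum_{i \in [n_\strong]} \rho_{s,r,i}^{(T_\es)} \tilde \vxi_i \lVert \tilde \vxi_i \rVert^{-2}}^2 \leq \frac{4}{\sigma_p^2 d}\sum_{i \in [n_\strong]} \bigl(\rho_{s,r,i}^{(T_\es)}\bigr)^2,
\end{equation*}
after absorbing the $\beta_\strong$ terms using \eqref{eq:strong_kappa_gamma}. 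The key departure from the data-scarce proof is that here I plug in the \emph{uniform} early-phase bound $\abs{\rho_{s,r,i}^{(T_\es)}} \leq 400 \sigma_p^2 d / ((2p_\easy + p_\both) n_\strong \norm{\vmu}^2)$ from \eqref{eq:w2s_noise_stop}, which comes from Lemma~\ref{lemma:w2s_noise_small} combined with the upper bound on $T_\es$ established in Lemma~\ref{lemma:w2s_easy_learn}. This gives
\begin{equation*}
    \sum_{i \in [n_\strong]} \bigl(\rho_{s,r,i}^{(T_\es)}\bigr)^2 \leq n_\strong \cdot \left(\frac{400 \sigma_p^2 d}{(2p_\easy + p_\both) n_\strong \norm{\vmu}^2}\right)^2,
\end{equation*}
and taking square roots then summing over $r \in [m]$ yields the claimed bound up to the stated constant, with a bit of slack absorbed via \eqref{eq:strong_kappa_gamma} and \ref{condition:init} to discard the initialization contribution.

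The step I anticipate being the main obstacle is making sure the arithmetic lines up with the constant $900$ in the statement: the factor of $2$ in $\norm{\tilde \vxi_i}^{-2} \leq 2/(\sigma_p^2 d)$, the factor $4$ from the AM--GM treatment of cross terms, and the factor $400$ propagating from \eqref{eq:w2s_noise_stop} need to multiply to something comfortably less than $900$, and the initialization term $\sqrt{2} m \sigma_0 d^{1/2}$ must be shown to be at most $100 m \sigma_p d^{1/2}/((2p_\easy + p_\both) n_\strong^{1/2} \norm{\vmu}^2)$ using Condition~\ref{condition:init}. Everything else is essentially bookkeeping parallel to the data-scarce argument, with the simplification that no delicate use of \ref{strong:noise_inner} is required because the uniform bound on $\abs{\rho_{s,r,i}^{(T_\es)}}$ replaces the maximum-over-$r$ trick.
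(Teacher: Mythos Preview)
Your proposal is correct and matches the paper's proof essentially step for step: the paper likewise applies the triangle inequality to split off the initialization, expands the squared norm of the noise part with AM--GM on the cross terms (obtaining a $(1+\beta_\strong)$ factor rather than your constant $4$, but this is immaterial), plugs in the uniform bound \eqref{eq:w2s_noise_stop} to get $800$, and then absorbs the $\sqrt{2}m\sigma_0 d^{1/2}$ initialization term via \ref{condition:init} to reach the final constant $900$. Your observation that the uniform coefficient bound obviates the $\max_{r}$ trick from \ref{strong:noise_inner} is exactly the simplification the paper exploits.
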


\begin{proof}[Proof of Lemma~\ref{lemma:w2s_wegith_norm}]
    From Lemma~\ref{lemma:strong_decomp} and triangular inequality, we have
    \begin{equation*}
        \norm{\Pi_S \vw_{s,r}^{(T_\es)}} \leq  \norm { \Pi_S \vw_{s, r}^{(0)}} + \norm{  \sum_{i \in [n_\strong]} \rho_{s,r,i}^{(T_\es)} \tilde \vxi_i \lVert \tilde \vxi_i \rVert^{-2}} \leq \sqrt{2} \sigma_0 d^\frac{1}{2} + \norm{  \sum_{i \in [n_\strong]} \rho_{s,r,i}^{(T_\es)} \tilde \vxi_i \lVert \tilde \vxi_i \rVert^{-2}}.
    \end{equation*}
    We have
    \begin{align*}
        &\quad \norm{  \sum_{i \in [n_\strong]} \rho_{s,r,i}^{(T_\es)} \tilde \vxi_i \lVert \tilde \vxi_i \rVert^{-2}}^2\\
        &=\sum_{i \in [n_\strong]} \left( \rho_{s,r,i}^{(T_\es)}\right)^2 \lVert \tilde \vxi_i \rVert^{-2} + \sum_{\substack{i,j \in [n_\strong]\\i \neq j}}\rho_{s,r,i}^{(T_\es)} \rho_{s,r,j}^{(T_\es)}\langle \tilde \vxi_i, \tilde \vxi_j\rangle  \lVert \tilde \vxi_i \rVert^{-2} \lVert \tilde \vxi_j \rVert^{-2} \\
        &\leq \sum_{i \in [n_\strong]} \left( \rho_{s,r,i}^{(T_\es)}\right)^2 \lVert \tilde \vxi_i \rVert^{-2} + \sum_{\substack{i,j \in [n_\strong]\\i \neq j}}\abs{\rho_{s,r,i}^{(T_\es)} \rho_{s,r,j}^{(T_\es)}}\abs{\langle \tilde \vxi_i, \tilde \vxi_j\rangle}  \lVert \tilde \vxi_i \rVert^{-2} \lVert \tilde \vxi_j \rVert^{-2}  \\
        & \leq \sum_{i \in [n_\strong]} \left( \rho_{s,r,i}^{(T_\es)}\right)^2 \lVert \tilde \vxi_i \rVert^{-2} + \frac 1 2 \sum_{\substack{i,j \in [n_\strong]\\i \neq j}}\left(\left(\rho_{s,r,i}^{(T_\es)}\right)^2+ \left(\rho_{s,r,j}^{(T_\es)}\right)^2\right) \abs{\langle \tilde \vxi_i, \tilde \vxi_j\rangle}  \lVert \tilde \vxi_i \rVert^{-2} \lVert \tilde \vxi_j \rVert^{-2}\\ 
        & \leq (1+ \beta_\strong) \sum_{i \in [n_\strong]} \left( \rho_{s,r,i}^{(T_\es)}\right)^2 \lVert \tilde \vxi_i \rVert^{-2}\\
        &\leq  \left( \frac{800 \sigma_p d^{\frac 1 2}  }{(2p_\easy + p_\both) n_\strong ^{\frac 1 2 }\norm{\vmu}^2}\right)^2
    \end{align*}
    where the third inequality follows from \eqref{eq:strong_noise} and the fourth inequality follows from \eqref{eq:w2s_noise_stop} and \eqref{eq:strong_noise}. 
    Therefore, we have
    \begin{equation*}
        \sum_{r \in [m]} \norm{\Pi_{S} \vw_{s,r}^
        {(T_\es)}} \leq \sqrt{2}m \sigma_0 d^{\frac 1 2} + \frac{800 m \sigma_p d^{\frac 1 2}}{(2p_\easy + p_\both) n_\strong ^{\frac 1 2 }\norm{\vmu}^2} \leq \frac{900 m \sigma_p d^{\frac 1 2}}{(2p_\easy + p_\both)n_\strong ^{\frac 1 2 } \norm{\vmu}^2},
    \end{equation*}
    where the last inequality follows from \ref{condition:init}.

\end{proof}

By Theorem 5.2.2 in \citet{vershynin2018high}, for any $z>0$, it holds that
\begin{equation*}
    \mathbb{P}[h(\vxi) - \mathbb{E}[h(\vxi)]\geq z] \leq \exp \left(- \frac{c z^2}{ \sigma_p^2 \norm{h}_{\mathrm{Lip}}^2} \right)
\end{equation*}
where $c$ is a universal constant and $\norm{\cdot}_\mathrm{Lip}$ denotes the best Lipschitz constant. Combining with Lemma~\ref{lemma:strong_wegith_norm}, we have
\begin{equation}\label{eq:w2s_concentration}
    \mathbb{P}[h(\vxi) - \mathbb{E}[h(\vxi)]\geq z] \leq \exp \left(- \frac{c (2p_\easy + p_\both)^2\norm{\vmu}^4 }{900^2  \sigma_p^4 d } z^2 \right).
\end{equation}

Now, we characterize the test error. First, we consider the case $(\mX,y) \in \gS_\easy \cup \gS_\both$. We have
\begin{align*}
    &\quad y f_\strong \left( \mW^{(T_\es)}, \mX\right)\\
    & = F_{y} \left( \mW_y^{(T_\es)}, \mX\right) -  F_{-y} \left( \mW_{-y}^{(T_\es)}, \mX\right)\\
    &= \frac{1}{m}\sum_{l \in [2]} \sum_{r \in [m]}\sigma \left( \inner{\vw_{y,r}^{(T_\es)}, \vv^{(l)}}\right) + \frac{1}{m} \sum_{r \in [m]}\sigma \left( \inner{\vw_{y,r}^{(T_\es)}, \vxi }\right)\\
    & \quad - \frac{1}{m}\sum_{l \in [2]} \sum_{r \in [m]}\sigma \left( \inner{\vw_{-y,r}^{(T_\es)}, \vv^{(l)}}\right) - \frac{1}{m} \sum_{r \in [m]}\sigma \left( \inner{\vw_{-y,r}^{(T_\es)}, \vxi }\right)\\
    &\geq \frac{1}{m} \sum_{r \in [m]}\sigma \left( \inner{\vw_{y,r}^{(T_\es)}, \vxi }\right) - \frac{1}{m} \sum_{r \in [m]}\sigma \left( \inner{\vw_{-y,r}^{(T_\es)}, \vxi }\right) + \frac{1}{m} \sum_{r \in [m]} \oM_{y,r}^{(T_\es)} - 4 \alpha_\strong\\
    &\geq - \frac{1}{m} \sum_{r \in [m]}\sigma \left( \inner{\vw_{-y,r}^{(T_\es)}, \vxi }\right) + \frac{1}{120} - 4 \alpha_\strong\\
    &\geq - h(\vxi) + \frac{1}{200},
\end{align*}
where the first inequality follows from \eqref{eq:strong_init} and \eqref{eq:w2s_opposite_stop}. 
From \eqref{eq:w2s_concentration} and Lemma~\ref{lemma:w2s_wegith_norm}, we have
\begin{align*}
    &\quad \mathbb{P} \left[ y f_\strong \left( \mW^{(T_\es)}, \mX \right)<0 \, \middle | \, (\mX,y)\in \gS_\easy \cup \gS_\both \right]\\
    &\leq \mathbb{P} \left[ h(\vxi) > \frac{1}{200} \right] = \mathbb{P} \left[ h(\vxi) - \mathbb{E}[h(\vxi)] > \frac{1}{200} - \mathbb{E}[h(\vxi)] \right]\\
    & \leq \mathbb{P} \left[ h(\vxi) - \mathbb{E}[h(\vxi)] > \frac{1}{200} - \frac{900 \sigma_p d^{\frac 1 2}}{(2p_\easy + p_\both) n_\strong^{\frac 1 2} \norm{\vmu}^2} \right]\\
    & \leq \mathbb{P} \left[ h(\vxi) - \mathbb{E}[h(\vxi)] > \frac{1}{250} \right]\\
    & \leq \exp \left( - \frac{n_\strong (2p_\easy + p_\both)^2 \norm{\vmu}^4}{C_5 \sigma_p^4 d}\right),
\end{align*}
with some constant $C_5 >0$.

Using a similar argument, we can prove the upper bound on the test error for the case $(\mX,y) \in \gS_\hard$. In this case, we have
\begin{align*}
    &\quad y f_\strong \left( \mW^{(T_\es)}, \mX\right)\\
    & = F_{y} \left( \mW_y^{(T_\es)}, \mX\right) -  F_{-y} \left( \mW_{-y}^{(T_\es)}, \mX\right)\\
    &= \frac{1}{m}\sum_{l \in [2]} \sum_{r \in [m]}\sigma \left( \inner{\vw_{y,r}^{(T_\es)}, \vv^{(l)}}\right) + \frac{1}{m} \sum_{r \in [m]}\sigma \left( \inner{\vw_{y,r}^{(T_\es)}, \vxi }\right)\\
    & \quad - \frac{1}{m}\sum_{l \in [2]} \sum_{r \in [m]}\sigma \left( \inner{\vw_{-y,r}^{(T_\es)}, \vv^{(l)}}\right) - \frac{1}{m} \sum_{r \in [m]}\sigma \left( \inner{\vw_{-y,r}^{(T_\es)}, \vxi }\right)\\
    &\geq \frac{1}{m} \sum_{r \in [m]}\sigma \left( \inner{\vw_{y,r}^{(T_\es)}, \vxi }\right) - \frac{1}{m} \sum_{r \in [m]}\sigma \left( \inner{\vw_{-y,r}^{(T_\es)}, \vxi }\right) \\
    &\quad + \frac{2}{m} \min \left \{\sum_{r \in \gA_y} \oN_{y,r}^{(T_\es)}, -\sum_{r \in \gB_y} \oN_{y,r}^{(T_\es)} \right \} - 2( \alpha_\strong + \beta_\strong)\\
    &\geq - \frac{1}{m} \sum_{r \in [m]}\sigma \left( \inner{\vw_{-y,r}^{(T_\es)}, \vxi }\right) + \frac{2}{m} \min \left \{\sum_{r \in \gA_y} \oN_{y,r}^{(T_\es)}, - \sum_{r \in \gB_y} \oN_{y,r}^{(T_\es)} \right \} - 2( \alpha_\strong + \beta_\strong)\\
    &\geq - h(\vxi) + \frac{p_\both \norm{\vnu}^2}{120(2p_\easy + p_\both) \norm{\vmu}^2} - 2( \alpha_\strong + \beta_\strong) \\
    &\geq - h(\vxi) + \frac{p_\both \norm{\vnu}^2}{200(2p_\easy + p_\both) \norm{\vmu}^2}
\end{align*}
where the first inequality follows from \eqref{eq:strong_init} and \eqref{eq:w2s_opposite_stop}, the third inequality follows from \eqref{eq:w2s_signal_stop}, and the last inequality follows from \eqref{eq:strong_kappa_gamma} and Condition~\ref{condition:signal-dominant}. 

From \eqref{eq:w2s_concentration} and Lemma~\ref{lemma:w2s_wegith_norm}, we have
\begin{align*}
    &\quad \mathbb{P} \left[ y f_\strong \left( \mW^{(T_\es)}, \mX \right)<0 \, \middle | \, (\mX,y)\in \gS_\hard \right]\\
    &\leq \mathbb{P} \left[ h(\vxi) > \frac{p_\both \norm{\vnu}^2 }{200(2p_\easy + p_\both) \norm{\vmu}^2} \right] \\
    &= \mathbb{P} \left[ h(\vxi) - \mathbb{E}[h(\vxi)] > \frac{p_\both \norm{\vnu}^2 }{200(2p_\easy + p_\both) \norm{\vmu}^2} - \mathbb{E}[h(\vxi)] \right]\\
    & \leq \mathbb{P} \left[ h(\vxi) - \mathbb{E}[h(\vxi)] > \frac{p_\both \norm{\vnu}^2 }{200(2p_\easy + p_\both) \norm{\vmu}^2} - \frac{900 \sigma_p d^{\frac 1 2}}{(2p_\easy + p_\both) n_\strong^{\frac 1 2} \norm{\vmu}^2} \right]\\
    & \leq \mathbb{P} \left[ h(\vxi) - \mathbb{E}[h(\vxi)] > \frac{p_\both \norm{\vnu}^2 }{250(2p_\easy + p_\both) \norm{\vmu}^2} \right]\\
    & \leq \exp \left( - \frac{n_\strong p_\both^2 \norm{\vnu}^4}{C_5 \sigma_p^4 d}\right),
\end{align*}
with some constant $C_5 >0$.


\newpage

\section*{NeurIPS Paper Checklist}

\begin{enumerate}

\item {\bf Claims}
    \item[] Question: Do the main claims made in the abstract and introduction accurately reflect the paper's contributions and scope?
    \item[] Answer: \answerYes{} 
    \item[] Justification: The abstract and introduction clearly state the main contributions of the paper, including limitations of prior works, a brief description of the problem setting, and a summary of our theoretical findings.
    \item[] Guidelines:
    \begin{itemize}
        \item The answer NA means that the abstract and introduction do not include the claims made in the paper.
        \item The abstract and/or introduction should clearly state the claims made, including the contributions made in the paper and important assumptions and limitations. A No or NA answer to this question will not be perceived well by the reviewers. 
        \item The claims made should match theoretical and experimental results, and reflect how much the results can be expected to generalize to other settings. 
        \item It is fine to include aspirational goals as motivation as long as it is clear that these goals are not attained by the paper. 
    \end{itemize}

\item {\bf Limitations}
    \item[] Question: Does the paper discuss the limitations of the work performed by the authors?
    \item[] Answer: \answerYes{} 
    \item[] Justification: We discussed the limitations of our theoretical results in Section~\ref{section:main_results} and Section~\ref{section:conclusion}.
    \item[] Guidelines:
    \begin{itemize}
        \item The answer NA means that the paper has no limitation while the answer No means that the paper has limitations, but those are not discussed in the paper. 
        \item The authors are encouraged to create a separate "Limitations" section in their paper.
        \item The paper should point out any strong assumptions and how robust the results are to violations of these assumptions (e.g., independence assumptions, noiseless settings, model well-specification, asymptotic approximations only holding locally). The authors should reflect on how these assumptions might be violated in practice and what the implications would be.
        \item The authors should reflect on the scope of the claims made, e.g., if the approach was only tested on a few datasets or with a few runs. In general, empirical results often depend on implicit assumptions, which should be articulated.
        \item The authors should reflect on the factors that influence the performance of the approach. For example, a facial recognition algorithm may perform poorly when image resolution is low or images are taken in low lighting. Or a speech-to-text system might not be used reliably to provide closed captions for online lectures because it fails to handle technical jargon.
        \item The authors should discuss the computational efficiency of the proposed algorithms and how they scale with dataset size.
        \item If applicable, the authors should discuss possible limitations of their approach to address problems of privacy and fairness.
        \item While the authors might fear that complete honesty about limitations might be used by reviewers as grounds for rejection, a worse outcome might be that reviewers discover limitations that aren't acknowledged in the paper. The authors should use their best judgment and recognize that individual actions in favor of transparency play an important role in developing norms that preserve the integrity of the community. Reviewers will be specifically instructed to not penalize honesty concerning limitations.
    \end{itemize}

\item {\bf Theory assumptions and proofs}
    \item[] Question: For each theoretical result, does the paper provide the full set of assumptions and a complete (and correct) proof?
    \item[] Answer: \answerYes{} 
    \item[] Justification: We provide the list of regularity conditions in Section~\ref{section:problem_setting} and conditions for the two regimes we investigated in Section~\ref{section:main_results}. In addition, we provide complete proof of our theoretical findings in the appendix.
    \item[] Guidelines:
    \begin{itemize}
        \item The answer NA means that the paper does not include theoretical results. 
        \item All the theorems, formulas, and proofs in the paper should be numbered and cross-referenced.
        \item All assumptions should be clearly stated or referenced in the statement of any theorems.
        \item The proofs can either appear in the main paper or the supplemental material, but if they appear in the supplemental material, the authors are encouraged to provide a short proof sketch to provide intuition. 
        \item Inversely, any informal proof provided in the core of the paper should be complemented by formal proofs provided in appendix or supplemental material.
        \item Theorems and Lemmas that the proof relies upon should be properly referenced. 
    \end{itemize}

    \item {\bf Experimental result reproducibility}
    \item[] Question: Does the paper fully disclose all the information needed to reproduce the main experimental results of the paper to the extent that it affects the main claims and/or conclusions of the paper (regardless of whether the code and data are provided or not)?
    \item[] Answer: \answerYes{} 
    \item[] Justification: We provide details for our main experimental results in Section~\ref{section:exp}. 
    \item[] Guidelines:
    \begin{itemize}
        \item The answer NA means that the paper does not include experiments.
        \item If the paper includes experiments, a No answer to this question will not be perceived well by the reviewers: Making the paper reproducible is important, regardless of whether the code and data are provided or not.
        \item If the contribution is a dataset and/or model, the authors should describe the steps taken to make their results reproducible or verifiable. 
        \item Depending on the contribution, reproducibility can be accomplished in various ways. For example, if the contribution is a novel architecture, describing the architecture fully might suffice, or if the contribution is a specific model and empirical evaluation, it may be necessary to either make it possible for others to replicate the model with the same dataset, or provide access to the model. In general. releasing code and data is often one good way to accomplish this, but reproducibility can also be provided via detailed instructions for how to replicate the results, access to a hosted model (e.g., in the case of a large language model), releasing of a model checkpoint, or other means that are appropriate to the research performed.
        \item While NeurIPS does not require releasing code, the conference does require all submissions to provide some reasonable avenue for reproducibility, which may depend on the nature of the contribution. For example
        \begin{enumerate}
            \item If the contribution is primarily a new algorithm, the paper should make it clear how to reproduce that algorithm.
            \item If the contribution is primarily a new model architecture, the paper should describe the architecture clearly and fully.
            \item If the contribution is a new model (e.g., a large language model), then there should either be a way to access this model for reproducing the results or a way to reproduce the model (e.g., with an open-source dataset or instructions for how to construct the dataset).
            \item We recognize that reproducibility may be tricky in some cases, in which case authors are welcome to describe the particular way they provide for reproducibility. In the case of closed-source models, it may be that access to the model is limited in some way (e.g., to registered users), but it should be possible for other researchers to have some path to reproducing or verifying the results.
        \end{enumerate}
    \end{itemize}

\item {\bf Open access to data and code}
    \item[] Question: Does the paper provide open access to the data and code, with sufficient instructions to faithfully reproduce the main experimental results, as described in supplemental material?
    \item[] Answer: \answerYes{} 
    \item[] Justification: We do not provide code in supplemental material. However, our results in synthetic data and MNIST data can be easily reproduced since we opened all details.
    \item[] Guidelines:
    \begin{itemize}
        \item The answer NA means that paper does not include experiments requiring code.
        \item Please see the NeurIPS code and data submission guidelines (\url{https://nips.cc/public/guides/CodeSubmissionPolicy}) for more details.
        \item While we encourage the release of code and data, we understand that this might not be possible, so “No” is an acceptable answer. Papers cannot be rejected simply for not including code, unless this is central to the contribution (e.g., for a new open-source benchmark).
        \item The instructions should contain the exact command and environment needed to run to reproduce the results. See the NeurIPS code and data submission guidelines (\url{https://nips.cc/public/guides/CodeSubmissionPolicy}) for more details.
        \item The authors should provide instructions on data access and preparation, including how to access the raw data, preprocessed data, intermediate data, and generated data, etc.
        \item The authors should provide scripts to reproduce all experimental results for the new proposed method and baselines. If only a subset of experiments are reproducible, they should state which ones are omitted from the script and why.
        \item At submission time, to preserve anonymity, the authors should release anonymized versions (if applicable).
        \item Providing as much information as possible in supplemental material (appended to the paper) is recommended, but including URLs to data and code is permitted.
    \end{itemize}

\item {\bf Experimental setting/details}
    \item[] Question: Does the paper specify all the training and test details (e.g., data splits, hyperparameters, how they were chosen, type of optimizer, etc.) necessary to understand the results?
    \item[] Answer: \answerYes{} 
    \item[] Justification: We provide all details for our main experimental results in Section~\ref{section:exp}.
    \item[] Guidelines:
    \begin{itemize}
        \item The answer NA means that the paper does not include experiments.
        \item The experimental setting should be presented in the core of the paper to a level of detail that is necessary to appreciate the results and make sense of them.
        \item The full details can be provided either with the code, in appendix, or as supplemental material.
    \end{itemize}

\item {\bf Experiment statistical significance}
    \item[] Question: Does the paper report error bars suitably and correctly defined or other appropriate information about the statistical significance of the experiments?
    \item[] Answer: \answerNo{} 
    \item[] Justification: Our work is mainly theoretical, and the numerical experiments are designed to illustrate our theoretical analyses and aid reader comprehension. Therefore, we do not report measures of statistical significance for these illustrative results. Given the theoretical nature of our primary contributions, this omission does not detract from the paper's core findings.
    \item[] Guidelines:
    \begin{itemize}
        \item The answer NA means that the paper does not include experiments.
        \item The authors should answer "Yes" if the results are accompanied by error bars, confidence intervals, or statistical significance tests, at least for the experiments that support the main claims of the paper.
        \item The factors of variability that the error bars are capturing should be clearly stated (for example, train/test split, initialization, random drawing of some parameter, or overall run with given experimental conditions).
        \item The method for calculating the error bars should be explained (closed form formula, call to a library function, bootstrap, etc.)
        \item The assumptions made should be given (e.g., Normally distributed errors).
        \item It should be clear whether the error bar is the standard deviation or the standard error of the mean.
        \item It is OK to report 1-sigma error bars, but one should state it. The authors should preferably report a 2-sigma error bar than state that they have a 96\% CI, if the hypothesis of Normality of errors is not verified.
        \item For asymmetric distributions, the authors should be careful not to show in tables or figures symmetric error bars that would yield results that are out of range (e.g. negative error rates).
        \item If error bars are reported in tables or plots, The authors should explain in the text how they were calculated and reference the corresponding figures or tables in the text.
    \end{itemize}

\item {\bf Experiments compute resources}
    \item[] Question: For each experiment, does the paper provide sufficient information on the computer resources (type of compute workers, memory, time of execution) needed to reproduce the experiments?
    \item[] Answer:  \answerYes{} 
    \item[] Justification: We provide computer resource information in Section~\ref{section:exp}.
    \item[] Guidelines:
    \begin{itemize}
        \item The answer NA means that the paper does not include experiments.
        \item The paper should indicate the type of compute workers CPU or GPU, internal cluster, or cloud provider, including relevant memory and storage.
        \item The paper should provide the amount of compute required for each of the individual experimental runs as well as estimate the total compute. 
        \item The paper should disclose whether the full research project required more compute than the experiments reported in the paper (e.g., preliminary or failed experiments that didn't make it into the paper). 
    \end{itemize}
    
\item {\bf Code of ethics}
    \item[] Question: Does the research conducted in the paper conform, in every respect, with the NeurIPS Code of Ethics \url{https://neurips.cc/public/EthicsGuidelines}?
    \item[] Answer: \answerYes{} 
    \item[] Justification: This paper does not violate the NeurIPS Code of Ethics in terms of data
privacy, potential for misuse, or other ethical issues.
    \item[] Guidelines:
    \begin{itemize}
        \item The answer NA means that the authors have not reviewed the NeurIPS Code of Ethics.
        \item If the authors answer No, they should explain the special circumstances that require a deviation from the Code of Ethics.
        \item The authors should make sure to preserve anonymity (e.g., if there is a special consideration due to laws or regulations in their jurisdiction).
    \end{itemize}

\item {\bf Broader impacts}
    \item[] Question: Does the paper discuss both potential positive societal impacts and negative societal impacts of the work performed?
    \item[] Answer: \answerNA{} 
    \item[] Justification: This work is primarily theoretical.
    \item[] Guidelines:
    \begin{itemize}
        \item The answer NA means that there is no societal impact of the work performed.
        \item If the authors answer NA or No, they should explain why their work has no societal impact or why the paper does not address societal impact.
        \item Examples of negative societal impacts include potential malicious or unintended uses (e.g., disinformation, generating fake profiles, surveillance), fairness considerations (e.g., deployment of technologies that could make decisions that unfairly impact specific groups), privacy considerations, and security considerations.
        \item The conference expects that many papers will be foundational research and not tied to particular applications, let alone deployments. However, if there is a direct path to any negative applications, the authors should point it out. For example, it is legitimate to point out that an improvement in the quality of generative models could be used to generate deepfakes for disinformation. On the other hand, it is not needed to point out that a generic algorithm for optimizing neural networks could enable people to train models that generate Deepfakes faster.
        \item The authors should consider possible harms that could arise when the technology is being used as intended and functioning correctly, harms that could arise when the technology is being used as intended but gives incorrect results, and harms following from (intentional or unintentional) misuse of the technology.
        \item If there are negative societal impacts, the authors could also discuss possible mitigation strategies (e.g., gated release of models, providing defenses in addition to attacks, mechanisms for monitoring misuse, mechanisms to monitor how a system learns from feedback over time, improving the efficiency and accessibility of ML).
    \end{itemize}
    
\item {\bf Safeguards}
    \item[] Question: Does the paper describe safeguards that have been put in place for responsible release of data or models that have a high risk for misuse (e.g., pretrained language models, image generators, or scraped datasets)?
    \item[] Answer: \answerNA{} 
    \item[] Justification: This paper focuses on theoretical analysis.
    \item[] Guidelines:
    \begin{itemize}
        \item The answer NA means that the paper poses no such risks.
        \item Released models that have a high risk for misuse or dual-use should be released with necessary safeguards to allow for controlled use of the model, for example by requiring that users adhere to usage guidelines or restrictions to access the model or implementing safety filters. 
        \item Datasets that have been scraped from the Internet could pose safety risks. The authors should describe how they avoided releasing unsafe images.
        \item We recognize that providing effective safeguards is challenging, and many papers do not require this, but we encourage authors to take this into account and make a best faith effort.
    \end{itemize}

\item {\bf Licenses for existing assets}
    \item[] Question: Are the creators or original owners of assets (e.g., code, data, models), used in the paper, properly credited and are the license and terms of use explicitly mentioned and properly respected?
    \item[] Answer: \answerNA{} 
    \item[] Justification: Our experiments use synthetic data in our problem setting and MNIST, and therefore do not rely on existing assets.
    \item[] Guidelines:
    \begin{itemize}
        \item The answer NA means that the paper does not use existing assets. 
        \item The authors should cite the original paper that produced the code package or dataset.
        \item The authors should state which version of the asset is used and, if possible, include a URL.
        \item The name of the license (e.g., CC-BY 4.0) should be included for each asset.
        \item For scraped data from a particular source (e.g., website), the copyright and terms of service of that source should be provided.
        \item If assets are released, the license, copyright information, and terms of use in the package should be provided. For popular datasets, \url{paperswithcode.com/datasets} has curated licenses for some datasets. Their licensing guide can help determine the license of a dataset.
        \item For existing datasets that are re-packaged, both the original license and the license of the derived asset (if it has changed) should be provided.
        \item If this information is not available online, the authors are encouraged to reach out to the asset's creators.
    \end{itemize}

\item {\bf New assets}
    \item[] Question: Are new assets introduced in the paper well documented and is the documentation provided alongside the assets?
    \item[] Answer: \answerNA{} 
    \item[] Justification: Our work does not release new assets.
    \item[] Guidelines:
    \begin{itemize}
        \item The answer NA means that the paper does not release new assets.
        \item Researchers should communicate the details of the dataset/code/model as part of their submissions via structured templates. This includes details about training, license, limitations, etc. 
        \item The paper should discuss whether and how consent was obtained from people whose asset is used.
        \item At submission time, remember to anonymize your assets (if applicable). You can either create an anonymized URL or include an anonymized zip file.
    \end{itemize}

\item {\bf Crowdsourcing and research with human subjects}
    \item[] Question: For crowdsourcing experiments and research with human subjects, does the paper include the full text of instructions given to participants and screenshots, if applicable, as well as details about compensation (if any)? 
    \item[] Answer: \answerNA{} 
    \item[] Justification:  This research does not involve any human subjects or crowdsourcing.
    \item[] Guidelines:
    \begin{itemize}
        \item The answer NA means that the paper does not involve crowdsourcing nor research with human subjects.
        \item Including this information in the supplemental material is fine, but if the main contribution of the paper involves human subjects, then as much detail as possible should be included in the main paper. 
        \item According to the NeurIPS Code of Ethics, workers involved in data collection, curation, or other labor should be paid at least the minimum wage in the country of the data collector. 
    \end{itemize}

\item {\bf Institutional review board (IRB) approvals or equivalent for research with human subjects}
    \item[] Question: Does the paper describe potential risks incurred by study participants, whether such risks were disclosed to the subjects, and whether Institutional Review Board (IRB) approvals (or an equivalent approval/review based on the requirements of your country or institution) were obtained?
    \item[] Answer: \answerNA{} 
    \item[] Justification: This research does not involve any human subjects. Thus, IRB approval is not applicable.
    \item[] Guidelines:
    \begin{itemize}
        \item The answer NA means that the paper does not involve crowdsourcing nor research with human subjects.
        \item Depending on the country in which research is conducted, IRB approval (or equivalent) may be required for any human subjects research. If you obtained IRB approval, you should clearly state this in the paper. 
        \item We recognize that the procedures for this may vary significantly between institutions and locations, and we expect authors to adhere to the NeurIPS Code of Ethics and the guidelines for their institution. 
        \item For initial submissions, do not include any information that would break anonymity (if applicable), such as the institution conducting the review.
    \end{itemize}

\item {\bf Declaration of LLM usage}
    \item[] Question: Does the paper describe the usage of LLMs if it is an important, original, or non-standard component of the core methods in this research? Note that if the LLM is used only for writing, editing, or formatting purposes and does not impact the core methodology, scientific rigorousness, or originality of the research, declaration is not required.
    \item[] Answer: \answerNA{} 
    \item[] Justification: We used LLMs only for writing and editing. 
    \item[] Guidelines: 
    \begin{itemize}
        \item The answer NA means that the core method development in this research does not involve LLMs as any important, original, or non-standard components.
        \item Please refer to our LLM policy (\url{https://neurips.cc/Conferences/2025/LLM}) for what should or should not be described.
    \end{itemize}

\end{enumerate}

\end{document}